
\documentclass[aos]{imsart}

\RequirePackage{amsthm,amsmath,amsfonts,amssymb}
\RequirePackage[authoryear]{natbib}
\RequirePackage[colorlinks,citecolor=blue,urlcolor=blue]{hyperref}
\RequirePackage{graphicx} 

\startlocaldefs
\usepackage{url}
\numberwithin{equation}{section}
\usepackage{cleveref}
\usepackage{thmtools,thm-restate}

\newtheorem{lemma}{Lemma}
\newcommand{\EE}{\mathbb{E}}
\def\bzeta{\mbox{\boldmath${\zeta}$}}
\def\bbeta{\mbox{\boldmath${\beta}$}}
\def\balpha{\mbox{\boldmath${\alpha}$}}
\def\btheta{\mbox{\boldmath${\theta}$}}
\def\bX{{\bf X}}
\def\bZ{{\bf Z}}
\def\bg{{\bf g}}
\def\bx{{\bf x}}
\def\bz{{\bf z}}
\def\bW{{\bf W}}
\def\ba{{\bf a}}

\def\PP{\mathbb{P}}
\def\GG{\mathbb{G}}
\def\Var{{\mathbb V}}

\newtheorem{assumption}{}

\endlocaldefs

\begin{document}
\begin{frontmatter}
\title{Flexible Deep Neural Networks for Partially Linear Survival Data: Estimation and Survival Inference}
\runtitle{Flexible Deep Neural Networks for Partially Linear Survival Data}

\begin{aug}
\author[A]{\fnms{Asaf}~\snm{Ben Arie}\ead[label=e1]{asafba4@gmail.com}}
\author[A]{\fnms{Malka}~\snm{Gorfine}\ead[label=e2]{gorfinem@tauex.tau.ac.il}}

\address[A]{Department of Statistics and Operations Research\\
Tel Aviv University, Israel
\printead[presep={,\ }]{e1,e2}}
\end{aug}

\begin{abstract}
We propose a flexible deep neural network (DNN) framework for modeling survival data within a partially linear regression structure. The approach preserves interpretability through a parametric linear component for covariates of primary interest, while a nonparametric DNN component captures complex time--covariate interactions among nuisance variables. We refer to the method
as FLEXI--Haz, a FLEXIble Hazard model with a partially linear structure. In contrast to existing DNN approaches for partially linear Cox models, FLEXI--Haz does not rely on the proportional hazards assumption. We establish theoretical guarantees: the neural network component attains minimax-optimal convergence rates over composite H\"older classes, the linear estimator is
$\sqrt{n}$-consistent, asymptotically normal, and semiparametrically efficient, and we develop a cross-fitted one-step estimator of the cumulative hazard and survival function for a new subject, together with pointwise asymptotic confidence intervals. To the best of our knowledge, this is the first frequentist asymptotic pointwise inference result for a survival function in a DNN survival model, with or without a linear component. Simulations and real-data analyses demonstrate the utility of FLEXI--Haz as a principled and
interpretable alternative to methods based on proportional hazards.
\end{abstract}

\begin{keyword}[class=MSC]
\kwd[Primary ]{62N02}
\kwd{62G20}
\kwd[; secondary ]{62G08}
\end{keyword}

\begin{keyword}
\kwd{Deep neural networks}
\kwd{Interpretability}
\kwd{Partial linear regression}
\kwd{Survival analysis}
\kwd{Time to event data}
\end{keyword}

\end{frontmatter}

\section{Introduction}

Deep neural networks (DNNs) have become a central tool in modern data analysis because of
their ability to capture complex nonlinear structure, accommodate high-dimensional inputs,
and scale to large data sets. They have achieved strong empirical performance across a wide
range of domains, including image recognition, speech processing, and natural language
understanding \citep{lecun2015deep}. These developments have also stimulated
substantial interest in using DNNs for survival analysis.

Survival analysis concerns the distribution of event times in the presence of censoring and
arises in many fields, including medicine, public health, epidemiology, engineering, and
finance \citep{kalbfleisch2011,klein2006survival}. In recent years,
DNN-based survival methods have become increasingly popular because they can model
complex nonlinear relationships between covariates and event times while remaining
computationally feasible in high dimensions. A large part of this literature extends the Cox proportional hazards (PH) model \citep{cox1972regression} by replacing the linear risk score with a neural
network and training by the negative partial likelihood. Early work by \citet{faraggi1995neural} used a single hidden-layer network, and \citet{katzman2018deepsurv}  extended this idea to
deeper architectures in DeepSurv. Related PH-based DNN methods have been developed for
genomics \citep{yousefi2017predicting,ching2018coxnnet}, medical imaging
\citep{haarburger2019image,li2019deep}, recurrent and discrete-time settings
\citep{giunchiglia2018rnnsurv,gensheimer2019nnet},
and hypothesis testing for covariate effects on survival \citep{zhong2026variable}.
While these approaches can perform well empirically, they inherit the PH restriction, which
may be too rigid when covariate effects vary over time.

To move beyond the PH assumption, several DNN-based methods allow time-varying effects. One approach augments the network input with time while retaining an unspecified baseline hazard, as in Cox-Time \citep{kvamme2019time}. Other approaches use baseline-free formulations based on the continuous-time hazard; see, for example, \citet{hu2023conditional,su2026nonparametric}. These methods naturally accommodate time-varying effects and avoid the PH constraint, but \citet{hu2023conditional} do not provide asymptotic theory, and \citet{su2026nonparametric} focus on one- and two-sample hypothesis testing.

At the same time, there is strong interest in semiparametric survival models that retain a
low-dimensional interpretable component for covariates of primary interest while modeling
high-dimensional or nuisance covariates flexibly. This partly linear structure is attractive because it combines interpretability with statistical efficiency. In the non-survival setting, theoretical foundations for DNN estimation over structured nonparametric classes were developed by \cite{schmidthieber2020nonparametric}. In survival analysis, 
\cite{zhong2022deep} extended this paradigm to the partially linear Cox model
$\log h_0(t) + \zeta^\top \bZ + f(\bX)$,
where \(\bZ\) is the low-dimensional covariate vector of primary interest, \(\bX\) is a nuisance
covariate vector, and \(f\) is approximated by a DNN. Their approach yields optimal
convergence of the nuisance estimator and \(\sqrt{n}\)-consistent estimation of \(\zeta\), but it remains tied to the PH assumption and to baseline hazard estimation.

In this paper we develop a flexible semiparametric survival model that extends the partially linear framework of \cite{zhong2022deep} to a substantially more general
non-PH setting. Specifically, we model the log-hazard as
$\btheta_o^\top \bZ + g_o(t,\bX)$,
where \(\btheta_o \in \mathbb{R}^p\) is the finite-dimensional parameter of interest and
\(g_o(t,\bX)\) is an unknown nuisance function approximated by a DNN. This formulation is
baseline-free: all time dependence is absorbed into \(g_o(t,\bX)\), rather than decomposed into a baseline hazard and covariate effects. As a result, the model allows arbitrary interactions between time and the nuisance covariates, naturally accommodates time-varying effects, and strictly enlarges the class of partially linear Cox-type models.

This extension is methodologically nontrivial. Once the nuisance component depends on both
\(t\) and \(\bX\), the Cox partial likelihood is no longer available as a loss function, and the full
likelihood must be used instead. This introduces a predictable cumulative-hazard integral that
must be numerically approximated during training and leads to a substantially different
asymptotic analysis. In particular, the time-dependent nuisance component requires new
empirical-process arguments, a new characterization of the nuisance tangent space, and a new
control of the bias induced by DNN approximation. We show, nevertheless, that the estimator
of \(\btheta_o\) remains \(\sqrt{n}\)-consistent, asymptotically normal, and semiparametrically
efficient.

A second and particularly important contribution of the paper is inference for the survival
curve of a new subject. For an independent draw \((\bX_0,\bZ_0)\) from the covariate
distribution, we construct a cross-fitted one-step estimator of the cumulative hazard
\(H_o(t\mid \bX_0,\bZ_0)\) and the survival function \(S_o(t\mid \bX_0,\bZ_0)\), together with
Wald-type pointwise confidence intervals. To the best of our knowledge, this is the first
frequentist asymptotic pointwise inference result for a  survival curve in a DNN
survival model, with or without a separate linear component. This result is established under
an additional assumption that the nuisance covariate vector \(\bX\) has discrete finite support,
which yields an explicit closed-form Riesz representer and therefore a tractable one-step
correction. From a practical perspective, this restriction is often acceptable because nuisance
covariates frequently include categorical variables or grouped continuous features, and
because the finite-support assumption is required only for the explicit survival-inference
theory rather than for estimation and efficiency of \(\btheta_o\).
The broader
survival-inference problem can also be studied under general covariate distributions through
a high-level semiparametric characterization of the efficient influence function. Since that
formulation is more technical and does not yield a comparably explicit estimator without
additional approximation arguments, we present it in the appendix.

Our framework is also relevant beyond the partially linear setting. If the linear component is
absorbed into the nuisance function, then the one-step construction and efficient influence
function derived here remain valid, with the parametric term dropping out. In this sense, the
paper also contributes asymptotic survival-inference tools for baseline-free DNN hazard
models more broadly. Compared with \cite{su2026nonparametric}, who derive functional asymptotic
normality for smooth hazard-function functionals and use it for testing, we study a different
and sharper target: semiparametric efficiency for an interpretable linear effect and explicit
Wald-type pointwise inference for the survival curve of a new subject.

The remainder of the paper is organized as follows. Sections 2 and 3 introduce the model, the DNN class, and the FLEXI--Haz estimation and survival-inference procedures. Sections 4 and 5 develop the asymptotic theory for estimation and pointwise survival inference. Simulation studies and real-data analyses are presented later.
Code for FLEXI--Haz and for reproducing the simulations and data analyses is available at \url{https://github.com/AsafBanana/FLEXI--Haz}.

\section{Model Formulation}
We consider right-censored survival data, where for each individual the true event time is denoted by $U$ and the censoring time by $C$. We observe only
$T=\min(U,C)$ and $\Delta=\mathbf{1}(U\le C)$,
indicating the observed time and whether the event occurred.
Let the covariate vector be $({\bZ}^{\sf T},{\bX}^{\sf T})^{\sf T}$, where ${\bZ}\in\mathbb{R}^p$ contains the variables of primary scientific interest and ${\bX}\in\mathbb{R}^d$ denotes nuisance covariates. Our goal is to characterize the effect of $\bZ$ on the event time while appropriately adjusting for $\bX$. 
For example, $\mathbf{X}$ may represent a high-dimensional set of genomic or imaging features, whereas $\mathbf{Z}$ may contain key clinical variables such as treatment assignment or targeted biomarkers. Understanding the effect of $\bZ$ while properly adjusting for $\bX$ is essential for identifying clinically actionable signals. For simplicity, we focus here on time-independent covariates; the extension to time-dependent covariates will be discussed in Section 7. We assume that $U$ and $C$ are conditionally independent given $(\mathbf{X},\mathbf{Z})$. The observed data consist of $n$ independent replicates,  $\{(T_i,\Delta_i,\mathbf{X}_i,\mathbf{Z}_i)\}_{i=1}^n$.

Under the above setting, the Cox PH model \citep{cox1972regression} specifies the instantaneous hazard function at time $t>0$ as 
$$
h_1(t \mid {\bX}, {\bZ}) = h_0(t)\exp({\balpha}_1^\top {\bX} + {\balpha}_2^\top {\bZ}) \, ,
$$
where $h_0$ is an unspecified baseline hazard function and ${\balpha}_1 \in \mathbb{R}^d$ and ${\balpha}_2 \in \mathbb{R}^p$ are unknown parameters.
The model remains widely used due to its interpretability, semiparametric structure, and the availability of efficient estimation procedures. DeepSurv \citep{katzman2018deepsurv} extends the Cox model by replacing the linear risk score $\exp({\balpha}_1^\top \mathbf{X} + {\balpha}_2^\top \mathbf{Z}) $
with a flexible nonlinear function  $f_1({\bX},{\bZ})$ estimated by a  DNN. 
Cox-Time \citep{kvamme2019time} further generalized this formulation by replacing $f_1$ with a time-dependent function, $f_2(t,{\bX},{\bZ})$, 
thereby relaxing the PH assumption and permitting time-covariate interactions. Because $f_2$ is time-dependent while the baseline hazard is left unspecified, identifiability becomes nontrivial; \cite{kvamme2019time} address this challenge through a numerical normalization strategy. More recently, \citet{hu2023conditional} proposed a baseline-free hazard model of the form
$$
h_3(t \mid {\bX} ,{\bZ}) = \exp\{f_3(t,{\bX},{\bZ})\} \, ,
$$ 
which provides full flexibility for modeling non-proportional, time-varying effects via a DNN. Their estimation procedure is evaluated only through simulations, with no accompanying asymptotic guarantees. Although these DNN-based approaches are highly flexible, they do not provide interpretable estimates for the effect of ${\bZ}$. To address this limitation, \citet{zhong2022deep} introduced the deep partially linear Cox model 
$$
h_4(t \mid \mathbf{X},\mathbf{Z}) = h_0(t)\exp\{\bzeta^\top \mathbf{Z} + f(\mathbf{X})\} \, ,
$$
which augments the Cox framework by separating the time component from the covariates and retaining a parametric term $\bzeta^\top \mathbf{Z}$ for interpretability, while modeling nuisance effects through a DNN. This structure enables  valid inference for $\bzeta$ while still allowing complex, nonlinear effect of ${\bX}$.  

In this work, we extend the deep partially linear framework of \citet{zhong2022deep} to the non-PH setting by considering a baseline-free model of the form
\begin{equation}\label{eq:model1}
h_o(t \mid \mathbf{X},{\bZ}) = \exp\{ \btheta_o^\top \mathbf{Z} + g_o(t,\mathbf{X}) \},    
\end{equation}
where $g_o(t,\mathbf{X})$ is estimated by a DNN.  Model (\ref{eq:model1}) is baseline-free in the sense that the entire time dependence of the hazard is absorbed into the nuisance function $g_o(t,\bX)$, rather than being decomposed into a baseline hazard and covariate effects. This is not merely a reparameterization of a Cox-type model with time-varying coefficients: allowing $g_o(t,\bX)$ to be an unrestricted function of $(t,\bX)$ strictly enlarges the model class relative to representations of the form $h_0(t)\exp\{\zeta^\top \bZ + f(X)\}$. This extension overcomes the PH restriction inherent in \citet{zhong2022deep}, resolves the non-identifiability issues that arise when time is included in the DNN, and greatly expands the class of survival models that can be handled within a semiparametric, partially linear framework.   Identifiability of $\btheta_o$ in this setting follows from the partially linear structure together with Assumption A5 (see Section 4.2). These conditions ensure that no nontrivial linear function of $\bZ$ can be represented almost surely as a function of $(t,\bX)$, and therefore prevent $g_o(t,\bX)$ from absorbing components of the linear effect. This is formulated in Lemma 3 (Section 4.3).

As we demonstrate in Section \ref{sec:simulation}, restricting the nuisance function to depend only on  $\mathbf{X}$, as in the partially linear Cox model,  can induce substantial bias in the estimation of the linear effect whenever the nuisance covariates exhibit time-varying effects. Allowing $g_o$ to depend on both $t$ and $\mathbf{X}$ eliminates this source of bias but introduces significant methodological challenges.
First, the negative partial likelihood can no longer be used as the DNN loss function. Instead, the full likelihood is adopted, which includes the predictable cumulative hazard that must be numerically approximated at every training step, increasing the computational burden. Second, the presence of this integral markedly complicates the asymptotic analysis: the time dimension introduces additional stochastic terms that must be controlled, requiring new empirical process arguments beyond those used for time-invariant nuisance functions. Our estimation framework in Section 3 addresses these computational difficulties, and the technical developments leading to the asymptotic results are presented in Sections 4 and 5.

\subsection{Composite Hölder Class of $g_o$}
To formalize the complexity assumptions on the nuisance component $g_o$,  we follow the compositional function framework  of \citet{juditsky2009} and \citet{schmidthieber2020nonparametric}, 
 and use the notation of \citet{zhong2022deep}. We assume that the nuisance function $g_o$ admits a hierarchical, low-dimensional representation of the form
$$
g_o = g_q \circ g_{q-1} \circ \cdots \circ g_1 \circ g_0,
$$
with the convention that $(g\circ f)(x)=g(f(x))$.  
This composition is associated with a dimension vector ${\bf d}^*=( d^*_0,\ldots,d^*_{q+1}) \in \mathbb{N}_+^{q+2}$ 
and an intrinsic dimension vector $\tilde{\bf d}=(\tilde d_0,\ldots,\tilde d_q)\in\mathbb{N}_+^{q+1}$, $\tilde d_\ell \leq  d^*_\ell$ for each $\ell$. 
The outermost input is $ d^*_0=r=1+d$, corresponding to observed times and covariates $\mathbf{X}$, and the final output is scalar $ d^*_{q+1}=1$. 
For each layer $l$, $l=0,\ldots,q$, the vector-valued mapping $g_\ell=(g_{\ell 1},\ldots,g_{\ell\, d^*_{\ell+1}})^\top$ maps the bounded set $[a_\ell,b_\ell]^{\, d^*_\ell}$ into $[a_{\ell+1},b_{\ell+1}]^{\,d^*_{\ell+1}}$, where $a_l,b_l \in \mathbb{R}$ are fixed finite constants ensuring each intermediate representation remains bounded. Additionally, each component $g_{\ell j}$ depends on at most $\tilde{d}_\ell$ coordinates of its input. 

For illustration, consider $f:\mathbb{R}^6 \to \mathbb{R}$ defined by
\[
f(x_1,\ldots,x_6) \;=\; \sin(x_1+x_2) \;+\; \log(1+x_3) \;+\; x_4^2.
\]
which depends on low-dimensional groupings of the input coordinates. A valid three-layer decomposition is $g \;=\; g_2 \circ g_1 \circ g_0$
where
$$
g_0:\mathbb{R}^6 \to \mathbb{R}^3 \, ,\quad 
g_0(x) \;=\; \big( \, x_1+x_2,\;\; x_3,\;\; x_4 \,\big) \, ,
$$
$$
g_1:\mathbb{R}^3 \to \mathbb{R}^3 \, ,\quad 
g_1(y_1,y_2,y_3) \;=\; \big( \, \sin(y_1),\;\; \log(1+y_2),\;\; y_3^2 \,\big) \, ,
$$
$$
g_2:\mathbb{R}^3 \to \mathbb{R} \, ,\quad 
g_2(z_1,z_2,z_3) \;=\; z_1+z_2+z_3 \, .
$$
For this representation, 
${\bf d}^*=({d^*_0},{d^*_1},{d^*_2},{d^*_3})=(6,3,3,1)$,
and the intrinsic dimension vector is
$\tilde{\bf d}=(\tilde d_0,\tilde d_1,\tilde d_2)=(2,1,3)$.
That is, components of $g_0$ depend on at most two inputs, those of $g_1$ are univariate, and $g_2$ combines all three outputs. Other valid decompositions may exist. In general, convergence rates depend on the choice of the optimal decomposition used in the asymptotic analysis.

We also assume that each component function satisfies a standard Hölder–smoothness condition. 
Let $\alpha>0$ and $M>0$. A function is said to have Hölder smoothness index $\alpha$ if all mixed partial derivatives up to order $\lfloor \alpha \rfloor$ exist and are uniformly bounded, 
and if the derivatives of order exactly $\lfloor \alpha \rfloor$ are $(\alpha-\lfloor \alpha \rfloor)$–Hölder continuous. 
Here, $\lfloor \alpha \rfloor$ denotes the greatest integer strictly smaller than $\alpha$. 
The ball of $\alpha$-Hölder functions with radius $M$ consists of all such functions whose derivatives and Hölder constants are bounded by $M$. Formally, for a domain $\mathbb{D}\subset\mathbb{R}^r$, the Hölder ball is defined as
\[
\mathcal{H}^{\alpha}_{r}(\mathbb{D},M)
=
\Big\{ g:\mathbb{D}\to\mathbb{R} :
\sum_{\boldsymbol{\beta}:|\boldsymbol{\beta}|<\alpha}\|\partial^{\boldsymbol{\beta}} g\|_{\infty}
+
\sum_{\boldsymbol{\beta}:|\boldsymbol{\beta}|=\lfloor\alpha\rfloor}
\sup_{u,v\in \mathbb{D}, u\neq v}
\frac{|\partial^{\boldsymbol{\beta}} g(u)-\partial^{\boldsymbol{\beta}} g(v)|}{\|u-v\|^{\alpha-\lfloor\alpha\rfloor}}
\le M \Big\},
\]
where  $\boldsymbol{\beta}^{\sf T}=(\beta_1,\ldots,\beta_r)\in\mathbb{N}^r$ is a multi-index  $\partial^{\boldsymbol{\beta}}=\partial^{\beta_1}\cdots\partial^{\beta_r}$, $|\boldsymbol{\beta}|=\sum_{j=1}^r \beta_j$, 
and  $\|g\|_\infty$ is the sup–norm of a function over its domain.

To formalize the function class used for the nuisance component,
let $\boldsymbol{\alpha}^{\sf T}=(\alpha_0,\ldots,\alpha_q)\in\mathbb{R}_+^{q+1}$ denote the layerwise smoothness parameters, 
${\bf d}$ the dimension vector, and $\tilde{{\bf d}}$ the intrinsic dimension vector. 
The composite Hölder class is defined as
\[
\mathcal{H}(q,\boldsymbol{\alpha},{\bf d}^*,\tilde{{\bf d}},M)
=
\Big\{ g=g_q\circ\cdots\circ g_0 : g_{\ell j}\in \mathcal{H}^{\alpha_\ell}_{\tilde d_\ell}([a_\ell,b_\ell]^{\tilde d_\ell},M), \ |a_\ell|,|b_\ell|\le M \Big\}.
\]
This class accommodates a broad range of structured functions, such as additive models, single- and multiple-index models, and hierarchical interactions, through the intrinsic dimensions $\tilde d_\ell$. Finally, we assume that the true nuisance function $g_o$ belongs to the composite class,
$g_o \in \mathcal{H}(q,\boldsymbol{\alpha},{\bf d}^*,\tilde{{\bf d}},M)$.

In the classical (non-composite) Hölder class $\mathcal{H}^\alpha_d$, the minimax-optimal rate is $n^{-\alpha/(2\alpha+d)}$ \citep{tsybakov2009}. 
For multilayer composite Hölder classes, \citet{schmidthieber2020nonparametric} established matching minimax lower and upper bounds (up to polylogarithmic factors), showing that  the optimal rate is governed by
$\gamma_n \;=\; \max_{\ell=0,\ldots,q} \, n^{-\tilde\alpha_\ell/(2\tilde\alpha_\ell+\tilde d_\ell)}$,
where the effective smoothness indices are
$
\tilde\alpha_\ell \;=\; \alpha_\ell \prod_{k=\ell+1}^{q}(\alpha_k \wedge 1),
\ \ a\wedge b=\min\{a,b\}
$. Thus, each layer contributes to the overall difficulty of the estimation problem through its intrinsic dimension $\tilde{d}_{\ell}$ and effective smoothness $\tilde{\alpha}_{\ell}$. Taken together, Theorems \ref{thm:rate}  and \ref{thm:minimax-delta} show that $\gamma_n$ is minimax-optimal up to polylogarithmic factors:   Theorem~\ref{thm:rate} gives the upper bound for our estimator, and Theorem~\ref{thm:minimax-delta} gives the lower bound.

\subsection{Neural Network Class}
To approximate the nuisance function $g_o(t,\mathbf{X})$, we use the popular class of sparse deep ReLU neural networks \cite[and references therein]{schmidthieber2020nonparametric}.
This choice is motivated by two considerations: (i) ReLU networks are universal approximators capable of representing complex, high-dimensional functions; and (ii) their structure lends itself to sharp theoretical control through entropy bounds and sparsity constraints.
We define a general $(K+1)$-layer feedforward ReLU network with depth $K \in \mathbb{N}_+$ and width vector $\boldsymbol{p}^{\top}=(p_0,\ldots,p_{K+1})$, where $p_0=r=d+1$ denotes the input dimension and $p_{K+1}=1$ the scalar output. Specifically, the network is defined by
$$
g(\mathbf{u}) \;=\; \mathbf{W}_K \,\sigma\!\big(\mathbf{W}_{K-1}\,\sigma(\cdots \sigma(\mathbf{W}_0 \mathbf{u}+\mathbf{v}_0)\cdots)+\mathbf{v}_{K-1}\big)+\mathbf{v}_K,
$$
where $\mathbf{u}\in\mathbb{R}^r$, $\sigma(x)=\max\{x,0\}$ is applied componentwise, $\mathbf{W}_\ell\in\mathbb{R}^{p_{\ell+1}\times p_\ell}$ are weight matrices, and $\mathbf{v}_\ell\in\mathbb{R}^{p_{\ell+1}}$ are bias vectors, $\ell=0,\ldots,K$. 

For theoretical analysis we first restrict attention to networks with uniformly bounded parameters,
$$
\mathcal{G}(K,\boldsymbol{p})
\;=\;
\Big\{\, g:\ \max_{\ell=0,\ldots,K}\big(\|\mathbf{W}_\ell\|_\infty \vee \|\mathbf{v}_\ell\|_\infty\big)\le 1 \,\Big\} \, ,
$$
which ensures controlled Lipschitz behavior of the class, where for a matrix ${\bf A}$ and a vector ${\bf a}$ the entrywise sup–norms are adopted,
$\|\mathbf{A}\|_\infty=\max_{i,j}|A_{ij}|$, $\|\mathbf{a}\|_\infty=\max_j |a_j|$.
Following \citet{schmidthieber2020nonparametric} we further impose sparsity. For sparsity level $s\in\mathbb{N}_+$ and range bound $D>0$, the sparse deep ReLU class is defined as
$$
\mathcal{G}(K,s,\boldsymbol{p},D)
\;=\;
\Big\{
g\in\mathcal{G}(K,\boldsymbol{p})\ :\ 
\sum_{\ell=0}^K \big(\|\mathbf{W}_\ell\|_0 + \|\mathbf{v}_\ell\|_0\big)\le s,\ \ \|g\|_\infty \le D
\Big\},
$$
where $\|\cdot\|_0$ counts nonzero entries. Sparsity serves to control network complexity and is essential for obtaining optimal convergence rates. In the subsequent analysis we set the depth, width, and sparsity levels to balance the bias-variance tradeoff. The bracketing entropy \citep[Chs.~2–3]{vdvwellner1996} of this class is then bounded using results of \citet{schmidthieber2020nonparametric}, and Dudley’s entropy integral \citep[Thm.~2.14.1]{vdvwellner1996} is used to establish the required uniform empirical-process bounds.

\section{FLEXI--Haz Procedure}

\subsection{Estimation of $(\btheta_o,g_o)$}\label{subsec:estimation}
Estimation is based on maximizing the full likelihood induced by the hazard model (\ref{eq:model1}). Under noninformative and conditionally independent censoring given $(\bX,\bZ)$, the corresponding full log–likelihood is proportional to
\begin{equation}
\ell_n(\btheta,g)
\;=\;
\frac{1}{n}\sum_{i=1}^n
\Big[
\Delta_i\big\{ g(T_i,\mathbf{X}_i)+\btheta^\top\mathbf{Z}_i\big\}
-
\int_{0}^{\tau} Y_i(t)\,\exp\!\big\{g(t,\mathbf{X}_i)+\btheta^\top\mathbf{Z}_i\big\} \,dt
\Big].
\label{loss2}
\end{equation}
where
$Y_i(t)\;=\;\mathbf{1}(T_i\ge t)$, $t\in[0,\tau]$, $i=1,\ldots,n$, are at-risk processes, and $\tau$ is a prespecified constant. Because $g$ is modeled by a DNN, the predictable cumulative hazard  in (\ref{loss2}) contains a time-dependent integral that lacks a closed-form expression and must therefore be approximated numerically during optimization. We therefore introduce 
 a grid $0=t_0<t_1<\cdots<t_{m_n}=\tau$ and approximate the integral by the Riemann–type sum
$$
\int_{0}^{\tau} Y_i(t)\,\exp\!\{g(t,\mathbf{X}_i)+\btheta^\top\mathbf{Z}_i\}\,dt
\;\approx\;
\sum_{j:\,Y_i(t_{j-1})=1}
\big(\min\{T_i,t_j\}-t_{j-1}\big)\,
\exp\!\{g(t_j,\mathbf{X}_i)+\btheta^\top\mathbf{Z}_i\}.
$$
Defining $\Delta_{ij} = \mathbf{1}\{T_i\in(t_{j-1},t_j],\Delta_i=1\}$, we obtain the discretized likelihood
\begin{equation}
\widetilde\ell_n(\btheta,g)
\;=\;
\frac{1}{n}\sum_{i=1}^n
\sum_{j:\,Y_i(t_{j-1})=1}
\Big[
\Delta_{ij}\,\big\{g(t_j,\mathbf{X}_i)+\btheta^\top\mathbf{Z}_i\big\}
-
\big\{\min(T_i,t_j)-t_{j-1}\big\}\,\exp\!\{g(t_j,\mathbf{X}_i)+\btheta^\top\mathbf{Z}_i\}
\Big].
\label{approximated-loss}
\end{equation}
In practice, discretizing the time axis on a fine grid and augmenting it with all observed times (event and censoring times)  provides an accurate and computationally efficient approximation, as detailed in Section 5.1. This construction automatically produces a dense grid in regions where failures occur, precisely where the integrand contributes most to the likelihood.

We consider two estimators. The oracle maximum–likelihood estimator is defined as
\begin{equation}
(\widehat\btheta, \widehat g)
\; \in \;
\arg\max_{\,g\in\mathcal{G}(K,s,\boldsymbol{p},D),\;\theta\in\Theta\subseteq\mathbb{R}^p}
\;\ell_n(\btheta, g) \, .
\label{full-estimator}
\end{equation} 
The estimator that is actually implemented replaces the at-risk integral with its discretized version $\widetilde\ell_n$. We therefore define the numerically integrated  likelihood estimator as
\begin{equation}
(\widetilde\btheta, \widetilde g)
\; \in \;
\arg\max_{\,g\in\mathcal{G}(K,s,\boldsymbol{p},D),\;\theta\in\Theta \subseteq\mathbb{R}^p}
\;\widetilde\ell_n(\btheta,g) \, .
\label{approximated-estimator}
\end{equation}
If the maximizer is not unique, owing to the non-uniqueness of neural-network representations, we take an arbitrary element of the argmax set.
Theorems~1–4 establish the asymptotic properties of \eqref{full-estimator}, while Theorem~5 shows that under the chosen grid  these properties carry over to the  estimator \eqref{approximated-estimator} implemented in practice.

\subsection{Estimation and Marginal Inference for the Survival of a New Subject}
We now turn to estimation and inference on
the survival function of a new subject. Let $(\bX_0,\bZ_0)$ be an independent draw from the covariate
distribution, independent of the training sample, and define
\[
H_o(t \mid \bX_0,\bZ_0)
=
\int_0^t h_o(u \mid \bX_0,\bZ_0)\,du \, ,
\qquad
S_o(t \mid \bX_0,\bZ_0)
=
\exp\{-H_o(t \mid \bX_0,\bZ_0)\} \, .
\]
Our goal is to construct valid pointwise inference for these quantities.
Inference for such functionals can be formulated within the semiparametric framework of pathwise differentiability and efficient influence functions \citep{bickel1993efficient,van2000asymptotic,tsiatis2006semiparametric}.
A natural estimator of $H_o(t \mid \bX_0,\bZ_0)$ is obtained by plugging the estimators
$(\widetilde {\btheta},\widetilde g)$ 
into the hazard and then numerically integrating, yielding
$$\widehat H(t \mid \bX_0,\bZ_0)
=\sum_{j:t_j\le t}
\Delta t_j
\exp\!\bigl\{
\widetilde g(t_j,\bX_0)+\widetilde{\btheta}^{\sf T}\bZ_0
\bigr\} \, ,
$$
where $\Delta t_j=t_j-t_{j-1}$, and
$\widehat S(t \mid \bX_0,\bZ_0)=\exp\{-\widehat H(t \mid \bX_0,\bZ_0)\}$.
Although the plug-in estimator is natural, it is generally not the most convenient starting point for first-order inference. The reason is that the target depends on the infinite-dimensional nuisance function $g_o$, and the first-order effect of nuisance estimation error need not vanish automatically. In particular, the nuisance-rate theory established in Section 4 controls $\widetilde g-g_o$ in an integrated norm, which is sufficient for estimation of $\btheta_o$ but does not by itself guarantee that the plug-in survival estimator admits an asymptotically linear expansion with an $o_p(n^{-1/2})$ remainder.

The purpose of the one-step construction is to remove this first-order nuisance bias \citep{bickel1993efficient,van2000asymptotic}.
Concretely, we will augment the plug-in estimator by an empirical average of an estimated
efficient influence function (EIF). This correction yields an estimator whose leading error term is linear and whose remainder is second-order under the rate conditions imposed below.
As a consequence, Wald-type inference becomes available for the cumulative hazard and,
via the delta method, for the survival function.

In practice, we implement this correction using cross-fitting \citep{zheng2011cross,chernozhukov2018double}.. The motivation for cross-fitting
is twofold. First, it separates nuisance estimation from influence-function evaluation, which
substantially weakens empirical-process requirements. Second, it makes the one-step correction
more robust to overfitting in flexible nuisance estimators such as DNN. For this
reason, we formulate the estimator below in cross-fitted form from the outset.

Two additional points are worth emphasizing. First, the target considered here is
\emph{marginal} in the sense that it is evaluated at an independent draw $(\bX_0,\bZ_0)$.
This differs from conditional pointwise inference at a fixed covariate value $(\bx_0,\bz_0)$, which may be
nonregular without additional smoothness or localization assumptions. Second, the construction
below also applies to the hazard model without a separate linear component, obtained by
absorbing $\bZ$ into the nuisance function $g_o$. In that case, the parametric score and
information terms disappear, and the EIF reduces to its nuisance--Riesz component alone.
Equivalently, one may replace $\bX$ throughout by the enlarged covariate vector
$(\bX^{\sf T},\bZ^{\sf T})^{\sf T}$.

We first present the estimator at a high level, postponing all technical derivations to
Section 5 below. 

To define the one-step estimator, let $\phi_t^0(\bW;\bX_0,\bZ_0)$ denote the efficient influence function for $H_o(t\mid \bX_0,\bZ_0)$, where $\bW=(T,\Delta,\bX,\bZ)$, and let
$\widehat\phi_t^{(-k)}(\bW;\bX_0,\bZ_0)$ be its estimator computed using nuisance estimators
trained without fold $k$. Let $\{\mathcal I_k\}_{k=1}^K$ be a partition of $\{1,\dots,n\}$ into folds. For $i\in \mathcal I_k$, the observation $\bW_i$ is evaluated using nuisance estimators fitted on the complement of $\mathcal I_k$.

The cross-fitted one-step estimator of the cumulative hazard is
\begin{equation}
\widehat H^{1,\mathrm{cf}}(t\mid \bX_0,\bZ_0)
=
\widehat H^{\mathrm{cf}}(t\mid \bX_0,\bZ_0)
+
\frac{1}{n}\sum_{k=1}^K \sum_{i\in \mathcal I_k}
\widehat\phi_t^{(-k)}(\bW_i;\bX_0,\bZ_0),
\label{eq:H1cf}
\end{equation}
where
$$
\widehat H^{\mathrm{cf}}(t\mid \bX_0,\bZ_0)
=
\sum_{k=1}^K \frac{|\mathcal I_k|}{n}
\widehat H^{(-k)}(t\mid \bX_0,\bZ_0)
$$
and the corresponding one-step estimator of the survival function is
\begin{equation}
\widehat S^{1,\mathrm{cf}}(t\mid \bX_0,\bZ_0)
=
\exp\bigl\{-\widehat H^{1,\mathrm{cf}}(t\mid \bX_0,\bZ_0)\bigr\}.
\label{eq:S1cf}
\end{equation}
At an intuitive level, \eqref{eq:H1cf} augments the plug-in estimator by a correction term that
removes the first-order effect of nuisance estimation. Under the conditions stated below, the resulting estimator is asymptotically linear.  The practical construction therefore
reduces to two ingredients: estimation of the nuisance objects entering the plug-in hazard;
and estimation of the efficient influence function $\phi_t^0(\cdot;\bX_0,\bZ_0)$.

The one-step correction is not shape-constrained. Hence, in finite samples, \(\widehat H^{1,\mathrm{cf}}(t\mid \bX_0,\bZ_0)\) need not be monotone nondecreasing in \(t\), and
\(\widehat S^{1,\mathrm{cf}}(t\mid \bX_0,\bZ_0)=\exp\{-\widehat H^{1,\mathrm{cf}}(t\mid \bX_0, \bZ_0)\}\) need not be
monotone nonincreasing. This is not a problem for the pointwise asymptotic inference developed
here, which is for each fixed \(t\). If a monotone curve is desired for display, the estimated
survival curve can be post-processed by monotone rearrangement or isotonic projection.

The next subsection describes the resulting estimator explicitly in the finite discrete-covariate case, where the influence function admits a closed-form representation. The explicit construction below focuses on the case where \(bX\) has finite support, since in
that setting the nuisance Riesz representer admits a closed form. A more general high-level
formulation for arbitrary covariate distributions is possible, based on an abstract
\(L^2(P_{\bX,\bZ})\)-valued pathwise differentiability argument and an associated representer
condition, but this leads to a less explicit estimator and is therefore deferred to the
Appendix.

\subsubsection{Explicit form under finite discrete covariates}
\label{subsec:discrete-estimator}

We now specialize to the case where $\bX$ has finite support
$\mathcal X=\{\bx^{(1)},\dots,\bx^{(J)}\}$ with
${\Pr}(\bX=\bx^{(j)})=\pi_j>0$, $j=1,\dots,J$.
In this setting, the nuisance Riesz representer underlying the efficient influence function
admits a closed form, and therefore the one-step estimator becomes fully explicit.

For $u\in[0,\tau]$ and $j=1,\dots,J$, define
\[
A_j(u)
=
\EE_0\!\left\{
Y(u)h_o(u\mid \bX,\bZ)\mid \bX=\bx^{(j)}
\right\}.
\]
Then the nuisance part of the efficient influence function is represented by a weight function
of the form
\begin{equation}
\psi_t^0(u,\bX;\bX_0,\bZ_0)
=
\mathbf 1(u\le t)\mathbf 1(\bX=\bX_0)
\frac{h_o(u\mid \bX_0,\bZ_0)}
{\Pr(\bX=\bX_0)\,A_{\bX_0}(u)}.
\label{eq:psi-discrete-skeleton}
\end{equation}
Accordingly, the efficient influence function takes the form
\begin{equation}
\phi_t^0(\bW;\bX_0,\bZ_0)
=
\int_0^t
\mathbf 1(\bX=\bX_0)
\frac{h_o(u\mid \bX_0,\bZ_0)}
{\Pr(\bX=\bX_0)\,A_{\bX_0}(u)}
\,dM(u)
+
c_{0,t}(\bX_0,\bZ_0)^{\sf T}
{\bf I}(\btheta_0)^{-1}
\ell_{\theta_0}^\ast(\bW),
\label{eq:eif-discrete-skeleton}
\end{equation}
where $M(t)=N(t)-\int_0^t Y(u)h_o(u\mid \bX,\bZ)\,du$ is the counting-process martingale under $\PP_0$ with $N(t)=\Delta \mathbf 1(T\le t)$.
The quantities $c_{0,t}(\bX_0,\bZ_0)$, $I(\btheta_o)$, and $\ell^*_{\theta_o}$ are defined formally in Section 5; for now, it suffices to note that they represent the sensitivity term, efficient information, and efficient score, respectively.
Replacing the unknown quantities in \eqref{eq:eif-discrete-skeleton} by their estimators yields
an explicit estimator $\widehat\phi_t^{(-k)}(\bW_i;\bX_0,\bZ_0)$ and therefore an explicit
cross-fitted one-step estimator through \eqref{eq:H1cf}--\eqref{eq:S1cf}.

This discrete-support case is especially convenient in practice because the Riesz representer is explicit. Hence the one-step correction can be implemented directly, without introducing an additional approximation step for the representer.

Let $\widehat\sigma_{H,t}^2(\bX_0,\bZ_0)$ denote a consistent estimator of the asymptotic
variance of $\widehat H^{1,\mathrm{cf}}(t\mid \bX_0,\bZ_0)$, for example the empirical variance
of the cross-fitted influence-function estimates is given by 
\begin{equation}
\widehat\sigma_{H,t}^2(\bX_0,\bZ_0)
=
\frac{1}{n}\sum_{k=1}^K\sum_{i\in \mathcal I_k}
\left\{
\widehat\phi_t^{(-k)}(\bW_i;\bX_0,\bZ_0)
\right\}^2.
\label{eq:var-est-skeleton}
\end{equation}

An asymptotic $(1-\alpha)$ confidence interval for $H_o(t\mid \bX_0,\bZ_0)$ is therefore
\begin{equation}
\widehat H^{1,\mathrm{cf}}(t\mid \bX_0,\bZ_0)
\pm
z_{1-\alpha/2}
\frac{\widehat\sigma_{H,t}(\bX_0,\bZ_0)}{\sqrt n},
\label{eq:CI-H-skeleton}
\end{equation}
and, by the delta method, an asymptotic $(1-\alpha)$ confidence interval for
$S_o(t\mid \bX_0,\bZ_0)$ is
\begin{equation}
\widehat S^{1,\mathrm{cf}}(t\mid \bX_0,\bZ_0)
\pm
z_{1-\alpha/2}
\frac{
\widehat S^{1,\mathrm{cf}}(t\mid \bX_0,\bZ_0)
\widehat\sigma_{H,t}(\bX_0,\bZ_0)
}{\sqrt n}.
\label{eq:CI-S-skeleton}
\end{equation}
The asymptotic validity of these intervals is established in Section 5.

\section{Asymptotic Theory for $(\widetilde \btheta, \widetilde g)$}

\subsection{Assumptions}\label{sec:assumptions}
Some additional notation will be used throughout. We write $a_n \lesssim b_n$ if there exists a constant $c>0$ such that $a_n \le c\,b_n$ for all $n$, and $a_n \asymp b_n$ if both $a_n \lesssim b_n$
and $b_n \lesssim a_n$ hold. For a column vector ${\bf a}$, ${\bf a}^{\otimes 2}={\bf a}{\bf a}^{\sf T}$. Let $\PP_0$ denote the true distribution of a generic observation
$(T,\Delta,\bX,\bZ)$.
For any measurable function $\mu=\mu(T,\bX,\bZ)$, define the unweighted $L^2_{\PP_0}$ norm 
$\|\mu\|^2_{L^2({\PP_0})} = \EE_0\{\mu(T,\bX,\bZ)^2\}$
and the event–weighted norm 
$\|\mu\|^2_{L^2(\PP_0,\Delta)}= \EE_0\{\Delta\,\mu(T,\bX,\bZ)^2\big\}$.
Let $\eta=(\btheta,g)$, write
$$
\chi_\eta(t,\bX,\bZ) = g(t,\bX)\ +\ \theta^\top \bZ \, ,
$$
and denote the true parameter by $\eta_o=(\btheta_o,g_o)$. For two parameters $\eta_1,\eta_2$, we measure
their respective event-weighted score distance by 
\[
d(\eta_1,\eta_2)
\ =\
\Big(\EE_0\big[\Delta\big\{\chi_{\eta_1}(T,\bX,\bZ)-\chi_{\eta_2}(T,\bX,\bZ)\big\}^2\big]\Big)^{1/2}
\ =\ \|\chi_{\eta_1}-\chi_{\eta_2}\|_{L^2({P_0,\Delta})} \, .
\]

Finally, the population loss is defined as 
\begin{equation}\label{loss}
\ell_0(\eta)
\;=\;
\EE_0\Big[
\Delta\,\chi_\eta(T,\bX,\bZ)
-
\int_{0}^{\tau} Y(t)\,\exp\!\big\{\chi_\eta(t,\bX,\bZ)\big\}\,dt
\Big] \, .
\end{equation}

\begin{assumption}\label{ass:A1}
The failure time $U$ and censoring time $C$ are conditionally independent given 
$(\bX^\top,\bZ^\top)$, and the censoring mechanism is noninformative in the sense that the censoring process is noninformative for $(\btheta,g)$. 
The true nuisance function satisfies 
$g_o \in \mathcal{H}(q,\boldsymbol{\alpha},{\bf d}^*,\tilde{{\bf d}},M)$.
\end{assumption}

\begin{assumption}\label{ass:A2}
The covariate vector  $(\bX, \mathbf{Z})$ takes values in a bounded subset of $\mathbb{R}^{d+p}$, denoted $\mathcal{X} \times \mathcal{Z}$, where both sets  are compact.  Without loss of generality, assume $\mathcal{X} \subset [0,1]^d$ and that $\| \mathbf{Z} \|_2 \leq B_{\bZ}$ almost surely. The joint  density  of $({\bf X},{\bf Z})$ is bounded away from zero on $\mathcal{X} \times \mathcal{Z}$. The parameter space for the finite-dimensional component is restricted to the compact set
$\Theta \;=\; \{\vartheta\in\mathbb{R}^p:\ \|\vartheta\|\le M_\Theta\}$.
Finally, the study follows participants over a fixed, finite time horizon  $\tau>0$.
\end{assumption}

\begin{assumption}\label{ass:A3}
 The network architecture satisfies 
$K = O(\log n)$,  $s = O\!\big(n\gamma_n^2 \log n\big)$, and 
$ n\gamma_n^2 \lesssim  \;\min_{0\le \ell\le K} p_\ell \lesssim \max_{0\le \ell\le K} p_\ell \lesssim n$. 
In addition, the sieve is uniformly bounded, $\|g\|_\infty\le D_n$ for a known sieve radius $D_n=O(1)$.
\end{assumption}

\begin{assumption}\label{ass:A4}
There exist $\delta_1, \delta_2 , \delta_3\in(0,1]$ such that, almost surely,
$\Pr(\Delta=1\mid T,\bX,\bZ) \ge \delta_1$, $\Pr(U\ge \tau\mid \bX,\bZ) \ge \delta_2$, and $\Pr(Y(t)=1 \mid \bX, \bZ) \geq \delta_3$ for all $t\in [0,\tau]$.
\end{assumption}

\begin{assumption}\label{ass:A5}
The covariate vector $\mathbf{Z}$ does not include a constant (intercept) term.  
Furthermore, the residual covariance matrix of $\mathbf{Z}$ given $\mathbf{T,X}$ among failures, is positive definite, that is,
\[
\Sigma_{\mathbf Z}
=
\mathbb{E}\!\left[
\big\{\mathbf Z - \mathbb{E}(\mathbf Z \mid T, \mathbf X, \Delta=1)\big\}
\big\{\mathbf Z - \mathbb{E}(\mathbf Z \mid T, \mathbf X, \Delta=1)\big\}^\top
\right]
\succ 0 \, .
\]
\end{assumption}

Assumptions \ref{ass:A1}--\ref{ass:A3} are also used in \cite{zhong2022deep} under the partial linear PH regression model. \ref{ass:A3}  determines the structure of the neural network and provides a trade-off between the approximation error and estimation error \citep{schmidthieber2020nonparametric}. Assumption \ref{ass:A2} is a standard assumption for semi- or non-parametric regression. Moreover, \ref{ass:A1} and \ref{ass:A2} imply that there exist constants $ 0 <c < C < \infty$ (that depend on $D_n,B_{\bZ}$ and $M_{\Theta}$) such that for all $t\in (0,\tau]$ and $(\bX,\bZ) \in \mathcal{X} \times \mathcal{Z}$, $c_h \leq h_o(t|\bX,\bZ)\leq C_h$.   Assumption \ref{ass:A4} is required since in the hazard function (\ref{eq:model1}) time is one of the components in the function $g_o$ estimated by the DNN and therefore the effects of the covariates ${\bX}$ and time are not disentangled, in contrast to the partial linear Cox model of \cite{zhong2022deep}. Hence we  need a non-vanishing chance to observe events over the entire study horizon for all $(\bx,\bZ)$ in the support.  While this assumption is stronger than in standard Cox regression, it is natural in baseline-free hazard models and is comparable to positivity conditions commonly imposed in semiparametric and causal models with continuous time. Assumption \ref{ass:A5} is the non-PH analogue of the standard partial-linear identifiability condition. Because the nuisance component is allowed to depend on $(t,\bX)$, the relevant residual variation for the linear covariates must be measured after conditioning on $(T,\bX)$ among failures, rather than on $\bX$ alone.

\subsection{Asymptotic Results}
The proofs of all  lemmas and theorems in this section appear in Appendix A. We begin by establishing several basic properties of the model. A key step is to understand how the counting–process framework allows us to translate expressions involving the at-risk processes into simpler quantities expressed directly in terms of the event indicator $\Delta$. The next lemma formalizes this relationship, showing that these two representations are equivalent up to a  multiplicative constant. This equivalence will be used repeatedly throughout the analysis.

\begin{restatable}{lemma}{IntVsEvent}\label{lem:int-vs-event}
Under \ref{ass:A1}–\ref{ass:A2}, for any nonnegative, predictable function 
$f(t,\bX,\bZ)$ with $\mathbb{E}\!\left[\int_0^\tau Y(t)\,f(t,\bX,\bZ)\,dt\right]<\infty$,
$$
\mathbb{E}\!\left\{\int_0^\tau Y(t)\,f(t,\bX,\bZ)\,dt\right\}
\;\asymp\;
\mathbb{E}\!\left\{ \Delta\,f(T,\bX,\bZ)\right\}.
$$
\end{restatable}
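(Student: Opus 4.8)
The plan is to prove both inequalities separately, using the model structure and the boundedness of the hazard provided by Assumptions~\assonly{ass:A1}--\assonly{ass:A2}. The key identity is the standard fact that, for a predictable nonnegative process $f$, one can condition on the covariates and interchange expectation with the time integral, so that
\[
\mathbb{E}\!\left\{\int_0^\tau Y(t)\,f(t,\bX,\bZ)\,dt\right\}
=
\mathbb{E}\!\left\{\int_0^\tau \Pr(T\ge t\mid \bX,\bZ)\,f(t,\bX,\bZ)\,dt\right\},
\]
where the inner probability is the survival function $S_T(t\mid\bX,\bZ)$ of the observed time $T=\min(U,C)$. Similarly, writing $\Delta\,f(T,\bX,\bZ)$ as an integral against the observed counting process $dN(t)$ and taking the compensator (using conditional independence of $U$ and $C$ and the form of $h_o$), one gets
\[
\mathbb{E}\!\left\{\Delta\,f(T,\bX,\bZ)\right\}
=
\mathbb{E}\!\left\{\int_0^\tau Y(t)\,h_o(t\mid\bX,\bZ)\,f(t,\bX,\bZ)\,dt\right\}
=
\mathbb{E}\!\left\{\int_0^\tau S_T(t\mid\bX,\bZ)\,h_o(t\mid\bX,\bZ)\,f(t,\bX,\bZ)\,dt\right\}.
\]
Thus both sides are integrals of $S_T(t\mid\bX,\bZ)\,f(t,\bX,\bZ)$ against, respectively, Lebesgue measure and $h_o(t\mid\bX,\bZ)\,dt$.

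Given these two representations, the equivalence is immediate from the two-sided hazard bound: Assumptions~\assonly{ass:A1}--\assonly{ass:A2} guarantee constants $0<c_h\le C_h<\infty$ with $c_h\le h_o(t\mid\bX,\bZ)\le C_h$ for all $t\in(0,\tau]$ and $(\bX,\bZ)\in\mathcal{X}\times\mathcal{Z}$ (this is exactly the consequence of \assonly{ass:A1}--\assonly{ass:A2} noted after the assumptions). Multiplying the integrand $S_T(t\mid\bX,\bZ)\,f(t,\bX,\bZ)\ge 0$ by $h_o$ and integrating preserves the sandwich, giving
\[
c_h\,\mathbb{E}\!\left\{\int_0^\tau Y(t)\,f(t,\bX,\bZ)\,dt\right\}
\le
\mathbb{E}\!\left\{\Delta\,f(T,\bX,\bZ)\right\}
\le
C_h\,\mathbb{E}\!\left\{\int_0^\tau Y(t)\,f(t,\bX,\bZ)\,dt\right\},
\]
which is the claimed $\asymp$. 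The finiteness hypothesis ensures all quantities are well-defined and the interchange of integrals is justified by Tonelli's theorem (everything is nonnegative).

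The only genuinely delicate point, and the one I would write out carefully, is the compensator step: justifying that $\mathbb{E}\{\Delta\,f(T,\bX,\bZ)\}=\mathbb{E}\{\int_0^\tau Y(t)\,h_o(t\mid\bX,\bZ)\,f(t,\bX,\bZ)\,dt\}$. This uses that $dN(t)-Y(t)\,h_o(t\mid\bX,\bZ)\,dt$ is a martingale increment for the observed event process under conditionally independent, noninformative censoring (Assumption~\assonly{ass:A1}), together with predictability of $f$ so that $\int_0^\tau f\,(dN-Y h_o\,dt)$ is a martingale with mean zero; the optional stopping / integrability needed here is supplied by the finiteness assumption and the boundedness of $f$ against the compensator. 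Once this identity is in place, the rest is just the hazard sandwich, so I do not anticipate further obstacles.
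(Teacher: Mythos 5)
Your proposal is correct and follows essentially the same route as the paper: the heart of both arguments is the compensator identity $\mathbb{E}\{\Delta\,f(T,\bX,\bZ)\}=\mathbb{E}\{\int_0^\tau Y(t)\,h_o(t\mid\bX,\bZ)\,f(t,\bX,\bZ)\,dt\}$, obtained from predictability of $f$ and the mean-zero martingale integral, followed by the two-sided hazard bound $c_h\le h_o\le C_h$. Your additional conditioning step that replaces $Y(t)$ by $S_T(t\mid\bX,\bZ)$ is harmless but unnecessary, since the sandwich can be applied directly inside $\mathbb{E}\{\int_0^\tau Y(t)\,h_o\,f\,dt\}$, exactly as the paper does.
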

The proof uses only the positivity of the hazard function together with the compensator identity for counting processes, and therefore applies uniformly to all functions that arise in our later analysis.


\begin{restatable}{lemma}{DeltaZeroImpliesUnweightedZero}%
\label{lem:delta-zero-implies-unweighted-zero}
Let $\chi_{\eta_j}(t,\bX,\bZ)$, $j=1,2$, be bounded predictable processes satisfying
$$
\EE\!\left\{\int_0^\tau \chi_{\eta_j}(t,\bX,\bZ)^2\,dt \right\}<\infty \qquad j=1,2 \, .
$$ 
Define
$\phi(t,\bX,\bZ)= \chi_{\eta_1}(t,\bX,\bZ) - \chi_{\eta_2}(t,\bX,\bZ)$. Under Assumptions~\ref{ass:A1}, \ref{ass:A2} and~\ref{ass:A4}, there exist finite constants $0<c_1\le c_2<\infty$ and
$0<c_3\le c_4<\infty$, depending only on
$(\tau,\delta_1,\delta_2,c_h,C_h)$, such that
\begin{align}
c_1\,\EE\!\big\{\Delta\,\phi(T,\bX,\bZ)^2\big\}
&\;\le\;
\EE\!\big\{\phi(T,\bX,\bZ)^2\big\}
\;\le\;
c_2\,\EE\!\big\{\Delta\,\phi(T,\bX,\bZ)^2\big\} \, ,
\label{eq:phiT-Delta-equivalence}
\\[0.5em]
c_3\,\EE\!\big\{ \Delta\,\phi(T,\bX,\bZ)^2\big\}
&\;\le\;
\EE_{\bX,\bZ}\!\Big\{ \int_0^\tau \phi(t,\bX,\bZ)^2\,dt\Big\}
\;\le\;
c_4\,\EE\!\big\{ \Delta\,\phi(T,\bX,\bZ)^2\big\} \, . 
\label{eq:time-integral-Delta-equivalence}
\end{align}
In particular, if $\EE\{\Delta\,\phi(T,\bX,\bZ)^2\}=0$, then
$\phi(T,\bX,\bZ)=0$ $\PP_0$–a.s. and
$\phi(t,\bX,\bZ)=0$ for Lebesgue-a.e. $t\in[0,\tau]$ and
$\PP_{\bX,\bZ}$–a.e. $(\bX,\bZ)$, i.e.
$\chi_{\eta_1}(t,\bX,\bZ)=\chi_{\eta_2}(t,\bX,\bZ)$
for $dt\otimes \PP_{\bX,\bZ}$–almost all $(t,\bX,\bZ)$.
\end{restatable}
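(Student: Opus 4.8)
The plan is to establish the two displays in turn --- each reduces to sandwiching a conditional weight between positive constants depending only on $(\tau,\delta_1,\delta_2,c_h,C_h)$ --- and then to read off the ``in particular'' statement by setting the event-weighted quantity to zero in the upper bounds of both. For \eqref{eq:phiT-Delta-equivalence} I would condition on $(T,\bX,\bZ)$: with $\pi(T,\bX,\bZ)=\Pr(\Delta=1\mid T,\bX,\bZ)$ one has $\EE\{\Delta\,\phi(T,\bX,\bZ)^2\}=\EE\{\pi(T,\bX,\bZ)\,\phi(T,\bX,\bZ)^2\}$, and since Assumption~\ref{ass:A4} gives $\delta_1\le\pi\le1$ almost surely, the inequalities follow with $c_1=1$ and $c_2=1/\delta_1$.

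For \eqref{eq:time-integral-Delta-equivalence} the plan is to route through the at-risk representation and Lemma~\ref{lem:int-vs-event}. Since $\phi$ is bounded and predictable with $\EE\{\int_0^\tau\phi(t,\bX,\bZ)^2\,dt\}<\infty$ (using $\phi^2\le2\chi_{\eta_1}^2+2\chi_{\eta_2}^2$ and the hypothesis on the $\chi_{\eta_j}$), Lemma~\ref{lem:int-vs-event} applied with $f=\phi^2$ gives $\EE\{\int_0^\tau Y(t)\,\phi(t,\bX,\bZ)^2\,dt\}\asymp\EE\{\Delta\,\phi(T,\bX,\bZ)^2\}$ with constants depending on $(c_h,C_h,\tau)$. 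It then remains to compare $\EE\{\int_0^\tau Y(t)\,\phi^2\,dt\}=\EE_{\bX,\bZ}\{\int_0^\tau\Pr(T\ge t\mid\bX,\bZ)\,\phi(t,\bX,\bZ)^2\,dt\}$ with $\EE_{\bX,\bZ}\{\int_0^\tau\phi(t,\bX,\bZ)^2\,dt\}$: the upper bound is immediate since $\Pr(T\ge t\mid\bX,\bZ)\le1$, while the lower bound requires $\Pr(T\ge t\mid\bX,\bZ)\ge\underline s>0$ for Lebesgue-a.e.\ $t\in[0,\tau]$ and all $(\bX,\bZ)$. By conditional independence (Assumption~\ref{ass:A1}), $\Pr(T\ge t\mid\bX,\bZ)=\Pr(U\ge t\mid\bX,\bZ)\,\Pr(C\ge t\mid\bX,\bZ)$, and the first factor is $\ge\Pr(U\ge\tau\mid\bX,\bZ)\ge\delta_2$ for $t\le\tau$, so the whole problem reduces to a uniform lower bound on the conditional censoring survival function.

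That censoring bound is the only genuinely nontrivial step, and the key observation is that Assumption~\ref{ass:A4} forces the censoring hazard $h_C$ to be dominated by the (bounded) event hazard. From the observed-data identity $\Pr(\Delta=1\mid T=t,\bX,\bZ)=h_o(t\mid\bX,\bZ)/(h_o(t\mid\bX,\bZ)+h_C(t\mid\bX,\bZ))$, valid for $t<\tau$ (the competing-risks decomposition of the exit at time $t$), the hypothesis $\Pr(\Delta=1\mid T=t,\bX,\bZ)\ge\delta_1$ rearranges to $h_C(t\mid\bX,\bZ)\le\tfrac{1-\delta_1}{\delta_1}\,h_o(t\mid\bX,\bZ)\le\tfrac{1-\delta_1}{\delta_1}\,C_h$ for a.e.\ $t\in(0,\tau)$, where $C_h$ is the hazard upper bound implied by Assumptions~\ref{ass:A1}--\ref{ass:A2}. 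Integrating in $t$, the cumulative censoring hazard on $[0,\tau)$ is at most $(1-\delta_1)C_h\tau/\delta_1$, hence $\Pr(C\ge t\mid\bX,\bZ)\ge\exp\{-(1-\delta_1)C_h\tau/\delta_1\}$ uniformly in $t<\tau$ and $(\bX,\bZ)$; combined with the previous step this yields $\underline s=\delta_2\exp\{-(1-\delta_1)C_h\tau/\delta_1\}>0$, and propagating the constants through Lemma~\ref{lem:int-vs-event} produces $c_3,c_4$ of the stated form. (If administrative censoring gives $C$ an atom at $\tau$, the identity is invoked only for $t<\tau$, which is all the $dt$-integral sees.)

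Finally, if $\EE\{\Delta\,\phi(T,\bX,\bZ)^2\}=0$, the upper bound in \eqref{eq:phiT-Delta-equivalence} forces $\EE\{\phi(T,\bX,\bZ)^2\}=0$, so $\phi(T,\bX,\bZ)=0$ $P_0$-a.s., and the upper bound in \eqref{eq:time-integral-Delta-equivalence} forces $\EE_{\bX,\bZ}\{\int_0^\tau\phi(t,\bX,\bZ)^2\,dt\}=0$; nonnegativity of the integrand then gives $\int_0^\tau\phi(t,\bX,\bZ)^2\,dt=0$ for $P_{\bX,\bZ}$-a.e.\ $(\bX,\bZ)$, hence $\phi(t,\bX,\bZ)=0$, i.e.\ $\chi_{\eta_1}(t,\bX,\bZ)=\chi_{\eta_2}(t,\bX,\bZ)$, for $dt\otimes P_{\bX,\bZ}$-almost every $(t,\bX,\bZ)$.
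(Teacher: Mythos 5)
Your proof is correct, and while part of it coincides with the paper, the core of the second display is handled by a genuinely different route. For \eqref{eq:phiT-Delta-equivalence} and the final ``in particular'' step you argue exactly as the paper does (conditioning on $(T,\bX,\bZ)$, $\delta_1\le\Pr(\Delta=1\mid T,\bX,\bZ)\le 1$, then nonnegativity). For \eqref{eq:time-integral-Delta-equivalence}, the paper works conditionally on $(\bX,\bZ)$ and sandwiches the joint subdensity $f_{T,\Delta=1\mid\bX,\bZ}(t\mid\bX,\bZ)=h_o(t\mid\bX,\bZ)\exp[-\int_0^t\{h_o+\mu\}]$ between positive constants, simply asserting that A4 makes the censoring hazard $\mu$ ``well behaved''; you instead apply Lemma~\ref{lem:int-vs-event} with $f=\phi^2$ to replace the event-weighted quantity by $\EE\{\int_0^\tau Y(t)\phi^2\,dt\}$, and then bound $\Pr(T\ge t\mid\bX,\bZ)=\Pr(U\ge t\mid\bX,\bZ)\Pr(C\ge t\mid\bX,\bZ)$ below, using $\delta_2$ for the event part and deriving the censoring control explicitly from A4 via the competing-risks identity $\Pr(\Delta=1\mid T=t,\bX,\bZ)=h_o/(h_o+h_C)$, which gives $h_C\le\frac{1-\delta_1}{\delta_1}C_h$ and hence $\Pr(C\ge t\mid\bX,\bZ)\ge\exp\{-(1-\delta_1)C_h\tau/\delta_1\}$. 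This is the step the paper glosses over, so your version is in this respect more self-contained and produces explicit constants $c_3,c_4$ in terms of $(\tau,\delta_1,\delta_2,c_h,C_h)$; both arguments, yours and the paper's, implicitly require that $C$ admit a conditional hazard on $[0,\tau)$ (an atom at $\tau$ being harmless, as you note). The only point worth tightening is that the inequality $h_C\le\frac{1-\delta_1}{\delta_1}h_o$ is obtained $P_{T\mid\bX,\bZ}$-a.e., and one should add the short continuation argument showing this propagates to Lebesgue-a.e.\ $t\in(0,\tau)$ (i.e.\ that $S_C$ cannot vanish before $\tau$, since that would force an infinite cumulative censoring hazard, contradicting the derived bound); this is routine and does not affect the validity of the proof.
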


Lemma~\ref{lem:delta-zero-implies-unweighted-zero} shows that the distance $d(\cdot,\cdot)$ is a valid  metric on the parameter space: $d(\eta_1,\eta_2)=0$ if and only if the induced linear score coincides almost everywhere. In addition, the lemma shows that the event-weighted norm and the corresponding unweighted norms are equivalent up to multiplicative constants. As a result, any bound or convergence statement expressed in terms of the metric $d$ can be translated into an equivalent statement under the unweighted $L^2(dt\otimes P_{\bX,\bZ})$ norm and vice versa. This equivalence is used repeatedly throughout the theoretical development.

Next we study the shape of the population loss around $\eta_o=(\btheta_o,g_o)$.

\begin{restatable}{lemma}{ConcutivityOfPopulationLoss}\label{lmm:concavity}
Assume \ref{ass:A1}--\ref{ass:A2}.  For any $\eta\in \Theta \times \mathcal{H}(q,\boldsymbol{\alpha},{\bf d}^*,\tilde{{\bf d}},M)$,
\[
\ell_0(\eta)-\ell_0(\eta_o)\;\asymp\;-\,d(\eta,\eta_o)^2.
\]
\end{restatable}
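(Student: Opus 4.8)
The plan is to compute the population loss difference $\ell_0(\eta)-\ell_0(\eta_o)$ directly from its definition \eqref{loss}, isolate a pointwise convex functional of the log-hazard increment, and then bound that functional above and below by a constant multiple of the squared increment. Writing $\psi(t,\bX,\bZ)=\chi_\eta(t,\bX,\bZ)-\chi_{\eta_o}(t,\bX,\bZ)$, the $\Delta$-linear terms in \eqref{loss} contribute $\EE_0[\Delta\,\psi(T,\bX,\bZ)]$, and the at-risk integral terms contribute $-\EE_0[\int_0^\tau Y(t)\{e^{\chi_\eta}-e^{\chi_{\eta_o}}\}\,dt]$. Using the compensator identity — exactly as in the proof of Lemma~\ref{lem:int-vs-event} — one has $\EE_0[\Delta\,\psi(T,\bX,\bZ)] = \EE_0[\int_0^\tau Y(t)\,h_o(t\mid\bX,\bZ)\,\psi(t,\bX,\bZ)\,dt] = \EE_0[\int_0^\tau Y(t)\,e^{\chi_{\eta_o}(t,\bX,\bZ)}\,\psi(t,\bX,\bZ)\,dt]$, since $h_o=e^{\chi_{\eta_o}}$ under model \eqref{eq:model1}. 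Substituting, the loss difference collapses to
\[
\ell_0(\eta)-\ell_0(\eta_o)
=-\,\EE_0\!\left\{\int_0^\tau Y(t)\,e^{\chi_{\eta_o}(t,\bX,\bZ)}\Big(e^{\psi(t,\bX,\bZ)}-1-\psi(t,\bX,\bZ)\Big)\,dt\right\}.
\]

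Next I would exploit the elementary inequality that for $|u|\le R$ there exist constants $0<k_1(R)\le k_2(R)<\infty$ with $k_1 u^2 \le e^u-1-u \le k_2 u^2$. Here $R$ is available because $\psi$ is uniformly bounded: both $\chi_\eta$ and $\chi_{\eta_o}$ are bounded on $[0,\tau]\times\mathcal{X}\times\mathcal{Z}$ by Assumption~\ref{ass:A2} (compact $\Theta$, bounded $\bZ$) together with the boundedness of the composite Hölder class $\mathcal{H}(q,\boldsymbol\alpha,{\bf d},\tilde{\bf d},M)$ in Assumption~\ref{ass:A1}. Combining this with the two-sided bound $c_h\le e^{\chi_{\eta_o}}=h_o\le C_h$ from Assumptions~\ref{ass:A1}--\ref{ass:A2}, we get
\[
c_h\,k_1\,\EE_0\!\left\{\int_0^\tau Y(t)\,\psi(t,\bX,\bZ)^2\,dt\right\}
\;\le\;
\ell_0(\eta_o)-\ell_0(\eta)
\;\le\;
C_h\,k_2\,\EE_0\!\left\{\int_0^\tau Y(t)\,\psi(t,\bX,\bZ)^2\,dt\right\}.
\]
Finally, applying Lemma~\ref{lem:int-vs-event} with the nonnegative predictable function $f=\psi^2$ converts $\EE_0[\int_0^\tau Y(t)\,\psi^2\,dt]$ into something $\asymp \EE_0[\Delta\,\psi(T,\bX,\bZ)^2]=d(\eta,\eta_o)^2$, which is precisely the claimed two-sided bound $\ell_0(\eta)-\ell_0(\eta_o)\asymp -d(\eta,\eta_o)^2$.

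The only genuinely delicate point — and the step I expect to require the most care — is establishing the compensator identity $\EE_0[\Delta\,\psi(T,\bX,\bZ)]=\EE_0[\int_0^\tau Y(t)\,h_o(t\mid\bX,\bZ)\,\psi(t,\bX,\bZ)\,dt]$ rigorously. This needs $\psi$ to be a genuinely predictable (here, left-continuous or time-fixed-in-$\bX$) process and the martingale $M(t)=N(t)-\int_0^t Y(s)h_o(s\mid\bX,\bZ)\,ds$ to be integrable so that $\EE_0[\int_0^\tau \psi\,dM]=0$; integrability follows from the boundedness of $\psi$, $h_o$, and the finite horizon $\tau$ (Assumption~\ref{ass:A2}). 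Everything else is a routine Taylor-type expansion. Note also that Assumption~\ref{ass:A4} is not needed here: the at-risk weighting is handled entirely by Lemma~\ref{lem:int-vs-event}, which relies only on positivity of the hazard; Assumption~\ref{ass:A4} would only be invoked if one wished to pass further to the unweighted $L^2$ or $dt\otimes P_{\bX,\bZ}$ norms as in Lemma~\ref{lem:delta-zero-implies-unweighted-zero}.
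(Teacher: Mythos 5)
Your proposal is correct, and it reaches the conclusion by a route that is genuinely more direct than the paper's. The paper parametrizes a path $\eta_v=\eta_o+v(\eta-\eta_o)$, computes Gateaux derivatives of $\ell_0$ along it, shows the first-order term vanishes via the martingale mean-zero property, and then invokes the mean value theorem to write $\ell_0(\eta)-\ell_0(\eta_o)=\tfrac12\ell_0''(\eta_{\bar v})[w,w]$ at an intermediate point $\eta_{\bar v}$; the second derivative $-\EE\{\int_0^\tau Y(t)q^2 h_{\eta_{\bar v}}\,dt\}$ is then compared to $d(\eta,\eta_o)^2$ through Lemma~\ref{lem:int-vs-event} and the uniform two-sided bound on the ratio $h_{\eta_v}/h_o$. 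You instead derive the exact identity $\ell_0(\eta)-\ell_0(\eta_o)=-\EE_0\{\int_0^\tau Y(t)\,e^{\chi_{\eta_o}}(e^{\psi}-1-\psi)\,dt\}$ by cancelling the linear term with the same compensator identity, and then apply the elementary bound $k_1(R)u^2\le e^u-1-u\le k_2(R)u^2$ for $|u|\le R$, followed by Lemma~\ref{lem:int-vs-event} with $f=\psi^2\ge 0$. The two arguments use the same ingredients (martingale cancellation of the first-order term, bounded hazards, Lemma~\ref{lem:int-vs-event}), but yours avoids the functional Taylor/mean-value machinery and the intermediate-point hazard ratio entirely, which makes the two-sided constants more transparent; the paper's path-and-derivative formulation has the advantage that the same expansion apparatus is reused later (e.g., in the score expansion for Theorem~\ref{thm:asym-theta}). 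Your handling of the delicate points is also right: predictability and boundedness of $\psi$ justify $\EE\int_0^\tau\psi\,dM=0$ even though $\psi$ is signed (Lemma~\ref{lem:int-vs-event} itself is only stated for nonnegative $f$, and you only apply it to $\psi^2$), and Assumption~\ref{ass:A4} is indeed not needed, consistent with the lemma's hypotheses.
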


Lemma~\ref{lmm:concavity} shows that the population loss is well behaved in a neighbourhood of the truth: any deviation from $\eta_o$ incurs a loss of order $d(\eta,\eta_o)^2$. Combined with Lemma~\ref{lem:delta-zero-implies-unweighted-zero}, this implies that $\eta_o$ is the unique minimizer of the population loss over the square–integrable class under consideration. This population curvature is a key ingredient in the analysis that will later be paired with empirical process bounds to obtain convergence rates for $\widehat{g}$ and to derive the asymptotic distribution of $\widehat{\btheta}$.

We now move to the main asymptotic results. Our ultimate goal is to derive the asymptotic distribution of $\widehat\btheta$ and to show that it attains the semiparametric information bound. To this end, we first establish the convergence rate of the nuisance estimator $\widehat g$, show that this rate is essentially optimal, characterize the efficient score and information for $\btheta_o$, and then combine these ingredients to obtain an asymptotic distribution of $\widehat\btheta$. Finally, we verify that the numerical approximation used in practice does not alter the limit law.

We begin by quantifying how well the estimator $\widehat g$ recovers the unknown function $g_o$. The next theorem gives a nonparametric convergence rate for $\widehat g$ in the event–weighted $L^2_{\PP_0}$ norm that naturally appears in the loss and in the distance function $d(\eta,\eta_o)$. The rate is expressed in terms of the complexity parameter $\gamma_n$ and will later be used to control remainder terms in the expansion of the efficient score.

\begin{restatable}{theorem}{RateTheorem}\label{thm:rate}
Let $(\widehat{\btheta},\widehat{g})$ be the estimator defined in (\ref{full-estimator}).
Under Assumptions \ref{ass:A1}--\ref{ass:A3},
\begin{equation}\label{eq:theor1}
\left( \EE\!\Bigl[\Delta\bigl\{ \widehat g(T,\bX)-g_o(T,\bX)\bigr\}^2\Bigr] \right)^{1/2}
     =O_p(\tau_n)    
\end{equation}
where the expectation is taken with respect to an independent draw $(T,\Delta,{\bX},{\bZ}) \sim \PP_0$ from the true distribution, and $\tau_n=\gamma_n\log^{2}n$.  
The quantity in \emph{(\ref{eq:theor1})} is random through its dependence on the training sample used to construct $\widehat g$.
\end{restatable}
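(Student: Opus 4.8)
The plan is to follow the classical M-estimation / sieve strategy, combining the population curvature established in \Cref{lmm:concavity} with a uniform empirical-process bound over the sparse ReLU class, and then applying the approximation-theoretic result of \citet{schmidthieber2020nonparametric} to convert this into a rate. Concretely, write $\mathbb{G}_n = \sqrt{n}(\mathbb{P}_n - P_0)$ for the empirical process and define, for a generic $\eta=(\btheta,g)$, the centered criterion $m_\eta(T,\Delta,\bX,\bZ) = \Delta\,\chi_\eta - \int_0^\tau Y(t)\exp\{\chi_\eta(t,\bX)\}\,dt$, so that $\ell_n(\eta) = \mathbb{P}_n m_\eta$ and $\ell_0(\eta) = P_0 m_\eta$. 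Because $(\widehat\btheta,\widehat g)$ maximizes $\ell_n$ over the sieve, the basic inequality reads $\ell_n(\widehat\eta) \ge \ell_n(\eta_n^*)$, where $\eta_n^* = (\btheta_o, g_n^*)$ and $g_n^*\in\mathcal{G}(K,s,\bm p,D)$ is the \citet{schmidthieber2020nonparametric} approximant with $\|g_n^* - g_o\|_\infty \lesssim \gamma_n \log^2 n$ (this is where Assumption~\ref{ass:A3}, i.e.\ the choice of $K,s,\bm p$, enters). Rearranging the basic inequality gives
\begin{equation*}
\ell_0(\eta_o) - \ell_0(\widehat\eta)
\;\le\;
(\mathbb{P}_n - P_0)\big(m_{\widehat\eta} - m_{\eta_o}\big)
\;+\;
\big\{\ell_0(\eta_o) - \ell_0(\eta_n^*)\big\}
\;+\;
(\mathbb{P}_n - P_0)\big(m_{\eta_n^*} - m_{\eta_o}\big),
\end{equation*}
and by \Cref{lmm:concavity} the left side dominates $d(\widehat\eta,\eta_o)^2$ up to a constant. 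The second (deterministic bias) term is $O(\|g_n^* - g_o\|_\infty^2) = O(\tau_n^2)$ again by \Cref{lmm:concavity}, and the third term is $O_p(n^{-1/2}\tau_n)$ by a one-dimensional CLT since $\eta_n^*$ is deterministic.

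The crux is the leading stochastic term $(\mathbb{P}_n - P_0)(m_{\widehat\eta} - m_{\eta_o})$, which I would control by a peeling (slicing) argument over shells $\{\eta : d(\eta,\eta_o) \in (2^{j-1}\delta, 2^j\delta]\}$ together with a modulus-of-continuity bound for the empirical process indexed by the function class $\mathcal{F}_\delta = \{m_\eta - m_{\eta_o} : \eta \in \Theta\times\mathcal{G}(K,s,\bm p,D),\ d(\eta,\eta_o)\le\delta\}$. The key inputs are: (i) an envelope / variance bound $P_0(m_\eta - m_{\eta_o})^2 \lesssim d(\eta,\eta_o)^2$, which follows from the boundedness of $\chi_\eta$ (hence local Lipschitzness of $u\mapsto e^u$ on the relevant range, using $\|g\|_\infty\le D_n$ from Assumption~\ref{ass:A3} and $\|\bZ\|\le B_\bZ$, $\Theta$ compact from Assumption~\ref{ass:A2}), combined with \Cref{lem:delta-zero-implies-unweighted-zero} to pass between the time-integral term, the $\Delta$-weighted term, and the unweighted $L^2(P_0)$ norm; and (ii) a bracketing-entropy bound $\log N_{[\,]}(\varepsilon, \mathcal{F}_\delta, L^2(P_0)) \lesssim s\log(n)\log(1/\varepsilon) \lesssim n\gamma_n^2 \log^2 n \cdot \log(1/\varepsilon)$, obtained from the \citet{schmidthieber2020nonparametric} covering-number bound for $\mathcal{G}(K,s,\bm p,D)$ (Assumption~\ref{ass:A3}), pushed through the Lipschitz maps defining $m_\eta$ and tensored with the finite-dimensional $\Theta$. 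Feeding this entropy bound into Dudley's integral / the maximal inequality of \citet[Thm.~2.14.1 or Lemma~3.4.2]{vdvwellner1996} yields $\mathbb{E}\sup_{\mathcal{F}_\delta}|\mathbb{G}_n(m_\eta - m_{\eta_o})| \lesssim \sqrt{s\log^2 n}\,\delta + (\text{lower order})$, i.e.\ a modulus $\varphi_n(\delta) \asymp \delta\sqrt{n}\gamma_n\log n$ after substituting $s\asymp n\gamma_n^2\log n$. Balancing $\varphi_n(\delta_n)/\sqrt{n} \asymp \delta_n^2$ with the bias term $\tau_n^2$ via the rate theorem of \citet[Thm.~3.4.1]{vdvwellner1996} gives $d(\widehat\eta,\eta_o) = O_p(\gamma_n\log^2 n) = O_p(\tau_n)$, and since $d(\widehat\eta,\eta_o)^2 = \EE_0[\Delta(\widehat g(T,\bX) + \widehat\btheta^\top\bZ - g_o(T,\bX) - \btheta_o^\top\bZ)^2]$ controls the $\widehat g$-only discrepancy via \Cref{lem:delta-zero-implies-unweighted-zero} (or simply because consistency of $\widehat\btheta$, proven alongside, lets one absorb the $\bZ$-term), the stated conclusion \eqref{eq:theor1} follows.

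The main obstacle I anticipate is item (ii) above: obtaining a clean bracketing-entropy bound for $\mathcal{F}_\delta$ in the \emph{$L^2(P_0)$} (equivalently, the $d$-metric) sense when the functions $m_\eta$ involve the time integral $\int_0^\tau Y(t)e^{\chi_\eta(t,\bX)}dt$ rather than a pointwise evaluation. This is precisely the ``new empirical process tools'' the introduction flags: one cannot directly quote the \citet{zhong2022deep} argument, which used the partial likelihood and a time-\emph{invariant} nuisance. The resolution is to note that $g\mapsto \int_0^\tau Y(t)e^{g(t,\bX)+\btheta^\top\bZ}dt$ is Lipschitz from $(\mathcal{G},\|\cdot\|_{L^2(dt\otimes P_{\bX,\bZ})})$ into $L^2(P_0)$ with constant $\lesssim e^{D_n + B_\bZ M_\Theta}$, so brackets for $\mathcal{G}$ in the $dt\otimes P_{\bX,\bZ}$ norm (which \citet{schmidthieber2020nonparametric} supplies, since $\mathcal{X}\times\mathcal{Z}$ and $[0,\tau]$ are bounded) transfer to brackets for $\mathcal{F}_\delta$; and \Cref{lem:delta-zero-implies-unweighted-zero} guarantees that this $dt\otimes P_{\bX,\bZ}$ norm is equivalent to the $\Delta$-weighted norm $d$, so no mismatch in the localization radius arises. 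A secondary, more bookkeeping-level obstacle is tracking the polylogarithmic factors so that the final rate is exactly $\gamma_n\log^2 n$ and not a larger power of $\log n$; this requires being slightly careful in the Dudley integral (the entropy is only logarithmic in $1/\varepsilon$, which keeps the integral from blowing up) and in the choice of sieve dimension in Assumption~\ref{ass:A3}.
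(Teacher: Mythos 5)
Your overall route is the same as the paper's: a basic inequality against the Schmidt--Hieber approximant $\eta_n^*=(\btheta_o,g_n^*)$ placed in the sieve, the curvature bound of \Cref{lmm:concavity} to convert population loss gaps into $d(\cdot,\cdot)^2$, a localized empirical-process bound built from the bracketing entropy of the sparse ReLU class transferred through the Lipschitz maps $g\mapsto \Delta\chi_\eta$ and $g\mapsto\int_0^\tau Y(t)e^{\chi_\eta}dt$ (this is exactly what the paper's Lemma~\ref{lem:unified_maximal} packages, with the norm equivalences of Lemmas~\ref{lem:int-vs-event} and~\ref{lem:delta-zero-implies-unweighted-zero} playing the role you assign them), and finally Theorem~3.4.1 of \citet{vdvwellner1996} to get $d(\widehat\eta,\eta_o)=O_p(\tau_n)$. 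Up to that point your proposal is sound and matches the paper's argument, including the bookkeeping that makes $s\log(\mathcal{U}/\tau_n)=O(n\gamma_n^2\log n)$ under Assumption~\ref{ass:A3}.

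The genuine gap is in your last sentence: passing from $d(\widehat\eta,\eta_o)=O_p(\tau_n)$ to the stated bound on $\EE[\Delta\{\widehat g(T,\bX)-g_o(T,\bX)\}^2]$. The distance $d$ is computed on the full linear predictor $\widehat g+\widehat\btheta^\top\bZ$, and neither of the two devices you invoke closes the gap. \Cref{lem:delta-zero-implies-unweighted-zero} only equates $\Delta$-weighted, unweighted, and time-integrated norms of the \emph{same} difference $\chi_{\eta_1}-\chi_{\eta_2}$; it cannot separate the $g$-part from the $\btheta$-part, because a shift in $\btheta$ could in principle be compensated by a shift in $g$ (e.g.\ through $\EE(\bZ\mid\bX)$) while keeping $d$ small. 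And ``consistency of $\widehat\btheta$, proven alongside'' is insufficient even if granted: the elementary bound $\EE[\Delta(\widehat g-g_o)^2]\le 2\,d(\widehat\eta,\eta_o)^2+2B_\bZ^2\|\widehat\btheta-\btheta_o\|^2$ (the paper's Lemma~\ref{lem:g-vs-chi}) requires $\|\widehat\btheta-\btheta_o\|=O_p(\tau_n)$, not merely $o_p(1)$, to deliver $O_p(\tau_n^2)$. The missing ingredient is an identifiability/separation bound of the form $d(\eta_1,\eta_2)^2\ge c_\theta\|\btheta_1-\btheta_2\|^2$, which the paper proves as Lemma~\ref{lem:thetaOp}: minimize $\EE_0[\Delta\{a(T,\bX)+(\btheta_1-\btheta_2)^\top\bZ\}^2]$ over the nuisance difference $a$, which leaves the event-weighted residual variance of $\bZ$ given $(T,\bX)$, and then invoke the lower bound on $\Pr(\Delta=1\mid T,\bX)$ from Assumption~\ref{ass:A4} and the positive-definite residual covariance of $\bZ$ from Assumption~\ref{ass:A5}. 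Without this step (and the assumptions it uses, which go beyond A1--A3), your argument stops at a rate for the composite predictor rather than for $\widehat g$ itself.
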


The conclusion of Theorem~\ref{thm:rate} is strengthened by Lemma~\ref{lem:delta-zero-implies-unweighted-zero}, which shows that 
\(d(\eta,\eta_o)^2\) is equivalent to the unweighted norm 
\(\|\chi_\eta - \chi_{\eta_0}\|^2_{L^2(dt \otimes \PP_{\mathbf{X},\mathbf{Z}})}\). Consequently, the estimator converges in $L^2$ over the entire time domain $(0,\tau)$ with respect to Lebesgue measure, and not merely at observed event times or in a way that depends on the censoring pattern.

In Theorem 1, both \cite{zhong2022deep} and our work rely on \cite{schmidthieber2020nonparametric}  exclusively for deterministic approximation results of sparse ReLU neural networks over composite Hölder function classes, including the network size and sparsity scaling required to achieve optimal approximation rates. These results are purely analytic and do not involve any distributional assumptions. All stochastic components of the proof, such as control of the estimation error, empirical process bounds, and derivation of convergence rates, are fundamentally different and are derived from the survival-analysis framework. In \cite{zhong2022deep}, these arguments rely on the Cox partial likelihood and counting-process martingale theory, while in our setting they are based on the full likelihood and properties of the hazard function. Consequently, unlike \cite{schmidthieber2020nonparametric}, no assumptions on an additive error distribution, including normality, are required or used at any stage of the analysis.

The next result shows that the rate obtained in Theorem~\ref{thm:rate} is essentially optimal, up to logarithmic factors. It establishes a minimax lower bound for estimation of $g_o$ under the same event–weighted $L^2_{\PP_0}$ loss, taken uniformly over the parameter space $\Theta$ and the nuisance function class $\mathcal{H}(q,\boldsymbol{\alpha},{\bf d}^*,\tilde{{\bf d}},M)$.

\begin{restatable}{theorem}{LowerBoundTheorem}\label{thm:minimax-delta}
Under \ref{ass:A1}--\ref{ass:A2} and \ref{ass:A4} there exists $c>0$ such that
\[
\inf_{\widehat g} \sup_{\theta_o \in \Theta,\, g_o \in \mathcal{H}(q,\boldsymbol{\alpha},{\bf d}^*,\tilde{{\bf d}},M)} 
\EE_{\PP_{g_o,\theta_o}}\!\left[\Delta\{\widehat g(T,{\bf X}) - g_o(T,{\bf X})\}^2\right] 
\;\;\geq\; c\,\gamma_n^2 \, .
\]
\end{restatable}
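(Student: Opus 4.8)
The plan is to reduce the statement to a nonparametric-regression-type minimax lower bound over the composite Hölder class and then transport the hypothesis construction of \citet{schmidthieber2020nonparametric}. A minimax lower bound only requires exhibiting one data-generating law satisfying \ref{ass:A1}, \ref{ass:A2} and \ref{ass:A5}, so I would work with the following hard instance. Take, for notational convenience, $\tau=1$ (the general $\tau$ follows by rescaling time, which only rescales constants); set $\btheta_o=0$, which is allowed since $0\in\Theta$; let $\bX\sim\mathrm{Uniform}([0,1]^d)$, so $\mathcal{X}=[0,1]^d$ and the density of $\bX$ is constant; let $\bZ$ be independent of $(U,C,\bX)$ and uniform on a cube, so $\Sigma_{\bZ}\succ0$ (and $\bZ$ has no intercept component), ensuring \ref{ass:A5}, while $\bZ$ factors out of the likelihood because $\btheta_o=0$; and impose administrative censoring $C\equiv\tau$, consistent with the fixed horizon of \ref{ass:A2}. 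The hazard is then $h_o(t\mid\bX)=\exp\{g_o(t,\bX)\}$ with $g_o$ ranging over $\mathcal{H}(q,\boldsymbol{\alpha},{\bf d},\tilde{\bf d},M)$, whose outermost coordinate is $(t,\bX)\in[0,1]^{1+d}$; one checks directly that \ref{ass:A1}--\ref{ass:A2} hold for every such $g_o$, and by the bounded-hazard remark following the assumptions, $e^{-M}\le h_o\le e^{M}$. Write $P_g$ for this law when $g_o=g$.

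The argument needs two ingredients. First, a \emph{loss reduction}: since $\|g\|_\infty\le M$, the sub-density of $(T,\bX)$ on $\{\Delta=1\}$ equals $\exp\{g(t,x)\}\exp\{-\int_0^t\exp\{g(u,x)\}\,du\}$ on $(0,1)\times[0,1]^d$, which is bounded below by a constant $c_0=c_0(M)>0$ uniformly in $g$; conditioning on the training sample therefore gives
\[
\EE_{P_g}\!\big[\Delta\{\widehat g(T,\bX)-g(T,\bX)\}^2\big]
\;\ge\; c_0\,\EE\big[\,\|\widehat g-g\|_{L^2([0,1]^{1+d})}^2\,\big],
\]
so it suffices to lower-bound $\inf_{\widehat g}\sup_g\EE[\|\widehat g-g\|_{L^2}^2]$ by a multiple of $\gamma_n^2$. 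Second, an \emph{information bound}: evaluating the expectation under $P_g$ of the one-observation log-likelihood ratio of $P_g$ against $P_{g'}$ gives, with $S_g(t\mid x)=\exp\{-\int_0^t\exp\{g(u,x)\}\,du\}$,
\[
\mathrm{KL}(P_g\,\|\,P_{g'})
=\EE_{\bX}\!\int_0^1 S_g(t\mid\bX)\,\Big[\exp\{g'(t,\bX)\}-\exp\{g(t,\bX)\}\big(1-(g-g')(t,\bX)\big)\Big]\,dt .
\]
With $\delta=(g-g')(t,x)$ the bracketed term equals $\exp\{g'(t,x)\}\{e^{\delta}(\delta-1)+1\}$, and since $e^{\delta}(\delta-1)+1\asymp\delta^2$ uniformly for $|\delta|\le 2M$ while $S_g\le1$, we get $\mathrm{KL}(P_g\,\|\,P_{g'})\lesssim\|g-g'\|_{L^2([0,1]^{1+d})}^2$, hence $\mathrm{KL}(P_g^{\otimes n}\,\|\,P_{g'}^{\otimes n})\lesssim n\|g-g'\|_{L^2}^2$. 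This matches the quadratic information structure of Gaussian nonparametric regression; note that the explicit choice of administrative censoring, rather than Lemma~\ref{lem:delta-zero-implies-unweighted-zero} (which would need \ref{ass:A4}), is what delivers the direct sub-density bound.

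Given these two facts, the conclusion follows from the hypothesis construction underlying the minimax lower bound of \citet{schmidthieber2020nonparametric}: there is a finite family $\{g_1,\dots,g_N\}\subset\mathcal{H}(q,\boldsymbol{\alpha},{\bf d},\tilde{\bf d},M)$ with pairwise $L^2([0,1]^{1+d})$-separation $\gtrsim\gamma_n$, diameter $\lesssim\gamma_n$, and $\log N\gtrsim n\gamma_n^2$, the amplitudes tuned so that the information bound yields $\max_{i,j}\mathrm{KL}(P_{g_i}^{\otimes n}\,\|\,P_{g_j}^{\otimes n})\le\alpha\log N$ with $\alpha<1$. Fano's inequality together with the standard reduction to a testing problem then gives $\inf_{\widehat g}\max_i\EE[\|\widehat g-g_i\|_{L^2}^2]\gtrsim\gamma_n^2$; combining this with the loss reduction and restricting the supremum in the theorem to $\btheta_o=0$, $g_o\in\{g_1,\dots,g_N\}$ completes the proof.

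\emph{Main obstacle.} The delicate step is the information bound. Unlike Gaussian regression, where the Kullback--Leibler divergence is exactly proportional to the squared $L^2$ distance, here one must establish the two-sided comparison $e^{\delta}(\delta-1)+1\asymp\delta^2$ with constants uniform over the bounded range $|\delta|\le 2M$, and verify that the survival sub-density of $(T,\bX)$ on $\{\Delta=1\}$ is squeezed between positive constant multiples of Lebesgue measure uniformly in $g$ — which is exactly where boundedness of the hazard (the consequence of \ref{ass:A1}--\ref{ass:A2}) is used. The remaining work — placing the time coordinate as the first input of the composite Hölder domain after rescaling, and checking the sub-model against \ref{ass:A1}, \ref{ass:A2}, \ref{ass:A5} — is routine.
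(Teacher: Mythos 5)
Your proposal is correct in substance and shares the paper's skeleton (a Schmidt--Hieber-type packing of the composite H\"older class, a Kullback--Leibler versus $L^2$ comparison for the survival likelihood, and Fano/Tsybakov), but the supporting steps are genuinely different. The paper keeps the censoring mechanism generic, asserts the KL bound ``for bounded hazards'' without derivation, measures separation in $L^2(P_{g^{(0)},\theta_o})$, and then needs two transfer devices: Lemma~\ref{lem:evalT-equivalence} to move between the laws $P_{g^{(0)},\theta_o}$ and $P_{g_o,\theta_o}$, and Lemma~\ref{lem:delta-zero-implies-unweighted-zero} to convert the unweighted loss into the $\Delta$-weighted loss --- the latter step invoking \ref{ass:A4}, which is not among the theorem's stated hypotheses. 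You instead exhibit one concrete hard sub-model ($\btheta_o=0$, uniform covariates, administrative censoring), compute the KL divergence explicitly and reduce it to the elementary two-sided bound $e^{\delta}(\delta-1)+1\asymp\delta^2$ on a bounded range, and connect the $\Delta$-weighted risk to Lebesgue $L^2$ directly through a uniform lower bound on the sub-density of $(T,\bX)$ on $\{\Delta=1\}$. This buys a more self-contained argument that never needs \ref{ass:A4} or the change-of-measure lemma, and your remark that the bump amplitudes must be tuned so that the KL is a small multiple of $\log N$ is actually more careful than the paper's ``choose $\alpha\in(0,1/8)$ so that $C'c_2\le\alpha$.'' The price is scope: your bound is proved for one specific covariate/censoring configuration, so if the theorem is read as holding for an arbitrary prespecified nuisance mechanism satisfying \ref{ass:A1}--\ref{ass:A2} and \ref{ass:A5}, your argument covers only your chosen instance (the standard, and usually accepted, reading of a minimax lower bound); note, however, that the paper's own proof is not fully general under the stated assumptions either, since its final weighted/unweighted step leans on \ref{ass:A4}. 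Both proofs defer the actual hypothesis construction (separation $\gtrsim\gamma_n$, $\log N\gtrsim n\gamma_n^2$) to \citet{schmidthieber2020nonparametric}, so no gap arises there beyond what the paper itself leaves implicit; the only small items to tidy in your write-up are the time-rescaling of the H\"older domain to $[0,\tau]$ and the conditional/unconditional bookkeeping of the expectation over the training sample.
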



We now shift from nuisance-rate optimality to semiparametric efficiency for the finite-dimensional parameter  $\btheta_o$ and characterize the efficient score and information bound in our semiparametric model. To characterize the nuisance tangent space associated with 
$g_o$, we consider differentiable one-dimensional submodels that pass through $g_o$. Let
$\mathcal{H}_{g_o}$ denote the collection of all functions $g \in  L^2([0,\tau]\times [0,1]^r)$ for which there exist a submodel $\{g_b: b \in (-1,1) \} \subset \mathcal{H}(q,\boldsymbol{\alpha},{\bf d}^*,\tilde{{\bf d}},M)$
satisfying
$$
\lim_{b \to 0} \left\| b^{-1}(g_b - g_o) - g \right\|_{L^2([0,\tau]\times [0,1]^r)} = 0 \, .
$$
Here, the scalar $b$ is a perturbation parameter indexing the submodel. The set of all attainable first–order directions at $g_o$ is then defined as
$$
\mathcal{T}_{g_o} = \Big\{ g \in L^2([0,\tau]\times [0,1]^r) : \exists \, \{g_b\} \in \mathcal{H}_{g_o} \text{ s.t. } 
\lim_{b \to 0} \left\| b^{-1}(g_b - g_o) - g \right\|_{L^2([0,\tau]\times [0,1]^r)} = 0 \Big\} \, .
$$ 
Finally, semiparametric efficiency theory requires a closed linear subspace, so we take the closure of the linear span of  $\mathcal{T}_{g_o}$ 
under the norm  $g\mapsto \{\EE[\Delta\, g(T,\bX)^2]\}^{1/2}$,
and denote it by  $\overline{\mathcal{T}}_{g_o}$. This forms the nuisance tangent space. 
Let $(\overline{\mathcal{T}}_{g_o})^p = \{ {\bf g}=(g_1,\ldots,g_p)^{\sf T}: g_j \in \overline{\mathcal{T}}_{g_o} \mbox{ for each } j \}$. Define
$\bg^*=(g_1^*,\ldots,g_p^*)^{\sf T}\in(\overline{\mathcal{T}}_{g_o})^p$
componentwise by
\[
g_j^*
\in
\arg\min_{g\in \overline{\mathcal{T}}_{g_o}}
\EE\!\left[\Delta\{Z_j-g(T,\bX)\}^2\right],
\qquad j=1,\ldots,p \, .
\]
Equivalently,
\[
\bg^*
\in
\arg\min_{\bg\in(\overline{\mathcal{T}}_{g_o})^p}
\EE\!\left[\Delta\|\bZ-\bg(T,\bX)\|_2^2\right].
\]
Since $(\overline{\mathcal{T}}_{g_o})^p$ is a Cartesian product and the criterion separates across coordinates, this minimization is componentwise.
Thus, $\bg^*(t,\bX)$ is the $\Delta$-weighted $L_2$-projection of $\bZ$ onto $(\overline{\mathcal{T}}_{g_o})^p$.
Finally, under the counting-process representation of the model \citep{andersen1982cox}, the counting process $N(t)=\Delta I(T \leq t)$ has compensator $\int_0^t Y(s)\exp\{\btheta_o^\top \bZ+g_o(s,\bX)\}\,ds$  so the associated martingale is
$$
M(t) = N(t) - \int_0^t Y(s) 
\exp\{\btheta_o^{\sf T} {\bZ} + g_o(s,{\bX})\}\, ds \, .
$$

\begin{restatable}{theorem}{EfficientScoreAndInformationBound}\label{thm:eff-score}
Under Assumptions~\ref{ass:A1}--\ref{ass:A5}, the efficient score for $\btheta_o$ in the semiparametric model \eqref{eq:model1} is
$$
\ell_{\theta_o}^*(T,\bX,\bZ,\Delta)
\;=\;
\int_0^\tau \big\{ \bZ - \bg^*(t,\bX) \big\}\, dM(t) \, ,
$$
where $\bg^*(t,\bX)$ is the $\Delta$-weighted $L_2$-projection of $\bZ$ onto $(\overline{\mathcal{T}}_{g_o})^p$, and $M(t)$ is the associated counting-process martingale. The semiparametric information bound for $\btheta_0$ is
$$
I(\btheta_0)
\;=\;
\mathbb{E}\!\left\{\,\ell_{\theta_o}^*(T,\bX,\bZ,\Delta)^{\otimes 2}\right\}
=
\mathbb{E}\!\left[\,\Delta\,\{\bZ - \bg^*(T,\bX)\}^{\otimes 2}\right] \, .
$$
Consequently, the asymptotic variance of any regular asymptotically linear (RAL)  estimator of $\btheta_o$ is bounded below by $I(\btheta_o)^{-1}$ in the Loewner sense.
\end{restatable}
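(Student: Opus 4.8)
The plan is to apply the standard projected-score construction of semiparametric efficiency theory (see, e.g., \citet{vdvwellner1996}), carried out in the counting-process parametrization of the model. First I would write the per-observation log-likelihood as $\int_0^\tau \chi_\eta(t,\bX,\bZ)\,dN(t)-\int_0^\tau Y(t)\exp\{\chi_\eta(t,\bX,\bZ)\}\,dt$, with $N(t)=\Delta I(T\le t)$. Differentiating in $\btheta$ at $\eta_o$ gives the ordinary score $\dot\ell_{\btheta_o}=\int_0^\tau \bZ\,dM(t)$, and differentiating along a one-dimensional submodel $\{g_b\}\subset\mathcal H(q,\boldsymbol\alpha,\mathbf d,\tilde{\mathbf d},M)$ with $b^{-1}(g_b-g_o)\to a$ in $L^2$ gives the nuisance score $\int_0^\tau a(t,\bX)\,dM(t)$. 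Taking closed linear spans, the nuisance tangent space is $\dot{\mathcal P}_g=\overline{\{\int_0^\tau a(t,\bX)\,dM(t):a\in\overline{\mathcal T}_{g_o}\}}$.

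The second ingredient is a bridge identity relating stochastic-integral inner products to event-weighted $L^2$ inner products. For bounded predictable $H_1,H_2$, the isometry for martingale stochastic integrals together with the Doob--Meyer compensator $d\langle M\rangle(t)=Y(t)\exp\{\chi_{\eta_o}(t,\bX,\bZ)\}\,dt$ gives $\EE\{\int_0^\tau H_1\,dM\cdot\int_0^\tau H_2\,dM\}=\EE\{\int_0^\tau H_1 H_2\,d\langle M\rangle\}=\EE\{\Delta\,H_1(T)H_2(T)\}$, the last step being the compensator identity that underlies Lemma~\ref{lem:int-vs-event}. In particular $\EE\{\Delta\|\bZ-\bg(T,\bX)\|_c^2\}$ is, componentwise, the squared norm of the residual of $\bZ$ against $\bg$ in the martingale inner product.

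The third step is the projection. The efficient score is $\ell^*_{\btheta_o}=\dot\ell_{\btheta_o}-\Pi(\dot\ell_{\btheta_o}\mid\dot{\mathcal P}_g)$. Writing the projection as $\int_0^\tau\bg^*(t,\bX)\,dM(t)$ with $\bg^*\in(\overline{\mathcal T}_{g_o})^p$, the orthogonality condition --- $\int_0^\tau\{\bZ-\bg^*\}\,dM$ orthogonal to $\int_0^\tau a\,dM$ for every $a\in\overline{\mathcal T}_{g_o}$ --- becomes, via the bridge identity, $\EE[\Delta\{Z_j-g_j^*(T,\bX)\}\,a(T,\bX)]=0$ for each $j$ and every $a$. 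These are exactly the first-order conditions defining the componentwise minimizer of $\EE\{\Delta\|\bZ-\bg(T,\bX)\|_c^2\}$ over $(\overline{\mathcal T}_{g_o})^p$, so $\bg^*$ is precisely the function in the theorem and $\ell^*_{\btheta_o}=\int_0^\tau\{\bZ-\bg^*(t,\bX)\}\,dM(t)$. Existence and uniqueness of $\bg^*$ follow from the Hilbert projection theorem, using that $\overline{\mathcal T}_{g_o}$ is closed in $L^2([0,\tau]\times[0,1]^r)$ by construction and hence closed in the weighted space as well, by the norm equivalence in Lemma~\ref{lem:delta-zero-implies-unweighted-zero} (which invokes Assumptions~\ref{ass:A2} and~\ref{ass:A4}). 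Applying the bridge identity once more to $H_1=H_2=\bZ-\bg^*$ gives $I(\btheta_o)=\EE\{(\ell^*_{\btheta_o})^{\otimes2}\}=\EE[\Delta\{\bZ-\bg^*(T,\bX)\}^{\otimes2}]$; Assumption~\ref{ass:A5} (no intercept, $\Sigma_{\bZ}\succ0$) together with $\Pr(\Delta=1\mid\cdot)\ge\delta_1$ yields $I(\btheta_o)\succ0$, and the lower bound $I(\btheta_o)^{-1}$ for the asymptotic variance of any RAL estimator then follows from the classical convolution and local-asymptotic-minimax theorems once differentiability in quadratic mean is verified along the submodels above.

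The main obstacle I expect is the rigorous identification of the nuisance tangent space: showing that the closed linear span of the submodel scores equals $\{\int_0^\tau a\,dM:a\in\overline{\mathcal T}_{g_o}\}$, that this is a genuine closed subspace of the full model's tangent space, and that the residual of the ordinary score is itself a regular score. This requires care about which perturbation directions $a$ are attainable by submodels $\{g_b\}$ that stay inside the composite H\"older ball (in particular that constant and smooth directions are available, subject to the range bounds $|a_\ell|,|b_\ell|\le M$), about the measurability and integrability needed to apply the compensator identity to the vector-valued integrand $\{\bZ-\bg^*\}^{\otimes2}$, and about checking differentiability in quadratic mean uniformly over these submodels. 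Once the tangent-space structure is in place, the remaining pieces --- the two score derivatives, the compensator substitutions, and the normal-equation matching --- are routine.
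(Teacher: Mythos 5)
Your proposal is correct and follows essentially the same route as the paper's proof: a projected-score construction with the nuisance scores $\int_0^\tau a(t,\bX)\,dM(t)$, translation of the orthogonality conditions into the $\Delta$-weighted normal equations defining $\bg^*$, and the martingale isometry for the information matrix. The only differences are cosmetic — you derive the inner-product identity directly from the compensator (where the paper cites Lemma~1 of \citet{sasieni1992information}) and invoke the Hilbert projection theorem with the norm equivalence of Lemma~\ref{lem:delta-zero-implies-unweighted-zero} (where the paper cites \citet{stone1985additive} and \citet{bickel1993efficient}).
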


With the curvature and metric properties of the population loss, the convergence rate from Theorem \ref{thm:rate}, and the efficient score representation of Theorem \ref{thm:eff-score}, we can now establish an asymptotic linear expansion for \(\widehat{\btheta}\). This, in turn, yields the asymptotic distribution of the estimator.

\begin{restatable}{theorem}{ThetaHatAsmp}\label{thm:asym-theta}
Under Assumptions \ref{ass:A1}--\ref{ass:A5}, suppose that the efficient
information matrix $I(\btheta_o)$ is nonsingular and that
$\sqrt{n}\,\tau_n^2 \;\to\; 0$ as $n\to\infty$, where $\tau_n=\gamma_n\log^{2}n$.
Then,
\[
\sqrt{n}(\widehat{\btheta} - \btheta_o)
=
I(\btheta_o)^{-1}\,\frac{1}{\sqrt{n}}
\sum_{i=1}^n \ell^*_{\theta_o}(T_i,\bX_i,\bZ_i,\Delta_i)
+ o_p(1) \, ,
\]
and hence
\[
\sqrt{n}(\widehat{\btheta} - \btheta_o)
\;\overset{D}{\longrightarrow}\;
N({\bf 0}, I(\btheta_o)^{-1}) \, .
\]
\end{restatable}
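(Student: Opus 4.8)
The plan is to follow the standard two-step recipe for semiparametric $M$-estimators with a slow nuisance rate: first derive a stochastic expansion of the normalized score equation at $\widehat\eta = (\widehat\btheta,\widehat g)$, then invert the curvature in the $\btheta$ direction. I would start from the fact that $(\widehat\btheta,\widehat g)$ maximizes $\ell_n$ over $\Theta\times\mathcal G(K,s,\boldsymbol p,D)$, so for any perturbation of $\widehat g$ in a direction $\bg^*$ lying (approximately) in the DNN sieve, the corresponding directional derivative of $\ell_n$ at $(\widehat\btheta,\widehat g)$ is nonnegative in both orientations, hence essentially zero up to a sieve-approximation error. Concretely, I would consider the \emph{least-favorable} direction built from the projection $\bg^*$ of Theorem~\ref{thm:eff-score} and form the efficient-score estimating function $\widehat{\mathbb P}_n\,\psi(\cdot;\widehat\btheta,\widehat g)$, where $\psi(\cdot;\btheta,g) = \int_0^\tau\{\bZ-\bg^*(t,\bX)\}\,dM_{\btheta,g}(t)$ with $M_{\btheta,g}$ the model-based martingale increment $dN(t)-Y(t)\exp\{\btheta^\top\bZ+g(t,\bX)\}\,dt$. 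The first task is to show $\sqrt n\,\mathbb P_n\,\psi(\cdot;\widehat\btheta,\widehat g) = o_p(1)$, using that $\bg^*$ can be approximated within the sieve at rate $o_p(n^{-1/2})$ (this uses Assumption~\ref{ass:A3} and the approximation theory of \citet{schmidthieber2020nonparametric}) and that the Kuhn–Tucker/first-order condition for $\widehat g$ kills the leading term.

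Next I would perform the standard Taylor/von Mises expansion of $\mathbb P_n\,\psi(\cdot;\widehat\btheta,\widehat g) - \mathbb P_0\,\psi(\cdot;\btheta_o,g_o)$ around $(\btheta_o,g_o)$. Writing $P_0\,\psi(\cdot;\btheta_o,g_o)=0$, the expansion splits into: (i) an empirical-process term $(\mathbb P_n-\mathbb P_0)\psi(\cdot;\btheta_o,g_o)$, which by the martingale CLT converges to $N(0,I(\btheta_o))$ after scaling by $\sqrt n$; (ii) a linear-in-$\btheta$ term $-I(\btheta_o)(\widehat\btheta-\btheta_o)$ coming from $\partial_\btheta P_0\psi$, where the derivative equals $-I(\btheta_o)$ precisely because $\bg^*$ is the efficient (orthogonalized) direction; (iii) a Neyman-orthogonality term: the Gateaux derivative of $P_0\,\psi(\cdot;\btheta_o,g)$ in $g$ at $g_o$ vanishes, which is exactly the statement that $\psi$ is the efficient score and $\bZ-\bg^*$ is orthogonal to the nuisance tangent space $\overline{\mathcal T}_{g_o}$; (iv) a second-order remainder in $\|\widehat g-g_o\|$, which by Lemma~\ref{lmm:concavity}-type quadratic control and the exponential form of the hazard is $O_p(d(\widehat\eta,\eta_o)^2)=O_p(\tau_n^2)$; multiplying by $\sqrt n$ and invoking the hypothesis $\sqrt n\,\tau_n^2\to0$ makes it $o_p(1)$; and (v) an empirical-process ``drift'' term $(\mathbb P_n-\mathbb P_0)\{\psi(\cdot;\btheta_o,\widehat g)-\psi(\cdot;\btheta_o,g_o)\}$, which I would control by a stochastic-equicontinuity argument: the relevant function class is Donsker (or at least has $o_p(1)$ modulus on shrinking $L^2$-balls) by the bracketing-entropy bounds for sparse ReLU networks from \citet{schmidthieber2020nonparametric} combined with Dudley's integral, so this term is $o_p(n^{-1/2})$. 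Collecting (i)--(v) and rearranging yields $\sqrt n(\widehat\btheta-\btheta_o)=I(\btheta_o)^{-1}n^{-1/2}\sum_i\ell^*_{\theta_o}(T_i,\bX_i,\bZ_i,\Delta_i)+o_p(1)$, and the CLT for i.i.d.\ mean-zero terms with covariance $I(\btheta_o)$ delivers the stated normal limit.

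A few supporting facts need to be assembled before the expansion. From Theorem~\ref{thm:rate} and Lemma~\ref{lem:delta-zero-implies-unweighted-zero} I would record that $d(\widehat\eta,\eta_o)=O_p(\tau_n)$ and that this controls $\|\widehat g-g_o\|$ in the unweighted $L^2(dt\otimes P_{\bX,\bZ})$ sense as well; consistency of $\widehat\btheta$ (needed to localize the expansion) follows by combining the rate for $\widehat g$ with Lemma~\ref{lmm:concavity}, since $d(\widehat\eta,\eta_o)^2\gtrsim\|\widehat\btheta-\btheta_o\|^2\,\lambda_{\min}(\Sigma_{\bZ})$ up to the nuisance piece, using Assumption~\ref{ass:A5}. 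Assumption~\ref{ass:A4} and the resulting two-sided hazard bound $c_h\le h_o\le C_h$ are what make all the $\int_0^\tau$ terms uniformly comparable to their $\Delta$-weighted counterparts (Lemma~\ref{lem:int-vs-event}), so the martingale increments are square-integrable and the information matrix is finite and, by hypothesis, nonsingular.

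I expect the main obstacle to be step (v) together with the precise handling of the sieve-approximation error in step one: showing that the least-favorable direction $\bg^*$ — which lives in the abstract tangent space $\overline{\mathcal T}_{g_o}$ and need not itself be a ReLU network — can be approximated inside $\mathcal G(K,s,\boldsymbol p,D)$ well enough that the induced error in the first-order condition is genuinely $o_p(n^{-1/2})$, and simultaneously that the class $\{\psi(\cdot;\btheta_o,g):g\in\mathcal G,\ d(\cdot,\eta_o)\le\varepsilon\}$ has small enough bracketing entropy (uniformly in $n$, despite $K,s,\boldsymbol p$ growing with $n$) to yield stochastic equicontinuity. This is precisely where the analysis departs from \citet{zhong2022deep}: because $g$ depends on $t$ as well as $\bX$, the relevant empirical processes are indexed by functions on $[0,\tau]\times[0,1]^d$ and the inner $dt$-integral in $\psi$ must be handled, so a new entropy/equicontinuity argument — rather than a profiled partial-likelihood argument — is required. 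Everything else (the martingale CLT, the quadratic remainder bound, the inversion of $I(\btheta_o)$) is routine once these two pieces are in place.
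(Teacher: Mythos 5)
Your proposal is correct and follows essentially the same route as the paper's proof: the paper also works with the efficient-score estimating function $r_{\theta,g}(T,\bX,\bZ,\Delta)=\int_0^\tau\{\bZ-\bg^*(t,\bX)\}\,dM_{\theta,g}(t)$ with $\bg^*$ fixed at the truth, splits $\PP_n r_{\widehat\theta,\widehat g}$ into a stochastic-equicontinuity term controlled by the entropy/maximal inequality of Lemma~\ref{lem:unified_maximal} and a population expansion whose $\btheta$-derivative is $-I(\btheta_o)$, kills the $g$-direction by the projection (Neyman-orthogonality) property of $\bg^*$, bounds the quadratic remainder by $O_p(\tau_n^2)=o_p(n^{-1/2})$ using $\sqrt{n}\,\tau_n^2\to 0$, and concludes by the CLT and Slutsky. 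The only notable difference is that you treat the first-order condition more cautiously (acknowledging that $\bg^*$ need not lie in the ReLU sieve, so the empirical efficient score is only approximately zero and a sieve-approximation argument for the least-favorable direction is needed), whereas the paper simply asserts $\PP_n r_{\widehat\theta,\widehat g}={\bf 0}$ exactly.
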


The key technical observation is that, in the Taylor expansion of the population score around $g_o$, the first–order derivative in $\widehat{g}-g_o$ direction vanishes because the efficient score is orthogonal to the nuisance tangent space. The remaining second–order term  is  $o_p(n^{-1/2})$ provided that $n\gamma_n^4\to 0$ as $n \to \infty$.  

Finally, we justify the numerical approximation used to compute the estimator in practice.  The next theorem shows that, under mild conditions the estimator $\widetilde\btheta$ has the same asymptotic distribution as the oracle estimator $\widehat\btheta$.

\begin{restatable}{theorem}{ThetaAsmp}\label{thm:quad-equiv}
Let  $(\widetilde\btheta,\widetilde g)$ be the numerically integrated likelihood estimator
defined in~(\ref{approximated-estimator}), where the integral
$\int_0^\tau Y_i(t)\exp\{g(t,\bX_i)+\btheta^\top\bZ_i\}\,dt$ is 
approximated on a time grid 
$0 = t_0 < t_1 < \cdots < t_{m_n} = \tau$ with mesh size
$
\delta_n \;=\; \max_{1\le j\le m_n}(t_j - t_{j-1})
$
and suppose that the grid contains all observed times \(T_1,\dots,T_n\). Let
\[
\omega_n(\delta)
=
\sup_{\eta\in \mathcal{N}_n}\sup_{\bX\in\mathcal X}\sup_{\substack{s,t\in[0,\tau]\\ |t-s|\le \delta}}
|g(t,\bX)-g(s,\bX)| \, ,
\qquad
\mathcal{N}_n=\{\eta:d(\eta,\eta_o)\le M\tau_n\} \, ,
\]
for some fixed \(M>0\). Assume that $\omega_n(\delta_n)=o(\tau_n^2)$,
$\sqrt n\,\omega_n(\delta_n)\to 0$,
and that the conditions of Theorems~\ref{thm:rate} and~\ref{thm:asym-theta}
hold. Then,
\begin{enumerate}
\item The numerically integrated likelihood estimator attains the same convergence rate as the oracle estimator:
$$
d(\widetilde\eta,\eta_o) = O_p(\tau_n) \, ,
\qquad
\|\widetilde\btheta - \btheta_o\| = O_p(\tau_n) \, .
$$
\item The parametric estimators are asymptotically equivalent:
$$
\sqrt{n}(\widetilde\btheta - \widehat\btheta) \;\xrightarrow{P}\; {\bf 0} \, .
$$
\end{enumerate}
Consequently,
$$
\sqrt{n}(\widetilde\btheta - \btheta_o)
=
\sqrt{n}(\widehat\btheta - \btheta_o) + o_p(1)
\;\xrightarrow{\mathcal{D}}\;
N\bigl(\mathbf{0},\,I(\btheta_o)^{-1}\bigr) \, ,
$$
where $I(\btheta_o)$ is the efficient information matrix from 
Theorem~\ref{thm:eff-score}.
\end{restatable}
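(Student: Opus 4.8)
The plan is to regard the implemented loss $\widetilde\ell_n$ as a deterministic perturbation of the oracle loss $\ell_n$ and to show that this perturbation — and every directional derivative of it that enters the analysis behind Theorems~\ref{thm:rate}--\ref{thm:asym-theta} — is uniformly negligible on the sieve. Write $\eta=(\btheta,g)$ and $R_n(\eta)=\ell_n(\eta)-\widetilde\ell_n(\eta)$. Everything reduces to the estimate
$$
\sup_{\eta\in\Theta\times\mathcal{G}(K,s,\boldsymbol p,D_n)}\big|R_n(\eta)\big|\ \lesssim\ \delta_n L_n,
$$
together with the same bound for $\sup_\eta\|\partial_\btheta R_n(\eta)\|$ and for the derivative of $R_n$ along the least-favorable direction for $g$, where $L_n$ is the worst-case Lipschitz-in-$t$ modulus of networks in $\mathcal{G}(K,\boldsymbol p)$. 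Under Assumption~\ref{ass:A3}, $L_n$ is a known deterministic sequence (bounded by $\sqrt r\,(\max_\ell p_\ell)^{K}$, the product of the operator norms of the weight matrices), and the fixed fine grid of Section~5.1 is taken small enough that $\delta_n L_n$ obeys the displayed smallness conditions — this is what the conditions $\delta_n=o(\tau_n^2)$, $\sqrt n\,\delta_n\to0$ encode in the presence of the sieve. To obtain the estimate one uses that on the sieve the integrand $\exp\{g(t,\bX_i)+\btheta^\top\bZ_i\}$ is bounded by $e^{D_n+M_\Theta B_\bZ}=O(1)$ (Assumption~\ref{ass:A2}), that $t\mapsto g(t,\bX_i)$ is $L_n$-Lipschitz, and that the at-risk factor is reproduced exactly by the truncation $\min\{T_i,t_j\}-t_{j-1}$; the Riemann error on $(t_{j-1},t_j]$ is then $O\big((t_j-t_{j-1})^2 L_n\big)$, and summing over the partition and averaging over $i$ gives $O(\delta_n L_n)$. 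Differentiating in $\btheta$, or along a bounded, $L_n$-Lipschitz $g$-direction, preserves this structure up to the bounded multipliers $B_\bZ$ and the sup-norm of the direction.

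\emph{Part 1 (rates).} Let $g_o^\star\in\mathcal{G}(K,s,\boldsymbol p,D_n)$ be the ReLU approximant with $\|g_o^\star-g_o\|_\infty\lesssim\gamma_n$ furnished by the approximation theory of \citet{schmidthieber2020nonparametric} (as used to prove Theorem~\ref{thm:rate}). Since $\widetilde\eta$ maximizes $\widetilde\ell_n$, $\widetilde\ell_n(\widetilde\eta)\ge\widetilde\ell_n(\btheta_o,g_o^\star)$; adding $2\sup_\eta|R_n(\eta)|=O(\delta_n L_n)=o(\tau_n^2)$ shows that $\widetilde\eta$ is an approximate maximizer of the oracle loss $\ell_n$ with slack $o_p(\tau_n^2)$. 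Feeding this into the peeling/bracketing-entropy argument behind Theorem~\ref{thm:rate} — which uses only the population curvature of Lemma~\ref{lmm:concavity}, the Dudley bound for $\mathcal{G}(K,s,\boldsymbol p,D_n)$, and tolerates an $o(\tau_n^2)$ slack in the basic inequality — yields $d(\widetilde\eta,\eta_o)=O_p(\tau_n)$. Projecting $\bZ$ off the nuisance direction using Assumption~\ref{ass:A5} together with Lemma~\ref{lem:delta-zero-implies-unweighted-zero} (exactly the step that bounds $\|\widehat\btheta-\btheta_o\|$ by $d(\widehat\eta,\eta_o)$ in the proof of Theorem~\ref{thm:asym-theta}) then gives $\|\widetilde\btheta-\btheta_o\|=O_p(\tau_n)$, which is conclusion~1.

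\emph{Part 2 ($\sqrt n$-equivalence).} By Part~1 and $\btheta_o$ being interior, $\widetilde\btheta$ is eventually interior to $\Theta$, so $\widetilde\eta$ satisfies the first-order condition for $\widetilde\ell_n$ along the least-favorable path $s\mapsto(\widetilde\btheta+sv,\ \widetilde g-s\,\bg^{*\top}_n v)$, where $\bg^*_n\in(\mathcal{G}(K,s,\boldsymbol p,D_n))^p$ is a sieve approximant of the projection $\bg^*$ from Theorem~\ref{thm:eff-score} (the device already used in the proof of Theorem~\ref{thm:asym-theta} to handle $\bg^*$ lying outside the sieve and the non-linearity of $\mathcal{G}$). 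Writing $\widetilde\ell_n=\ell_n-R_n$, this condition reads
$$
\frac{\partial}{\partial s}\Big|_{s=0}\ell_n\big(\widetilde\btheta+sv,\ \widetilde g-s\,\bg^{*\top}_n v\big)
\ =\
\frac{\partial}{\partial s}\Big|_{s=0}R_n\big(\widetilde\btheta+sv,\ \widetilde g-s\,\bg^{*\top}_n v\big)
\ =\ v^\top O_p(\delta_n L_n),
$$
so $\widetilde\eta$ satisfies the \emph{same} plug-in efficient-score estimating equation as $\widehat\eta$, up to an extra $O_p(\delta_n L_n)$ error in the equation itself. Re-running the expansion in the proof of Theorem~\ref{thm:asym-theta} with $\widetilde\eta$ in place of $\widehat\eta$: all remainder terms are controlled by $d(\widetilde\eta,\eta_o)=O_p(\tau_n)$ and hence match those for $\widehat\eta$ in order; the first-order term in $\widetilde g-g_o$ vanishes by orthogonality of the efficient score to the nuisance tangent space; the quadratic remainder is $O_p(\tau_n^2)=o_p(n^{-1/2})$ by $\sqrt n\,\tau_n^2\to0$; and the new term contributes $\sqrt n\cdot O_p(\delta_n L_n)=o_p(1)$ by the choice of grid. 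This gives
$$
\sqrt n(\widetilde\btheta-\btheta_o)=I(\btheta_o)^{-1}\,\frac{1}{\sqrt n}\sum_{i=1}^n\ell^*_{\theta_o}(T_i,\bX_i,\bZ_i,\Delta_i)+o_p(1),
$$
the identical expansion that Theorem~\ref{thm:asym-theta} establishes for $\widehat\btheta$; subtracting yields $\sqrt n(\widetilde\btheta-\widehat\btheta)\xrightarrow{P}\mathbf 0$, and the stated limit law follows from Theorem~\ref{thm:asym-theta} and Slutsky's theorem.

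\emph{Main obstacle.} The crux is upgrading the discretization error from $o_p(1)$ at the rate scale to $o_p(n^{-1/2})$ in the efficient-score equation: this needs the uniform \emph{derivative} bound on $R_n$ with a genuinely controlled Lipschitz modulus $L_n$, and it needs the interaction between the Riemann discretization and the sieve re-approximation $\bg^*_n$ of the least-favorable direction to avoid secretly producing a term of order $n^{-1/2}$. The Lipschitz bookkeeping for deep ReLU networks — ensuring that $L_n$ enters only multiplicatively against the freely chosen mesh $\delta_n$ — is the only genuinely new estimate; the remainder of the argument is a re-tracing of the proofs of Theorems~\ref{thm:rate}--\ref{thm:asym-theta}.
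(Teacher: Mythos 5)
Your overall architecture mirrors the paper's: a uniform bound on $\ell_n-\widetilde\ell_n$ and on its $\btheta$-derivative over a neighborhood of $\eta_o$, fed first into the rate argument (giving Part 1 exactly as the paper does, via the approximate-maximizer slack and Theorem 3.4.1 of van der Vaart and Wellner) and then into a $\sqrt n$-expansion for Part 2. The genuine gap is in the discretization estimate itself. You bound the quadrature error by $O(\delta_n L_n)$, where $L_n$ is the worst-case Lipschitz-in-$t$ modulus of the sieve, and you yourself note $L_n\le \sqrt r\,(\max_\ell p_\ell)^K$; under Assumption \assonly{ass:A3} ($K=O(\log n)$, $\max_\ell p_\ell\lesssim n$) this is superpolynomial in $n$, so $\delta_n L_n$ is not small under the stated hypotheses $\delta_n=o(\tau_n^2)$, $\sqrt n\,\delta_n\to 0$. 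You resolve this by declaring that those hypotheses "encode" smallness of $\delta_n L_n$ — but that rewrites the theorem rather than proving it: the conditions in the statement are on $\delta_n$ alone. The paper's route (Lemma \ref{lem:quadrature-error}) instead asserts that, over the shrinking neighborhood $\mathcal N_n=\{\eta: d(\eta,\eta_o)\le M\tau_n\}$, $g(t,\bx)$ is uniformly H\"older in $t$ with constants depending only on the model constants, so the Riemann error is $O(\delta_n)$ with an $n$-free constant — which is what the stated mesh conditions actually require. To close your version you would need either an argument that the relevant network functions have $O(1)$ Lipschitz modulus in $t$ (not supplied, and not automatic from entrywise weight bounds plus sparsity), or strengthened mesh conditions, i.e.\ a different theorem.

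A secondary, fixable divergence: in Part 2 you impose a first-order condition along the least-favorable path $s\mapsto(\widetilde\btheta+sv,\ \widetilde g-s\,\bg^{*\top}_n v)$ with a sieve approximant $\bg^*_n$. This equation is not exactly available, because $\widetilde g-s\,\bg^{*\top}_n v$ need not belong to the nonconvex, sparsity-constrained class $\mathcal{G}(K,s,\boldsymbol p,D)$, and (contrary to your parenthetical) the paper's proof of Theorem~\ref{thm:asym-theta} does not use this device — it works with the plug-in efficient-score equation and population expansions. For Theorem~\ref{thm:quad-equiv} the paper avoids the issue entirely: it uses only the exact $\btheta$-score equations $S_n(\widehat\eta)={\bf 0}$, $\widetilde S_n(\widetilde\eta)={\bf 0}$, a mean-value step in $\btheta$ with the uniform Hessian limit of Lemma~\ref{lem:info-hessian}, and a separate $O_p(\tau_n^2)$ bound on the $g$-increment $S_n(\widetilde g,\widehat\btheta)-S_n(\widehat g,\widehat\btheta)$ via a second-order population expansion plus the maximal inequality of Lemma~\ref{lem:unified_maximal}. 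Recasting your Part 2 along those lines (or justifying an approximate, rather than exact, first-order condition along the path) would remove this obstacle; the Lipschitz-modulus issue above is the one that genuinely blocks the proof as written.
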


Theorem~\ref{thm:quad-equiv} shows that the difference between the exact and the numerically integrated likelihood estimators is $o_p(n^{-1/2})$, so that both share the same asymptotic normal distribution. The second conclusion verifies that the required error bounds hold for the grid with $n$ time points, where the per–observation quadrature error is of order $1/n$. This ensures that the asymptotic efficiency result for $\widehat\btheta$ carries over directly to the implementable estimator $\widetilde\btheta$ based on numerical integration.

\section{Semiparametric Representation and Asymptotic Theory for $\widehat{S}^{1,\sf cf}(t \mid \bX_0 , \bZ_0)$}
After Section 4 established estimation and efficiency results for the linear parameter $\btheta_o$, this section turns to the harder problem of inference for $H_o(t\mid \bX_0,\bZ_0)$ and $S_o(t\mid \bX_0,\bZ_0)$. 
Our development follows the semiparametric approach based on pathwise differentiability and efficient influence functions \citep{bickel1993efficient,van2000asymptotic,tsiatis2006semiparametric}.
We first introduce the additional assumptions needed for this target, then derive the efficient influence function and the corresponding cross-fitted one-step estimator. We next show how this construction simplifies in the finite-support case for $\bX$, where the Riesz representer becomes explicit, and finally use these results to obtain asymptotic normality and pointwise confidence intervals for the cumulative hazard and survival function in the finite-support case for $\bX$. The proofs of all lemmas and theorems in this section appear in Appendix B.

\subsection{Additional assumptions for survival inference}
Fix $t \in [0,\tau]$, and let $(\bX_0,\bZ_0)$ be an independent draw from the covariate distribution, independent of the training sample.
The assumptions below are used to establish pathwise differentiability of the target, derive its efficient influence function, and justify the cross-fitted one-step estimator introduced in Section 3.2.
To state these assumptions, we equip nuisance directions with the predictable-variation inner product
\begin{equation}
	\label{eq:inner_product}
	\langle a,b\rangle_0
	=
	\EE_0\!\left\{
	\int_0^\tau 
	a(u,\bX)b(u,\bX)\,
	Y(u)h_o(u\mid \bX,\bZ)\,du
	\right\} \, ,
	\qquad
	\|b\|_0^2=\langle b,b\rangle_0 \, .
\end{equation}

\begin{assumption}\label{ass:A6}
For the fixed time $t$, $\EE_0\!\int_0^t h_o(u\mid \bX_0,\bZ_0)\,du <\infty$, and there exists $\kappa_t>0$ such that
		$\PP_0( Y(u)=1\mid \bX_0,\bZ_0) \ge \kappa_t$
		a.s. for a.e. $u\in[0,t]$.        
\end{assumption}

\begin{assumption}\label{ass:A7}
Define the linear operator
$Q : \overline{\mathcal{T}}_{g_o} \to L^2(P_{\bX,\bZ})$, 
$$(Qb)(\bX_0,\bZ_0)=\int_0^t h_o(u \mid \bX_0,\bZ_0)  \, b(u,\bX_0)\, du ,
\qquad b \in \overline{\mathcal{T}}_{g_o} \, .
$$
Assume that \(Q\) is bounded, i.e., there exists a constant \(C_t < \infty\) such that
\[
\|Qb\|_{L^2(P_{\bX,\bZ})}^2
=
E_{\bX_0,\bZ_0}\!\left\{ (Qb)(\bX_0,\bZ_0)^2 \right\}
\le
C_t \|b\|_0^2,
\qquad \forall\, b \in \overline{\mathcal{T}}_{g_o}.
\]
Assume further that there exists a random element
$\psi_t^0(\cdot,\cdot;\bX_0,\bZ_0)\in L^2(P_{\bX,\bZ};\overline{\mathcal{T}}_{g_o})$,
such that for every \(b \in \overline{\mathcal{T}}_{g_o}\),
\begin{equation}\label{eq:qblinear}
(Qb)(\bX_0,\bZ_0)
=
\left\langle \psi_t^0(\cdot,\cdot;\bX_0,\bZ_0),\, b \right\rangle_0
\quad\text{in }L^2(P_{\bX,\bZ}).
\end{equation}
\end{assumption}

\begin{assumption}\label{ass:A8}
The nuisance covariate $\bX$ takes values in a finite set
$\mathcal X=\{\bx^{(1)},\dots,\bx^{(J)}\}$,
with $\Pr(\bX=\bx^{(j)})=\pi_j>0$, $j=1,\dots,J$.
When pointwise inference is stated in explicit closed form below, we work under this finite-support assumption.    
\end{assumption}

\begin{assumption}\label{ass:A9}
Let $\{\mathcal I_k\}_{k=1}^K$ be a fixed K-fold partition of $\{1,\dots,n\}$, and for each $k$ let
$(\widetilde\btheta^{(-k)},\widetilde g^{(-k)})$ and $\widehat \bg^{*,(-k)}$
be estimated using only the training sample $\mathcal I_k^c$. Assume these estimators are measurable with respect to the training sigma-field and, conditional on the training sample, are independent of the validation observations $\{\bW_i:i\in \mathcal I_k\}$. Assume moreover that, uniformly in $k$,
$\|\widetilde g^{(-k)}-g_o\|_0 = O_p(\tau_n)$,
$\|\widehat \bg^{*,(-k)}-\bg^*\|_0 = O_p(\tau_n)$,
$\|\widetilde \btheta^{(-k)}-\btheta_o\| = O_p(n^{-1/2})$,
where
$\tau_n=\gamma_n \log^2 n$, and $\sqrt n\,\tau_n^2\to 0$.    
\end{assumption}

\begin{assumption}\label{ass:A10}
    The efficient information matrix $I(\btheta_o)$ is nonsingular
    and the smallest eigenvalue satisfies $\lambda_{min}{I(\theta_o)} \geq c_0$ for some constant $c_0 > 0$. In addition,
$E_0\bigl\|\ell^*_{\theta_o}(\bW)\bigr\|^2<\infty$,
and 
$E_0\!\left[\bigl\{\phi^0_t(\bW;\bX_0,\bZ_0)\bigr\}^2\right]<\infty $.
\end{assumption}

\begin{assumption}\label{ass:A11}
Let $0 = t_0 < t_1 < \cdots < t_{m_n} = \tau$ be the numerical grid used in the estimation procedure, with mesh size
$\delta_n \;=\; \max_{1\le j\le m_n}(t_j - t_{j-1})$.
Assume the grid contains all observed event and censoring times, that
$\sqrt n\,\delta_n\to 0$,
and that the corresponding Riemann-approximation error is $O(\delta_n)$ uniformly over a neighborhood of $(\btheta_o,g_o)$. 
\end{assumption}

Under \ref{ass:A6}--\ref{ass:A11}, the efficient influence function and the cross-fitted one-step estimator admit a tractable representation. Assumptions \ref{ass:A6}--\ref{ass:A7} guarantee pathwise differentiability of the target functional $H_o(t\mid \bX_0,\bZ_0)$, \ref{ass:A8} yields an explicit closed-form Riesz representer, \ref{ass:A9} provides the foldwise nuisance-rate conditions needed for replacing oracle quantities by estimated ones, \ref{ass:A10} ensures regularity of the efficient score and information, and \ref{ass:A11} controls the numerical integration error. 

Under \ref{ass:A6} and \ref{ass:A8}, the operator $Q$ admits the explicit Riesz representer given in Lemma 6; hence \ref{ass:A7} holds in the finite-support case. Assumption \ref{ass:A8} is used only to obtain an explicit closed-form Riesz representer, and hence an explicit one-step correction, for pointwise survival-curve inference. It is not required for estimation of $g_o$, for semiparametric efficiency of $\btheta_o$, or for inference on $\btheta_o$. Moreover, \ref{ass:A8} concerns only the nuisance covariates entering the flexible component $g(t,\bX)$, not the low-dimensional covariates $\bZ$ of primary scientific interest. In applications with continuous nuisance covariates, one practical route is to replace them by a sufficiently fine grouped version when implementing the explicit one-step correction. Because this grouping acts only on the nuisance inputs of a flexible DNN, it is often not expected to materially affect the fitted hazard or resulting survival predictions, although this should be assessed empirically and may become less convenient when the induced joint support is very large.

\subsection{Efficient influence function and one-step representation}
We now provide the semiparametric ingredients underlying the estimator introduced in Section 3.2.  
Consider nuisance perturbations of the form $g_\varepsilon=g_o+\varepsilon b$,
with predictable $b=b(u,\bX)$. The corresponding nuisance score equals
\begin{equation}
	\label{eq:nuisance_score}
	S_{g,b}(\bW)
	=
	\left.\frac{\partial}{\partial \varepsilon}
	\ell_0(\btheta_o,g_\varepsilon)\right|_{\varepsilon=0}
	=
	\int_0^\tau b(u,\bX)\,dM(u) \, .
\end{equation}
Since $\overline{\mathcal{T}}_{g_o}$ denote the closure of predictable directions under $\|\cdot\|_0$,  elements of $\overline{\mathcal{T}}_{g_o}$ are effectively functions on $[0,\tau]\times\mathcal X$.
As earlier, $\bg^*(u,X)=(g_1^*,\dots,g_p^*)^\top \in (\overline{\mathcal T}_{g_o})^p$ denote the componentwise $L_0^2$-projection of $\bZ$ onto $(\overline{\mathcal T}_{g_o})^p$, 
characterized by
\begin{equation}
	\label{eq:projection}
	\EE_0\!\left\{
	\int_0^\tau
	\{\bZ-\bg^*(u,\bX)\}\,
	b(u,\bX)\,
	Y(u)h_o(u\mid \bX,\bZ)\,du
	\right\}
	=
	0 \, ,
	\qquad
	\forall \,\, b\in\mathcal \overline{\mathcal{T}}_{g_o} \, .
\end{equation}
The efficient score for $\btheta_o$ is therefore
\[
\ell_{\theta_o}^*(\bW)
=
\int_0^\tau \{\bZ-\bg^*(u,\bX)\}\,dM(u) \, ,
\]
and $I(\btheta_o)=\EE_0\!\left\{\ell_{\theta_o}^*(\bW)^{\otimes 2}\right\}$.
For $\ba\in\mathbb R^p$, consider the parametric path
$\btheta_\varepsilon=\btheta_o+\varepsilon \ba$.
To construct a least favorable submodel, we simultaneously perturb the nuisance function as
\begin{equation}
	\label{eq:LFS}
	g_{\varepsilon,\ba}(u,\bX)
	=
	g_o(u,\bX)-\varepsilon\,\ba^\top \bg^*(u,\bX) \, .
\end{equation}
More generally, allowing an additional nuisance direction $b\in \overline{\mathcal{T}}_{g_o}$, we consider
\[
g_\varepsilon(u,\bX)
=
g_o(u,\bX)
-\varepsilon\,\ba^\top \bg^*(u,\bX)
+\varepsilon b(u,\bX) \, ,
\]
while keeping $\btheta_\varepsilon=\btheta_o+\varepsilon\ba$. The score decomposes as
\begin{equation}
	\label{eq:score_decomp}
	S(\bW;\ba,b)
	=
	\ba^\top \ell_{\theta_o}^*(\bW)
	+
	\int_0^\tau b(u,\bX)\,dM(u) \, .
\end{equation}
Fix $t\in(0,\tau]$ and define the linear functional
\begin{equation}
	\label{eq:L_def}
	L_t(b)
	=
	\int_0^t
	h_o(u\mid \bX_0,\bZ_0)\,
	b(u,\bX_0)\,du \, ,
	\qquad b\in  \overline{\mathcal{T}}_{g_o} \, .
\end{equation}
When applied to a vector-valued function 
${\bf \mu}(u,\bX)=(\mu_1,\dots,\mu_p)^\top$,
we define
$L_t({\bf \mu})=\big(L_t(\mu_1),\dots,L_t(\mu_p)\big)^\top$.
Define
\begin{equation}
	\label{eq:c_def}
	c_{0,t}(\bX_0,\bZ_0)
	=
	\int_0^t
	h_o(u\mid \bX_0,\bZ_0)
	\{\bZ_0-\bg^*(u,\bX_0)\}\,du
	=
	\bZ_0\,H_o(t\mid \bX_0,\bZ_0)
	-
	L_t(\bg^*) \, .
\end{equation}

The following lemma computes the pathwise derivative of the target cumulative hazard $H_\varepsilon(t\mid \bX_0,\bZ_0)$ along a regular submodel. It shows the derivative splits cleanly into a nuisance part $L_t(b)$ (how perturbations of the nuisance component move the target) plus a parametric part $\ba^\top c_{0,t}(\bX_0,\bZ_0)$ (how perturbations of $\btheta$ move the target). This is the basic Gateaux derivative input needed to identify the EIF.  

\begin{restatable}{lemma}{Hderivative}
	\label{lem:pathwise}
	Let $P_\varepsilon$ be a regular submodel with score \eqref{eq:score_decomp}. Then
	\begin{equation}\label{eq:Gateaux_deriv_MS2}
	\left.
\frac{d}{d\varepsilon}
H_\varepsilon(t\mid \bX_0,\bZ_0)
\right|_{\varepsilon=0}
=
L_t(b)
+
\ba^\top c_{0,t}(\bX_0,\bZ_0) \, .
	\end{equation}
\end{restatable}

The following  lemma gives an explicit EIF decomposition: a nuisance correction term plus a parametric correction term. 

\begin{restatable}{lemma}{EIFms}
	\label{lem:EIF_existence_MS}
	Fix $t\in[0,\tau]$ and let $(\bX_0,\bZ_0)$ be an independent draw from the covariate distribution,
	independent of the training sample. 
	Assume \ref{ass:A6}--\ref{ass:A7}. Then $\psi_t^0$ is unique in $L^2(P_{\bX,\bZ};\overline{\mathcal{T}}_{g_o})$ and the map $P\mapsto H_P(t\mid \bX_0,\bZ_0)$ is pathwise differentiable at $\PP_0$
	as an element of $L^2(P_{\bX,\bZ})$; that is, for every regular submodel
	$\PP_\varepsilon$ through $\PP_0$ with score $S(\bW;\ba,b)$,
	\[
	\left.\frac{d}{d\varepsilon}H_\varepsilon(t\mid \bX_0,\bZ_0)\right|_{\varepsilon=0}
	=
	\EE_0\!\big\{\phi_t^0(\bW;\bX_0,\bZ_0)\,S(\bW;\ba,b)\ \big|\ \bX_0,\bZ_0\big\}
	\quad\text{in }L^2(P_{\bX,\bZ}).
	\]
	Equivalently, \eqref{eq:qblinear} can be stated as a unique (up to $P_{\bX,\bZ}$-null sets) random element
	$
	\psi^0_t(\cdot , \cdot ;\bX_0,\bZ_0) \in L^2 \big(P_{\bX,\bZ};\overline{\mathcal{T}}_{g_o}  \big)
	$
	such that for all $b\in \overline{\mathcal{T}}_{g_o}$,
	\begin{equation}\label{eq:Riesz_MS}
		\EE_{(\bX_0,\bZ_0)}\!\Big[\big\{\langle \psi^0_t(\cdot,\cdot;\bX_0,\bZ_0),\,b\rangle_0 - L_t(b)\big\}^2\Big]=0 \, .
	\end{equation}
	Finally, an EIF for $H_o(t\mid \bX_0,\bZ_0)$  is
	\begin{equation}\label{eq:EIF_MS}
		\phi_t^0(\bW;\bX_0,\bZ_0)
		=
		\int_0^\tau
		\psi^0_{t}(u,\bX;\bX_0,\bZ_0)\,dM(u)
		+
		c_{0,t}(\bX_0,\bZ_0)^\top
		I(\btheta_o)^{-1}
		\ell_{\theta_o}^*(\bW) \, .
	\end{equation}
\end{restatable}

We emphasize that the finite-support assumption on \(\bX\) is introduced only to obtain an explicit and implementable one-step estimator. More generally, survival inference for a new subject can also be developed under arbitrary covariate distributions via a high-level semiparametric characterization of the efficient influence function. Since that approach is more technical and does not yield an equally explicit estimator without additional strong approximation arguments, we present it in Appendix~C.

\subsection{Explicit form under finite-support $\bX$ and  asymptotic theory for the cross-fitted estimator}

In the following lemma we show that under discrete finite-support $\bX$, the abstract Riesz representer from Lemma \ref{lem:EIF_existence_MS} becomes an explicit weight function. Therefore, the EIF and one-step estimator become fully explicit.
\begin{restatable}{lemma}{Discrete}
	\label{prop:closed_form_discrete}
	Under Assumptions \ref{ass:A6}--\ref{ass:A8}, the Riesz representer
	$\psi^0_{t}(\cdot,\cdot;\bX_0,\bZ_0)$ satisfying \eqref{eq:Riesz_MS} admits the closed form
	\begin{equation}
		\label{eq:psi_closed}
		\psi^0_{t}(u,\bX;\bX_0,\bZ_0)
		=
		\mathbf 1(u\le t)
		\mathbf 1(\bX=\bX_0)
		\frac{h_o(u\mid \bX_0,\bZ_0)}
		{\Pr(\bX=\bX_0)\,A_{\bX_0}(u)}.
	\end{equation}
\end{restatable}

The following is a cross-fitted one-step estimator.
Let $\{\mathcal I_1,\dots, \mathcal I_K\}$ be a fixed $K$-fold partition of
$\{1,\dots,n\}$ and let $\mathcal I_k^c$ denote the complement of $\mathcal I_k$.
For each fold $k$, let
$(\widetilde\btheta^{(-k)},\widetilde g^{(-k)})$
be the estimators trained on $\mathcal I_k^c$.
Define the fold-specific hazard estimator
\[
\widehat h ^{(-k)}(u\mid \bX,\bZ)
=
\exp\!\left\{
\widetilde g^{(-k)}(u,\bX)
+
(\widetilde\btheta^{(-k)})^\top \bZ
\right\}.
\]
For $t\in[0,\tau]$, define the plug-in cumulative hazard at
$(\bX_0,\bZ_0)$ by
\[
\widehat H^{(-k)}(t\mid \bX_0,\bZ_0)
=
\sum_{j: t_j\le t}
(t_j-t_{j-1})
\widehat h^{(-k)}(t_j\mid \bX_0,\bZ_0).
\]
For $i\in \mathcal I_k$, define
$
\Delta N_{ij}
=
1\{T_i\in (t_{j-1},t_j],\ \Delta_i=1\}$,
$
Y_{ij} = Y_i(t_{j-1})
=
1\{T_i\ge t_{j-1}\},
$
and
$
\Delta A_{ij}^{(-k)}
=
(t_j-t_{j-1})
Y_{ij}
\widehat h^{(-k)}(t_j\mid \bX_i,\bZ_i)$.
The grid-based martingale increment is
$
\Delta\widehat M_{ij}^{(-k)}
=
\Delta N_{ij}
-
\Delta A_{ij}^{(-k)}
$.

Let $\widehat\psi_{t}(\cdot,\cdot;\bX_0,\bZ_0)^{(-k)}$ denote the fold-specific
estimator of the Riesz representer.
Under discrete finite support of $\bX$,
\[
\widehat\psi_{t}^{(-k)}(t_j,\bX_i;\bX_0,\bZ_0)
=
\mathbf 1(t_j\le t)\,
\mathbf 1(\bX_i=\bX_0)\,
\frac{
	\widehat h^{(-k)}(t_j\mid \bX_0,\bZ_0)
}{
	\widehat\pi_{j_0}^{(-k)}
	\widehat A_{j_0}^{(-k)}(t_j)
},
\]
where $j_0$ satisfies $\bX_0=\bx^{(j_0)}$,
$
\widehat\pi_{j_0}^{(-k)}
=
|\mathcal I_k^c|^{-1}
\sum_{i\in  \mathcal I_k^c}
\mathbf 1(\bX_i=\bx^{(j_0)})$,
and
\[
\widehat A_{j_0}^{(-k)}(t_j)
=
\frac{
	\sum_{i\in \mathcal I_k^c}
	Y_{ij}
	\widehat h^{(-k)}(t_j\mid \bX_i,\bZ_i)
	1(\bX_i=\bx^{(j_0)})
}{
	\sum_{i\in \mathcal I_k^c}
	\mathbf 1(\bX_i=\bx^{(j_0)})
}.
\]
The nuisance influence contribution is
\[
\widehat\phi^{(-k)}_{t,g}(\bW_i;\bX_0,\bZ_0)
=
\sum_{j=1}^{m_n}
\widehat\psi_{t}^{(-k)}(t_j,\bX_i;\bX_0,\bZ_0)
\Delta\widehat M_{ij}^{(-k)} \, .
\]
Let $\widehat \bg^{*,(-k)}$ denote the estimator of the projection
$\bg^*$ and define
\[
\widehat c_t^{(-k)}(\bX_0,\bZ_0)
=
\sum_{j: t_j\le t}
(t_j-t_{j-1})
\widehat h^{(-k)}(t_j\mid \bX_0,\bZ_0)
\Big\{
\bZ_0-\widehat \bg^{*,(-k)}(t_j,\bX_0)
\Big\}.
\]
Let $\widehat I^{(-k)}$ and
$\widehat\ell_{\theta}^{*,(-k)}$
be the out-of-fold efficient information and score estimators.
Define
\[
\widehat\phi^{(-k)}_{t,\theta}(\bW_i;\bX_0,\bZ_0)
=
\widehat c_t^{(-k)}(\bX_0,\bZ_0)^\top
\big\{\widehat I^{(-k)}\big\}^{-1}
\widehat\ell_{\theta}^{*,(-k)}(\bW_i) \, .
\]
The fold-specific influence estimator is
\[
\widehat\phi_t^{(-k)}
=
\widehat\phi^{(-k)}_{t,g}
+
\widehat\phi^{(-k)}_{t,\theta} \, .
\]
The cross-fitted one-step estimator is
\[
\widehat H^{1,{\sf cf}}(t\mid \bX_0,\bZ_0)
=
\sum_{k=1}^K \frac{|\mathcal I_k|}{n}
\widehat H^{(-k)}(t\mid \bX_0,\bZ_0)
+
\frac1n
\sum_{k=1}^K
\sum_{i\in \mathcal I_k}
\widehat\phi_t^{(-k)}(\bW_i;\bX_0,\bZ_0) \, .
\]
Finally,
\[
\widehat S^{1,{\sf cf}}(t\mid \bX_0,\bZ_0)
=
\exp\!\big\{
-\widehat H^{1,{\sf cf}}(t\mid \bX_0,\bZ_0)
\big\}.
\]

The following lemma shows that in empirical sums, the oracle efficient score $\ell_{\theta_o}^*$ can be replaced by the cross-fitted plug-in score $\hat\ell_{\theta_o}^{*,(-k)}$ without changing the $n^{1/2}$-limit, provided the nuisance estimates converge at a suitable rate and cross-fitting ensures conditional independence. This is the key technical step that lets us use estimated nuisances inside the EIF expansion. 

\begin{restatable}{lemma}{EmpiricalLike}
	\label{lem:plugin_ellstar}
	For $i\in \mathcal I_k$ define the plug-in martingale 
	\[
	\widehat M^{(-k)}_i(u)
	=
	N_i(u)-\int_0^u Y_i(s)\widehat h^{(-k)}(s\mid \bX_i,\bZ_i)\,ds \, ,
	\]
	and the plug-in efficient score
	\begin{equation}\label{eq:ellstar_plugin}
		\widehat\ell^{*,(-k)}_{\theta_o}(\bW_i)
		=
		\int_0^\tau \{\bZ_i-\widehat {\bg}^{*,(-k)}(u,\bX_i)\}\,d\widehat M^{(-k)}_i(u) \, .
	\end{equation}
	Assume \ref{ass:A1}--\ref{ass:A7} and \ref{ass:A9}. Then
	\begin{equation}\label{eq:plugin_ellstar_rootn}
		\frac{1}{\sqrt n}\sum_{k=1}^K\sum_{i\in \mathcal I_k}
		\Big\{\widehat\ell^{*,(-k)}_{\theta_o}(\bW_i)-\ell^*_{\theta_o}(\bW_i)\Big\}
		=o_p(1).
	\end{equation}
	Consequently, in any root-$n$ expansion in which the oracle score
	$\ell^*_{\theta_o}$ appears inside an empirical sum, it may be replaced by the
	cross-fitted plug-in score $\widehat\ell^{*,(-k)}_{\theta_o}$ without affecting the
	first-order limit.
\end{restatable}

The following lemma proves the out-of-fold estimator $\widehat I^{(-k)}$ of the efficient information converges to $I(\btheta_o)$, and that the inverse $\{\widehat I^{(-k)}\}^{-1}$ is well-behaved when $I(\btheta_o)$ is nonsingular. This justifies replacing $I(\btheta_o)^{-1}$ in the EIF by a cross-fitted plug-in inverse.  

\begin{restatable}{lemma}{InverseInfo}
	\label{lem:inverse-info-plug-in}
	Define the out-of-fold (efficient) information estimator
	\[
	\widehat I^{(-k)}
	=
	\frac{1}{|\mathcal I_k^c|}
	\sum_{j\in \mathcal I_k^c}
	\Delta_j\Big\{\bZ_j-\widehat \bg^{*,(-k)}(T_j,\bX_j)\Big\}^{\otimes 2},
	\qquad k=1,\dots,K.
	\]
	Assume \ref{ass:A1}--\ref{ass:A7} and \ref{ass:A9}--\ref{ass:A10}.
	Then,
	\begin{enumerate}
		\item[(a)] $\max_{1\le k\le K}\|\widehat I^{(-k)}-I(\btheta_o)\| = o_p(1)$.
		\item[(b)] With probability tending to one, each $\widehat I^{(-k)}$ is invertible and
		\[
		\max_{1\le k\le K}\big\| \{\widehat I^{(-k)}\}^{-1}-I(\btheta_o)^{-1}\big\| = o_p(1).
		\]
		\item[(c)] If, in addition, $E_0\{\|\ell^*_{\theta_o}(\bW)\|^2\}<\infty$, then
		\[
		\frac{1}{\sqrt n}\sum_{k=1}^K\sum_{i\in \mathcal I_k}
		\Big(
		\{\widehat I^{(-k)}\}^{-1}-I(\btheta_o)^{-1}
		\Big)\,
		\ell^*_{\theta_o}(\bW_i)
		=
		o_p(1).
		\]
	\end{enumerate}
\end{restatable}

Because the estimators use a time grid, the following lemma compares the true EIF $\phi_t^0$,  its grid version $\phi_{t,n}^0$, and the fully plug-in cross-fitted estimate $\widehat\phi_t^{(-k)}$. It shows $\widehat\phi_t^{(-k)}$ is $L_2(\PP_0)$-close to the grid EIF uniformly over folds, and since the grid error vanishes when $\sqrt n\,\delta_n\to 0$, this yields $L_2$-consistency for the true EIF as well.

\begin{restatable}{lemma}{DiscreteGrid}
	\label{lem:L2phi_discrete}
	Assume \ref{ass:A1}--\ref{ass:A11}.	
	Let $\phi^0_{t,n}(\bW;\bX_0,\bZ_0)$  denote the grid version of \eqref{eq:EIF_MS} obtained by replacing
	$\int_0^\tau \psi_t^0(u,\bX;\bX_0,\bZ_0)\,dM(u)$ with
	$\sum_{j=1}^{m_n}\psi_t^0(t_j,\bX;\bX_0,\bZ_0)\Delta M_{j}$ and
	$c_{0,t}(\bX_0,\bZ_0)$ with its grid analogue.
	Then, uniformly in $k=1,\dots,K$,
	\[
	\EE_0\!\left[\Big(\widehat\phi_t^{(-k)}(\bW;\bX_0,\bZ_0)-\phi^0_{t,n}(\bW;\bX_0,\bZ_0)\Big)^2\right]
	=o_p(1),
	\]
	and therefore (since $\sqrt{n}\delta_n\to0$ implies $\phi^0_{t,n}$ and $\phi_t^0$ are asymptotically equivalent),
	\[
	\EE_0\!\left[\Big(\widehat\phi_t^{(-k)}(\bW;\bX_0,\bZ_0)-\phi^0_t(\bW;\bX_0,\bZ_0)\Big)^2\right]
	=o_p(1),
	\qquad \text{uniformly in }k.
	\]
\end{restatable}

\begin{restatable}{theorem}{DiscreteCLT}
	\label{thm:cf_discrete_CLT}
    Fix $t\in[0,\tau]$ and let $(\bX_0,\bZ_0)$ be an independent draw from the covariate distribution. Under Assumptions \ref{ass:A1}--\ref{ass:A11}, and assuming moreover that for each fold $k$,
	\[
	\|\widetilde g^{(-k)}-g_o\|_0
	=
	O_p(\gamma_n\log^2 n),
	\qquad
	\gamma_n\log^2 n=o(n^{-1/4}),
	\]
	uniformly in $k$, 
	and that $\widetilde\btheta^{(-k)}$ satisfies the same first-order expansion as established in Theorem 4,
	namely,
	\[
	\sqrt{| \mathcal I_k^c|}\,(\widetilde\btheta^{(-k)}-\btheta_o)
	=
	I(\btheta_o)^{-1}\frac{1}{\sqrt{|\mathcal I_k^c|}}\sum_{i\in \mathcal I_k^c}\ell_{\theta_o}^*(\bW_i)
	+o_p(1),
	\]
	uniformly in k. Then,  conditional on the independent draw $(\bX_0,\bZ_0)$,
	\[
	\sqrt n\Big\{\widehat H^{\,1,{\sf cf}}(t\mid\bX_0,\bZ_0)-H_o(t\mid\bX_0,\bZ_0)\Big\} \mid (\bX_0,\bZ_0)
	\rightarrow^{\mathcal D}
	\mathcal N\!\big(0,\sigma_H^2(t;\bX_0,\bZ_0)\big),
	\]
	where
	$
	\sigma_H^2(t;\bX_0,\bZ_0)
	=
	\Var_0\!\Big(\phi_t^0(\bW;\bX_0,\bZ_0)\ \big|\ \bX_0,\bZ_0\Big).
	$
	Moreover, with
	\[
	\widehat\sigma_H^2(t;\bX_0,\bZ_0)
	=
	\frac1n\sum_{k=1}^{K}\ \sum_{i\in \mathcal I_k}
	\Big\{\widehat\phi_t^{(-k)}(\bW_i;\bX_0,\bZ_0)\Big\}^2,
	\]
	we have $\widehat\sigma_H^2(t;\bX_0,\bZ_0)\xrightarrow{p}\sigma_H^2(t;\bX_0,\bZ_0)$.
 Finally, 
	\[
	\sqrt n\Big\{\widehat S^{\,1,{\sf cf}}(t\mid\bX_0,\bZ_0)-S_o(t\mid\bX_0,\bZ_0)\Big\} \mid (\bX_0,\bZ_0)
	\rightarrow{^\mathcal D}
	\mathcal N\!\Big(0,\sigma_S^2(t;\bX_0,\bZ_0)\Big)
	\]
	where
	$
\sigma_S^2(t;\bX_0,\bZ_0)=S_o(t\mid\bX_0,\bZ_0)^2\,\sigma_H^2(t;\bX_0,\bZ_0)
	$
	and a consistent variance estimator is
	\[
	\widehat\sigma_S^2(t;\bX_0,\bZ_0)
	=
	\big\{\widehat S^{\,1,{\sf cf}}(t\mid\bX_0,\bZ_0)\big\}^2\,
	\widehat\sigma_H^2(t;\bX_0,\bZ_0) \, .
	\]
\end{restatable}
Theorem~\ref{thm:cf_discrete_CLT} yields a Wald-type pointwise confidence interval for the survival of a new subject at any fixed time $t\in[0,\tau]$, that is
\[
\widehat S^{1,\mathrm{cf}}(t\mid X_0,Z_0)
\pm
z_{1-\alpha/2}{\widehat\sigma_S(t;X_0,Z_0)}/{\sqrt n} \, .
\]
Theorem~\ref{thm:cf_discrete_CLT} is proved conditionally on the random draw $(\bX_0,\bZ_0)$; it is not a uniform statement over all fixed covariate values $(\bx_0,\bz_0)$. Hence, the associated coverage statement is first understood conditionally,
\[
\Pr \!\left(
S_o(t\mid \bX_0,\bZ_0)\in
\left[
\widehat S^{1,\mathrm{cf}}(t\mid \bX_0,\bZ_0)
\pm
z_{1-\alpha/2}\frac{\widehat\sigma_S(t;\bX_0,\bZ_0)}{\sqrt n}
\right]
\Bigm| \bX_0,\bZ_0
\right)
\to 1-\alpha \, .
\] 
Let
\[
A_n(t)=
\left\{
S_o(t\mid \bX_0,\bZ_0)\in
\left[
\widehat S^{1,\mathrm{cf}}(t\mid \bX_0,\bZ_0)
\pm
z_{1-\alpha/2}\frac{\widehat\sigma_S(t;\bX_0,\bZ_0)}{\sqrt n}
\right]
\right\} \, ,
\]
for which Theorem~\ref{thm:cf_discrete_CLT} yields
$\Pr \!\big(A_n(t)\mid \bX_0,\bZ_0\big)\to 1-\alpha$, in probability.
Then, since \(0\le \Pr (A_n(t)\mid \bX_0,\bZ_0)\le 1\), bounded convergence in probability implies convergence in \(L^1\) and therefore
$\EE\!\left\{ \Pr \!\big(A_n(t)\mid \bX_0, \bZ_0\big)\right\} \to 1-\alpha$.
Applying the law of iterated expectations gives
$
\Pr \!\big(A_n(t)\big)
=
\EE\!\left\{ \Pr\!\big(A_n(t)\mid \bX_0,\bZ_0\big)\right\} 
\to 1-\alpha \, .
$
Thus the pointwise (in $t$) Wald interval has asymptotic marginal coverage under the joint law of the training data and the independent draw \((\bX_0,\bZ_0)\), namely,
\[
\Pr \!\left(
S_o(t\mid \bX_0, \bZ_0)\in
\left[
\widehat S^{1,\mathrm{cf}}(t\mid \bX_0,\bZ_0)
\pm
z_{1-\alpha/2}\frac{\widehat\sigma_S(t;\bX_0,\bZ_0)}{\sqrt n}
\right]
\right)
\to 1-\alpha \, .
\]

In our partially linear hazard model, the nuisance function $g_o(t,\bX)$ is assumed to belong to a composite H\"older class, $g_o\in\mathcal H(q,\alpha,{\bf d}^*,\tilde {\bf d},M)$. Under this assumption, sparse ReLU networks can achieve the (near-)minimax rate $\|\widehat g-g_o\| = O_p\!\big(\tau_n\big)$, where the complexity exponent $\gamma_n$ is governed by the hardest layer of the composition. The condition on $\gamma_n$ imposed in the theorem above is therefore a standard second-order remainder requirement for one-step inference, and it ensures $\sqrt n\,\|\widetilde g-g_o\|_0^2 = o_p(1)$, so that nuisance-estimation error enters only through $o_p(n^{-1/2})$ terms. Interpreting this condition in terms of the composite H\"older parameters, it is sufficient that the slowest layer satisfies
\[
\frac{\tilde\alpha_\ell}{2\tilde\alpha_\ell+\tilde d_\ell}>\frac14
\quad\text{for the maximizing }\ell,
\qquad\text{equivalently}\qquad
2\tilde\alpha_\ell>\tilde d_\ell,
\]
up to the polylogarithmic factor $\log^2 n$. 
Thus, the assumption is mild when the intrinsic dimensions $\tilde d_\ell$ are small (structural sparsity or low-dimensional composition) and the effective smoothness levels $\tilde\alpha_\ell$ are not too low. It can fail only in very rough or high-intrinsic-dimension regimes where the best achievable $\gamma_n$ is slower than $n^{-1/4}$, in which case Wald-type inference would require additional structure (e.g., stronger smoothness, further dimension reduction, or alternative debiasing or regularization).

\section{Practical Implementation}

\subsection{Applying DNN}
In practice, we approximate $g_o$ by a fully connected ReLU network and model the linear term $\btheta^\top \bZ$ using a parallel linear layer. The  network computes $\widetilde{g}(t,\bX)$ and $\widetilde{\btheta}^{\top} \bZ$ and their sum is inserted into the loss based on (\ref{approximated-loss}). This architecture jointly learns the nuisance function $g$ and a finite–dimensional linear effect of $\bZ$ in a way that mirrors the model structure.

To evaluate the approximated log-likelihood \eqref{approximated-loss} in practice, we discretize the time axis on a  grid (e.g., $n$) and augment it with all observed times, yielding
$0=t_0 < t_1 < \cdots < t_{m_n}=\tau$.
The dataset is then expanded in the usual counting‐process manner. Each subject $i$ with data $(T_i,\Delta_i,\mathbf{X}_i,\mathbf{Z}_i)$ is represented by one dummy-observation for every interval $[t_k,t_{k+1})$ such that $t_k \le T_i$. All dummy-observations inherit $(\mathbf{X}_i,\mathbf{Z}_i)$; the event indicator $\Delta_{ik}=\mathbf{1}\{T_i \in (t_{k-1},t_k],\Delta=1\}$  is set to 0 except for the final interval containing $T_i$, where it takes the original value $\Delta_i$. This choice of grid and data expansion enables efficient computation within standard deep-learning frameworks. The construction parallels the standard expanded-data representation used for Cox models with time-dependent covariates \citep{therneau2000cox,hu2023conditional}.

In theory we allow for sparsity, but in practice we do not explicitly enforce it. Network hyperparameters (number of layers, hidden units, activation functions, learning rate, batch size) are selected via tuning on the training set. Training maximizes the empirical likelihood using the ADAM optimizer in TensorFlow. The data are split into training and validation sets, with 33\% reserved for early stopping. Training continues until the validation loss stops improving, and the weights achieving the best validation performance are retained. The linear $\mathbf{Z}$-branch is initialized using the coefficients from a standard Cox model fit on $(\mathbf{X},\mathbf{Z})$, providing a good starting point for the linear component.

After training, we extract the weights from the $\mathbf{Z}$-branch as an initial estimate of ${\btheta}$. To mitigate the effect of early stopping on the optimization of ${\btheta}$, we then perform a separate numerical maximization of the likelihood with respect to $\boldsymbol{\theta}$, holding $\widetilde g$ fixed at its network estimate. Thus, we treat $\widetilde g$ as known and solve a low-dimensional optimization problem for $\boldsymbol{\theta}$ using SciPy’s optimization routines. This post-hoc refinement consistently improved the finite-sample behavior of $\widetilde{\btheta}$ in our experiments. For the theoretical results, the critical requirement is that $\widehat g$ achieves the convergence rate needed for Theorem~\ref{thm:asym-theta}.

\subsection{Estimation of the Covariance Matrix $I(\btheta_o)^{-1}$}\label{sec:covariance}

Inference on $\btheta_0$ relies on estimating the efficient information matrix $I(\btheta_o)$, whose inverse provides the asymptotic covariance of the estimator $\widetilde{\btheta}$. Recall that the efficient information is
$$
I(\btheta_o)
= \EE\!\Big[\Delta\{\bZ-{\bg}^\ast(T,\bX)\}\{\bZ-\bg^\ast(T,\bX)\}^\top\Big] \, ,
$$
where $\bg^\ast$ is the event–weighted projection of $\bZ$ onto the nuisance tangent space. In practice we estimate $\bg^\ast$ using an additional set of neural networks and then
construct a plug–in estimate of $I(\btheta_o)$ from the corresponding residuals. Specifically, we approximate $\bg^\ast$ coordinatewise. For each $j=1,\ldots,p$ we fit a separate fully connected network $\widehat g^\ast_j(T,\bX)$
by least squares using only uncensored observations, that is by
$\min \sum_{i:\,\Delta_i=1} \{ Z_{ij}-\widehat g^\ast_j(T_i,{\bX}_i)\}^2$.
Each network $j$, $j=1,\ldots,p$, takes $(T_i,\mathbf{X}_i)$ as input, $Z_{ij}$ as target, and uses the same architecture and hyperparameters as the main model; empirically, these choices produce stable estimates and are not sensitive to tuning.

To avoid overfitting, we employ $5$–fold cross–fitting. The data are split  into five folds, for each fold and each coordinate $j$,  the corresponding network is trained on the remaining four folds (restricting to $\Delta=1$)  and predictions
$\widehat g^\ast_j(T_i,X_i)$ are obtained for the held-out fold. Aggregating across folds yields, for each observation $i$, the estimated projection vector
$$
\widehat \bg^\ast(T_i,{\bX}_i) = \bigl(\widehat g^\ast_1(T_i,{\bX}_i),\ldots,\widehat g^\ast_p(T_i,{\bX}_i)\bigr)^\top \, .
$$
We then compute the residuals
$
{\bf R}_i = \sqrt{\Delta}_i\{{\bZ}_i - \widehat \bg^\ast(T_i,{\bX}_i)\} \in\mathbb{R}^p$, $i=1,\ldots,n$,
and stack them into a matrix $\mathcal{R}$. 
Our estimator of the efficient information is $\widehat{I}=\mathcal{R}^\top \mathcal{R}/n$, and the asymptotic covariance is $\widehat{I}^{-1}$.
In practice this procedure is numerically stable, requires no tuning beyond the main network fit, and produces reliable covariance estimates for $\widetilde{\btheta}$.


\section{Simulation Study and Real-data application}
\label{sec:simulation}

\subsection{Simulation Study}
We compare the finite-sample performance of the proposed FLEXI--Haz method with the partially linear neural PH estimator of \citet{zhong2022deep}. Our focus is on estimation of the finite-dimensional parameter $\btheta$ and on the coverage of the associated Wald confidence intervals. We consider nuisance and primary covariates
${\bX}=(X_1,X_2,X_3)^\top$ and ${\bZ}=(Z_1,Z_2)^\top$, where all covariates are independently distributed as $\mathrm{Unif}[-1,1]$. The true parameter is $\btheta_o=(2,-1)^\top$. The nuisance component includes
$f(\bX)=0.2(X_1+X_2)+0.5X_1X_2+X_3^2$,
which induces nonlinear and interaction effects. Event times are generated from
\[
h_o(t\mid \bX,\bZ)
=0.1\exp\!\left\{(0.1+f(\bX)^2)t+\btheta_o^\top\bZ\right\},
\qquad t\in[0,\tau],
\]
with fixed study horizon $\tau=30$. Right censoring is introduced through an independent exponential censoring time with mean $1/30$, together with administrative censoring at time 30. The resulting censoring rate is approximately 30\%.

For both FLEXI--Haz and the model of \citet{zhong2022deep}, the nuisance function is estimated using a fully connected feedforward ReLU network. The two methods differ only in the underlying hazard model; the training pipeline and tuning strategy are otherwise identical. Hyperparameters are selected once using a single simulated dataset. We construct the expanded dataset on a grid of $m_n=n$ time points augmented with all observed times, and perform 5-fold cross-validation over the number of hidden layers $\{1,2,3,4,5,6\}$, width $\{20,40,80,160,320\}$, and learning rate $\{10^{-3},\,5\times10^{-3},\,10^{-2},\,5\times10^{-2}\}$. Training uses early stopping with patience 35 and batch size $10^5$ on the expanded dataset. For each hyperparameter combination, the network is trained once and the average validation loss across the five folds is recorded; the configuration with the smallest average loss is selected. Cross-validation selects, for FLEXI--Haz, five hidden layers, width 20, and learning rate $10^{-3}$, and for \citet{zhong2022deep}, four hidden layers, width 80, and learning rate $10^{-3}$.

Table~\ref{tab:sim-n8000} reports results for $n=8000$ based on 200 replications. FLEXI--Haz is essentially unbiased, and its empirical standard deviation closely matches the average estimated standard error, yielding coverage probabilities close to the nominal levels. In contrast, the partially linear PH estimator of \citet{zhong2022deep} exhibits substantial bias in both components of $\btheta_o$, resulting in near-zero empirical coverage.

To assess the finite-sample performance of the proposed pointwise survival-curve inference, we conducted an additional simulation experiment under the same setup as above, except that each nuisance covariate was generated independently from a Poisson$(2)$ distribution truncated at $6$. In each of $200$ simulation runs, we generated a training sample of size $n=8000$ and an additional evaluation sample of size $200$. For each target level $s\in\{0.1,0.2,\ldots,0.9\}$, we selected a fixed time point $t_s$ such that the mean estimated survival in the training sample was approximately $s$, and evaluated the empirical coverage of the pointwise confidence interval for the survival probability at time $t_s$ based on the evaluation samples.

Table~\ref{tab:surv_coverage_poisson}  demonstrates that the proposed pointwise inference procedure is well calibrated in finite samples. Across the selected time points, the empirical coverage remains close to the nominal level for both the 90\% and 95\% confidence intervals, supporting the practical reliability of the method in the discrete-covariate setting considered here. The slight overcoverage at \(t=6.7126\) may be due to boundary effects, since this is a late time point where the true survival probability is close to zero for some values of $(\bX_0,\bZ_0)$.

\begin{table}[ht]
\centering
\caption{Simulation results for $n=8000$ observations, based on 200 replications.  
The true parameter is $\btheta_o = (2,-1)^\top$. “Est” denotes the mean of the estimated coefficients; “Emp SD” the empirical standard deviation;
“Est SE” the estimated theoretical standard error; and “90\%CI , 95\%CI” the empirical coverage rates of the corresponding Wald-type confidence intervals.}
\label{tab:sim-n8000}

\begin{tabular}{ccccccccccc}
\hline
& \multicolumn{5}{c}{FLEXI--Haz} 
& \multicolumn{5}{c}{\cite{zhong2022deep}} \\
\hline
 & Est & Emp SD  & Est SE  & 90\%CI  & 95\%CI 
 & Est & Emp SD  & Est SE  & 90\%CI  & 95\%CI  \\
\hline
$\theta_{o1}$ & 1.997 & 0.040 & 0.035 & 0.878 & 0.935
& 1.764 & 0.039 & 0.034 & 0.000 & 0.000 \\
$\theta_{o2}$ & -0.994 & 0.027 & 0.029 & 0.937 & 0.959 
& -0.886 & 0.030 & 0.028 & 0.015 & 0.035 \\
\hline
\end{tabular}
\end{table}

\begin{table}[ht]
\centering
\caption{Simulation results of empirical pointwise coverage of the  90\% and 95\%  confidence intervals for the survival curve, based on $200$ replications with training sample size $n=8000$ and testing sample size of $200$.  For each target survival time $t_s$, we evaluated whether the confidence interval covered the true survival value at  $t_s$.}
\label{tab:surv_coverage_poisson}
\begin{tabular}{ccccccc}
\hline
Marginal survival $s$ & Target \ $t_s$ & 
90\%CI & SD of coverage &
95\%CI & SD of coverage \\
\hline
0.1 & 6.7126  & 0.9377 & 0.0741 & 0.9676 & 0.0531 \\
0.2 & 4.2934  & 0.9120 & 0.0861 & 0.9548 & 0.0638 \\
0.3 & 3.0192  & 0.9167 & 0.0769 & 0.9565 & 0.0595 \\
0.4 & 2.2003  & 0.9121 & 0.0755 & 0.9595 & 0.0523 \\
0.5 & 1.6035  & 0.8903 & 0.0889 & 0.9488 & 0.0589 \\
0.6 & 1.2330  & 0.8997 & 0.0912 & 0.9520 & 0.0611 \\
0.7 & 0.9476  & 0.8995 & 0.0824 & 0.9480 & 0.0614 \\
0.8 & 0.6812  & 0.9123 & 0.0815 & 0.9577 & 0.0559 \\
0.9 & 0.3900  & 0.9044 & 0.0987 & 0.9567 & 0.0638 \\
\hline
\end{tabular}
\end{table}

\subsection{Real-data application: Rotterdam breast cancer study}

To illustrate the practical performance of FLEXI--Haz, we analyzed the Rotterdam Breast Cancer data set, available in the \texttt{survival} package in \textsf{R}. These data were studied by Foekens et al.~(2000) to assess clinical factors associated with time to breast cancer recurrence or death, where the outcome is defined as the number of days from primary surgery until the earlier of recurrence or death. The data set contains \(n=2982\) patients, with an overall censoring proportion of \(57.35\%\), and includes nine baseline covariates: age, progesterone receptor level, estrogen receptor level, number of positive lymph nodes, menopausal status, tumor size, tumor grade, chemotherapy, and hormonal therapy. Among these variables, the first four are continuous and the remaining five are discrete.

Our primary scientific goal in this analysis is to assess the effects of chemotherapy and hormonal therapy. Accordingly, we included these two treatment indicators in the linear component \(\bZ\), while modeling the remaining covariates nonparametrically through the DNN component.

We used five-fold cross-validation. Within each fold, the training portion was further split so that \(20\%\) of the training data served as a validation set for tuning hyperparameters, after which model performance was evaluated on the held-out test set. Table~\ref{tab:rotterdam_treatments} summarizes the estimated treatment coefficients and corresponding \(95\%\) confidence intervals for the proposed method, together with the comparator methods.

\begin{table}[ht]
\centering
\caption{Estimated treatment effects and \(95\%\) confidence intervals for the Rotterdam breast cancer data.}
\label{tab:rotterdam_treatments}
\begin{tabular}{llcc}
\hline
Treatment & Method & Estimate (SE) & 95\% CI \\
\hline
Chemotherapy
& Cox PH & $-0.127\ (0.071)$ & $(-0.267,\ 0.013)$ \\
& Zhong et al.\ (2022) & $-0.257\ (0.095)$ & $(-0.443,\ -0.071)$ \\
& FLEXI--Haz & $-0.296\ (0.094)$ & $(-0.480,\ -0.112)$ \\
\hline
Hormonal therapy
& Cox PH & $-0.103\ (0.083)$ & $(-0.266,\ 0.061)$ \\
& Zhong et al.\ (2022) & $-0.372\ (0.080)$ & $(-0.529,\ -0.215)$ \\
& FLEXI--Haz & $-0.365\ (0.080)$ & $(-0.522,\ -0.209)$ \\
\hline
\end{tabular}
\end{table}

Table~\ref{tab:rotterdam_treatments} shows negative estimated effects of both chemotherapy and hormonal therapy, with 95\% confidence intervals excluding zero under FLEXI--Haz. The corresponding estimates are qualitatively similar to those of \citet{zhong2022deep}, whereas the Cox PH model yields confidence intervals that include zero for both treatments. These findings indicate that FLEXI--Haz recovers interpretable treatment effects while accommodating flexible nonproportional nuisance effects. 

Figure~\ref{fig:survival-curvesRotData} displays estimated survival curves for 16 randomly selected subjects. The plug-in estimator is visually smooth because it is evaluated on a dense time grid, whereas the one-step estimator is piecewise constant, reflecting that its bias-correction term is constructed from subjects sharing the same discrete nuisance-covariate profile \(\bX_0\).

\begin{figure}
    \centering
    \includegraphics[width=1\linewidth]{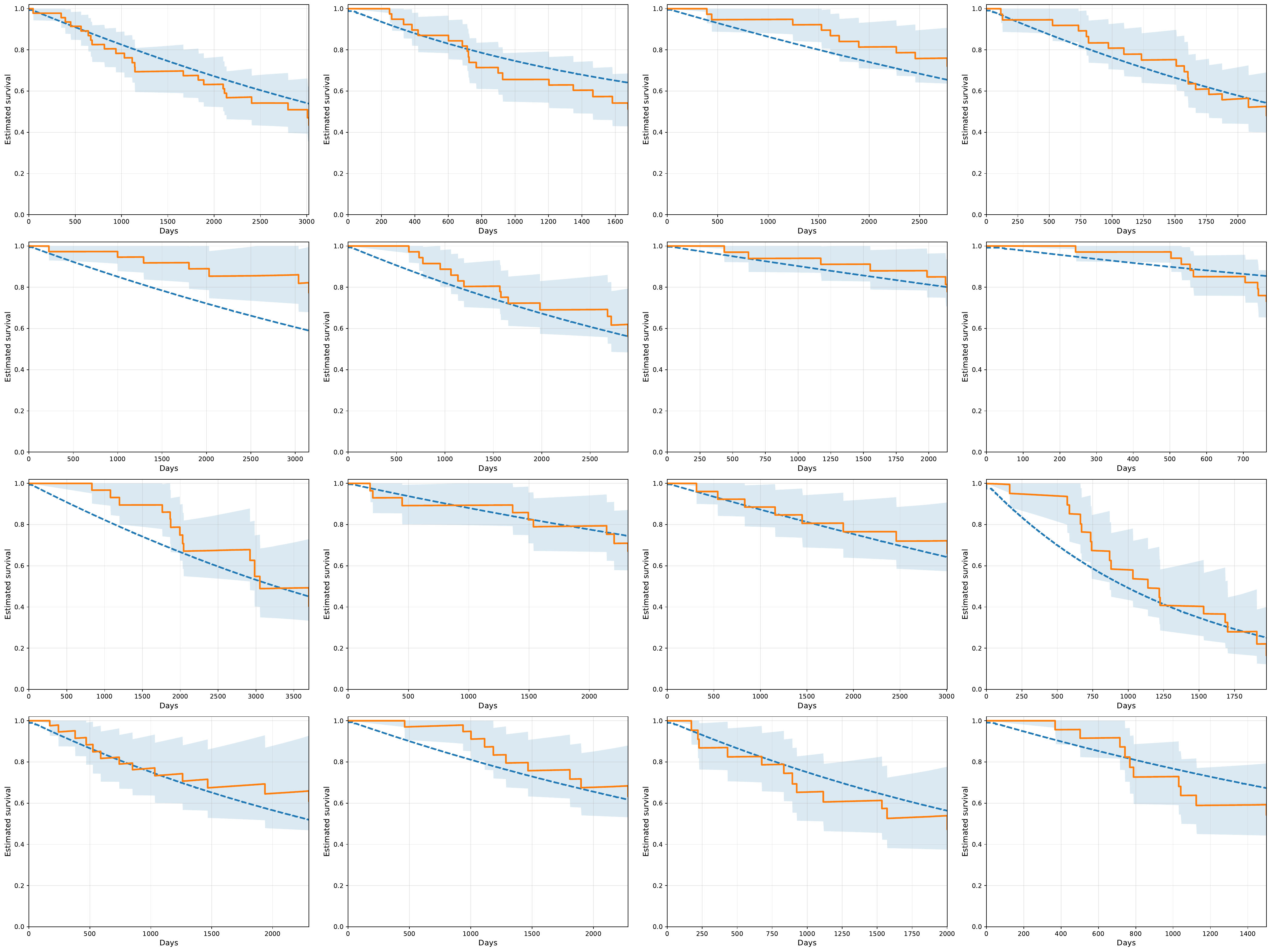}
    \caption{Estimated survival curves for 16 randomly selected subjects in the Rotterdam breast cancer data. The blue curve is the plug-in FLEXI--Haz estimator, evaluated on a dense grid. The red step function is the cross-fitted one-step estimator, and the shaded band gives the corresponding pointwise confidence interval. The visible steps in the one-step curves arise from the finite-support correction, which uses subjects with the same nuisance-covariate profile \(\bX_0\).}
    \label{fig:survival-curvesRotData}
\end{figure}

\section{Discussion}
We proposed FLEXI--Haz, a deep semiparametric survival model that combines a partially
linear structure with a flexible non-proportional hazards formulation. By modeling the
log-hazard as $\btheta_o^\top \bZ + g_o(t,\bX)$, the method preserves interpretability for the
low-dimensional parameter $\btheta_o$ while allowing the nuisance component to vary
nonlinearly with both time and covariates. Our main theoretical results show that the DNN
estimator of $g_o$ attains the optimal composite minimax rate, that the estimator of
$\btheta_o$ is $\sqrt{n}$-consistent, asymptotically normal, and semiparametrically efficient,
and that pointwise frequentist asymptotic inference is available for the survival
curve of a new subject through a cross-fitted one-step procedure. The simulation and
real-data analyses support these conclusions and illustrate the practical advantages of
FLEXI--Haz when proportional hazards may be too restrictive.

The main limitation of the current survival-inference theory is the additional assumption
that the nuisance covariate vector $\bX$ has discrete finite support, which is used to obtain
an explicit Riesz representer and a tractable one-step correction. Although this restriction is
often reasonable in applications with categorical covariates or grouped continuous features,
extending the pointwise survival-inference theory to more general covariate distributions is
an important direction for future work. Another natural extension is to develop analogous
theory for richer network architectures beyond fully connected ReLU networks.

\begin{appendix}
\section{Proofs for Section 4}\label{appA}


\IntVsEvent*

\begin{proof}[Proof of \Cref{lem:int-vs-event}]
Let $N(t)=\Delta\,\mathbf{1}(T\le t)$ be the counting process associated with the event time, and let
$
M(t)
=
N(t) - \int_0^t Y(s)\, h_o(s\,|\,{\bX},{\bZ})\, ds
$
denote its martingale decomposition under the true hazard (\ref{eq:model1}). Because $f(t,\bX,\bZ)$ is predictable, the stochastic integral $\int_0^\tau f(t,{\bX},{\bZ})\, dM(t)$ has expectation zero. Consequently,
$$
\mathbb{E}\!\left\{\int_0^\tau f(t,\bX,\bZ)\, dN(t)\right\}
=
\mathbb{E}\!\left\{ \int_0^\tau f(t,\bX,\bZ)\, Y(t)\, h_o(t\,|\,\bX,\bZ)\, dt\right\}.
$$
Since
$\int_0^\tau f(t,\bX,\bZ)\, dN(t)=
\Delta\, f(T,\bX,\bZ)$, we obtain the identity
\begin{equation}
\label{eq:lemma1-id}
\mathbb{E}\{\Delta\, f(T,\bX,\bZ)\}
=
\mathbb{E}\!\left\{ \int_0^\tau f(t,\bX,\bZ)\, Y(t)\, h_o(t\,|\,\bX,\bZ)\, dt\right\} \, .
\end{equation}
Under Assumptions A1--A2, the hazard is uniformly bounded and bounded away from zero,
$$
0 < c_h \;\le\; h_o(t\,|\,\bX,\bZ) \;\le\; C_h < \infty.
$$
Applying these bounds to the right-hand side of (\ref{eq:lemma1-id}) yields
$$
c_h\, \mathbb{E}\!\left\{ \int_0^\tau f(t,\bX,\bZ)\, Y(t)\, dt\right\}
\;\le\;
\mathbb{E}\{ \Delta\, f(T,\bX,\bZ)\}
\;\le\;
C_h\, \mathbb{E}\!\left\{ \int_0^\tau f(t,\bX,\bZ)\, Y(t)\, dt\right\} \, .
$$
This implies the desired equivalence:
\[
\mathbb{E}\!\left\{ \int_0^\tau f(t,\bX,\bZ)\, Y(t)\, dt\right\}
\asymp
\mathbb{E}\!\{ \Delta\, f(T,\bX,\bZ)\} \, .
\]
\end{proof}


\DeltaZeroImpliesUnweightedZero*
\begin{proof}[Proof of \Cref{lem:delta-zero-implies-unweighted-zero}]
By the tower property of conditional expectation,
$$
\EE\!\big\{\Delta\,\phi(T,\bX,\bZ)^2\big\}
=
\EE\Big\{
  \phi(T,\bX,\bZ)^2\,\Pr(\Delta=1\mid T,\bX,\bZ)
\Big\} \, .
$$
Assumption~\ref{ass:A4} states that 
$$
0<\delta_1
\;\le\;
\Pr(\Delta=1\mid T,\bX,\bZ)
\;\le\;
1
\qquad\text{a.s}.
$$
Since $\phi(T,\bX,\bZ)^2\ge 0$, we obtain
\[
\delta_1\,\EE\!\big\{\phi(T,\bX,\bZ)^2\big\}
\;\le\;
\EE\!\big\{\Delta\,\phi(T,\bX,\bZ)^2\big\}
\;\le\;
\,\EE\!\big\{\phi(T,\bX,\bZ)^2\big\} \, .
\]
Rearranging the inequalities gives
$$
\EE\!\big\{\Delta\,\phi(T,\bX,\bZ)^2\big\}
\;\le\;
\EE\!\big\{\phi(T,\bX,\bZ)^2\big\}
\;\le\;
\frac{1}{\delta_1}\,\EE\!\big\{\Delta\,\phi(T,\bX,\bZ)^2\big\} \, .
$$
Thus \eqref{eq:phiT-Delta-equivalence} holds with
$c_1 \leq 1$ and $c_2 = 1/\delta_1$.

Next, we  relate the event–weighted quantity
$\EE\{\Delta\,\phi(T,\bX,\bZ)^2\}$ to the time–integrated $L^2$ norm
of $\phi$. Write,
\begin{equation}\label{eq:condexp}
\EE\!\big\{\Delta\,\phi(T,\bX,\bZ)^2\big\}
=
\EE_{\bX,\bZ}\!\Big[
  \EE\!\big\{\Delta\,\phi(T,\bX,\bZ)^2 \,\big|\, \bX,\bZ\big\}
\Big] \, ,    
\end{equation}
and let $f_{T,\Delta=1\mid\bX,\bZ}(t\mid\bX,\bZ)$ denote the conditional
density of $T$ given $(\bX,\bZ)$ on the event $\{\Delta=1\}$.
Then,
$$
\EE\!\big\{\Delta\,\phi(T,\bX,\bZ)^2 \,\big|\, \bX,\bZ\big\}
=
\int_0^\tau \phi(t,\bX,\bZ)^2 \,
       f_{T,\Delta=1\mid\bX,\bZ}(t\mid\bX,\bZ)\,dt \, .
$$
Under Assumptions~\ref{ass:A1} and~\ref{ass:A2}, the event-time
hazard $h_o(t,\bX,\bZ)$ is bounded away from zero and infinity.
Similarly, Assumption~\ref{ass:A4} implies the censoring mechanism is
well behaved. Therefore,
$$
\Pr(T>t \mid \bX,\bZ) = \exp\left[ -\int_0^t \{h_o(s \mid \bX, \bZ) + \mu(s \mid \bX, \bZ) ds\} \right]
$$
where $\mu(t \mid \bX.\bZ)$ is the censoring hazard bounded away from zero and infinity,
and
$$f_{T,\Delta=1\mid\bX,\bZ}(t\mid\bX,\bZ) = h_o(s \mid \bX, \bZ) \exp\left[ -\int_0^t \{h_o(s \mid \bX, \bZ) + \mu(s \mid \bX, \bZ) ds\} \right]
$$ is bounded above and below
by positive constants that depend only on
$(\tau,\delta_1,\delta_2,c_h,C_h)$. Specifically,
there exist $0<c_f\le C_f<\infty$ such that, almost surely,
$$
c_f
\;\le\;
f_{T,\Delta=1\mid\bX,\bZ}(t\mid\bX,\bZ)
\;\le\;
C_f,
\qquad t\in[0,\tau] \, .
$$
Using these bounds inside the conditional expectation (\ref{eq:condexp}), we obtain,
that given $(\bX,\bZ)$,
$$
c_f \int_0^\tau \phi(t,\bX,\bZ)^2\,dt
\;\le\;
\EE\!\big\{\Delta\,\phi(T,\bX,\bZ)^2 \,\big|\, \bX,\bZ\big\}
\;\le\;
C_f \int_0^\tau \phi(t,\bX,\bZ)^2\,dt \, .
$$
Taking expectation over $(\bX,\bZ)$ yields
\[
c_f\,\EE_{\bX,\bZ}\!\Big\{\int_0^\tau \phi(t,\bX,\bZ)^2\,dt\Big\}
\;\le\;
\EE\!\big\{\Delta\,\phi(T,\bX,\bZ)^2\big\}
\;\le\;
C_f\,\EE_{\bX,\bZ}\!\Big\{\int_0^\tau \phi(t,\bX,\bZ)^2\,dt\Big\}.
\]
Thus \eqref{eq:time-integral-Delta-equivalence} holds with
$c_3 = 1/C_f$ and $c_4 = 1/c_f$ (or equivalently with $c_3,c_4$
chosen directly as above).

Finally, suppose $\EE\{\Delta\,\phi(T,\bX,\bZ)^2\}=0$.  Since
$\Delta\,\phi(T,\bX,\bZ)^2\ge 0$, this implies
$\Delta\,\phi(T,\bX,\bZ)^2=0$ almost surely, hence
$\phi(T,\bX,\bZ)=0$ $\PP_0$–a.s. on $\{\Delta=1\}$.
From \eqref{eq:phiT-Delta-equivalence} we also have
$$
0
\;\le\;
\EE\!\big\{\phi(T,\bX,\bZ)^2\big\}
\;\le\;
c_2\,\EE\!\big\{\Delta\,\phi(T,\bX,\bZ)^2\big\}
= 0 \, ,
$$
so $\EE\{\phi(T,\bX,\bZ)^2\}=0$, which forces
$\phi(T,\bX,\bZ)=0$ $\PP_0$–almost surely. Likewise, from \eqref{eq:time-integral-Delta-equivalence},
$$
0
\;\le\;
\EE_{\bX,\bZ}\!\Big\{\int_0^\tau \phi(t,\bX,\bZ)^2\,dt\Big\}
\;\le\;
c_4\,\EE\!\big\{\Delta\,\phi(T,\bX,\bZ)^2\big\}
=0 \, ,
$$
so
$$
\EE_{\bX,\bZ}\!\Big\{\int_0^\tau \phi(t,\bX,\bZ)^2\,dt\Big\}=0 \, .
$$
Since the integrand $\phi(t,\bX,\bZ)^2$ is nonnegative, this implies
\[
\int_0^\tau \phi(t,\bX,\bZ)^2\,dt = 0
\quad\text{for $\PP_{\bX,\bZ}$–a.e. }(\bX,\bZ),
\]
and hence $\phi(t,\bX,\bZ)=0$ for Lebesgue–a.e. $t\in[0,\tau]$ and
$P_{\bX,\bZ}$–a.e. $(\bX,\bZ)$.  Equivalently,
$$
\chi_{\eta_1}(t,\bX,\bZ)
=
\chi_{\eta_2}(t,\bX,\bZ)
\quad\text{for }dt\otimes \PP_{\bX,\bZ}\text{–almost all }(t,\bX,\bZ) \, .
$$
\end{proof}


\ConcutivityOfPopulationLoss*

\begin{proof}[Proof of \Cref{lmm:concavity}]
Recall that $\ell_0(\eta)$ denotes the population (expected) log-likelihood,
\[
  \ell_0(\eta)
  = \EE\Big\{ \,\Delta\,\chi_\eta(T,\bX,\bZ)
   - \int_0^\tau Y(t)\,h_\eta(t,\bX,\bZ)\,dt\Big\},
\]
where $\chi_\eta$ is the linear predictor and
$h_\eta(t,\bX,\bZ)$ is the corresponding instantaneous hazard process.  
Let $\eta_o=(g_o,\theta_o)$ denote the true parameter, with true hazard
$h_o(t,\bX,\bZ) = h_{\eta_0}(t,\bX,\bZ)$.

We study the behavior of $\ell_0$ along a one–dimensional path through
$\eta_o$.  Fix an arbitrary direction $w = (w_g,w_\theta)$ and define
$$
  \eta_\varepsilon = \eta_o + \varepsilon w \, ,
  \qquad \varepsilon \in \mathbb{R} \, .
$$
Assume that $\varepsilon$ is restricted to a small neighbourhood of $0$
so that $\eta_\varepsilon \in \mathcal\eta\in \Theta \times \mathcal{H}(q,\boldsymbol{\alpha},{\bf d}^*,\tilde{{\bf d}},M)$. Because $\chi_\eta$ is linear in $(g,\btheta)$, the Gateaux derivative of
$\chi_\eta$ in direction $w$ at $\eta_0$ is simply
$$
  \dot\chi_h(t,\bX,\bZ)
  = \frac{d}{d\varepsilon}\,\chi_{\eta_\varepsilon}(t,\bX,\bZ)\Big|_{\varepsilon=0} \, ,
$$
and we can write
\[
  \chi_{\eta_\varepsilon}(t,\bX,\bZ)
  = \chi_{\eta_o}(t,\bX,\bZ) + \varepsilon\,\dot\chi_w(t,\bX,\bZ) \, .
\]
For notational simplicity, when we fix a particular $\eta$, we also write
$$
  q(t,\bX,\bZ) = \chi_\eta(t,\bX,\bZ) - \chi_{\eta_o}(t,\bX,\bZ) \, ,
$$
so $q$ is the difference in linear predictors between $\eta$ and $\eta_o$.
Along the path $\eta_\varepsilon$, $q$ is proportional to the direction
$\dot\chi_w$.

Next, note that 
$$
  h_{\eta_\varepsilon}(t \mid \bX,\bZ)
  = h_o(t \mid \bX,\bZ)\,\exp\{\chi_{\eta_\varepsilon}(t,\bX,\bZ)
                            - \chi_{\eta_o}(t,\bX,\bZ)\}
  = h_o(t \mid \bX,\bZ)\,\exp\{\varepsilon\,\dot\chi_w(t,\bX,\bZ)\} \, ,
$$
so at $\varepsilon=0$ we have $h_{\eta_o}=h_o$ and
$$
  \frac{d}{d\varepsilon}h_{\eta_\varepsilon}(t,\bX,\bZ)\Big|_{\varepsilon=0}
  = h_o(t \mid \bX,\bZ)\,\dot\chi_w(t,\bX,\bZ) \, ,
$$
$$
  \frac{d^2}{d\varepsilon^2}h_{\eta_\varepsilon}(t,\bX,\bZ)\Big|_{\varepsilon=0}
  = h_o(t \mid \bX,\bZ)\,\dot\chi_w(t,\bX,\bZ)^2 \, .
$$
By the definition of the Gateaux derivative,
$$
  \ell_0'(\eta_o)[w]
  = \frac{d}{d\varepsilon}\,\ell_0(\eta_\varepsilon)\Big|_{\varepsilon=0} \, .
$$
Using the explicit form of $\ell_0$ and differentiating under the expectation,
\begin{align*}
  \ell_0'(\eta_o)[w]
  &= \frac{d}{d\varepsilon}\EE\Big \{
        \Delta\,\chi_{\eta_\varepsilon}(T,\bX,\bZ)
        - \int_0^\tau Y(t)\,h_{\eta_\varepsilon}(t \mid \bX,\bZ)\,dt
     \Big\} \Big|_{\varepsilon=0} \\
  &= \EE\Big\{
        \Delta\,\dot\chi_w(T,\bX,\bZ)
        - \int_0^\tau Y(t)\,
           \frac{d}{d\varepsilon}h_{\eta_\varepsilon}(t \mid  \bX,\bZ)\Big|_{\varepsilon=0}\,dt
     \Big\} \\
  &= \EE\Big\{
        \Delta\,\dot\chi_w(T,\bX,\bZ)
        - \int_0^\tau Y(t)\,h_o(t \mid  \bX,\bZ)\,\dot\chi_w(t,\bX,\bZ)\,dt
     \Big\} \, .
\end{align*}
Since
$$
  \Delta\,\dot\chi_w(T,\bX,\bZ)
  = \int_0^\tau \dot\chi_w(t,\bX,\bZ)\,dN(t) \, ,
$$
the Doob–Meyer decomposition for the true model (\ref{eq:model1}) gives
$$
  dN(t) = Y(t)h_o(t \mid  \bX,\bZ)\,dt + dM(t) \, ,
$$
where $M(t)$ is a martingale. Therefore,
\begin{align*}
  \Delta\,\dot\chi_w(T,\bX,\bZ)
  &= \int_0^\tau \dot\chi_w(t,\bX,\bZ)\,Y(t)h_o(t \mid \bX,\bZ)\,dt
     + \int_0^\tau \dot\chi_w(t,\bX,\bZ)\,dM(t) \, .
\end{align*}
Substituting this into the expression for $\ell_0'(\eta_o)[w]$ and cancelling
the compensator term,
\begin{align*}
  \ell_0'(\eta_o)[w]
  &= \EE\Big\{
      \int_0^\tau \dot\chi_h(t,\bX,\bZ)\,dM(t)
     \Big\} \, .
\end{align*}
The stochastic integral $\int_0^\tau \dot\chi_w(t,\bX,\bZ)\,dM(t)$ is a martingale
with mean zero. Hence,
\[
  \ell_0'(\eta_o)[w] = 0
  \qquad\text{for every direction } w \, .
\]
In particular, when we fix $\eta$ and set $q(t,\bX,\bZ)
= \chi_\eta(t,\bX,\bZ)-\chi_{\eta_o}(t,\bX,\bZ)$, we can summarize this step as
\[
  \ell_0'(\eta_o)[w]
  = \EE\Big\{
       \Delta\,q(T,\bX,\bZ)
       - \int_0^\tau Y(t)h_o(t \mid \bX,\bZ)\,q(t,\bX,\bZ)\,dt
    \Big \}
  = 0.
\]

To control $\ell_0(\eta)-\ell_0(\eta_o)$, we apply a one–dimensional Taylor
expansion of $\ell_0(\eta_\varepsilon)$ around
$\varepsilon=0$.  Write $\eta=\eta_1$ and define the line segment
$\eta_v = \eta_o + v(\eta-\eta_o)$ for $v\in[0,1]$.  The mean value theorem
yields
$$
  \ell_0(\eta)
  = \ell_0(\eta_o) + \ell_0'(\eta_o)[w]
    + \frac{1}{2}\,\ell_0''(\eta_{\bar v})[w,w]
$$
for some $\bar v\in(0,1)$, where $w = \eta-\eta_o$.  Since we have already
shown that $\ell_0'(\eta_o)[w]=0$, it remains to analyse the second derivative
term.

For a fixed $v\in[0,1]$, consider the path
$\varepsilon\mapsto\eta_v + \varepsilon w$ and write
\[
  q_v(t,\bX,\bZ)
  = \frac{d}{d\varepsilon}\,\chi_{\eta_v+\varepsilon w}(t,\bX,\bZ)\Big|_{\varepsilon=0} \, ,
\]
which is again linear in $w$.  Repeating the differentiation steps above but
now starting from $\eta_v$ instead of $\eta_o$ gives
\begin{align*}
  \ell_0''(\eta_v)[w,w]
  &= \frac{d^2}{d\varepsilon^2}\,\ell_0(\eta_v+\varepsilon w)\Big|_{\varepsilon=0} \\
  &= \EE\Big\{
        \Delta\,\frac{d^2}{d\varepsilon^2}
        \chi_{\eta_v+\varepsilon h}(T,\bX,\bZ)\Big|_{\varepsilon=0}
        - \int_0^\tau Y(t)\,
          \frac{d^2}{d\varepsilon^2}
           h_{\eta_v+\varepsilon w}(t,\bX,\bZ)\Big|_{\varepsilon=0}\,dt
     \Big\} \, .
\end{align*}
Since $\chi_\eta$ is linear in $\eta$, its second derivative in $\varepsilon$
vanishes, while the second derivative of the hazard is
\[
  \frac{d^2}{d\varepsilon^2}
  h_{\eta_v+\varepsilon w}(t \mid \bX,\bZ)\Big|_{\varepsilon=0}
  = h_{\eta_v}(t \mid \bX,\bZ)\,q_v(t,\bX,\bZ)^2 \, .
\]
Hence
\[
  \ell_0''(\eta_v)[w,w]
  = - \EE\Big\{
        \int_0^\tau Y(t)\,q_v(t,\bX,\bZ)^2
        \,h_{\eta_v}(t \mid \bX,\bZ)\,dt
     \Big\} \, .
\]
For notational simplicity (since we will ultimately evaluate this at
$\varepsilon=1$), we drop the subscript $v$ and simply write
$q(t,\bX,\bZ)=\chi_\eta(t,\bX,\bZ)-\chi_{\eta_0}(t,\bX,\bZ)$ along the line,
so that
\[
  \ell_0''(\eta_v)[w,w]
  = - \EE\Big\{
        \int_0^\tau Y(t)\,q(t,\bX,\bZ)^2
        \,h_{\eta_v}(t \mid \bX,\bZ)\,dt
     \Big\} \, .
\]

Next, for relating the second derivative to $d(\eta,\eta_o)^2$, we
apply Lemma~1 for 
$$
  f_v(t,\bX,\bZ)
  = q(t,\bX,\bZ)^2\,
     \frac{h_{\eta_v}(t \mid \bX,\bZ)}{h_o(t \mid  \bX,\bZ)} \, .
$$
Indeed, Lemma~1 gives
$$
  \EE\Big\{
       \int_0^\tau Y(t)\,q(t,\bX,\bZ)^2\,h_{\eta_v}(t \mid \bX,\bZ)\,dt
    \Big\}
  \asymp
  \EE\Big\{
       \Delta\,q(T,\bX,\bZ)^2\,
       \frac{h_{\eta_v}(T \mid \bX,\bZ)}{h_o(T \mid \bX,\bZ)}
    \Big\} \, .
$$
Under Assumptions~\ref{ass:A1}--\ref{ass:A2}, the processes $\chi_\eta$ are uniformly bounded, and hence
$$
  0 < c_\star
  \;\le\;
  \frac{h_{\eta_v}(t \mid \bX,\bZ)}{h_o(t \mid \bX,\bZ)}
  \;\le\;
  C_\star < \infty,
  \qquad\text{for all }(t,\bX,\bZ),\; v\in[0,1] \, .
$$
Therefore,
$$
  c_\star \,\EE\big[\Delta\,q(T,\bX,\bZ)^2\big]
  \;\le\;
  \EE\Big\{
       \Delta\,q(T,\bX,\bZ)^2\,
       \frac{h_{\eta_v}(T \mid \bX,\bZ)}{h_o(T \mid \bX,\bZ)}
    \Big]
  \;\le\;
  C_\star \,\EE\big[\Delta\,q(T,\bX,\bZ)^2\big \} \, .
$$
Combining this with the expression for $\ell_0''(\eta_v)[w,w]$ yields the two-sided bound
$$
  - C_\star\,\EE\big\{ \Delta\,q(T,\bX,\bZ)^2\big\}
  \;\le\;
  \ell_0''(\eta_v)[w,w]
  \;\le\;
  - c_\star\,\EE\big \{ \Delta\,q(T,\bX,\bZ)^2\big \}.
$$
Recall that the distance $d(\eta,\eta_o)$ is defined by
\[
  d(\eta,\eta_o)^2
  = \EE\Big\{ \Delta\big\{\chi_\eta(T,\bX,\bZ)-\chi_{\eta_o}(T,\bX,\bZ)\big\}^2\Big\} 
  = \EE\big\{ \Delta\,q(T,\bX,\bZ)^2\big \} \, .
\]
Hence
\[
  - C_\star\,d(\eta,\eta_o)^2
  \;\le\;
  \ell_0''(\eta_v)[w,w]
  \;\le\;
  - c_\star\,d(\eta,\eta_o)^2 \, .
\]
Therefore, by the Taylor expansion with the mean value theorem,
\[
  \ell_0(\eta)-\ell_0(\eta_o)
  = \frac{1}{2}\,\ell_0''(\eta_{\bar v})[w,w]
\]
for some $\bar v\in(0,1)$, and we have just shown that
$\ell_0''(\eta_{\bar v})[w,w]$ is comparable to $-d(\eta,\eta_o)^2$.  Thus,
\[
  \ell_0(\eta)-\ell_0(\eta_o)
  \;\asymp\; -\,d(\eta,\eta_o)^2,
\]
which completes the proof.
\end{proof}

In the following, $\PP_n$ and $\PP$ denote the empirical and probability measures of $(T,\Delta,\bX,\bZ)$, and
$\GG_n = \sqrt{n}(\PP_n - \PP)$ is the empirical process.
The symbol $\mathbb{E}^{\ast}$ stands for outer expectation, used to avoid
measurability issues in the supremum over an uncountable class of functions.

\begin{lemma}
\label{lem:unified_maximal}
Consider the following classes of functions indexed by the parameter $\eta$:
\begin{itemize}
\item The log-likelihood increment
\[
f_\eta(T,\mathbf{X},\mathbf{Z},\Delta)
=
\Delta\,\chi_\eta(T,\mathbf{X},\mathbf{Z})
-
\int_0^{T}
Y(t)\,\exp\{\chi_\eta(t,\mathbf{X},\mathbf{Z})\}\,dt ,
\]
\item The efficient score-type term
\[
\psi_\eta(T,\mathbf{X},\mathbf{Z},\Delta)
=
\int_0^{T}
\bigl\{ \mathbf{Z}-\bg^{\ast}(t,\mathbf{X}) \bigr\}
\, dM_\eta(t),
\]
where $\bg^{\ast}$ denotes the efficient projection defined in Theorem~\ref{thm:eff-score}.
\end{itemize}

For $0<\delta\le 1$, define the localized classes
\[
\mathcal{F}_\delta
=
\{\, f_\eta - f_{\eta_o} : d(\eta,\eta_o)\le \delta \,\} \, ,
\qquad
\mathcal{G}_\delta
=
\{\, \psi_\eta - \psi_{\eta_o} : d(\eta,\eta_o)\le \delta \,\} \, .
\]

Under Assumptions~\ref{ass:A1}--\ref{ass:A3} and \ref{ass:A5}, there exists a constant $C>0$ such that for every $\delta\in(0,1]$,
\[
\mathbb{E}^{\ast}
\Bigl\{
\sup_{\mu\in \mathcal{F}_\delta \cup \mathcal{G}_\delta}
\bigl|
\sqrt{n}\,(\PP_n - \PP)\,\mu
\bigr|
\Bigr\}
\;\le\;
C\Bigl\{
\delta\,\sqrt{s\log(\mathcal{U}/\delta)}
+
n^{-1/2}\, s\log(\mathcal{U}/\delta)
\Bigr\},
\]
where $s$ is the sparsity of the neural network class, and
$$
\mathcal{U} = K \prod_{k=0}^{K} (p_k+1) \sum_{k=0}^{K} p_k p_{k+1}.
$$
\end{lemma}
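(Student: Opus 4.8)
The plan is to bound the two suprema over $\mathcal{F}_\delta$ and over $\mathcal{G}_\delta$ separately, using $\sup_{\mu\in\mathcal F_\delta\cup\mathcal G_\delta}|\GG_n\mu|\le\sup_{\mathcal F_\delta}|\GG_n\mu|+\sup_{\mathcal G_\delta}|\GG_n\mu|$, and for each class to invoke a localized maximal inequality of Dudley type \citep[Chs.~2--3]{vdvwellner1996}. For a class of functions $\mu$ with $\PP\mu^2<\delta^2$, $\|\mu\|_\infty\le M$, and bracketing entropy $\log N_{[]}(\epsilon,\cdot,L^2(\PP))\le s\log(\mathcal U/\epsilon)$, that inequality gives
\[
\EE^{\ast}\sup_\mu|\GG_n\mu|\;\lesssim\;J_{[]}(\delta)\Bigl(1+\frac{M\,J_{[]}(\delta)}{\delta^{2}\sqrt n}\Bigr),\qquad
J_{[]}(\delta)=\int_0^\delta\sqrt{1+s\log(\mathcal U/\epsilon)}\,d\epsilon \;\lesssim\; \delta\sqrt{s\log(\mathcal U/\delta)},
\]
where the last estimate is elementary (substitute $\epsilon=\mathcal U e^{-u}$). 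Plugging $J_{[]}(\delta)\lesssim\delta\sqrt{s\log(\mathcal U/\delta)}$ into the display yields $\delta\sqrt{s\log(\mathcal U/\delta)}+M\,n^{-1/2}s\log(\mathcal U/\delta)$, which is the claimed bound once we check $M=O(1)$. So everything reduces to verifying, for each of the two function classes, three ingredients: a uniform sup-norm bound $\|\mu\|_\infty\lesssim1$, an $L^2(\PP)$ radius of order $\delta$, and the stated bracketing entropy.

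The sup-norm bound is immediate from Assumptions~\ref{ass:A1}--\ref{ass:A3} and \ref{ass:A5}: on the compact domain $[0,\tau]\times\mathcal X\times\mathcal Z$ we have $|\chi_\eta(t,\bX,\bZ)|\le D_n+M_\Theta B_{\bZ}=O(1)$, so $\exp\{\chi_\eta\}$ and every increment are uniformly bounded; the finite horizon $\tau$ then bounds the cumulative-hazard integral in $f_\eta$, and $\|\bZ-\bg^{\ast}(t,\bX)\|$ is bounded because $\|\bZ\|_2\le B_{\bZ}$ a.s.\ and $\bg^{\ast}$, being the event-weighted conditional-expectation-type projection of the bounded vector $\bZ$ from Theorem~\ref{thm:eff-score}, is bounded. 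For the $L^2(\PP)$ radius, write $q_\eta(t,\bX,\bZ)=\chi_\eta(t,\bX,\bZ)-\chi_{\eta_0}(t,\bX,\bZ)$. Using $dM_\eta(t)=dN(t)-Y(t)\exp\{\chi_\eta(t,\bX,\bZ)\}\,dt$ and cancelling the $dN$ terms,
\[
\psi_\eta-\psi_{\eta_0}=-\int_0^\tau Y(t)\{\bZ-\bg^{\ast}(t,\bX)\}\bigl[\exp\{\chi_\eta(t,\bX,\bZ)\}-\exp\{\chi_{\eta_0}(t,\bX,\bZ)\}\bigr]dt,
\]
and $f_\eta-f_{\eta_0}=\Delta\,q_\eta(T,\bX,\bZ)-\int_0^\tau Y(t)\bigl[\exp\{\chi_\eta(t,\bX,\bZ)\}-\exp\{\chi_{\eta_0}(t,\bX,\bZ)\}\bigr]dt$. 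The local Lipschitz bound $|e^{a}-e^{b}|\le C|a-b|$ on the bounded range, followed by Cauchy--Schwarz in $t$, controls the time-integral parts by $C\bigl(\int_0^\tau q_\eta(t,\bX,\bZ)^2\,dt\bigr)^{1/2}$; taking $L^2(\PP)$ norms and applying the time-integral equivalence \eqref{eq:time-integral-Delta-equivalence} of Lemma~\ref{lem:delta-zero-implies-unweighted-zero} bounds this by $\lesssim d(\eta,\eta_0)\le\delta$, while the remaining term $\Delta\,q_\eta(T,\bX,\bZ)$ has $L^2(\PP)$ norm exactly $d(\eta,\eta_0)\le\delta$.

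For the entropy I would start from Schmidt-Hieber's covering-number bound for the sparse deep ReLU class, $\log N(\epsilon,\mathcal G(K,s,\boldsymbol p,D),\|\cdot\|_\infty)\lesssim s\log(\mathcal U/\epsilon)$ with $\mathcal U$ the stated architecture polynomial \citep{schmidthieber2020nonparametric}, add the $p\log(1/\epsilon)$ contribution from the compact set $\Theta$ (absorbed, since $s\ge1$), and observe that by the same Lipschitz-in-$\chi$ estimates as above the maps $\eta\mapsto f_\eta$ and $\eta\mapsto\psi_\eta$ are Lipschitz from $\|\chi_\eta\|_\infty$ into $\|\cdot\|_\infty$ with an $O(1)$ constant; hence a sup-norm $\epsilon$-net of $\mathcal G(K,s,\boldsymbol p,D)\times\Theta$ induces a sup-norm $C\epsilon$-net of $\{f_\eta\}$ and of $\{\psi_\eta\}$, and localization to $d(\eta,\eta_0)\le\delta$ only shrinks these classes. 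Since a $\|\cdot\|_\infty$-ball of radius $\epsilon$ is an $L^2(\PP)$-bracket of width $2\epsilon$, this gives $\log N_{[]}(\epsilon,\mathcal F_\delta,L^2(\PP))\lesssim s\log(\mathcal U/\epsilon)$ and likewise for $\mathcal G_\delta$, which feeds into the maximal inequality; summing the two contributions only changes the absolute constant.

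The step I expect to be the main obstacle is the passage, in both the $L^2(\PP)$-radius and the entropy estimates, from ``two networks close as functions on $[0,\tau]\times\mathcal X$'' to ``the corresponding cumulative-hazard and martingale-integral functionals close'' --- uniformly over the continuum of integration times and over the random upper limit $T$. The resolution is that the sup-norm over the product domain $[0,\tau]\times\mathcal X$ simultaneously dominates the pointwise increment at $(T,\bX)$ and the time-integrated increment, so the time dimension contributes no extra entropy beyond a harmless $\tau$ factor, and that Lemma~\ref{lem:delta-zero-implies-unweighted-zero} upgrades the time-integrated $L^2$ discrepancy $\int_0^\tau q_\eta^2\,dt$ back to the $d$-metric; these two facts are precisely what keep the entropy at order $s\log(\mathcal U/\epsilon)$ and the $L^2(\PP)$ radius at order $\delta$, so that the generic maximal inequality delivers the stated bound.
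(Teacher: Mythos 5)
Your proposal is correct and follows essentially the same route as the paper: bound the $L^2(P)$ radius of the localized classes by $d(\eta,\eta_0)\le\delta$ via the local Lipschitz property of the exponential together with the hazard-equivalence of the event-weighted and time-integrated norms, transfer the sparse ReLU entropy bound of \citet{schmidthieber2020nonparametric} to $\mathcal F_\delta\cup\mathcal G_\delta$ through the Lipschitz maps $\eta\mapsto f_\eta,\psi_\eta$, and conclude with the bracketing maximal inequality (Lemma~3.4.2 of \citealp{vdvwellner1996}) and the Dudley-integral estimate $J_{[]}(\delta)\lesssim\delta\sqrt{s\log(\mathcal U/\delta)}$. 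The only deviation worth noting is that, after dropping $Y(t)$, you invoke the equivalence \eqref{eq:time-integral-Delta-equivalence} of Lemma~\ref{lem:delta-zero-implies-unweighted-zero}, which requires \assref{ass:A4} (not among this lemma's hypotheses); retaining the at-risk indicator and applying Lemma~\ref{lem:int-vs-event} with $f=q_\eta^2$, as the paper does, yields the same bound under \assref{ass:A1}--\assref{ass:A2} only.
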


\begin{proof}[Proof of \Cref{lem:unified_maximal}]

Fix $\eta_1,\eta_2$ and denote
$q(t,\bX,\bZ)=\chi_{\eta_1}(t,\bX,\bZ)-\chi_{\eta_2}(t,\bX,\bZ)$. For notational convenience, we write
$\chi_\eta(t) = \chi_\eta(t,\bX,\bZ)$
and
$q(t)=\chi_{\eta_1}(t)-\chi_{\eta_2}(t)$.  
Because $\chi_\eta$ is uniformly bounded under A1--A2 and depends on the network parameters in a Lipschitz way, we have
$$
\bigl|\exp\{\chi_{\eta_1}(t,\bX,\bZ)\}
      -\exp\{\chi_{\eta_2}(t,\bX,\bZ)\}\bigr|
  \;\lesssim\; |q(t,\bX,\bZ)| \, .
$$
Using the definition of $f_\eta$ and the triangle and Cauchy--Schwarz inequalities,
$$
|f_{\eta_1} - f_{\eta_2}|
\le
\Delta\,|q(T)|
+
\int_0^\tau
Y(t)\,
|\exp\{\chi_{\eta_1}(t)\}-\exp\{\chi_{\eta_2}(t)\}|
\, dt \, .
$$
Plugging the bound, squaring and applying the inequality $(a+b)^2\le2a^2+2b^2$ gives
$$
(f_{\eta_1}-f_{\eta_2})^2
  \;\lesssim\;
    2 \Delta\,q(T)^2
    + 2 \int_0^\tau Y(t)\,q(t)^2\,dt,
$$
where all quantities are uniform in $(\bX,\bZ)$ by \ref{ass:A1}--\ref{ass:A2}.  
By Lemma~1 we get
$$
E\!\left[\int_0^\tau Y(t)\, q(t)^2\,dt\right]
 \asymp
E\bigl[\Delta\, q(T)^2\bigr] \, .
$$
The right-hand side is exactly the squared metric $d^2(\eta_1,\eta_2)$.  
Thus,
$$
\|f_{\eta_1}-f_{\eta_2}\|_{L^2(\PP)}
   \;\lesssim\; d(\eta_1,\eta_2) \, .
$$
Repeating the argument for $\psi_\eta$ requires only replacing the exponential term with the stochastic integral.  
Assumption \ref{ass:A5} ensures boundedness of the predictable integrand $\bZ-g^\ast(t,\bX)$ and square-integrability of the martingale increments; \ref{ass:A1}--\ref{ass:A2} control the compensator.  The same calculation shows
$$
\|\psi_{\eta_1}-\psi_{\eta_2}\|_{L^2(\PP)}
   \;\lesssim\; d(\eta_1,\eta_2) \, .
$$

Hence both localized classes $\mathcal F_\delta$ and $\mathcal G_\delta$ lie in a common $L^2(\PP)$-ball of radius $O(\delta)$ and admit a common envelope of order $O(\delta)$.  
The parametrization $\eta=(\btheta,g)$ is $s$-sparse in the neural-network coordinates, and Assumption~\ref{ass:A4} ensures that the weights are uniformly bounded and that the architecture has fixed depth and layer widths. In particular, the map $\eta\mapsto \chi_\eta$ belongs to the class of sparse feedforward neural networks considered in \cite{zhong2022deep}. Moreover,  the maps
$\eta \longmapsto f_\eta$ and
$\eta \longmapsto \psi_\eta$
are Lipschitz transformations of $\chi_\eta$ in the $L^2(\PP)$-metric, with Lipschitz constant bounded uniformly under Assumptions \ref{ass:A1}--\ref{ass:A3} and \ref{ass:A5}. Since Lipschitz images of a function class cannot increase its covering number by more than a multiplicative constant, the metric entropy of the localized classes $\mathcal{F}_\delta$ and $\mathcal{G}_\delta$ is therefore controlled by the entropy of the underlying sparse neural-network ball
\[
\mathcal{H}_\delta
=\{\chi_\eta - \chi_{\eta_o} : d(\eta,\eta_o)\le \delta\}.
\]
Lemma~6 of \cite{zhong2022deep} establishes that for such a sparse network class,
\[
\log N_{[]}\!\left(\varepsilon,\, \mathcal{H}_\delta,\, L^2(P)\right)
\;\lesssim\;
s\log\!\left( \frac{\mathcal{U}\delta}{\varepsilon}\right),
\]
where $\mathcal{U}$ is the architectural complexity factor. Because $\mathcal{F}_\delta$ and $\mathcal{G}_\delta$ lie in a common $L^2(\PP)$-ball of radius $O(\delta)$ and share a common envelope of order $O(\delta)$, the same entropy bound applies to their union. Consequently,
$$
\log N_{[]}\!\left(
         \varepsilon,\,
         \mathcal{F}_\delta \cup \mathcal{G}_\delta,\,
         L^2(\PP)
       \right)
\;\lesssim\;
s \log\!\left( \frac{\mathcal{U}\delta}{\varepsilon}\right) \, .
$$
Next, we bound the bracketing entropy integral. Using the entropy estimate derived above,
the associated Dudley integral satisfies
$$
J_{[]}(\delta)
   =\int_0^{\delta}\!
        \sqrt{\log N_{[]}(\varepsilon)}\; d\varepsilon
   \;\le\;
   C_2\,\delta\,\sqrt{s\log\!\left(\frac{\mathcal{U}}{\delta}\right)} \, .
$$

Finally, applying Lemma~3.4.2 of \cite{vdvwellner1996} to the empirical
process $\GG_n=\sqrt n(\PP_n-\PP)$, with envelope $F_\delta=O(\delta)$,
gives
$$
\mathbb{E}^{\ast}\!\left\{
   \sup_{\mu\in\mathcal F_\delta \cup\mathcal G_\delta}
   |\GG_n \mu|
\right\}
\;\le\;
C_3\Bigl\{
       J_{[]}(\delta)
       + n^{-1/2}\,
         \|F_\delta\|_{L^{2}(\PP)}\,
         \log N_{[]}(\delta)
   \Bigr\} \, .
$$
Substituting the bound on $J_{[]}(\delta)$ and noting that
$\|F_\delta\|_{L^{2}(\PP)} = O(\delta)$ and
$\log N_{[]}(\delta)\lesssim s\log(\mathcal{U}/\delta)$
yields the asserted maximal inequality.
\end{proof}

The following lemma establishes equivalence of expectations under two models whose hazard functions differ only through the nuisance function $g$. Because the hazards are uniformly bounded above and below, the distributions under $g_0$ and $g_1$ are mutually absolutely continuous with Radon–Nikodym derivatives uniformly bounded.  This equivalence  allows us to control expectations under one model using expectations under another.

\begin{restatable}{lemma}{EvalAtTEquivalence}\label{lem:evalT-equivalence}
Suppose Assumptions \ref{ass:A1}--\ref{ass:A2} and \ref{ass:A4} hold. Let $g_0,g_1 \in \mathcal{H}(q,\boldsymbol{\alpha},{\bf d}^*,\tilde{{\bf d}},M)$ and fix $\btheta \in \Theta$. For $j=0,1$, let $\PP_{g_j}$ denote the distribution of $(T,\bX,\bZ,\Delta)$
under event hazard $h_{g_j}(t\mid \bX,\bZ)=\exp\{g_j(t,\bX)+\btheta^{\top}\bZ\}$ and 
a common censoring hazard $\mu(t\mid \bX,\bZ)$.  As a consequence of Assumptions~\ref{ass:A1}--\ref{ass:A2}, there exists a constant
$0< D<\infty$ such that, almost surely,
$$
e^{-D} \;\le\; h_{g_j}(t\mid\boldsymbol{X},\boldsymbol{Z}) 
\;\le\; e^D \, ,
\qquad \mbox{for all} \,\,  t\in[0,\tau] \, ,\ j=0,1 \, .
$$
Define $c=e^{-D}$ and $C=e^{D}$.
Then, for any nonnegative measurable function $q:[0,\tau]\times\mathcal X\to[0,\infty)$ with 
$\mathbb{E}_{\PP_{g_0}}[q(T,\bX)] < \infty$,
$$
K_-\,\EE_{\PP_{g_0}}\!\big\{q(T,\bX)\big\}
\;\le\;
\EE_{\PP_{g_1}}\!\big\{ q(T,\bX)\big \}
\;\le\;
K_+\,\EE_{\PP_{g_0}}\!\big \{ q(T,\bX)\big\} \, ,
$$
where
$$
K_- = \frac{c}{C}\,e^{-\tau(C-c)} \, ,
\qquad
K_+= \frac{C}{c}\,e^{\,\tau(C-c)} \, .
$$
In particular,
\(
\EE_{\PP_{g_1}}[q(T,\bX)]\asymp \EE_{\PP_{g_0}}[q(T,\bX)]
\)
with constants depending only on $(c,C,\tau)$.
\end{restatable}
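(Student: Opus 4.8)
The plan is to prove the two-sided inequality by an explicit change of measure, exhibiting a uniformly bounded likelihood ratio between the two models. First I would write the contribution of a generic observation $(T,\Delta,\bX,\bZ)$ to the likelihood under $P_{g_j}$. By conditional independence of failure and censoring given $(\bX,\bZ)$ (Assumption~\ref{ass:A1}), this factorizes into the covariate density of $(\bX,\bZ)$, which is common to both models, times the conditional sub-density of $(T,\Delta)$ given $(\bX,\bZ)$: on $\{\Delta=1\}$ it equals $h_{g_j}(T\mid\bX,\bZ)\,S_U^{g_j}(T\mid\bX,\bZ)\,S_C(T\mid\bX,\bZ)$, and on $\{\Delta=0\}$ it equals $f_C(T\mid\bX,\bZ)\,S_U^{g_j}(T\mid\bX,\bZ)$, where $S_U^{g_j}(t\mid\bX,\bZ)=\exp\{-\int_0^t h_{g_j}(s\mid\bX,\bZ)\,ds\}$, and $S_C,f_C$ describe the common censoring mechanism ($f_C$ understood to incorporate the administrative atom at $\tau$). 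Since $T\le\tau$ almost surely by Assumption~\ref{ass:A2}, all integrals run over subintervals of $[0,\tau]$.

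Forming the Radon--Nikodym derivative $dP_{g_1}/dP_{g_0}$, the covariate density and the entire censoring factor cancel, leaving
\[
\Lambda(T,\Delta,\bX,\bZ)
=\Big(\tfrac{h_{g_1}(T\mid\bX,\bZ)}{h_{g_0}(T\mid\bX,\bZ)}\Big)^{\Delta}
\exp\!\Big\{-\!\int_0^T\!\big[h_{g_1}(s\mid\bX,\bZ)-h_{g_0}(s\mid\bX,\bZ)\big]\,ds\Big\}.
\]
Next I would bound $\Lambda$ using $c=e^{-D}\le h_{g_j}\le e^{D}=C$, which holds under Assumptions~\ref{ass:A1}--\ref{ass:A2} because $g_j$ is uniformly bounded on its compact domain and $\|\bZ\|$, $\|\btheta\|$ are bounded. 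The factor $(h_{g_1}/h_{g_0})^{\Delta}$ equals $1$ when $\Delta=0$ and otherwise lies in $[c/C,\,C/c]$, while $\int_0^T(h_{g_1}-h_{g_0})\,ds\in[-\tau(C-c),\,\tau(C-c)]$, so the exponential factor lies in $[e^{-\tau(C-c)},\,e^{\tau(C-c)}]$. Since $c/C\le1\le C/c$, combining the two factors gives the pointwise bounds $K_-\le\Lambda\le K_+$ almost surely, with $K_\pm$ exactly as in the statement.

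Finally, for any nonnegative measurable $q(T,\bX)$ with $\EE_{P_{g_0}}[q(T,\bX)]<\infty$, the change-of-measure identity $\EE_{P_{g_1}}[q(T,\bX)]=\EE_{P_{g_0}}[\Lambda\,q(T,\bX)]$, together with $q\ge0$ and the pointwise bounds on $\Lambda$ — which hold almost surely, uniformly over the values of $(\Delta,\bZ)$ on which $q$ does not depend — yields $K_-\,\EE_{P_{g_0}}[q(T,\bX)]\le\EE_{P_{g_1}}[q(T,\bX)]\le K_+\,\EE_{P_{g_0}}[q(T,\bX)]$, and the $\asymp$ statement follows since $K_\pm$ depend only on $(c,C,\tau)$. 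The step I expect to require the most care is the cancellation producing $\Lambda$: one must verify that the censoring and covariate parts of the density are genuinely model-free and drop out, in particular that the administrative-censoring atom at $\tau$ and any point mass of $C$ are shared by $P_{g_0}$ and $P_{g_1}$ and hence do not enter $\Lambda$. Assumption~\ref{ass:A4} is used only to ensure the relevant conditional sub-densities are well defined and that $T$ retains positive mass throughout $[0,\tau]$, so that the comparison is non-degenerate.
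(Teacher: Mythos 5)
Your proposal is correct and follows essentially the same route as the paper's proof: condition on $(\bX,\bZ)$, form the likelihood ratio $\bigl(h_{g_1}/h_{g_0}\bigr)^{\Delta}\,S^U_{1}/S^U_{0}$ after the covariate and censoring factors cancel, bound it pointwise by $K_\pm$ using $c\le h_{g_j}\le C$ on $[0,\tau]$, and conclude by change of measure. Your extra remark about the administrative-censoring atom being shared by both models is a valid refinement the paper leaves implicit, but it does not change the argument.
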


\begin{proof}[Proof of \Cref{lem:evalT-equivalence}]
We work conditional on the covariates $(\bx,\bz)$ and then integrate over their distribution.
Under model $g_j$, the event time $U_j$ has density and survival
\[
f^U_j(t\mid \bx,\bz)=h_j(t\mid \bx,\bz)\,S^U_j(t\mid \bx,\bz),
\qquad
S^U_j(t\mid \bx,\bz)=\exp\!\Big\{-\!\int_0^t h_j(s\mid \bx,\bz)\,ds\Big\}.
\]
The censoring time $C$ has density $f^C(t\mid \bx,\bz)=\mu(t\mid \bx,\bz)S^C(t\mid \bx,\bz)$ 
and survival $S^C(t\mid \bx,\bz)$, common to both models. The observed time and event indicator are
$T_j=\min(U_j,C)$ and $\Delta_j=\mathbf{1}\{U_j\le C\}$. Given $(\bx,\bz)$, their joint density under $g_j$ is
$$
f_j(t,\delta\mid \bx,\bz)=
\begin{cases}
h_j(t\mid \bx,\bz)\,S^U_j(t\mid \bx,\bz)\,S^C(t\mid \bx,\bz)\, , & \delta=1\\[0.4em]
\mu(t\mid \bx,\bz)\,S^C(t\mid \bx,\bz)\,S^U_j(t\mid \bx,\bz)\, , & \delta=0 \, .
\end{cases}
$$
By Assumptions~\ref{ass:A1}--\ref{ass:A2},  
$$
ct \le \int_0^t h_j(s\mid \bx,\bz)\,ds \le Ct,
\qquad
e^{-Ct} \le S^U_j(t\mid \bx,\bz) \le e^{-ct}.
$$
The likelihood ratio of $(T_j,\Delta_j)$ under $g_1$ versus $g_0$, conditional on $(\bx,\bz)$, is
$$
L(t,\delta\mid \bx,\bz)
=
\frac{f_1(t,\delta\mid \bx,\bz)}{f_0(t,\delta\mid \bx,\bz)}
=
\left\{\frac{h_1(t\mid \bx,\bz)}{h_0(t\mid \bx,\bz)}\right\}^\delta
\frac{S^U_1(t\mid \bx,\bz)}{S^U_0(t\mid \bx,\bz)}.
$$
Since $c\le h_j\le C$,
$$
\frac{c}{C} \le \frac{h_1(t\mid \bx,\bz)}{h_0(t\mid \bx,\bz)} \le \frac{C}{c}.
$$
Moreover,
$$
\frac{S^U_1(t\mid \bx,\bz)}{S^U_0(t\mid \bx,\bz)}
=\exp\!\Big[-\!\int_0^t \{h_1(s\mid \bx,\bz)-h_0(s\mid \bx,\bz)\}\,ds\Big],
$$
and because $|h_1-h_0|\le C-c$,
$$
e^{-(C-c)t} \le \frac{S^U_1(t\mid \bx,\bz)}{S^U_0(t\mid \bx,\bz)}
\le e^{(C-c)t}.
$$
For $t\le\tau$ this yields
$$
\frac{c}{C}\,e^{-(C-c)\tau}
\;\le\;
L(t,\delta\mid \bx,\bz)
\;\le\;
\frac{C}{c}\,e^{(C-c)\tau}.
$$
Thus the Radon--Nikodým derivative satisfies
\[
K_- \;\le\; 
\frac{d\PP_{g_1}}{d\PP_{g_0}}(T,\Delta,\bX,\bZ)
\;\le\;
K_+,
\qquad \PP_{g_0}\text{--a.s.},
\]
where 
$K_-=\frac{c}{C}e^{-\tau(C-c)}$ and $K_+=\frac{C}{c}e^{\tau(C-c)}$.
For any nonnegative measurable function $q$ with finite expectation under $\PP_{g_0}$,
\[
\EE_{\PP_{g_1}}\{q(T,\bX)\}
=
\EE_{\PP_{g_0}}\!\left\{q(T,\bX)\,\frac{d \PP_{g_1}}{d \PP_{g_0}}(T,\Delta,\bX,\bZ)\right\}
\]
and the bounds on the likelihood ratio give
$$
K_-\,\EE_{\PP_{g_0}}\{q(T,\bX)\}
\;\le\;
\EE_{\PP_{g_1}}\{q(T,\bX)\}
\;\le\;
K_+\,\EE_{\PP_{g_0}}\{q(T,\bX)\} \, .
$$
\end{proof}


\begin{lemma}\label{lem:g-vs-chi}
Assume \ref{ass:A2}. 
Let $\eta_j=(\btheta_j,g_j)$, $j=1,2$, be two parameter values in 
$(\Theta \times \mathcal{H}(q,\boldsymbol{\alpha},{\bf d}^*,\tilde{{\bf d}},M))
\cup
(\Theta \times \mathcal{G}(K,s,\boldsymbol{p},D))$.
Define
\[
\chi_{\eta_j}(t,\bX,\bZ)
=
g_j(t,\bX) + \btheta_j^\top\bZ ,
\qquad j=1,2.
\]
Assume that $\|\bZ\|\le B_\bZ$ a.s. for some finite constant $B_\bZ$.
Then
\[
\EE\Bigl[
  \Delta\bigl\{g_1(T,\bX)-g_2(T,\bX)\bigr\}^2
\Bigr]
\;\le\;
2\,d(\eta_1,\eta_2)^2
\;+\;
2 B_\bZ^2 \|\btheta_1-\btheta_2\|^2 \, .
\]
\end{lemma}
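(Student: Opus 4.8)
The plan is to reduce the claim to an elementary algebraic identity combined with the Cauchy--Schwarz inequality. The key observation is that $g_1 - g_2$ can be recovered from $\chi_{\eta_1} - \chi_{\eta_2}$ by subtracting off the linear part: for every $(t,\bX,\bZ)$,
\[
g_1(t,\bX) - g_2(t,\bX)
=
\bigl\{\chi_{\eta_1}(t,\bX,\bZ) - \chi_{\eta_2}(t,\bX,\bZ)\bigr\}
-
(\btheta_1 - \btheta_2)^\top \bZ .
\]
Evaluating at $t = T$, squaring, and applying the inequality $(a+b)^2 \le 2a^2 + 2b^2$ with $a = \chi_{\eta_1}(T,\bX,\bZ) - \chi_{\eta_2}(T,\bX,\bZ)$ and $b = -(\btheta_1 - \btheta_2)^\top \bZ$ gives a pointwise bound
\[
\bigl\{g_1(T,\bX) - g_2(T,\bX)\bigr\}^2
\le
2\bigl\{\chi_{\eta_1}(T,\bX,\bZ) - \chi_{\eta_2}(T,\bX,\bZ)\bigr\}^2
+
2\bigl\{(\btheta_1 - \btheta_2)^\top \bZ\bigr\}^2 .
\]

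Next I would multiply through by $\Delta$ and take expectations under $P_0$. The first term on the right reproduces exactly $2\,d(\eta_1,\eta_2)^2$ by the definition of the event-weighted score distance in Section~\ref{sec:notation}. For the second term, I would use $\Delta \le 1$ together with the Cauchy--Schwarz inequality $\{(\btheta_1 - \btheta_2)^\top \bZ\}^2 \le \|\btheta_1 - \btheta_2\|^2\,\|\bZ\|^2$ and the almost-sure bound $\|\bZ\| \le B_\bZ$ from Assumption~\ref{ass:A2}, yielding
\[
\EE\bigl[\Delta\{(\btheta_1 - \btheta_2)^\top \bZ\}^2\bigr]
\le
\|\btheta_1 - \btheta_2\|^2\,\EE\bigl[\|\bZ\|^2\bigr]
\le
B_\bZ^2\,\|\btheta_1 - \btheta_2\|^2 .
\]
Combining the two bounds gives the stated inequality.

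I do not expect any genuine obstacle here; the result is a direct decomposition estimate. The only point requiring a little care is bookkeeping around the $\Delta$-weighting: one must note that dropping $\Delta$ in the linear term only increases the expectation (since $0 \le \Delta \le 1$), so no event-weighted norm on $\bZ$ is needed, and the bound on $\|\bZ\|$ is a pointwise almost-sure bound rather than an $L^2$ bound. Everything else is the elementary $(a+b)^2 \le 2a^2 + 2b^2$ split followed by Cauchy--Schwarz, and the argument is valid verbatim for both the Hölder class $\mathcal{H}(q,\boldsymbol{\alpha},{\bf d},\tilde{{\bf d}},M)$ and the network sieve $\mathcal{G}(K,s,\boldsymbol{p},D)$ since it uses only Assumption~\ref{ass:A2} and the linear structure of $\chi_\eta$ in $(\btheta,g)$.
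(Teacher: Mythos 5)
Your proposal is correct and matches the paper's own argument essentially verbatim: the paper also writes the $g$-difference as the $\chi$-difference minus the linear term, applies $(a+b)^2 \le 2a^2 + 2b^2$, and then bounds the linear term using $\Delta \le 1$, Cauchy--Schwarz, and $\|\bZ\| \le B_\bZ$ almost surely. No gaps; nothing further is needed.
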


\begin{proof}[Proof of \Cref{lem:g-vs-chi}]
Fix $(T,\Delta,\bX,\bZ)$ and set
$a= g_1(T,\bX) - g_2(T,\bX)$, 
$b =(\btheta_1-\btheta_2)^\top\bZ$.
Then
$\chi_{\eta_1}(T,\bX,\bZ)-\chi_{\eta_2}(T,\bX,\bZ)= a + b$.
Use the elementary inequality
$$
(u-v)^2 \;\le\; 2u^2 + 2v^2
\qquad\text{for all }u,v\in\mathbb{R},
$$
with $u = a+b$ and $v = b$. This gives
$$
a^2
=
(a+b-b)^2
\;\le\;
2(a+b)^2 + 2b^2.
$$
Multiplying by $\Delta\in\{0,1\}$ preserves the inequality
$$
\Delta\,a^2
\;\le\;
2\Delta\,(a+b)^2 + 2\Delta\,b^2 \, .
$$
Taking expectations and using the definition of $d(\eta_1,\eta_2)$,
\begin{align*}
\EE\Bigl[\Delta\{g_1(T,\bX)-g_2(T,\bX)\}^2\Bigr]
&= \EE(\Delta a^2) \\[0.3em]
&\le 2\,\EE\{\Delta(a+b)^2\} + 2\,\EE(\Delta b^2) \\[0.3em]
&= 2\,d(\eta_1,\eta_2)^2 + 2\,\EE(\Delta b^2).
\end{align*}
For the second term, by boundedness of $\bZ$,
$$
|b|
= |(\btheta_1-\btheta_2)^\top\bZ|
\le \|\btheta_1-\btheta_2\|\,\|\bZ\|
\le B_\bZ\,\|\btheta_1-\btheta_2\|,
$$
and hence
\[
b^2 \;\le\; B_\bZ^2\,\|\btheta_1-\btheta_2\|^2.
\]
Since $\Delta\le 1$,
$$
\EE(\Delta b^2)
\le \EE(b^2)
\le B_\bZ^2\,\|\btheta_1-\btheta_2\|^2 \, .
$$
Putting everything together,
\[
\EE\Bigl[\Delta\{g_1(T,\bX)-g_2(T,\bX)\}^2\Bigr]
\le
2\,d(\eta_1,\eta_2)^2
+ 2B_\bZ^2\,\|\btheta_1-\btheta_2\|^2 \, .
\]
\end{proof}


\begin{lemma}\label{lem:thetaOp}
If Assumptions \ref{ass:A1}--\ref{ass:A5} hold, then
there exists a constant $c_\theta>0$ such that, for all parameters
$\eta_1=(\btheta_1,g_1)$ and $\eta_2=(\btheta_2,g_2)$,
$$
d(\eta_1,\eta_2)^2 \;\ge\; c_\theta\,\|\btheta_1-\btheta_2\|^2 \, .
$$
In particular, one may take
$$
c_\theta=\lambda_{\min}(\Sigma_{\bZ})>0.
$$
\end{lemma}
\begin{proof}[Proof of \Cref{lem:thetaOp}]
Let $\bbeta=\btheta_1-\btheta_2$, $a(T,\bX)=g_1(T,\bX)-g_2(T,\bX)$,
$b=\bbeta^\top \bZ$. Then
\[
\chi_{\eta_1}(T,\bX,\bZ)-\chi_{\eta_2}(T,\bX,\bZ)=a(T,\bX)+b,
\]
and hence
\[
d(\eta_1,\eta_2)^2
=
\EE_0\!\left[\Delta\{a(T,\bX)+b\}^2\right].
\]
Because $a(T,\bX)$ is measurable with respect to $(T,\bX)$, we have
\[
\EE_0\!\left[\Delta\{a+b\}^2 \mid T,\bX\right]
=
\Pr(\Delta=1\mid T,\bX)\,
\EE_0\!\left[\{a+b\}^2 \mid T,\bX,\Delta=1\right].
\]
By the conditional variance decomposition,
\[
\EE_0\!\left[\{a+b\}^2 \mid T,\bX,\Delta=1\right]
=
\Big(a+\EE_0[b\mid T,\bX,\Delta=1]\Big)^2
+
Var_0(b\mid T,\bX,\Delta=1).
\]
Therefore
\[
\EE_0\!\left[\Delta\{a+b\}^2\right]
=
\EE_0\!\left[
\Pr(\Delta=1\mid T,\bX)
\Big(a+\EE_0[b\mid T,\bX,\Delta=1]\Big)^2
\right]
+
\EE_0\!\left[
\Delta\,Var_0(b\mid T,\bX,\Delta=1)
\right].
\]
Since the first term is nonnegative,
\[
d(\eta_1,\eta_2)^2
\ge
\EE_0\!\left[
\Delta\,Var_0(b\mid T,\bX,\Delta=1)
\right].
\]
Now, because $b=\bbeta^\top \bZ$,
\[
Var_0(b\mid T,\bX,\Delta=1)
=
\bbeta^\top
Var_0(\bZ\mid T,\bX,\Delta=1)
\bbeta,
\]
and thus
\[
\EE_0\!\left[
\Delta\,Var_0(b\mid T,\bX,\Delta=1)
\right]
=
\bbeta^\top
\EE_0\!\left[
\Delta\,
\Big\{\bZ-\EE_0(\bZ\mid T,\bX,\Delta=1)\Big\}^{\otimes 2}
\right]
\bbeta
=
\bbeta^\top \Sigma_{\bZ}\bbeta.
\]
By Assumption \ref{ass:A5},
\[
\bbeta^\top \Sigma_{\bZ}\bbeta
\ge
\lambda_{\min}(\Sigma_{\bf Z})\,\|\bbeta\|^2.
\]
Hence
\[
d(\eta_1,\eta_2)^2
\ge
\lambda_{\min}(\Sigma_{\bf Z})\,\|\btheta_1-\btheta_2\|^2.
\]
This proves the lemma with $c_\theta=\lambda_{\min}(\Sigma_{\bf Z})$.
\end{proof}
\RateTheorem*

\begin{proof}[Proof of \Cref{thm:rate}]
Let $\ell(\eta;\bW)$, $\bW = (T,\Delta,\bX^\top,\bZ^\top)^\top$,
 denote the per–observation log–likelihood contribution,
so that $\ell_n(\eta)=\PP_n\ell(\eta;\cdot)$  is the empirical loss and
$\ell_0(\eta)=\PP_0\ell(\eta;\cdot)$ is the population loss.
Let $\GG_n=\sqrt{n}(\PP_n-\PP_0)$ be the empirical process.
Recall the distance
$$
d(\eta_1,\eta_2)
\;=\;
\Big(
  \EE_0\big[
    \Delta\{\chi_{\eta_1}(T,\bX,\bZ)-\chi_{\eta_2}(T,\bX,\bZ)\}^2
  \big]
\Big)^{1/2},
\qquad
\eta=(\btheta,g),
$$
and let $\eta_o=(\btheta_o,g_o)$ denote the true parameter. By Lemma~\ref{lem:unified_maximal}, there exists a constant $C>0$ such that
for every $0<\delta\le 1$,
\begin{equation}\label{eq:Psi-proof}
\EE^{*}\!\sup_{d(\eta,\eta_o)\le\delta}
      \bigl| \GG_n\big\{\ell(\eta;\cdot)-\ell(\eta_o;\cdot)\big\} \bigr|
   \;\le\;
   \Psi_n(\delta)
   \equiv
   \delta\,\sqrt{s\log \frac{\mathcal{U}}{\delta}}
   + \frac{s}{\sqrt n}\log \frac{\mathcal{U}}{\delta} \, ,
\end{equation}
and by Markov's inequality, this implies
$$
\sup_{d(\eta,\eta_o)\le\delta}
\bigl|
 \GG_n\big\{\ell(\eta;\cdot)-\ell(\eta_o;\cdot)\big\}
\bigr|
=
O_p\bigl(\Psi_n(\delta)\bigr) \, .
$$
Set $\tau_n = \gamma_n \log^2 n$ as in the theorem.
By Assumption~\ref{ass:A3} and the specific choice of $(K,s,\boldsymbol{p})$
(cf.\ the construction in \citet{schmidthieber2020nonparametric}
and \citet{zhong2022deep}),
we have
$$
s\log\!\Bigl(\frac{\mathcal{U}}{\tau_n}\Bigr)
= O\!\bigl(n\,\gamma_n^2\log n\bigr) \, ,
$$
so that
$$
\frac{1}{\sqrt{n}}\,\tau_n^{-2}\,\Psi_n(\tau_n)
=
\tau_n^{-1}\sqrt{\frac{s}{n}\log \frac{\mathcal{U}}{\tau_n}}
+
\frac{s}{n\,\tau_n^{2}}\log \frac{\mathcal{U}}{\tau_n}
\;\lesssim\;
1
$$
for $n$ large enough. Hence, there exists a constant $c>0$ such that
\begin{equation}\label{eq:balance1}
\Psi_n(\tau_n)
\;\le\;
\frac{c}{2}\,\sqrt{n}\,\tau_n^2
\quad\mbox{and}\quad
\frac{1}{\sqrt{n}}\Psi_n(\tau_n)
\;\le\;
\frac{c}{2}\,\tau_n^2 \, .
\end{equation}
Consequently,
\begin{equation}\label{eq:emp-fluct}
\sup_{d(\eta,\eta_o)\le\tau_n}
\bigl|
  (\ell_n-\ell_0)(\eta) - (\ell_n-\ell_0)(\eta_o)
\bigr|
=
\sup_{d(\eta,\eta_o)\le\tau_n}
\frac{1}{\sqrt{n}}
\bigl|
  \GG_n\big\{\ell(\eta;\cdot)-\ell(\eta_o;\cdot)\big\}
\bigr|
= O_p\bigl(\tfrac{1}{\sqrt{n}}\Psi_n(\tau_n)\bigr)
= O_p(\tau_n^2),
\end{equation}
where we used \eqref{eq:Psi-proof} and \eqref{eq:balance1}.

By Theorem~1 of \citet{schmidthieber2020nonparametric}, applied to the composite
Hölder class $\mathcal{H}(q,\boldsymbol{\alpha},{\bf d}^*,\tilde{\bf d},M)$
with the choice of $(K,s,\boldsymbol{p},D)$ given in Assumption~\ref{ass:A3},
there exists $\tilde g_1 \in \mathcal{G}(K,s,\boldsymbol{p},D)$ such that
$$
\EE\bigl\{ \tilde g_1(T,\bX) - g_o(T,\bX)\bigr\}^2
\le C_1\,\tau_n^2 \, ,
\qquad
\EE\bigl\{ \Delta\big(\tilde g_1(T,\bX) - g_o(T,\bX)\big)^2\bigr\}
\le C_1\,\tau_n^2
$$
for some constant $C_1>0$ independent of $n$.
Define $\tilde\eta_1=(\btheta_o,\tilde g_1)$.
Then, 
$$
d^2(\tilde\eta_1,\eta_o)
=
\EE\bigl[
  \Delta\big\{
    \chi_{\tilde\eta_1}(T,\bX,\bZ)
    -\chi_{\eta_o}(T,\bX,\bZ)
  \big\}^2
\bigr]
=
\EE\bigl[
  \Delta\big\{ \tilde g_1(T,\bX)-g_o(T,\bX)\big\}^2
\bigr]
\le C_1\,\tau_n^2 \, ,
$$
since the $\btheta$–component is the same in both $\tilde\eta_1$ and $\eta_o$.
Therefore,
\begin{equation}\label{eq:tildeeta-radius}
d(\tilde\eta_1,\eta_o)
\;\le\;
\sqrt{C_1}\,\tau_n \, ,
\end{equation}
and in particular $\tilde\eta_1$ lies inside a ball of radius proportional to $\tau_n$ around $\eta_o$.

Lemma~\ref{lmm:concavity} states that, for any $\eta\in\mathcal{F}_D$,
$$
\ell_0(\eta)-\ell_0(\eta_o)
\;\asymp\;
-\,d(\eta,\eta_o)^2 \, ,
$$
that is, there exist constants $0<c_-,c_+<\infty$ such that
\[
-\,c_+\,d(\eta,\eta_o)^2
\;\le\;
\ell_0(\eta)-\ell_0(\eta_o)
\;\le\;
-\,c_-\,d(\eta,\eta_o)^2 \, .
\]
Applying this to $\eta=\tilde\eta_1$ and using \eqref{eq:tildeeta-radius}, we obtain
\begin{equation}\label{eq:pop-gap}
\ell_0(\tilde\eta_1)-\ell_0(\eta_o)
\;\ge\;
-\,c_+\,d(\tilde\eta_1,\eta_o)^2
\;\ge\;
-\,c_+ C_1\,\tau_n^2
\;=\;
-\,C_2\,\tau_n^2 \, ,
\end{equation}
for some constant $C_2>0$.

By combining \eqref{eq:emp-fluct} and \eqref{eq:pop-gap} we get
\begin{align}
\ell_n(\tilde\eta_1) - \ell_n(\eta_o)
&=
\bigl\{\ell_n(\tilde\eta_1)-\ell_0(\tilde\eta_1)\bigr\}
-
\bigl\{\ell_n(\eta_o)-\ell_0(\eta_o)\bigr\}
+
\bigl\{\ell_0(\tilde\eta_1)-\ell_0(\eta_o)\bigr\}
\notag\\[0.3em]
&\ge
-\,\sup_{d(\eta,\eta_o)\le\tau_n}
   \bigl| (\ell_n-\ell_0)(\eta)-(\ell_n-\ell_0)(\eta_o)\bigr|
\;-\; C_2\,\tau_n^2
\notag\\[0.3em]
&= -\,O_p(\tau_n^2).\label{eq:approx-gap}
\end{align}
Since $\widehat\eta=(\widehat\btheta,\widehat g)$ is the maximizer of the empirical likelihood over the sieve, then
\begin{equation}\label{eq:likelihood-chain}
\ell_n(\widehat\eta)
=
\max_{\eta}\ell_n(\eta)
\;\ge\;
\ell_n(\tilde\eta_1)
\;\ge\;
\ell_n(\eta_o) - O_p(\tau_n^2) \, .
\end{equation}
On the other hand, by Lemma~\ref{lmm:concavity}, the population loss satisfies
$$
\ell_0(\eta)-\ell_0(\eta_o)
\;\le\;
-\,c_-\,d(\eta,\eta_o)^2
\quad\text{for all }\eta\in\mathcal{F}_D \, .
$$
The empirical-process bound \eqref{eq:emp-fluct} and the likelihood chain \eqref{eq:likelihood-chain}
verify the conditions (i)–(ii) of Theorem~3.4.1 in \citet{vdvwellner1996} with radius of order $\tau_n$.
Therefore,
$$
d(\widehat\eta,\eta_o)
=
O_p(\tau_n) \, .
$$
Finally, by writing 
$$
\chi_{\widehat\eta}(t,\bX,\bZ)-\chi_{\eta_o}(t,\bX,\bZ)
=
\{\widehat g(t,\bX)-g_o(t,\bX)\}
+
(\widehat\btheta-\btheta_o)^\top\bZ \, ,
$$
and applying Lemma~\ref{lem:g-vs-chi} with 
$\eta_1 = \widehat\eta = (\widehat\btheta,\widehat g)$
and $\eta_2 = \eta_o = (\btheta_o,g_o)$, we obtain
\begin{equation}\label{eq:basedonLem6}
\EE\Bigl[\Delta\{\widehat g(T,\bX)-g_o(T,\bX)\}^2\Bigr]
\;\le\;
C\Bigl(
  d(\widehat\eta,\eta_o)^2
  + \|\widehat\btheta-\btheta_o\|^2
\Bigr) \, .   
\end{equation}
Applying Lemma \ref{lem:thetaOp} with $\eta_1=\widehat\eta$ and $\eta_2=\eta_o$ yields
$$
\|\widehat\btheta-\btheta_o\|^2\le c_\theta^{-1} d(\widehat\eta,\eta_o)^2 = O_p(\tau_n^2) \, .
$$ Hence,
$\|\widehat\btheta-\btheta_o\|=O_p(\tau_n)$, and
the right-hand side of (\ref{eq:basedonLem6}) is $O_p(\tau_n^2)$, 
which is exactly the claim of Theorem~\ref{thm:rate}.  The expectation is taken with respect to a new observation $(T,\Delta,\bX,\bZ)\sim \PP_0$, while the $O_p(\cdot)$
is with respect to the randomness in the training sample used to construct $\widehat g$.
\end{proof}

\LowerBoundTheorem*

\begin{proof}[Proof of \Cref{thm:minimax-delta}]
For $g\in\mathcal{H}(q,\boldsymbol{\alpha},{\bf d}^*,\tilde{{\bf d}},M)$ and $\btheta_o\in\Theta$, let $\PP_{g,\theta_o}$ denote the joint distribution of 
$(T,\Delta,\bX,\bZ)$ induced by the hazard
$h_{g,\theta_o}(t\mid\bX,\bZ)=\exp\{g(t,\bX)+\btheta_o^\top\bZ\}$  
together with a censoring hazard that is held fixed across all $g$.
By the standard Kullback-Leibler (KL) distance for  bounded hazards (Assumptions~\ref{ass:A1}--\ref{ass:A2}), 
there exists $C>0$ such that for any $g_0,g_1\in\mathcal \mathcal{H}(q,\boldsymbol{\alpha},{\bf d}^*,\tilde{{\bf d}},M)$, 
$$
\mbox{KL}(\PP_{g_1,\theta_o}\,\|\,\PP_{g_0,\theta_o})
\;\le\;
C\,n\,
\EE_{\PP_{g_1,\theta_o}}
\!\left[\Delta\{g_1(T,\bX)-g_0(T,\bX)\}^2\right] \, .
$$
Since $\Delta\le 1$,
$$
\mbox{KL}(\PP_{g_1,\theta_o}\,\|\,\PP_{g_0,\theta_o})
\;\le\;
C\,n\,\EE_{\PP_{g_1,\theta_o}}\!\bigl[\{g_1(T,\bX)-g_0(T,\bX)\}^2\bigr].
$$
Lemma~\ref{lem:evalT-equivalence} yields the equivalence
\[
\mbox{KL}(\PP_{g_1,\theta_o}\,\|\,\PP_{g_0,\theta_o})
\;\le\;
C'\,n\,
\EE_{\PP_{g_0,\theta_o}}
\!\bigl[\{g_1(T,\bX)-g_0(T,\bX)\}^2\bigr]
\]
for a constant $C'$ depending only on \ref{ass:A1}--\ref{ass:A2}.

By the packing construction for composite Hölder networks
\citep{schmidthieber2020nonparametric}, 
there exist functions $g^{(0)},\dots,g^{(N)}\in \mathcal{H}(q,\boldsymbol{\alpha},{\bf d}^*,\tilde{{\bf d}},M)$  and constants
$c_1,c_2>0$ such that
\begin{equation}\label{eq:packingg}
\|g^{(j)}-g^{(k)}\|_{L^2(P_{g^{(0)},\theta_o})}
\;\ge\;
2c_1\,\gamma_n \, ,
\qquad j\neq k \, ,
\end{equation}
and
\begin{equation}\label{eq:packing-average}
\frac{1}{N}\sum_{j=1}^N 
\|g^{(j)}-g^{(0)}\|_{L^2(P_{g^{(0)},\theta_o})}^2
\;\le\;
\frac{c_2\,\log N}{n} \, .
\end{equation}
Applying the KL bound with $g_0=g^{(0)}$ and $g_1=g^{(j)}$ and using \eqref{eq:packing-average},
$$
\frac{1}{N}\sum_{j=1}^N 
\mbox{KL}\!\big(\PP_{g^{(j)},\theta_o}\,\|\,\PP_{g^{(0)},\theta_o}\big)
\;\le\;
C'\, c_2\,\log N \, .
$$
Choose $\alpha\in(0,1/8)$ so that
\(C'c_2 \le \alpha\). Then,
$$
\frac{1}{N}\sum_{j=1}^N 
\mbox{KL}(\PP_{g^{(j)},\theta_o}\,\|\,\PP_{g^{(0)},\theta_o})
\;\le\; \alpha \log N \, .
$$
Applying Tsybakov’s Fano inequality to  
$\{g^{(0)},\dots,g^{(N)}\}$ with separation \eqref{eq:packingg}, we obtain that there exists 
$c>0$ (depending only on $c_1$ and $\alpha$) such that for any estimator $\widehat g$,
$$
\sup_{j=0,\dots,N}\
\EE_{\PP_{g^{(j)},\theta_o}}
\!\Big(
  \|\widehat g - g^{(j)}\|_{L^2(\PP_{g^{(0)},\theta_o})}^2
\Big)
\;\ge\;
c\,\gamma_n^2 \, .
$$ 
Since the supremum over $g_o\in\mathcal H$ dominates this finite set,
$$
\inf_{\widehat g}\ 
\sup_{g_o\in\mathcal H}
\EE_{\PP_{g_o,\theta_o}}
\!\Bigl(
  \|\widehat g - g_o\|_{L^2(\PP_{g^{(0)},\theta_o})}^2
\Bigr)
\;\ge\;
c\,\gamma_n^2 \, .
$$
By Lemma~\ref{lem:evalT-equivalence}, there exists a constant
$C_{\mathrm{eq}}>0$ (depending only on \ref{ass:A1}--\ref{ass:A2}) such that, uniformly over $g_o\in\mathcal H$ and all sieve estimators $\widehat g$,
$$
\EE_{\PP_{g^{(0)},\theta_o}}
\!\big[\{\widehat g(T,\bX)-g_o(T,\bX)\}^2\big]
\;\le\;
C_{\mathrm{eq}}\,
\EE_{\PP_{g_o,\theta_o}}
\!\big[\{\widehat g(T,\bX)-g_o(T,\bX)\}^2\big].
$$
Combining this with the previous display yields
$$
\inf_{\widehat g}\ 
\sup_{g_o\in\mathcal H}
\EE_{\PP_{g_o,\theta_o}}
\!\big[\{\widehat g(T,\bX)-g_o(T,\bX)\}^2\big]
\;\ge\;
\frac{c}{C_{\mathrm{eq}}}\,\gamma_n^2.
$$
Finally, since the supremum in the theorem is also taken over $\btheta_o\in\Theta$, and the above bound holds for each fixed $\theta_o$, we obtain
\begin{equation}\label{eq:unweighted-lower}
\inf_{\widehat g}\ 
\sup_{\theta_o\in\Theta,\ g_o\in\mathcal H}
\EE_{\PP_{g_o,\theta_o}}
\!\big[\{\widehat g(T,\bX)-g_o(T,\bX)\}^2\big]
\;\ge\;
c_1\,\gamma_n^2
\end{equation}
for some $c_1>0$.
By Lemma~\ref{lem:delta-zero-implies-unweighted-zero} and Assumptions
\ref{ass:A1}, \ref{ass:A2}, \ref{ass:A4}, there exists a constant
$C_\Delta>0$ such that, for all $g_o\in\mathcal H$ and all $\widehat g$,
$$
\EE_{\PP_{g_o,\theta_o}}
\!\big[\{\widehat g(T,\bX)-g_o(T,\bX)\}^2\big]
\;\le\;
C_\Delta\,
\EE_{\PP_{g_o,\theta_o}}
\!\big[\Delta\{\widehat g(T,\bX)-g_o(T,\bX)\}^2\big] \, .
$$
Combining this with \eqref{eq:unweighted-lower} gives
$$
\inf_{\widehat g}\ 
\sup_{\theta_o\in\Theta,\ g_o\in\mathcal H}
\EE_{\PP_{g_o,\theta_o}}
\!\big[\Delta\{\widehat g(T,\bX)-g_o(T,\bX)\}^2\big]
\;\ge\;
\frac{c_1}{C_\Delta}\,\gamma_n^2 \, .
$$
Renaming the constant $c=c_1/C_\Delta$ completes the proof.
\end{proof}

\EfficientScoreAndInformationBound*

\begin{proof}[Proof of \Cref{thm:eff-score}]
Let ${\bf u}\in\mathbb{R}^p$ and let $\{g_s:s\in(-1,1)\}\in\mathcal{H}_{g_o}$
be a one-dimensional regular submodel through $g_o$, with
\[
\dot g_0 = \left.\frac{\partial g_s}{\partial s}\right|_{s=0}\in\mathcal{T}_{g_o}.
\]
Consider the semiparametric submodel
\[
\{\PP_{\theta_o+s {\bf u},\; g_s}: s\in(-1,1)\}.
\]
For one observation $(T,\Delta,\bX,\bZ)$,
\[
\ell(\btheta,g)
=
\Delta\{g(T,\bX)+\btheta^\top\bZ\}
-
\int_0^\tau Y(t)\exp\{g(t,\bX)+\btheta^\top\bZ\}\,dt.
\]
Differentiating at $s=0$ gives
\[
\left.\frac{d}{ds}\ell(\btheta_o+s {\bf u}\, ,\, g_s)\right|_{s=0}
=
{\bf u}^\top \dot\ell_{\theta_o}
+
\dot\ell_{g_o,\dot g_0},
\]
where
\[
\dot\ell_{\theta_o}
=
\int_0^\tau \bZ\,dM(t),
\qquad
\dot\ell_{g_o,g}
=
\int_0^\tau g(t,\bX)\,dM(t).
\]
Hence the nuisance score space is
\[
\mathcal{T}^{\mathrm{nuis}}_{g_o}
=
\{\dot\ell_{g_o,g}: g\in\mathcal{T}_{g_o}\},
\]
and the efficient score is
\[
\ell^*_{\theta_o}
=
\dot\ell_{\theta_o}-\Pi_{g_o}(\dot\ell_{\theta_o}),
\]
where $\Pi_{g_o}$ denotes projection onto
$\overline{\mathcal{T}^{\mathrm{nuis}}_{g_o}}$ in $L_2(\PP_0)$.

By Lemma~1 of \citet{sasieni1992information}, for any square-integrable predictable
functions $\mu_1,\mu_2$,
\[
E_0\!\left[
\left(\int_0^\tau \mu_1\,dM\right)
\left(\int_0^\tau \mu_2\,dM\right)
\right]
=
E_0\!\left[\Delta\, \mu_1(T,\bX)\mu_2(T,\bX)\right].
\]
Therefore the map
\[
J:h\mapsto \int_0^\tau \mu(t,\bX)\,dM(t)
\]
is an isometry from $L_2(\PP_0,\Delta)$ onto its image. Since
$\overline{\mathcal{T}}_{g_o}$ is a closed linear subspace of $L_2(\PP_0,\Delta)$
and $Z_j\in L_2(\PP_0,\Delta)$ by Assumption~\ref{ass:A2}, the Hilbert projection
theorem implies that for each $j=1,\dots,p$ there exists a unique
$g_j^*\in\overline{\mathcal{T}}_{g_o}$ such that
\[
\Pi_{g_o}(\dot\ell_{\theta_o,j})=\dot\ell_{g_o,g_j^*}.
\]
Let $\bg^*=(g_1^*,\dots,g_p^*)^\top$. Then
\[
\ell^*_{\theta_o}(T,\bX,\bZ,\Delta)
=
\int_0^\tau \{\bZ-\bg^*(t,\bX)\}\,dM(t).
\]
The orthogonality condition for the projection is
\[
E_0\!\left[
\{\dot\ell_{\theta_o,j}-\dot\ell_{g_o,g_j^*}\}\dot\ell_{g_o,g}
\right]
=0
\qquad
\forall g\in\mathcal{T}_{g_o},
\]
which, by the isometry above, is equivalent to
\[
E_0\!\left[
\Delta\{Z_j-g_j^*(T,\bX)\}g(T,\bX)
\right]
=0
\qquad
\forall g\in\mathcal{T}_{g_o}.
\]
By density, the same holds for all $g\in\overline{\mathcal{T}}_{g_o}$.
Thus $g_j^*$ is the $\Delta$-weighted $L_2(\PP_0)$ projection of $Z_j$ onto
$\overline{\mathcal{T}}_{g_o}$, and equivalently
\[
\bg^*
\in
\arg\min_{\bg\in(\overline{\mathcal{T}}_{g_o})^p}
E_0\!\left[\Delta\|\bZ-\bg(T,\bX)\|_2^2\right].
\]
Finally, by the martingale isometry,
\[
I(\btheta_o)
=
E_0\!\left[\ell_{\theta_o}^{*\otimes 2}\right]
=
E_0\!\left[\Delta\{\bZ-\bg^*(T,\bX)\}^{\otimes 2}\right].
\]
Finally, under the regularity conditions in Assumptions 
(A1)--(A4), existence and uniqueness of the weighted
projection $\bg^*$ in $\overline {\mathcal{T}}_{g_o}^p$ 
follow from the approximation theorems of \citet{stone1985additive}  
and the general semiparametric projection theory in
Appendix~A.4 of \citet{bickel1993efficient}.  Therefore, both the efficient score and the efficient information matrix
are well-defined and given by the formulas above.
\end{proof}

\ThetaHatAsmp*

\begin{proof}[Proof of \Cref{thm:asym-theta}]

For any $(\btheta,g)$, define the per--observation score
\[
r_{\theta,g}(T,\bX,\bZ,\Delta)
=
\Delta\{\bZ - \bg^*(T,\bX)\}
-
\int_0^\tau \{\bZ - \bg^*(t,\bX)\}
Y(t)\exp\{g(t,\bX)+\btheta^\top\bZ\}\,dt \, ,
\]
where $\bg^*=(g_1^*,\dots,g_p^*)^\top$ is the $p$--dimensional minimizer from
Theorem~\ref{thm:eff-score}, which depends only on $(\btheta_o,g_o)$.
By Theorem~\ref{thm:eff-score}, at the true parameter $(\btheta_o,g_o)$
this coincides with the efficient score, 
\[
r_{\theta_o,g_o}(T,\bX,\bZ,\Delta)
= \ell^*_{\theta_o}(T,\bX,\bZ,\Delta), 
\qquad
\PP\,r_{\theta_o,g_o} = {\bf 0},
\]
and the efficient information matrix is
\[
I(\btheta_o) 
= \PP\bigl\{ r_{\theta_o,g_o}^{\otimes 2}\bigr\}
= \EE\!\bigl[ \Delta\{ \bZ - \bg^*(T,\bX)\}^{\otimes 2}\bigr] \, .
\]
The empirical efficient score at $(\btheta,g)$ is $\PP_n r_{\theta,g}$.

In the following it will be shown that  $\PP_n r_{\widehat\theta,\widehat g}=\mathbf 0$.
For the \(j\)-th coordinate of \(r_{\theta,g}\), we have
\begin{align*}
r_{\theta,g}(T,\bX,\bZ,\Delta;j)
&=
\Delta\{Z_j-g_j^*(T,\bX)\}
-
\int_0^\tau \{Z_j-g_j^*(t,\bX)\}
Y(t)\exp\{g(t,\bX)+\btheta^\top\bZ\}\,dt \\
&=
\left[
\Delta Z_j
-
\int_0^\tau Z_j\,Y(t)\exp\{g(t,\bX)+\btheta^\top\bZ\}\,dt
\right] \\
&\qquad -
\left[
\Delta g_j^*(T,\bX)
-
\int_0^\tau g_j^*(t,\bX)\,Y(t)\exp\{g(t,\bX)+\btheta^\top\bZ\}\,dt
\right].
\end{align*}
Therefore,
\begin{align*}
\PP_n\!\bigl[r_{\theta,g}(\cdot\,;j)\bigr]
&=
\PP_n\!\left[
\Delta Z_j
-
\int_0^\tau Z_j\,Y(t)\exp\{g(t,\bX)+\btheta^\top\bZ\}\,dt
\right] \\
&\qquad -
\PP_n\!\left[
\Delta g_j^*(T,\bX)
-
\int_0^\tau g_j^*(t,\bX)\,Y(t)\exp\{g(t,\bX)+\btheta^\top\bZ\}\,dt
\right] \\
&=
\dot\ell_{n,\theta,j}(\btheta,g)-\dot\ell_{n,g}(\btheta,g)[g_j^*],
\end{align*}
where
\[
\dot\ell_{n,\theta}(\btheta,g)
=
\PP_n\!\left[
\Delta\bZ
-
\int_0^\tau \bZ\,Y(t)\exp\{g(t,\bX)+\btheta^\top\bZ\}\,dt
\right],
\]
and, for any admissible  nuisance direction $\mu \in \overline{\mathcal T}_{g_o}$,  the directional Gateaux derivative score
\[
\dot\ell_{n,g}(\btheta,g)[\mu]
=
\PP_n\!\left[
\Delta \mu(T,\bX)
-
\int_0^\tau \mu(t,\bX)\,Y(t)\exp\{g(t,\bX)+\btheta^\top\bZ\}\,dt
\right].
\]
Thus the projected score is the difference between the ordinary parametric score and the
nuisance directional score taken in the direction $\bg^*$.
Since \((\widehat\btheta,\widehat g)\) maximizes \(\ell_n(\btheta,g)\), the empirical first-order conditions give
$\dot\ell_{n,\theta}(\widehat\btheta,\widehat g)=\mathbf 0$
and $\dot\ell_{n,g}(\widehat\btheta,\widehat g)[\mu]=0$
for every admissible scalar nuisance direction $\mu \in \overline {\mathcal T}_{g_o}$.
Since each $g_j^*$ belongs to $\overline{\mathcal T}_{g_o}$ by Theorem~\ref{thm:eff-score},
applying this with $\mu=g_j^*$ yields
\[
\PP_n \, r_{\widehat\theta,\widehat g}(\cdot\,;j)
=
\dot\ell_{n,\theta,j}(\widehat\btheta,\widehat g)
-
\dot\ell_{n,g}(\widehat\btheta,\widehat g)[g_j^*]
=
0,
\]
for every \(j=1,\dots,p\). Hence $\PP_n r_{\hat\theta,\hat g}=\mathbf 0$.

Now, write
\[
\PP_n\bigl\{ r_{\widehat\theta,\widehat g}
            - r_{\theta_o,g_o}\bigr\}
=
\PP\bigl\{ r_{\widehat\theta,\widehat g}
        - r_{\theta_o,g_o}\bigr\}
+
(\PP_n-\PP)\bigl\{ r_{\widehat\theta,\widehat g}
                - r_{\theta_o,g_o}\bigr\} \, .
\]
By Theorem~\ref{thm:rate}, we have $d(\widehat\eta,\eta_o)=O_p(\tau_n)$,
where $\eta=(\btheta,g)$ and $\eta_o=(\btheta_o,g_o)$.
Using the same entropy bound and maximal inequality as in
Lemma~\ref{lem:unified_maximal}, one shows that the class
\[
\mathcal{R}_n
=
\bigl\{r_{\theta,g} : d(\eta,\eta_o)\le C\tau_n\bigr\}
\]
is stochastically equicontinuous, and in particular
\[
\sup_{r\in\mathcal{R}_n} |(\PP_n-\PP)r|
= o_p(n^{-1/2}) \, .
\]
Since $(\widehat\btheta,\widehat g)$ lies in this neighbourhood with
probability tending to one, we obtain
\[
(\PP_n-\PP)\bigl\{ r_{\widehat\theta,\widehat g}
                - r_{\theta_o,g_o}\bigr\}
= o_p(n^{-1/2}) \, .
\]
Using $\PP_n r_{\widehat\theta,\widehat g}= {\bf 0}$, this identity can be written as
\begin{equation}\label{eq:score-balance}
\PP\bigl\{ r_{\widehat\theta,\widehat g}
        - r_{\theta_o,g_o}\bigr\}
=
-\,\PP_n r_{\theta_o,g_o} + o_p(n^{-1/2}).
\end{equation}

Consider the map
\[
m(\btheta,g) = \PP\,r_{\theta,g}.
\]
Since $m(\btheta_o,g_o)={\bf 0}$ and $r_{\theta,g}$ is smooth in $(\btheta,g)$
in a neighbourhood of $(\btheta_o,g_o)$, a first--order expansion gives
\[
m(\widehat\btheta,\widehat g) - m(\btheta_o,g_o)
=
\dot m_\theta(\btheta_o,g_o)(\widehat\btheta-\btheta_o)
+ \dot m_g(\btheta_o,g_o)[\widehat g - g_o]
+ R_n,
\]
where $\dot m_\theta(\btheta_o,g_o)$ is the derivative of $m$ with respect
to $\btheta$ at $(\btheta_o,g_o)$, $\dot m_g(\btheta_o,g_o)[\cdot]$ is the
Gateaux derivative in the $g$--direction, and $R_n$ is a second--order
remainder.

Differentiating $r_{\theta,g_o}$ with respect to $\btheta$ and using the
representation of the efficient score in Theorem~\ref{thm:eff-score}
together with the martingale isometry, one obtains
\[
\dot m_\theta(\btheta_o,g_o)
=
\left.\frac{\partial}{\partial \btheta}\right|_{\btheta=\btheta_o}
\PP\,r_{\theta,g_o}
=
-\,I(\btheta_o) \, .
\]
Hence the contribution of $\widehat\btheta-\btheta_o$ in the expansion is
$-I(\btheta_o)(\widehat\btheta-\btheta_o)$.

Fix a deterministic scalar direction $\mu \in \overline T_{g_o}$ and consider $g_s = g_o + s \mu$.
Differentiating under the integral sign and applying the martingale
compensation identity yields
\[
\left.\frac{d}{ds}\right|_{s=0} m(\btheta_o,g_s)
= -\,\EE\bigl[\Delta\{\bZ-\bg^*(T,\bX)\} \mu(T,\bX)\bigr] \, .
\]
By Theorem~\ref{thm:eff-score}, $\bg^*$ is the coordinatewise $\Delta$--weighted
$L^2(\PP)$--projection of $\bZ$ onto the nuisance tangent space
$(\overline{\mathcal{T}}_{g_o})^p$. Therefore,
\[
\EE\bigl[\Delta\{\bZ-\bg^*(T,\bX)\} \mu(T,\bX)\bigr] = {\bf 0}
\quad\text{for all } \mu \in\overline{\mathcal{T}}_{g_o} \, ,
\]
and hence $\dot m_g(\btheta_o,g_o)[\mu]={\bf 0}$ for every
$\mu \in \overline{\mathcal{T}}_{g_o}$.
Arguing as in \cite{zhong2022deep} (Lemma 5), the sieve score equation together with the tangent-space
approximation implies that
\[
\dot m_g(\btheta_o,g_o)[\widehat g-g_o]=o_p(n^{-1/2}) \, .
\]

The map $g \mapsto m(\btheta_o,g)$ is twice Fr\'echet differentiable, and
its second derivative is bounded by a constant multiple of
$\EE[\Delta\{\widehat g(T,\bX)-g_o(T,\bX)\}^2]$. Consequently,
\[
|R_n|
\;\lesssim\;
\EE\Bigl[\Delta\bigl\{\widehat g(T,\bX)-g_o(T,\bX)\bigr\}^2\Bigr]
= O_p(\tau_n^2),
\]
where Theorem~\ref{thm:rate} gives
$\|\widehat g-g_o\|_{L^2(\Delta)} = O_p(\tau_n)$.
Our assumption $\sqrt{n}\,\tau_n^2\to0$ implies
$\tau_n^2 = o(n^{-1/2})$, so $R_n = o_p(n^{-1/2})$.

Combining the pieces above, the expansion of $m(\widehat\btheta,\widehat g)$
becomes
\[
\PP\bigl\{ r_{\widehat\theta,\widehat g} - r_{\theta_o,g_o}\bigr\}
= -\,I(\theta_o)(\widehat\btheta-\btheta_o) + o_p(n^{-1/2}) \, .
\]
Substituting this into \eqref{eq:score-balance} yields
\[
-\,I(\btheta_o)(\widehat\btheta-\btheta_o)
= -\,\PP_n r_{\theta_o,g_o} + o_p(n^{-1/2}) \, ,
\]
or, equivalently,
\[
\widehat\btheta-\btheta_o
= I(\btheta_o)^{-1}\,\PP_n r_{\theta_o,g_o} + o_p(n^{-1/2}) \, .
\]
Recalling that $r_{\theta_o,g_o}=\ell^*_{\theta_o}$ from
Theorem~\ref{thm:eff-score}, we obtain the stochastic expansion
\[
\sqrt{n}(\widehat\btheta - \btheta_o)
= I(\btheta_o)^{-1} \frac{1}{\sqrt{n}}
\sum_{i=1}^n \ell^*_{\theta_o}(T_i,\bX_i,\bZ_i,\Delta_i)
+ o_p(1) \, .
\]
The efficient score $\ell^*_{\theta_o}$ has mean zero and covariance
$I(\btheta_o)$, again by Theorem~\ref{thm:eff-score}. By the classical
central limit theorem,
\[
\frac{1}{\sqrt{n}}\sum_{i=1}^n \ell^*_{\theta_o}(T_i,\bX_i,\bZ_i,\Delta_i)
\;\overset{D}{\longrightarrow}\;
N\bigl({\bf 0},I(\btheta_o)\bigr) \, .
\]
Slutsky’s theorem then gives
\[
\sqrt{n}(\widehat\btheta-\btheta_o)
\;\overset{D}{\longrightarrow}\;
N\bigl({\bf 0},I(\btheta_o)^{-1}\bigr) \, ,
\]
which completes the proof.
\end{proof}

\begin{lemma}\label{lem:numerical_error} Let
$0=t_0<t_1<\cdots<t_{m_n}=\tau$ be a grid with mesh size
$\delta_n=\max_{1\le j\le m_n}(t_j-t_{j-1})$, and suppose that the grid contains all observed times \(T_1,\dots,T_n\). For \(\eta=(\btheta,g)\),  let
$\mathcal{N}_n=\{\eta:d(\eta,\eta_o)\le M\tau_n\}$ for some fixed \(M>0\). Assume \ref{ass:A2}--\ref{ass:A3}. Also define the uniform time-modulus
$$
\omega_n(\delta)
=
\sup_{\eta\in \mathcal{N}_n}\sup_{\bX \in\mathcal X}\sup_{\substack{s,t\in[0,\tau]\\ |t-s|\le \delta}}
|g(t,\bX)-g(s,\bX)|.
$$
Then there exist constants \(C_1,C_2>0\), depending only on the bounds in \ref{ass:A2}--\ref{ass:A3}, such that
\[
\sup_{\eta\in \mathcal{N}_n}\bigl|\tilde\ell_n(\eta)-\ell_n(\eta)\bigr|
\le C_1\,\omega_n(\delta_n),
\]
and
\[
\sup_{\eta\in \mathcal{N}_n}\bigl\|\nabla_\theta \tilde\ell_n(\eta)-\nabla_\theta \ell_n(\eta)\bigr\|
\le C_2\,\omega_n(\delta_n).
\]
In particular, if \(\omega_n(\delta_n)=O(\delta_n)\), then both differences are \(O(\delta_n)\) uniformly on \(\mathcal{N}_n\).
\end{lemma}

\begin{proof}[Proof of \Cref{lem:numerical_error}]
Fix \(\eta=(\btheta,g)\in \mathcal{N}_n\).  Recall
\[
\ell_n(\eta)
=
\frac1n\sum_{i=1}^n
\left[
\Delta_i\chi_\eta(T_i,\bX_i,\bZ_i)
-
\int_0^\tau Y_i(t)e^{\chi_\eta(t,\bX_i,\bZ_i)}\,dt
\right],
\]
and
\[
\tilde\ell_n(\eta)
=
\frac1n\sum_{i=1}^n
\sum_{j:Y_i(t_{j-1})=1}
\left[
\Delta_{ij}\chi_\eta(t_j,\bX_i,\bZ_i)
-
\bigl\{\min(T_i,t_j)-t_{j-1}\bigr\}e^{\chi_\eta(t_j,\bX_i,\bZ_i)}
\right] \, ,
\]
where
$\chi_\eta(t,\bX,\bZ)=g(t,\bX)+\btheta^\top \bZ$.
Because the grid contains all observed times, for each \(i\) there exists an index
\(j(i)\) such that \(T_i=t_{j(i)}\). Hence
\[
\sum_{j:Y_i(t_{j-1})=1}\Delta_{ij}\chi_\eta(t_j,\bX_i,\bZ_i)
=
\Delta_i\chi_\eta(T_i,\bX_i,\bZ_i),
\]
so the event contribution is represented exactly. Therefore
\[
\tilde\ell_n(\eta)-\ell_n(\eta)
=
-\frac1n\sum_{i=1}^n R_i(\eta),
\]
where
\[
R_i(\eta)
=
\sum_{j:Y_i(t_{j-1})=1}
\bigl\{\min(T_i,t_j)-t_{j-1}\bigr\}e^{\chi_\eta(t_j,\bX_i,\bZ_i)}
-
\int_0^\tau Y_i(t)e^{\chi_\eta(t,\bX_i,\bZ_i)}\,dt .
\]
For each \(i\), define
$\psi_{i,\eta}(t)=Y_i(t)e^{\chi_\eta(t,\bX_i,\bZ_i)}$, $t\in[0,\tau]$.
On any interval \((t_{j-1},t_j]\) such that \(Y_i(t_{j-1})=1\), we have
\(Y_i(t)=1\) for \(t\in[t_{j-1},\min(T_i,t_j)]\), and thus
\[
\int_{t_{j-1}}^{\min(T_i,t_j)} \psi_{i,\eta}(t)\,dt
=
\int_{t_{j-1}}^{\min(T_i,t_j)} e^{\chi_\eta(t,\bX_i,\bZ_i)}\,dt.
\]
Since \(\Theta\) is compact, \(\|\bZ\|\le B_{\bZ}\) a.s. by \ref{ass:A2}, and \(\|g\|_\infty\le D_n=O(1)\) by \ref{ass:A3}, there exists a constant \(C<\infty\) such that uniformly over \(i\), \(\eta\in \mathcal{N}_n\), and \(t\in[0,\tau]\),
\[
e^{\chi_\eta(t,\bX_i,\bZ_i)}\le C.
\]
Moreover, by the mean-value theorem,
\[
\bigl|e^{\chi_\eta(t,\bX_i,\bZ_i)}-e^{\chi_\eta(s,\bX_i,\bZ_i)}\bigr|
\le
C\,|g(t,\bX_i)-g(s,\bX_i)|
\le
C\,\omega_n(\delta_n)
\]
whenever \(|t-s|\le \delta_n\). Therefore, for each interval,
\[
\left|
\int_{t_{j-1}}^{\min(T_i,t_j)} e^{\chi_\eta(t,\bX_i,\bZ_i)}\,dt
-
\bigl\{\min(T_i,t_j)-t_{j-1}\bigr\}e^{\chi_\eta(t_j,\bX_i,\bZ_i)}
\right|
\le
(t_j-t_{j-1})\,C\,\omega_n(\delta_n).
\]
Summing over \(j\) and using \(\sum_j (t_j-t_{j-1})=\tau\), we obtain
\[
|R_i(\eta)|\le C\tau\,\omega_n(\delta_n)
\]
uniformly in \(i\) and \(\eta\in \mathcal{N}_n\). Hence
\[
\sup_{\eta\in \mathcal{N}_n}|\tilde\ell_n(\eta)-\ell_n(\eta)|
\le
\frac1n\sum_{i=1}^n \sup_{\eta\in \mathcal{N}_n}|R_i(\eta)|
\le
C_1\,\omega_n(\delta_n)
\]
for some constant \(C_1>0\).

Next consider the \(\btheta\)-scores. Since \(g\) does not depend on \(\theta\),
\[
\nabla_\theta \ell_n(\eta)
=
\frac1n\sum_{i=1}^n
\left[
\Delta_i \bZ_i
-
\int_0^\tau \bZ_i Y_i(t)e^{\chi_\eta(t,\bX_i,\bZ_i)}\,dt
\right],
\]
and
\[
\nabla_\theta \tilde\ell_n(\eta)
=
\frac1n\sum_{i=1}^n
\left[
\Delta_i \bZ_i
-
\sum_{j:Y_i(t_{j-1})=1}
\bigl\{\min(T_i,t_j)-t_{j-1}\bigr\} \bZ_i e^{\chi_\eta(t_j,\bX_i,\bZ_i)}
\right].
\]
Therefore,
\[
\nabla_\theta \tilde\ell_n(\eta)-\nabla_\theta \ell_n(\eta)
=
-\frac1n\sum_{i=1}^n \bZ_i R_i(\eta).
\]
Using \(\|\bZ_i\|\le B_Z\) a.s. from \ref{ass:A2} together with the bound on \(R_i(\eta)\),
\[
\sup_{\eta\in \mathcal{N}_n}
\bigl\|\nabla_\theta \tilde\ell_n(\eta)-\nabla_\theta \ell_n(\eta)\bigr\|
\le
\frac1n\sum_{i=1}^n \|\bZ_i\|\,\sup_{\eta\in \mathcal{N}_n}|R_i(\eta)|
\le
B_{\bZ} C_1\,\omega_n(\delta_n).
\]
This proves the second bound with \(C_2=B_{\bZ} C_1\).
\end{proof}

\begin{lemma}\label{lem:info-hessian}
Let
$S_n(\eta)=\nabla_{\theta}\ell_n(\eta)$ and
$\dot S_n(\eta)=\nabla_{\theta}S_n(\eta)$ be the Hessian of the empirical
log--likelihood with respect to $\btheta$.
For a fixed $M>0$, define the shrinking neighbourhood
$\mathcal N_n=\{\eta:d(\eta,\eta_o)\le M\tau_n\}$,
where $\tau_n=\gamma_n\log^2 n$.
Assume \ref{ass:A1}--\ref{ass:A4}. Then,
$$
\sup_{\eta\in\mathcal N_n}
\bigl\|
  \dot S_n(\eta)+ I_0(\btheta_o)
\bigr\|
\;\xrightarrow{P}\;0,
$$
where
$$
I_0(\btheta_o)
=
\EE_0\!\left[
\int_0^\tau \bZ\bZ^\top
Y(t)\exp\{\btheta_o^\top\bZ+g_o(t,\bX)\}\,dt
\right]
=
\EE_0\!\left[\Delta\,\bZ\bZ^\top\right].
$$
If, in addition, $I_0(\btheta_o)$ is nonsingular, then
$$
\sup_{\eta\in\mathcal N_n}\bigl\|\dot S_n(\eta)^{-1}\bigr\| = O_p(1).
$$
\end{lemma}

\begin{proof}[Proof of \Cref{lem:info-hessian}]
For a single observation $(T,\Delta,\bX,\bZ)$, the per--observation
log--likelihood is
$$
\ell(\btheta,g)
=
\Delta\{\btheta^\top\bZ + g(T,\bX)\}
-
\int_0^\tau Y(t)\exp\{\btheta^\top\bZ + g(t,\bX)\}\,dt \, .
$$
The corresponding $\btheta$-score is
$$
s(\btheta,g)
=
\nabla_{\theta}\ell(\btheta,g)
=
\Delta\bZ
-
\int_0^\tau \bZ\,Y(t)\exp\{\btheta^\top\bZ + g(t,\bX)\}\,dt \, ,
$$
and the $\btheta$-Hessian is
$$
\dot s(\btheta,g)
=
\nabla_{\theta}s(\btheta,g)
=
-
\int_0^\tau \bZ\bZ^\top
Y(t)\exp\{\btheta^\top\bZ + g(t,\bX)\}\,dt \, .
$$
Therefore,
$$
\dot S_n(\eta)=\PP_n\dot s(\btheta,g).
$$

Define the matrix-valued per-observation integrand
$$
J(\eta;\bW)
=
\int_0^\tau \bZ\bZ^\top
Y(t)\exp\{\btheta^\top\bZ + g(t,\bX)\}\,dt,
\qquad
\bW=(T,\Delta,\bX,\bZ),
$$
so that
$$
\dot S_n(\eta)=-\,\PP_n J(\eta;\cdot),
\qquad
\PP \dot s(\btheta,g)=-\,\PP J(\eta;\cdot).
$$

By Assumptions~\ref{ass:A2}--\ref{ass:A3}, the covariates are bounded and the sieve is
uniformly bounded, so there exists a constant $C<\infty$ such that
$$
\sup_{\eta\in\mathcal N_n}\|J(\eta;\bW)\|\le C
\qquad\text{a.s.}
$$
Since $p$ is fixed, it suffices to control each matrix entry separately. The class
$$
\mathcal J_n=\{J(\eta;\cdot):\eta\in\mathcal N_n\}
$$
inherits the same bracketing entropy bound as the localized likelihood/score classes used
in Lemma~\ref{lem:unified_maximal}. Hence it is $\PP$--Glivenko--Cantelli, and
\begin{equation}\label{eq:Jn-ULLN-corrected}
\sup_{\eta\in\mathcal N_n}
\bigl\|
  \PP_n J(\eta;\cdot)-\PP J(\eta;\cdot)
\bigr\|
\;\xrightarrow{P}\;0.
\end{equation}

Next we show continuity of $\eta\mapsto \PP J(\eta;\cdot)$ in the metric $d$.
Let $\eta_1=(\btheta_1,g_1)$ and $\eta_2=(\btheta_2,g_2)$, and write
$\phi(t,\bX,\bZ)=\chi_{\eta_1}(t,\bX,\bZ)-\chi_{\eta_2}(t,\bX,\bZ)$.
Then, by boundedness of $\bZ$ and the mean-value theorem for the exponential,
there exists a constant $C_1>0$ such that
\begin{align*}
\bigl\|J(\eta_1;\bW)-J(\eta_2;\bW)\bigr\|
&\le
C_1\int_0^\tau Y(t)\,|\phi(t,\bX,\bZ)|\,dt .
\end{align*}
Taking expectations and applying Cauchy--Schwarz,
\begin{align*}
\bigl\|\PP J(\eta_1;\cdot)-\PP J(\eta_2;\cdot)\bigr\|
&\le
C_1
\EE_0\!\left[\int_0^\tau Y(t)\,|\phi(t,\bX,\bZ)|\,dt\right]
\\
&\le
C_1\tau^{1/2}
\left\{
\EE_0\!\left[\int_0^\tau Y(t)\phi(t,\bX,\bZ)^2\,dt\right]
\right\}^{1/2}.
\end{align*}
Under Assumptions~\ref{ass:A1}--\ref{ass:A4}, the true hazard
$h_o(t\mid \bX,\bZ)=\exp\{\btheta_o^\top\bZ+g_o(t,\bX)\}$ is bounded away from zero,
say $h_o\ge c_h>0$. Therefore,
\begin{align*}
\EE_0\!\left[\int_0^\tau Y(t)\phi(t,\bX,\bZ)^2\,dt\right]
&\le
c_h^{-1}
\EE_0\!\left[\int_0^\tau Y(t)h_o(t\mid\bX,\bZ)\phi(t,\bX,\bZ)^2\,dt\right]
\\
&=
c_h^{-1}\EE_0\!\left[\Delta\,\phi(T,\bX,\bZ)^2\right]
=
c_h^{-1} d(\eta_1,\eta_2)^2,
\end{align*}
where the equality follows from the compensator identity.
Hence
\[
\bigl\|\PP J(\eta_1;\cdot)-\PP J(\eta_2;\cdot)\bigr\|
\le
C_2\, d(\eta_1,\eta_2)
\]
for some constant $C_2>0$. In particular,
\begin{equation}\label{eq:PJ-cont-corrected}
\sup_{\eta\in\mathcal N_n}
\bigl\|
  \PP J(\eta;\cdot)-\PP J(\eta_o;\cdot)
\bigr\|
\le
C_2 M\tau_n
\longrightarrow 0.
\end{equation}

At the true parameter,
\begin{align*}
\PP J(\eta_o;\cdot)
&=
\EE_0\!\left[
\int_0^\tau \bZ\bZ^\top Y(t)
\exp\{\btheta_o^\top\bZ+g_o(t,\bX)\}\,dt
\right]
\\
&=
\EE_0\!\left[\Delta\,\bZ\bZ^\top\right]
=
I_0(\btheta_o),
\end{align*}
again by the compensator identity. Therefore, using
\eqref{eq:Jn-ULLN-corrected} and \eqref{eq:PJ-cont-corrected},
\begin{align*}
\sup_{\eta\in\mathcal N_n}
\bigl\|
  \dot S_n(\eta)+I_0(\btheta_o)
\bigr\|
&=
\sup_{\eta\in\mathcal N_n}
\bigl\|
  -\PP_n J(\eta;\cdot)+I_0(\btheta_o)
\bigr\|
\\
&\le
\sup_{\eta\in\mathcal N_n}
\bigl\|
  \PP_n J(\eta;\cdot)-\PP J(\eta;\cdot)
\bigr\|
+
\sup_{\eta\in\mathcal N_n}
\bigl\|
  \PP J(\eta;\cdot)-\PP J(\eta_o;\cdot)
\bigr\|
\\
&\xrightarrow{P}0.
\end{align*}

If $I_0(\btheta_o)$ is nonsingular, let
$\lambda_0=\lambda_{\min}(I_0(\btheta_o))>0$.
On the event
\[
\sup_{\eta\in\mathcal N_n}
\bigl\|
  \dot S_n(\eta)+I_0(\btheta_o)
\bigr\|
<\lambda_0/2,
\]
all matrices $-\dot S_n(\eta)$ are nonsingular and
\[
\sup_{\eta\in\mathcal N_n}
\bigl\|
  \dot S_n(\eta)^{-1}
\bigr\|
\le
2/\lambda_0.
\]
Since this event has probability tending to one, we conclude
\[
\sup_{\eta\in\mathcal N_n}
\bigl\|
  \dot S_n(\eta)^{-1}
\bigr\|
=
O_p(1).
\]
This completes the proof.
\end{proof}

\ThetaAsmp*

\begin{proof}[Proof of \Cref{thm:quad-equiv}]
Recall that $\ell_n(\eta)$ and $\widetilde\ell_n(\eta)$
denote the exact and numerically integrated log-likelihoods, respectively, and
$S_n(\eta) = \nabla_{\theta}\ell_n(\eta)$ and
$\widetilde S_n(\eta) = \nabla_{\theta}\widetilde\ell_n(\eta)$
their $\btheta$-scores. By Lemma \ref{lem:numerical_error}, for all $\eta$ in the shrinking neighborhood
$\mathcal N_n=\{\eta:\ d(\eta,\eta_o)\le M\tau_n\}$ for some fixed $M \geq 1$,
we have the uniform bounds
\[
\sup_{\eta\in\mathcal N_n} |\widetilde\ell_n(\eta)-\ell_n(\eta)|
\;\le\; C_1 \omega_n(\delta_n) \, ,
\]
and
\[
\sup_{\eta\in\mathcal N_n}
\|\widetilde S_n(\eta)- S_n(\eta)\|
\;\le\; C_2 \omega_n(\delta_n) \, .
\]
Since $\eta_o\in\mathcal N_n$ for all $n$ large enough, subtracting the values at $\eta_o$ gives
\begin{equation}
\sup_{d(\eta,\eta_o)\le M\tau_n}
\bigl|\ell_n(\eta)-\widetilde\ell_n(\eta)
-\{\ell_n(\eta_o)-\widetilde\ell_n(\eta_o)\}\bigr|
=O_p\!\bigl(\omega_n(\delta_n)\bigr) \, ,
\label{eq:lik-approx}    
\end{equation}
and
\begin{equation}\label{eq:score-approx}  
\sup_{d(\eta,\eta_o)\le M\tau_n}
\bigl\|\widetilde S_n(\eta)-S_n(\eta)\bigr\|
= O_p\!\bigl(\omega_n(\delta_n)\bigr) \, .   
\end{equation}
These bounds reflect that the per-subject approximation error is controlled by
the local time-modulus $\omega_n(\delta_n)$ from Lemma~\ref{lem:numerical_error},
while the network outputs are uniformly bounded because the sieve
$\mathcal{G}(K,s,\boldsymbol{p},D)$ is constructed with bounded weights.

By~\eqref{eq:lik-approx} and Theorem~\ref{thm:rate}, on the ball
$\{d(\eta,\eta_o)\le \tau_n\}$ we have
\begin{align*}
\sup_{d(\eta,\eta_o)\le\tau_n}
\bigl|
  (\widetilde\ell_n-\ell_0)(\eta)
  -(\widetilde\ell_n-\ell_0)(\eta_o)
\bigr|
&\le
\sup_{d(\eta,\eta_o)\le\tau_n}
\bigl|
  (\ell_n-\ell_0)(\eta)
  -(\ell_n-\ell_0)(\eta_o)
\bigr|
\\[0.3em]
&\qquad
+
\sup_{d(\eta,\eta_o)\le\tau_n}
\bigl|
  \widetilde\ell_n(\eta)-\ell_n(\eta)
 -\{\widetilde\ell_n(\eta_o)-\ell_n(\eta_o)\}
\bigr|
\\[0.3em]
&= O_p(\tau_n^2) + O_p\!\bigl(\omega_n(\delta_n)\bigr)
= O_p(\tau_n^2),
\end{align*}
because $\omega_n(\delta_n)=o(\tau_n^2)$ by assumption. The population loss satisfies
$\ell_0(\eta)-\ell_0(\eta_o)\asymp -d(\eta,\eta_o)^2$
by Lemma~\ref{lmm:concavity}. Therefore, the same M-estimation argument used in the proof of Theorem~\ref{thm:rate}
for the oracle estimator (via Theorem~3.4.1 in
\citealp{vdvwellner1996}) applies to $\widetilde\ell_n$, yielding
$d(\widetilde\eta,\eta_o)=O_p(\tau_n)$ and 
$\|\widetilde\btheta-\btheta_o\|=O_p(\tau_n)$, which proves part 1.

By the score equations, $S_n(\widehat\eta)={\bf 0}$ and $\widetilde S_n(\widetilde\eta)={\bf 0}$.
Therefore,
$$
{\bf 0}
=
\widetilde S_n(\widetilde\eta) - S_n(\widehat\eta)
=
\bigl\{\widetilde S_n(\widetilde\eta)-S_n(\widetilde\eta)\bigr\}
+
\bigl\{ S_n(\widetilde\eta)-S_n(\widehat\eta)\bigr\} \, .
$$
Since 
$d(\widehat\eta,\eta_o)=O_p(\tau_n)$ and $d(\widetilde\eta,\eta_o)=O_p(\tau_n)$,
there exists a constant $C>0$ such that, with probability tending to one,
$\{d(\widehat\eta,\eta_o)\le C\tau_n,\; d(\widetilde\eta,\eta_o)\le C\tau_n\}
\subseteq \mathcal N_n$,
where $M\ge C$.
On this event, since the two estimators satisfy the score equations
$$
S_n(\widehat\eta)={\bf 0} \, ,
\qquad 
\widetilde S_n(\widetilde\eta)={\bf 0}\, ,
$$
we also get
$$
{\bf 0} = S_n(\widetilde\eta) + R_n(\widetilde\eta) \, .
$$
where $R_n(\eta) = \widetilde S_n(\eta)-S_n(\eta)$. Subtracting $S_n(\widehat\eta)=0$ yields
\begin{equation}
\label{eq:score-diff}
{\bf 0} = \bigl\{S_n(\widetilde\eta)-S_n(\widehat\eta)\bigr\} + R_n(\widetilde\eta) \, .
\end{equation}
In addition,
$$
S_n(\widetilde\eta)-S_n(\widehat\eta)
=
\underbrace{S_n(\widetilde\btheta,\widetilde g)-S_n(\widehat\btheta,\widetilde g)}_{A}
+
\underbrace{S_n(\widehat\btheta,\widetilde g)-S_n(\widehat\btheta,\widehat g)}_{B} \, .
$$
Thus (\ref{eq:score-diff}) becomes 
${\bf 0} = A + B + R_n(\widetilde\eta)$.
By the mean value theorem in $\btheta$, there exists 
$\btheta^\dagger$ on the segment between $\widehat\btheta$ and $\widetilde\btheta$ such that
\[
A=\dot S_{n}(\btheta^\dagger,\widetilde g)\,(\widetilde\btheta-\widehat\btheta)
\, .
\]
By Lemma~\ref{lem:info-hessian},
$\dot S_{n}(\btheta^\dagger,\widetilde g) \xrightarrow{P} -I_0(\btheta_o)$ and
$\|\dot S_{n}(\btheta^\dagger,\widetilde g)^{-1}\| = O_p(1)$.
Substituting this into \eqref{eq:score-diff}, we obtain
\[
\dot S_{n}(\btheta^\dagger,\widetilde g)  (\widetilde\btheta-\widehat\btheta)
= -B - R_n(\widetilde\eta).
\]
and
\begin{equation}
\label{eq:theta-diff-rootn}
\sqrt{n}(\widetilde\btheta-\widehat\btheta)
=
-\dot S_{n}(\btheta^\dagger,\widetilde g) ^{-1}
\Bigl[\sqrt{n}B+\sqrt{n}\,R_n(\widetilde\eta)\Bigr] \, .
\end{equation}
Lemma~\ref{lem:numerical_error} gives uniformly over $\mathcal N_n$,
\[
\|R_n(\eta)\|\le C \omega_n(\delta_n) \, ,
\]
so
\[
\sqrt{n}\,R_n(\widetilde\eta)
=
O_p\!\bigl(\sqrt{n}\,\omega_n(\delta_n)\bigr)
=
o_p(1),
\]
because $\sqrt{n}\,\omega_n(\delta_n)\to0$ by assumption.

We decompose
\[
B
=
S_n(\widehat\btheta,\widetilde g)-S_n(\widehat\btheta,\widehat g)
=
\{\PP S(\widehat\btheta,\widetilde g)- \PP S(\widehat\btheta,\widehat g)\}
+
\{(\PP_n-\PP)S(\widehat\btheta,\widetilde g) - (\PP_n-\PP)S(\widehat\btheta,\widehat g)\},
\]
where $S(\eta;{\bf W})$ is the per–observation score in $\btheta$ and
$S_n(\eta)=\PP_n S(\eta;\cdot)$. The second–order expansion of the population score (see the proof of
Theorem~\ref{thm:asym-theta}) yields
\[
\| \PP S(\widehat\btheta, \widetilde g)- \PP S(\widehat\btheta, \widehat g)\|
\lesssim 
\| \widetilde g - g_o \|_{L^2(\Delta)}^2 
+
\| \widehat g-g_o \|_{L^2(\Delta)}^2
=O_p(\tau_n^2) \, .
\]
Moreover, Lemma~\ref{lem:unified_maximal} applied to the localized score
class $\{S(\widehat\btheta, g;\cdot): d((\widehat\btheta,g),\eta_o)\le C\tau_n\}$
implies
\[
\sup_{d((\widehat\theta,g),\eta_o)\le C\tau_n}
|(\PP_n-\PP)S(\widehat\btheta, g;\cdot)|
=
O_p(\tau_n^2) \, ,
\]
and since both $(\widehat\btheta,\widetilde g)$ and $(\widehat\btheta,\widehat g)$
lie in this ball with probability $1-o(1)$, we obtain
\[
\|B\| = O_p(\tau_n^2),
\qquad 
\sqrt{n}B=o_p(1),
\]
because $\sqrt{n}\tau_n^2\to0$.
Using \eqref{eq:theta-diff-rootn} together with 
$\dot S_{n}(\btheta^\dagger,\widetilde g)^{-1}=O_p(1)$, 
$\sqrt{n}B=o_p(1)$, 
and
$\sqrt{n}R_n(\widetilde\eta)=o_p(1)$,
we obtain
$$
\sqrt{n}(\widetilde\btheta-\widehat\btheta)=o_p(1) \, .
$$
Finally, combining this with Theorem~\ref{thm:asym-theta}, which states that
$
\sqrt{n}(\widehat\btheta-\btheta_o)$ is asymptotically normal
$N\bigl(\mathbf{0},I(\btheta_o)^{-1}\bigr)$, 
and using Slutsky’s theorem yields
\[
\sqrt{n}(\widetilde\btheta-\btheta_o)
=
\sqrt{n}(\widehat\btheta-\btheta_o)
+
\sqrt{n}(\widetilde\btheta-\widehat\btheta)
\;\xrightarrow{d}\;
N\bigl(\mathbf{0},I(\btheta_o)^{-1}\bigr) \, ,
\]
as claimed.
\end{proof}

\section{Proofs of Section 5}\label{appB}

\Hderivative*

\begin{proof}[Proof of \Cref{lem:pathwise}]
	Along the joint submodel,
	\[
	h_\varepsilon(u\mid \bX_0,\bZ_0)
	=
	h_o(u\mid \bX_0,\bZ_0)
	\exp\!\Big\{
	\varepsilon\,\ba^\top(\bZ_0-\bg^*(u,\bX_0))
	+\varepsilon\,b(u,\bX_0)
	\Big\} \, .
	\]
	Differentiating at $\varepsilon=0$ gives
	\[
	\left.
	\frac{d}{d\varepsilon}
	h_\varepsilon(u\mid \bX_0,\bZ_0)
	\right|_{\varepsilon=0}
	=
	h_o(u\mid \bX_0,\bZ_0)
	\Big\{
	\ba^\top(\bZ_0-\bg^*(u,\bX_0))
	+
	b(u,\bX_0)
	\Big\} \, .
	\]
	Integration over $[0,t]$ yields the stated result.
\end{proof}

\EIFms*

\begin{proof}[Proof of \Cref{lem:EIF_existence_MS}]
	By \ref{ass:A7}, $Q: \overline{\mathcal{T}}_{g_o} \to L^2(P_{\bX,\bZ})$ is a bounded linear operator. Next, we invoke assumption \ref{ass:A7}, there exists $\psi_t^0(\cdot,\cdot;\bX_0,\bZ_0)\in L^2(P_{\bX,\bZ}; \overline{\mathcal{T}}_{g_o})$
	such that for all $b\in \mathcal \overline{\mathcal{T}}_{g_o}$,
	\begin{equation}\label{eq:riesz-repr}
	(Qb)(\bX_0,\bZ_0)=\langle \psi_t^0(\cdot,\cdot;\bX_0,\bZ_0),\, b\rangle_0
	\quad\text{in }L^2(P_{\bX,\bZ}) \, .    
	\end{equation}
	Uniqueness follows from the Riesz representation theorem in the Hilbert space $\overline{\mathcal{T}}_{g_o}$: if $\psi,\tilde\psi\in L^2(P_{\bX,\bZ}; \overline{\mathcal{T}}_{g_o})$ both satisfy \eqref{eq:riesz-repr}, then
	$\langle \psi-\tilde\psi,b\rangle_0=0$ in $L^2(P_{\bX,\bZ})$ for all $b$, hence
	$\psi=\tilde\psi$ in $L^2(P_{\bX,\bZ};\overline{\mathcal{T}}_{g_o})$.
	
	For each $b\in \overline{\mathcal{T}}_{g_o}$, the martingale isometry associated with
	\eqref{eq:inner_product} yields
	\[
	\langle \psi_t^0(\cdot,\cdot;\bX_0,\bZ_0),\,b\rangle_0
	=
	\EE_0\!\left[
	\left(\int_0^\tau \psi_t^0(u,\bX;\bX_0,\bZ_0)\,dM(u)\right)
	\left(\int_0^\tau b(u,\bX)\,dM(u)\right)
	\ \middle|\ \bX_0,\bZ_0
	\right],
	\]
	and since $\langle \psi_t^0,b\rangle_0=L_t(b)$ in $L^2(P_{\bX,\bZ})$, the nuisance contribution to the canonical gradient
	in the $L^2(P_{\bX,\bZ})$ sense is $\int_0^\tau \psi_t^0(u,\bX;\bX_0,\bZ_0)\,dM(u)$.
	
	The parametric derivative term in (\ref{eq:Gateaux_deriv_MS2}) equals $\ba^\top c_{0,t}(\bX_0,\bZ_0)$.
	Since $\ell_{\theta_o}^*$ is the efficient score and
	$I(\btheta_o)=\EE_0\{\ell_{\theta_o}^*(\bW)^{\otimes 2}\}$,
	the corresponding efficient influence function for $\btheta_o$ is $I(\btheta_o)^{-1}\ell_{\theta_o}^*(\bW)$.
	Thus the parametric contribution to the EIF for $H_o(t\mid\bX_0,\bZ_0)$ is
	$$c_{0,t}(\bX_0,\bZ_0)^\top I(\btheta_o)^{-1}\ell_{\theta_o}^*(\bW) \, . $$
	Combining nuisance and parametric pieces gives \eqref{eq:EIF_MS}. Finally, using \eqref{eq:Gateaux_deriv_MS2}, \eqref{eq:Riesz_MS}, and the score decomposition \eqref{eq:score_decomp},
	one checks that
	\[
	\left.\frac{d}{d\varepsilon}H_\varepsilon(t\mid \bX_0,\bZ_0)\right|_{\varepsilon=0}
	-
	\EE_0\!\big[\phi_t^0(\bW;\bX_0,\bZ_0)\,S(\bW;\ba,b)\mid\bX_0,\bZ_0\big]
	=0
	\quad\text{in }L^2(P_{\bX,\bZ}),
	\]
	which proves pathwise differentiability of  $H_P(t\mid\bX_0,\bZ_0)$ as an $L^2(P_{\bX,\bZ})$-valued functional.
\end{proof}

\Discrete*

\begin{proof}[Proof of \Cref{prop:closed_form_discrete}]
	Since $\bX$ takes values in the finite set $\{\bx^{(1)},\ldots,\bx^{(J)}\}$, every $b\in \overline{\mathcal{T}}_{g_o}$
	admits the representation
	\[
	b(u,\bX)=\sum_{j=1}^J b_j(u)\,\mathbf 1\{\bX=\bx^{(j)}\},
	\qquad b_j\in L_2([0,\tau]).
	\]
	Let $j_0$ be such that $\bX_0=\bx^{(j_0)}$. Using the inner product \eqref{eq:inner_product},
	for any $\psi\in \overline{\mathcal{T}}_{g_o}$ we obtain
	\begin{align*}
		\langle \psi, b\rangle_0
		&=
		\EE_0\!\left[\int_0^\tau \psi(u,\bX;\bX_0,\bZ_0)\,b(u,\bX)\,Y(u)h_o(u\mid\bX,\bZ)\,du\right] \\
		&=
		\sum_{j=1}^J \EE_0\!\left[\mathbf 1\{\bX=\bx^{(j)}\}\int_0^\tau
		\psi(u,\bx^{(j)};\bX_0,\bZ_0)\,b_j(u)\,Y(u)h_o(u\mid\bX,\bZ)\,du\right] \\
		&=
		\sum_{j=1}^J \PP_0(\bX=\bx^{(j)})\int_0^\tau \psi(u,\bx^{(j)};\bX_0,\bZ_0)\,b_j(u)\,
		\EE_0\!\left[Y(u)h_o(u\mid\bX,\bZ)\mid \bX=\bx^{(j)}\right]du \\
		&=
		\sum_{j=1}^J \PP_0(\bX=\bx^{(j)})\int_0^\tau \psi(u,\bx^{(j)};\bX_0,\bZ_0)\,b_j(u)\,A_j(u)\,du .
	\end{align*}
	
	On the other hand, by definition of the evaluation functional $L_t$ (as in (5.7)),
	\[
	L_t(b)=\int_0^t h_o(u\mid\bX_0,\bZ_0)\,b(u,\bX_0)\,du
	= \int_0^t h_o(u\mid\bX_0,\bZ_0)\,b_{j_0}(u)\,du .
	\]
	Therefore, the Riesz equation $\langle \psi_t^0,b\rangle_0=L_t(b)$ for all $b\in\overline{\mathcal{T}}_{g_o}$
	implies that $\psi_t^0(u,\bx^{(j)};\bX_0,\bZ_0)=0$ for $j\neq j_0$ (a.e.\ $u$), and for $j=j_0$,
	\[
	\Pr(\bX=\bX_0)\int_0^\tau \psi_t^0(u,\bX_0;\bX_0,\bZ_0)\,b_{j_0}(u)\,A_{\bX_0}(u)\,du
	=
	\int_0^t h_o(u\mid\bX_0,\bZ_0)\,b_{j_0}(u)\,du ,
	\quad \forall\, b_{j_0}.
	\]
	Since this holds for all $b_{j_0}\in L_2([0,\tau])$, it follows that for a.e.\ $u\in[0,\tau]$,
	\[
	\Pr(\bX=\bX_0)\,A_{\bX_0}(u)\,\psi_t^0(u,\bX_0;\bX_0,\bZ_0)
	=
	\mathbf 1(u\le t)\,h_o(u\mid\bX_0,\bZ_0) \, .
	\]	
	Thus,
	\[
	\psi_t^0(u,\bX_0;\bX_0,\bZ_0)
	=
	\mathbf 1(u\le t)\,
	\frac{h_o(u\mid\bX_0,\bZ_0)}
	{\Pr(\bX=\bX_0)\,A_{\bX_0}(u)}
	\quad \text{for a.e. }u.
	\]
	Combining this with the fact that $\psi_t^0(u,\bx^{(j)};\bX_0,\bZ_0)=0$ for all $j\neq j_0$
	(a.e.\ $u$), we obtain for general $\bX$,
	\[
	\psi_t^0(u,\bX;\bX_0,\bZ_0)
	=
	\mathbf 1(\bX=\bX_0)\,\psi_t^0(u,\bX_0;\bX_0,\bZ_0)
	=
	\mathbf 1(u\le t)\mathbf 1(\bX=\bX_0)
	\frac{h_o(u\mid\bX_0,\bZ_0)}
	{\Pr(\bX=\bX_0)\,A_{\bX_0}(u)},
	\]
	which is \eqref{eq:psi_closed}.
\end{proof}

\EmpiricalLike*

\begin{proof}[Proof of \Cref{lem:plugin_ellstar}]
	Fix $k\in\{1,\dots,K\}$ and write $\mathcal F_{-k}$ for the sigma-field generated by the
	training sample $\{W_j:j\in \mathcal I_k^c\}$.  By Assumption \ref{ass:A9}, conditional on $\mathcal F_{-k}$
	the functions $\widetilde g^{(-k)}$, $\widetilde \btheta^{(-k)}$ and $\widehat \bg^{*,(-k)}$ are fixed,
	and the observations $\{\bW_i:i\in \mathcal I_k\}$ are i.i.d.\ from $\PP_0$.
	
		For $i\in \mathcal I_k$ set
	\[
	R_{i,k}
	=
	\widehat\ell^{*,(-k)}_{\theta_o}(\bW_i)-\ell^*_{\theta_o}(\bW_i).
	\]
	Insert and subtract the intermediate integral
	$\int_0^\tau\{\bZ_i-\widehat \bg^{*,(-k)}(u,\bX_i)\}\,dM_{i}(u)$ to obtain
	\[
	R_{i,k}
	=
	\int_0^\tau \big\{\widehat \bg^{*,(-k)}(u,\bX_i)-\bg^*(u,\bX_i)\big\}\,dM_{i}(u)
	+
	\int_0^\tau \{\bZ_i-\widehat \bg^{*,(-k)}(u,\bX_i)\}\,d\!\big(\widehat M^{(-k)}_i-M_{i}\big)(u).
	\]
	Denote the first and second terms by $R^{(1)}_{i,k}$ and $R^{(2)}_{i,k}$, respectively.	
	Consider $R^{(1)}_{i,k}$.  Conditional on $\mathcal F_{-k}$, the integrand
	$\widehat \bg^{*,(-k)}-\bg^*$ is predictable, and $M_{i}$ is a square-integrable martingale.
	Therefore, the martingale isometry yields
	\begin{eqnarray*}
	    E_0\!\left[\big(R^{(1)}_{i,k}\big)^2\mid\mathcal F_{-k}\right] 
	&=&
	E_0\!\left[\int_0^\tau
	\big\{\widehat \bg^{*,(-k)}(u,\bX)-\bg^*(u,\bX)\big\}^2
	Y(u)h_o(u\mid \bX,\bZ)\,du
	\ \Big|\ \mathcal F_{-k}\right] \\
	&=&
	\|\widehat \bg^{*,(-k)}-\bg^*\|_0^2.
	\end{eqnarray*}
	By Assumption \ref{ass:A9}, the right-hand side is $O_p(\tau_n^2)$ uniformly in $k$.
	Since $\{R^{(1)}_{i,k}: i\in \mathcal I_k\}$ are i.i.d.\ conditional on $\mathcal F_{-k}$, we have
	\[
	\Var_0\!\left(\frac{1}{\sqrt n}\sum_{i\in \mathcal I_k}R^{(1)}_{i,k}\ \Big|\ \mathcal F_{-k}\right)
	=
	\frac{| \mathcal I_k|}{n}\Var_0\!\left(R^{(1)}_{1,k}\mid\mathcal F_{-k}\right)
	\le
	\frac{| \mathcal I_k|}{n}E_0\!\left[\big(R^{(1)}_{1,k}\big)^2\mid\mathcal F_{-k}\right]
	=
	O_p(\tau_n^2).
	\]
	Chebyshev's inequality implies
	\[
	\frac{1}{\sqrt n}\sum_{i\in \mathcal I_k}R^{(1)}_{i,k}=O_p(\tau_n)=o_p(1),
	\]
	because $\tau_n\to 0$.
	Next consider $R^{(2)}_{i,k}$.  Since
	\[
	\widehat M^{(-k)}_i(u)-M_{i}(u)
	=
	-\int_0^u Y_i(s)\Big\{\widehat h^{(-k)}(s\mid \bX_i,\bZ_i)-h_o(s\mid \bX_i,\bZ_i)\Big\}\,ds,
	\]
	we have
	\begin{equation}\label{eq:R2_predictable}
		R^{(2)}_{i,k}
		=
		-\int_0^\tau
		\{\bZ_i-\widehat \bg^{*,(-k)}(u,\bX_i)\}\,
		Y_i(u)\Big\{\widehat h^{(-k)}(u\mid \bX_i,\bZ_i)-h_o(u\mid \bX_i,\bZ_i)\Big\}\,du.
	\end{equation}
	Write
	\begin{equation}
\begin{aligned}
\widehat h^{(-k)}(u\mid \bX,\bZ)-h_o(u\mid \bX,\bZ)
&= h_o(u\mid \bX,\bZ) \\
&\quad \times
\Big[
\exp\!\big\{
\widetilde g^{(-k)}(u,\bX)-g_o(u,\bX)
+\bZ^\top(\widetilde\btheta^{(-k)}-\btheta_o)
\big\}
-1
\Big].
\end{aligned}
\end{equation}
	By the mean value theorem, for each $(u,\bX,\bZ)$ there exists a random $\xi^{(-k)}(u,\bX,\bZ)$
	between $0$ and $\delta^{(-k)}(u,\bX,\bZ)=\widetilde g^{(-k)}(u,\bX)-g_o(u,\bX)+\bZ^\top(\widetilde\btheta^{(-k)}-\btheta_o)$ such that
	\[
	\widehat h^{(-k)}(u\mid \bX,\bZ)-h_o(u\mid \bX,\bZ)
	=
	h_o(u\mid \bX,\bZ)\exp\{\xi^{(-k)}(u,\bX,\bZ)\}
	\delta^{(-k)}(u,\bX,\bZ).
	\]
	By the bounded-sieve condition for $\widehat g^{(-k)}$, boundedness of $g_o$, boundedness of $\bZ$,
	and boundedness of the parameter space for $\btheta$, there exists a deterministic constant $C<\infty$
	such that
	\[
	\sup_{u\le \tau,\,\bX,\,\bZ}\left|
	\delta^{(-k)}(u,\bX,\bZ)
	\right|\le C
	\]
	with probability tending to one. Therefore, uniformly on this event, the exponential map admits
	the second-order expansion
	\[
	e^x = 1 + x + R_C(x), \qquad |R_C(x)|\le C_1 x^2,\quad |x|\le C,
	\]
	for some finite constant $C_1$. Hence
	\[
	\widehat h^{(-k)}(u\mid \bX,\bZ)-h_o(u\mid \bX,\bZ)
	=
	h_o(u\mid \bX,\bZ)\,\delta^{(-k)}(u,\bX,\bZ)
	+
	h_o(u\mid \bX,\bZ)\,R^{(-k)}(u,\bX,\bZ),
	\]
	where $R^{(-k)}(u,\bX,\bZ)=e^{\delta^{(-k)}(u,\bX,\bZ)}-1-\delta^{(-k)}(u,\bX,\bZ)$ and
	\[
	|R^{(-k)}(u,\bX,\bZ)|\le C_1\bigl(\delta^{(-k)}(u,\bX,\bZ)\bigr)^2.
	\]
	 Substituting into \eqref{eq:R2_predictable} gives
	\begin{align*}
		R^{(2)}_{i,k}
		&=
		-\int_0^\tau
		\{\bZ_i-\widehat \bg^{*,(-k)}(u,\bX_i)\}\,
		Y_i(u)h_o(u\mid \bX_i,\bZ_i)
	\delta^{(-k)}(u,\bX_i,\bZ_i),du 
		+\; \widetilde R_{i,k},
	\end{align*}
	where the remainder term $\widetilde R_{i,k}$ satisfies
	\begin{equation}\label{eq:Rtilde_bound}
		E_0\!\left[|\widetilde R_{i,k}|\mid\mathcal F_{-k}\right]
		\lesssim
		\|\widetilde g^{(-k)}-g_o\|_0^2+\|\widetilde \btheta^{(-k)}-\btheta_o\|^2 \, .
	\end{equation}
	To verify \eqref{eq:Rtilde_bound}, note first that
	$$
	|\widetilde R_{i,k}|
	\lesssim
	\int_0^\tau |\bZ_i-\widehat \bg^{*,(-k)}(u,\bX_i)|\,Y_i(u)h_o(u\mid \bX_i,\bZ_i)\,
	|\delta^{(-k)}(u,\bX_i,\bZ_i)|^2\,du \, .
	$$
	By boundedness of \(\bZ_i-\widehat \bg^{*,(-k)}(u,\bX_i)\) and assumptions \ref{ass:A2} and \ref{ass:A6}, the factor
	\[
	|\bZ_i-\widehat \bg^{*,(-k)}(u,\bX_i)|\,Y_i(u)h_o(u\mid \bX_i,\bZ_i)
	\]
	is bounded by a random quantity with finite expectation, uniformly over the relevant indices. Therefore, conditional on \(\mathcal F_{-k}\),
	\[
	E_0\!\left[| \widetilde R_{i,k}|\mid \mathcal F_{-k}\right]
	\lesssim
	E_0\!\left[\int_0^\tau \bigl|\delta^{(-k)}(u,\bX_i,\bZ_i)\bigr|^2\,du \,\middle|\, \mathcal F_{-k}\right].
	\]
	Expanding \(\delta^{(-k)}\) and using \((a+b)^2\le 2a^2+2b^2\), we obtain \eqref{eq:Rtilde_bound}.
	
	Now treat the leading linear term in $R^{(2)}_{i,k}$.  Conditional on $\mathcal F_{-k}$,
	Cauchy--Schwarz and Assumption \ref{ass:A9} imply that the conditional second moment of the integrand is bounded by
	a constant multiple of
	$\|\widetilde g^{(-k)}-g_o\|_0^2+\|\widetilde\btheta^{(-k)}-\btheta_o\|^2$,
	and hence
	\[
	E_0\!\left[\big(R^{(2)}_{i,k}-\widetilde R_{i,k}\big)^2\mid\mathcal F_{-k}\right]
	\lesssim
	\|\widetilde g^{(-k)}-g_o\|_0^2+\|\widetilde \btheta^{(-k)}-\btheta_o\|^2
	=
	O_p(\tau_n^2+n^{-1}).
	\]
	Since $\{R^{(2)}_{i,k}:i\in \mathcal I_k\}$ are i.i.d.\ conditional on $\mathcal F_{-k}$, it follows that
	\[
	\Var_0\!\left(\frac{1}{\sqrt n}\sum_{i\in \mathcal I_k}(R^{(2)}_{i,k}-\widetilde R_{i,k})\ \Big|\ \mathcal F_{-k}\right)
	=
	\frac{| \mathcal I_k|}{n}\Var_0(R^{(2)}_{1,k}-\widetilde R_{1,k}\mid\mathcal F_{-k})
	\lesssim
	\tau_n^2+n^{-1},
	\]
	and therefore
	\[
	\frac{1}{\sqrt n}\sum_{i\in \mathcal I_k}(R^{(2)}_{i,k}-\widetilde R_{i,k})
	=
	O_p(\tau_n+n^{-1/2})
	=
	o_p(1).
	\]
	It remains to control the remainder contribution coming from $\widetilde R_{i,k}$.
	By  Assumption \ref{ass:A9},  $\|\widetilde g^{(-k)}-g_o\|_0 = O_p(\tau_n)$ and $\|\widetilde \btheta^{(-k)}-\btheta_o\| = O_p(n^{-1/2})$, uniformly in $k$. Plugging these into \eqref{eq:Rtilde_bound}  yields
	\[
	E_0\!\left[|\widetilde R_{i,k}|\mid\mathcal F_{-k}\right]
	=
	O_p(\tau_n^2+n^{-1}),
	\]
	so
	\[
	E_0\!\left[\left|\frac{1}{\sqrt n}\sum_{i\in \mathcal I_k}\widetilde R_{i,k}\right|\ \Big|\ \mathcal F_{-k}\right]
	\le
	\frac{| \mathcal I_k|}{\sqrt n}\,E_0\!\left[|\widetilde R_{1,k}|\mid\mathcal F_{-k}\right]
	=
	O_p(\sqrt n\,\tau_n^2)+O_p(n^{-1/2})
	=
	o_p(1),
	\]
	because $\sqrt n\,\tau_n^2\to 0$.
	Combining the bounds for $R^{(1)}_{i,k}$ and $R^{(2)}_{i,k}$ yields
	\[
	\frac{1}{\sqrt n}\sum_{i\in \mathcal I_k}R_{i,k}
	=o_p(1)
	\qquad\text{for each }k \, .
	\]
	Since $K$ is fixed, summing over $k=1,\dots,K$ proves \eqref{eq:plugin_ellstar_rootn} and completes the proof.
\end{proof}

\InverseInfo*

\begin{proof}[Proof of \Cref{lem:inverse-info-plug-in}]
	Write $\mathbb P_{n,-k}$ for the empirical measure on the training subsample $\mathcal I_k^c$ and define
	\[
	\Psi_h(\bW)=\Delta\,\{\bZ-h(T,\bX)\}^{\otimes 2}.
	\]
	Then $\widehat I^{(-k)}=\mathbb P_{n,-k}\Psi_{\widehat \bg^{*,(-k)}}$ and $I(\theta_o)=\mathbb P_0\Psi_{g^*}$, hence
	\[
	\widehat I^{(-k)}-I(\btheta_o)
	=
	(\mathbb P_{n,-k}-\mathbb P_0)\Psi_{\widehat \bg^{*,(-k)}}
	+
	\mathbb P_0\big(\Psi_{\widehat \bg^{*,(-k)}}-\Psi_{\bg^*}\big).
	\]
	
	We first control the second (bias) term. Let $r^*(T,\bX)=\bZ-g^*(T,\bX)$ and
	$\widehat r^{(-k)}(T,\bX)=\bZ-\widehat g^{*,(-k)}(T,\bX)$.
	Expanding,
	\[
	\Psi_{\widehat \bg^{*,(-k)}}(\bW)-\Psi_{\bg^*}(\bW)
	=
	\Delta\Big[
	(\widehat r^{(-k)}-r^*)\otimes(\widehat r^{(-k)}-r^*)
	+(\widehat r^{(-k)}-r^*)\otimes r^*
	+r^*\otimes(\widehat r^{(-k)}-r^*)
	\Big].
	\]
	Under \ref{ass:A2} and \ref{ass:A6}, $\|r^*\|$ is bounded almost surely and $\|\widehat r^{(-k)}-r^*\|=\|\widehat \bg^{*,(-k)}- \bg^*\|$.
	Therefore, for a constant $C_1<\infty$,
	\[
	\big\|\mathbb P_0(\Psi_{\widehat \bg^{*,(-k)}}-\Psi_{\bg^*})\big\|
	\le
	C_1\,\mathbb P_0\|\widehat \bg^{*,(-k)}(T,\bX)- \bg^*(T,\bX)\|
	+
	C_1\,\mathbb P_0\|\widehat \bg^{*,(-k)}(T,\bX)- \bg^*(T,\bX)\|^2.
	\]
	By Cauchy--Schwarz and the definition of $\|\cdot\|_0$, each of the two expectations above is
	$O_p(\|\widehat \bg^{*,(-k)}- \bg^*\|_0)$ and $O_p(\|\widehat \bg^{*,(-k)}-\bg^*\|_0^2)$, respectively.
	Using \ref{ass:A9}, the bias term is $O_p(\tau_n)+O_p(\tau_n^2)=o_p(1)$, uniformly in $k$.
	
	For the empirical term $(\mathbb P_{n,-k}-\mathbb P_0)\Psi_{\widehat \bg^{*,(-k)}}$,
	condition on the training sigma-field in \ref{ass:A9}. Conditional on it, $\widehat \bg^{*,(-k)}$ is fixed
	and the summands $\Psi_{\widehat \bg^{*,(-k)}}(\bW_j)$, $j\in \mathcal I_k^c$, are i.i.d.
	Moreover, under \ref{ass:A2}, \ref{ass:A6} and boundedness of $\widehat \bg^{*,(-k)}$ induced by the model setup,
	$\|\Psi_{\widehat \bg^{*,(-k)}}(\bW)\|$ is uniformly bounded by a constant.
	Hence each entry of $(\mathbb P_{n,-k}-\mathbb P_0)\Psi_{\widehat \bg^{*,(-k)}}$ has conditional
	variance of order $|\mathcal I_k^c|^{-1}$, and a conditional Chebyshev (or LLN) yields
	\[
	(\mathbb P_{n,-k}-\mathbb P_0)\Psi_{\widehat \bg^{*,(-k)}}=o_p(1),
	\]
	uniformly in $k$. Combining with the bias bound proves (a).
	
	For (b), fix $k$ and write $A=\widehat I^{(-k)}$ and $B=I(\theta_o)$.
	By (a), $\|A-B\|=o_p(1)$ uniformly in $k$, and by \ref{ass:A10}, $\|B^{-1}\|\le c_0^{-1}$.
	On the event $\|A-B\|\le (2\|B^{-1}\|)^{-1}$, the matrix $A$ is invertible and $\|A^{-1}\|\le 2\|B^{-1}\|$.
	Using the resolvent identity,
	\[
	A^{-1}-B^{-1}=A^{-1}(B-A)B^{-1},
	\]
	we obtain
	\[
	\|A^{-1}-B^{-1}\|
	\le
	\|A^{-1}\|\,\|A-B\|\,\|B^{-1}\|
	\le
	2\|B^{-1}\|^2\,\|A-B\|.
	\]
	Taking maxima over $k$ and using $\max_k\|A-B\|=o_p(1)$ gives (b).
	
	For (c), write
	\[
	R_n=
	\frac{1}{\sqrt n}\sum_{k=1}^K\sum_{i\in \mathcal I_k}
	\Big(
	\{\widehat I^{(-k)}\}^{-1}-I(\btheta_o)^{-1}
	\Big)\,
	\ell^*_{\theta_o}(\bW_i).
	\]
	Then
	\[
	\|R_n\|
	\le
	\Big(\max_{1\le k\le K}\big\|\{\widehat I^{(-k)}\}^{-1}-I(\btheta_o)^{-1}\big\|\Big)
	\,
	\Big\|
	\frac{1}{\sqrt n}\sum_{i=1}^n \ell^*_{\theta_o}(\bW_i)
	\Big\|.
	\]
	The first factor is $o_p(1)$ by (b). The second factor is $O_p(1)$ by the multivariate CLT
	under $E_0\|\ell^*_{\theta_o}(\bW)\|^2<\infty$. Therefore $\|R_n\|=o_p(1)$, proving (c).
\end{proof}
	
\DiscreteGrid*

\begin{proof}[Proof of \cref{lem:L2phi_discrete}]
	Write
	\[
	\widehat\phi_t^{(-k)}-\phi^0_{t,n}
	=
	\Big(\widehat\phi^{(-k)}_{t,g}-\phi^{0}_{t,g,n}\Big)
	+
	\Big(\widehat\phi^{(-k)}_{t,\theta}-\phi^{0}_{t,\theta,n}\Big),
	\]
	and control each term in $L^2(P_0)$. We start with the nuisance Riesz part.
	For brevity write
	\[
	\widehat\psi_j^{(-k)}=\widehat\psi^{(-k)}_t(t_j,\bX;\bX_0,\bZ_0) \, ,
	\qquad
	\psi^0_j=\psi^0_t(t_j,\bX;\bX_0,\bZ_0).
	\]
	Also write
	$\Delta\widehat M^{(-k)}_j=\Delta N_j-\Delta A^{(-k)}_j$ and
	$\Delta M^0_j=\Delta N_j-\Delta A^0_j$,
	where
	$$	\Delta A^{(-k)}_j=(t_j-t_{j-1})Y_j\widehat h^{(-k)}(t_j\mid\bX,\bZ)$$
	and
	$$\Delta A^0_j=(t_j-t_{j-1})Y_jh_o(t_j\mid\bX,\bZ)\, .$$
	Then
	\begin{equation}\label{eq:phi-dif}
			\widehat\phi^{(-k)}_{t,g}-\phi^{0}_{t,g,n}
		=
		\sum_{j=1}^{m_n}\big(\widehat\psi_j^{(-k)}-\psi^0_j\big)\Delta M^0_j
		+
		\sum_{j=1}^{m_n}\widehat\psi_j^{(-k)}\big(\Delta\widehat M^{(-k)}_j-\Delta M^0_j\big).
	\end{equation}
	
	The first sum is a martingale transform with predictable integrand
	$\widehat\psi_j^{(-k)}-\psi^0_j$, hence by the discrete-time martingale isometry,
	\[
	\EE_0\!\left[\left(\sum_{j=1}^{m_n}(\widehat\psi_j^{(-k)}-\psi^0_j)\Delta M^0_j\right)^2\Bigg|\mathcal F_{-k}\right]
	=
	\EE_0\!\left[\sum_{j=1}^{m_n}(\widehat\psi_j^{(-k)}-\psi^0_j)^2\,\Delta A^0_j\Bigg|\mathcal F_{-k}\right].
	\]
	
	In the discrete finite-support case, $\psi_t^0$ has the closed form in \eqref{eq:psi_closed}, and
	$\widehat\psi_t^{(-k)}$ is obtained by replacing $h_o$ and the nuisance quantities appearing
	in \eqref{eq:psi_closed} by their foldwise plug-in versions. Write
	\[
	\pi_0=\Pr(\bX=\bX_0),\qquad
	\widehat\pi^{(-k)}=\widehat\PP^{(-k)}(\bX=\bX_0),
	\]
	and
	\[
	A_{\bX_0}(u)=E_0\!\left[Y(u)h_o(u\mid \bX,\bZ)\mid \bX=\bX_0\right],
	\qquad
	\widehat A^{(-k)}_{\bX_0}(u)
	=
	\widehat E_{-k}\!\left[Y(u)\widehat h^{(-k)}(u\mid \bX,\bZ)\mid \bX=\bX_0\right].
	\]
	Because \(\Pr(X=X_0)=\pi_0>0\) and \(X\) has finite support, we have
	\[
	\inf_k \widehat\pi^{(-k)} \ge c_\pi>0
	\]
	with probability tending to one. Next, by the bounded-sieve restriction for \(\widehat g^{(-k)}\), boundedness of \(\bZ\), and boundedness of the parameter space for \(\btheta\), there exists a deterministic constant \(c_\lambda>0\) such that
	\[
	\inf_{k}\inf_{u\le t}\inf_{\bx,\bz}\widehat h^{(-k)}(u\mid \bx,\bz)\ge c_\lambda
	\]
	with probability tending to one. Hence, for \(u\le t\),
	\[
	\widehat A^{(-k)}_{\bX_0}(u)
	=
	\widehat E_{-k}\!\left[Y(u)\widehat h^{(-k)}(u\mid \bX,\bZ)\mid \bX=\bX_0\right]
	\ge
	c_\lambda\,\widehat E_{-k}\!\left[Y(u)\mid \bX=\bX_0\right].
	\]
	By the positivity assumptions, there exists \(c_Y>0\) such that
	\[
	\inf_{u\le t} E_0\!\left[Y(u)\mid \bX=\bX_0\right]\ge c_Y.
	\]
	Moreover, since \(Y(u)\) is monotone, the class \(\{Y(u):0\le u\le t\}\) is Glivenko--Cantelli, and therefore
	\[
	\sup_k\sup_{u\le t}
	\left|
	\widehat E_{-k}\!\left[Y(u)\mid \bX=\bX_0\right]
	-
	E_0\!\left[Y(u)\mid \bX=\bX_0\right]
	\right|
	=o_p(1).
	\]
	It follows that
	\[
	\inf_k\min_{j:t_j\le t}\widehat A^{(-k)}_{\bX_0}(t_j)\ge c_0
	\]
	with probability tending to one for some constant \(c_0>0\).

	Because $\Pr(\bX=\bX_0)\ge \pi_0>0$ and $\bX$ has finite support, $\widehat\pi^{(-k)}\to \pi_0$
	in probability uniformly in $k$. Define the event
	\[
	\mathcal E_n
	=
	\left\{
	\inf_k \widehat\pi^{(-k)} \ge c_0
	\ \text{ and }\
	\inf_k \min_{j:t_j\le t}\widehat A^{(-k)}_{\bX_0}(t_j)\ge c_0
	\right\}.
	\]
	By the above arguments, there exists \(c_0>0\) such that
	\[
	\PP(\mathcal E_n)\to 1.
	\]
	On $\mathcal E_n$, all denominator terms appearing in $\widehat\psi_t^{(-k)}$ are uniformly bounded away from zero.
	
	Now
	\[
	\widehat\psi^{(-k)}_{t}(u,\bX;\bX_0,\bZ_0)
	=
	\mathbf 1(u\le t)\mathbf 1(\bX=\bX_0)\,
	\frac{\widehat h^{(-k)}(u\mid \bX_0,\bZ_0)}
	{\widehat\pi^{(-k)}\,\widehat A^{(-k)}_{\bX_0}(u)},
	\]
	while
	\[
	\psi^{0}_{t}(u,\bX;\bX_0,\bZ_0)
	=
	\mathbf 1(u\le t)\mathbf 1(\bX=\bX_0)\,
	\frac{h_o(u\mid \bX_0,\bZ_0)}
	{\pi_0\,A_{\bX_0}(u)}.
	\]
	For readability, suppress the common arguments $(u\mid \bX_0,\bZ_0)$ and write
	\[
	\Delta_{h,j}^{(-k)}
	=
	\widehat h^{(-k)}(t_j\mid \bX_0,\bZ_0)-h_o(t_j\mid \bX_0,\bZ_0),
	\]
	\[
	\Delta_{\pi}^{(-k)}=\widehat\pi^{(-k)}-\pi_0,
	\qquad
	\Delta_{A,j}^{(-k)}=\widehat A^{(-k)}_{\bX_0}(t_j)-A_{\bX_0}(t_j).
	\]
	On $\mathcal E_n$, a first-order ratio expansion gives, for each $j$,
	\[
	\widehat\psi_j^{(-k)}-\psi_j^0
	=
	\mathbf 1(t_j\le t)\mathbf 1(\bX=\bX_0)
	\Big\{
	L^{(-k)}_{h,j}+L^{(-k)}_{\pi,j}+L^{(-k)}_{A,j}+R^{(-k)}_{\psi,j}
	\Big\},
	\]
	where
	\[
	L^{(-k)}_{h,j}
	=
	\frac{\Delta_{h,j}^{(-k)}}{\pi_0\,A_{\bX_0}(t_j)},
	\qquad
	L^{(-k)}_{\pi,j}
	=
	\frac{h_o(t_j\mid \bX_0,\bZ_0)}{A_{\bX_0}(t_j)}
	\Big(\frac{1}{\widehat\pi^{(-k)}}-\frac{1}{\pi_0}\Big),
	\]
	\[
	L^{(-k)}_{A,j}
	=
	\frac{h_o(t_j\mid \bX_0,\bZ_0)}{\widehat\pi^{(-k)}}
	\Big(\frac{1}{\widehat A^{(-k)}_{\bX_0}(t_j)}-\frac{1}{A_{\bX_0}(t_j)}\Big),
	\]
	and $R^{(-k)}_{\psi,j}$ is a finite sum of products of estimation errors, hence second order.
	Using $(a+b+c+d)^2\lesssim a^2+b^2+c^2+d^2$, we obtain on $\mathcal E_n$,
	\[
	(\widehat\psi_j^{(-k)}-\psi_j^0)^2
	\lesssim
	\mathbf 1(\bX=\bX_0)
	\Big\{
	|L^{(-k)}_{h,j}|^2
	+
	|L^{(-k)}_{\pi,j}|^2
	+
	|L^{(-k)}_{A,j}|^2
	+
	|R^{(-k)}_{\psi,j}|^2
	\Big\}.
	\]
	Therefore
	\begin{align*}
		\EE_0\!\left[\sum_{j=1}^{m_n}(\widehat\psi_j^{(-k)}-\psi_j^0)^2\,\Delta A^0_j\right]
		&\lesssim
		T^{(-k)}_{h}
		+
		T^{(-k)}_{\pi}
		+
		T^{(-k)}_{A}
		+
		T^{(-k)}_{R},
	\end{align*}
	where
	\[
	T^{(-k)}_{h}
	=
	\EE_0\!\left[
	\sum_{j=1}^{m_n}\mathbf 1(\bX=\bX_0)\,
	|\Delta_{h,j}^{(-k)}|^2\,\Delta A^0_j
	\right],
	\]
	\[
	T^{(-k)}_{\pi}
	=
	|\Delta_{\pi}^{(-k)}|^2\,
	\EE_0\!\left[\sum_{j=1}^{m_n}\mathbf 1(\bX=\bX_0)\,\Delta A^0_j\right],
	\]
	\[
	T^{(-k)}_{A}
	=
	\EE_0\!\left[
	\sum_{j=1}^{m_n}\mathbf 1(\bX=\bX_0)\,
	|\Delta_{A,j}^{(-k)}|^2\,\Delta A^0_j
	\right],
	\]
	and $T_R^{(-k)}$ is the analogous contribution of the second-order remainder $R^{(-k)}_{\psi,j}$.
	
	We bound these four terms separately.
	
	For $T^{(-k)}_{h}$, note that
	\[
	\Delta A^0_j=(t_j-t_{j-1})Y(t_{j-1})h_o(t_j\mid \bX,\bZ),
	\]
	so
	\[
	T^{(-k)}_{h}
	=
	\EE_0\!\left[
	\sum_{j=1}^{m_n}(t_j-t_{j-1})\,\mathbf 1(\bX=\bX_0)\,
	Y(t_{j-1})h_o(t_j\mid \bX,\bZ)\,
	|\Delta_{h,j}^{(-k)}|^2
	\right].
	\]
	By the same exponential-link expansion used in Lemma~3,
	$\widehat h^{(-k)}-h_o$ is linear in
	$\widetilde g^{(-k)}-g_o$ and $\widetilde \btheta^{(-k)}-\btheta_o$
	plus a quadratic remainder.
	Therefore, exactly as in Lemma~3,
	\[
	T^{(-k)}_{h}
	\lesssim
	\|\widetilde g^{(-k)}-g_o\|_0^2
	+
	\|\widetilde\btheta^{(-k)}-\btheta_o\|^2
	+
	o_p(1)
	=
O_p(\tau_n^2)+O_p(n^{-1})+o_p(1)
	=
	o_p(1),
	\]
	uniformly in $k$, by Assumption \ref{ass:A9}.

For \(T_A^{(-k)}\), write
\[
\Delta_{A,j}^{(-k)}=B_{A,j}^{(-k)}+D_{A,j}^{(-k)},
\]
where
\[
B_{A,j}^{(-k)}
=
\widehat E_{-k}\!\left\{Y(t_{j-1})\widehat h^{(-k)}(t_j\mid \bX,\bZ)\mid \bX=\bX_0\right\}
-
E_0\!\left\{Y(t_{j-1})\widehat h^{(-k)}(t_j\mid \bX,\bZ)\mid \bX=\bX_0\right\},
\]
and
\[
D_{A,j}^{(-k)}
=
E_0\!\left\{Y(t_{j-1})\big(\widehat h^{(-k)}(t_j\mid \bX,\bZ)-h_o(t_j\mid \bX,\bZ)\big)\mid \bX=\bX_0\right\}.
\]
Hence,
\[
T_A^{(-k)}
\lesssim
T_{A,1}^{(-k)}+T_{A,2}^{(-k)},
\]
where
\[
T_{A,1}^{(-k)}
=
E_0\!\left[
\sum_{j=1}^{m_n}\mathbf 1(\bX=\bX_0)\,
|B_{A,j}^{(-k)}|^2\,\Delta A_j^0
\right],
\]
and
\[
T_{A,2}^{(-k)}
=
E_0\!\left[
\sum_{j=1}^{m_n}\mathbf 1(\bX=\bX_0)\,
|D_{A,j}^{(-k)}|^2\,\Delta A_j^0
\right].
\]
We first bound \(T_{A,1}^{(-k)}\). Because \(\bX\) has finite support and \(\widehat h^{(-k)}\) is uniformly bounded
with probability tending to one, the variables
\[
Y(t_{j-1})\widehat h^{(-k)}(t_j\mid \bX,\bZ)\mathbf 1(\bX=\bX_0)
\]
are uniformly bounded over \(j\) and \(k\). Therefore, conditional on \(\mathcal F_{-k}\),
\[
E_0\!\left[\big(B_{A,j}^{(-k)}\big)^2\mid \mathcal F_{-k}\right]\lesssim \frac{1}{n}
\qquad\text{uniformly in }j,k.
\]
Thus
\begin{align*}
	E_0\!\left[T_{A,1}^{(-k)}\mid \mathcal F_{-k}\right]
	&\lesssim
	\frac{1}{n}\,
	E_0\!\left[\sum_{j=1}^{m_n}\mathbf 1(\bX=\bX_0)\,\Delta A_j^0 \,\middle|\, \mathcal F_{-k}\right] \\
	&\lesssim
	\frac{1}{n}\,
	E_0\!\left[\sum_{j=1}^{m_n}\Delta A_j^0\right]
	=
	O\!\left(\frac{1}{n}\right),
\end{align*}
since
\[
\sum_{j=1}^{m_n}\Delta A_j^0
=
\sum_{j=1}^{m_n}(t_j-t_{j-1})Y(t_{j-1})h_o(t_j\mid \bX,\bZ)
=O_{p}(1).
\]
Hence
\[
T_{A,1}^{(-k)}=O_p(n^{-1})=o_p(1)
\]
uniformly in \(k\).

Next, for \(T_{A,2}^{(-k)}\), Jensen's inequality gives
\[
|D_{A,j}^{(-k)}|^2
\le
E_0\!\left[
Y(t_{j-1})\big(\widehat h^{(-k)}(t_j\mid \bX,\bZ)-h_o(t_j\mid \bX,\bZ)\big)^2
\mid \bX=\bX_0
\right].
\]
Therefore,
\begin{align*}
	T_{A,2}^{(-k)}
	&\lesssim
	E_0\!\left[
	\sum_{j=1}^{m_n}\mathbf 1(\bX=\bX_0)\,
	E_0\!\left\{
	Y(t_{j-1})\big(\widehat  h^{(-k)}(t_j\mid \bX,\bZ)-h_o(t_j\mid \bX,\bZ)\big)^2
	\mid \bX=\bX_0
	\right\}
	\Delta A_j^0
	\right].
\end{align*}
Since \(\bX\) has finite support, this is bounded by a constant multiple of
\[
E_0\!\left[
\sum_{j=1}^{m_n}(t_j-t_{j-1})\,Y(t_{j-1})h_o(t_j\mid \bX,\bZ)\,
\big(\widehat h^{(-k)}(t_j\mid \bX,\bZ)-h_o(t_j\mid \bX,\bZ)\big)^2
\right],
\]
which is exactly the same integrated hazard-error quantity already controlled in the bound for
\(T_h^{(-k)}\). Hence, by the exponential-link expansion together with \ref{ass:A9} and the same quadratic remainder control as in Lemma~3,
\[
T_{A,2}^{(-k)}=o_p(1)
\]
uniformly in \(k\). Combining the two bounds yields
\[
T_A^{(-k)}=o_p(1)
\qquad\text{uniformly in }k.
\]

	Finally, $T_R^{(-k)}=o_p(1)$ uniformly in $k$, because each term in $R^{(-k)}_{\psi,j}$ is a product
	of first-order estimation errors and hence contributes only second-order terms after summation with
	weight $\Delta A_j^0$.
	
	Combining the four bounds yields
	\[
	\EE_0\!\left[\sum_{j=1}^{m_n}(\widehat\psi_j^{(-k)}-\psi_j^0)^2\,\Delta A^0_j\right]
	=o_p(1),
	\qquad \text{uniformly in }k.
	\]
	Consequently, the first sum is $o_p(1)$ in $L^2(P_0)$ uniformly in $k$.
	
	For the second sum of (\ref{eq:phi-dif}), note that
	\[
	\Delta\widehat M^{(-k)}_j-\Delta M^0_j
	=
	-(\Delta A^{(-k)}_j-\Delta A^0_j),
	\]
	so we set
	\[
	S_{2,k}
	=
	\sum_{j=1}^{m_n}\widehat\psi_j^{(-k)}\big(\Delta \widehat M_j^{(-k)}-\Delta M_j^0\big)
	=
	-\sum_{j=1}^{m_n}\widehat\psi_j^{(-k)}(t_j-t_{j-1})Y_j
	\big\{\widehat h^{(-k)}(t_j\mid \bX,\bZ)-h_o(t_j\mid \bX,\bZ)\big\}.
	\]
	On the event \(\mathcal E_n\), the discrete closed form of \(\widehat\psi_j^{(-k)}\) and boundedness of the denominators imply
	\[
	\sup_{k}\sup_{j:t_j\le t}\big|\widehat\psi_j^{(-k)}\big|=O_p(1).
	\]
	Hence, by Cauchy--Schwarz,
	\begin{align*}
		|S_{2,k}|^2
		&\lesssim
		\left(
		\sum_{j=1}^{m_n}(t_j-t_{j-1})Y_j
		\big|\widehat h^{(-k)}(t_j\mid \bX,\bZ)-h_o(t_j\mid \bX,\bZ)\big|
		\right)^2 \\
		&\le
		\left(\sum_{j=1}^{m_n}\Delta A_j^0\right)
		\left(
		\sum_{j=1}^{m_n}
		(t_j-t_{j-1})Y_j
		\frac{\big(\widehat h^{(-k)}(t_j\mid \bX,\bZ)-h_o(t_j\mid \bX,\bZ)\big)^2}
		{h_o(t_j\mid \bX,\bZ)}
		\right).
	\end{align*}
	Since \(h_o\) is bounded away from zero and infinity, and
	\[
	\sum_{j=1}^{m_n}\Delta A_j^0
	=
	\sum_{j=1}^{m_n}(t_j-t_{j-1})Y_jh_o(t_j\mid \bX,\bZ)
	=O_{P_0}(1),
	\]
	it follows that
	\[
	\EE_0\!\big[|S_{2,k}|^2\big]
	\lesssim
	\EE_0\!\left[
	\sum_{j=1}^{m_n}(t_j-t_{j-1})Y_j
	\big(\widehat h^{(-k)}(t_j\mid \bX,\bZ)-h_o(t_j\mid \bX,\bZ)\big)^2
	\right].
	\]
	The right-hand side is exactly the same weighted integrated hazard-error quantity already controlled above in the bound for \(T_h^{(-k)}\). Therefore, by the exponential-link expansion, Assumption \ref{ass:A9}, and the same quadratic remainder control as in Lemma~3,
	\[
	\EE_0\!\big[|S_{2,k}|^2\big]=o_p(1)
	\qquad\text{uniformly in }k.
	\]
	Thus the second sum of (\ref{eq:phi-dif}) is \(o_p(1)\) in \(L^2(P_0)\), uniformly in \(k\).	Therefore,
	\[
	\EE_0\Big[\big(\widehat\phi^{(-k)}_{t,g}-\phi^{0}_{t,g,n}\big)^2\Big]=o_p(1)
	\quad \text{uniformly in }k.
	\]
	For the parametric part, we write
	\[
	\widehat\phi^{(-k)}_{t,\theta}-\phi^{0}_{t,\theta,n}
	=
	\Big(\widehat c_t^{(-k)}-c_{0,t,n}\Big)^\top I(\btheta_o)^{-1}\ell^*_{\theta_o}
	+
	\widehat c_t^{(-k)\top}\Big(\{\widehat I^{(-k)}\}^{-1}-I(\btheta_o)^{-1}\Big)\ell^*_{\theta_o}
	+
	\widehat c_t^{(-k)\top}\{\widehat I^{(-k)}\}^{-1}\Big(\widehat\ell^{*,(-k)}_{\theta_o}-\ell^*_{\theta_o}\Big),
	\]
	where all quantities are evaluated at $(\bX_0,\bZ_0)$ and $\bW$ (we suppress these arguments).
	In the discrete case,
	\[
	\widehat c_t^{(-k)}(\bX_0,\bZ_0)
	=
	\sum_{j:t_j\le t}(t_j-t_{j-1})\widehat h^{(-k)}(t_j\mid\bX_0,\bZ_0)
	\Big\{\bZ_0-\widehat \bg^{*,(-k)}(t_j,\bX_0)\Big\},
	\]
	so by boundedness and the rate conditions in Assumption \ref{ass:A9}
	$\|\widehat c_t^{(-k)}-c_{0,t,n}\|=o_p(1)$ uniformly in $k$.
	Since $\EE_0\|\ell^*_{\theta_o}(\bW)\|^2<\infty$ under the standing integrability assumptions,
	the first term is $o_p(1)$ in $L^2(P_0)$.
	
	For the second term, Lemma~\ref{lem:inverse-info-plug-in} yields
	\[
	\max_k\|(\widehat I^{(-k)})^{-1}-I(\btheta_o)^{-1}\|=o_p(1),
	\]
	while $\widehat c_t^{(-k)}$ is bounded in probability
	and $\ell^*_{\theta_o}\in L^2(P_0)$, so the product is $o_p(1)$ in $L^2(P_0)$ uniformly in $k$.
	
By the decomposition in the proof of Lemma~3, we have
\[
E_0\!\left[\big(\widehat\ell_{\theta_o}^{*,(-k)}-\ell_{\theta_o}^*\big)^2 \mid F_{-k}\right]
= O_p(\tau_n^2+n^{-1}),
\]
uniformly in \(k\). Therefore
\[
\widehat\ell_{\theta_o}^{*,(-k)}-\ell_{\theta_o}^* = O_p(\tau_n)
\quad\text{in }L^2(\PP_0),
\]
and the third term is \(o_p(1)\) in \(L^2(\PP_0)\).

Putting the three bounds together gives
	\[
	\EE_0\Big[\big(\widehat\phi^{(-k)}_{t,\theta}-\phi^{0}_{t,\theta,n}\big)^2\Big]=o_p(1)
	\quad \text{uniformly in }k.
	\]
	
	\medskip
	Combining the nuisance and parametric parts proves
	\[
	\EE_0\Big[(\widehat\phi^{(-k)}_t-\phi^0_{t,n})^2\Big]=o_p(1)
	\quad \text{uniformly in }k.
	\]
	Finally, $\sqrt n\,\delta_n\to 0$ implies that the grid approximation error between $\phi^0_{t,n}$ and $\phi_t^0$
	is $o(1)$ in $L^2(P_0)$, so the same conclusion holds with $\phi_t^0$ in place of $\phi^0_{t,n}$.
\end{proof}

\DiscreteCLT*

\begin{proof}[Proof of \Cref{thm:cf_discrete_CLT}]
	We suppress $(t,\bX_0,\bZ_0)$ from the notation when no confusion arises.
	Let $\{ \mathcal I_k\}_{k=1}^K$ be the fold partition of $\{1,\dots,n\}$ and, for each $k$, let $\mathcal F_{-k}$ be the $\sigma$--field generated by the training subsample $\{\bW_i:i\in \mathcal I_k^c\}$, the fold split, and $(\bX_0,\bZ_0)$.
	By construction, $\{\bW_i:i\in \mathcal I_k\}$ is independent of $\mathcal F_{-k}$. For each fold $k$, write
	\[
	\delta g^{(-k)}(u,\bX)=\widetilde g^{(-k)}(u,\bX)-g_o(u,\bX),
	\qquad
	\delta\btheta^{(-k)}=\widetilde\btheta^{(-k)}-\btheta_o,
	\]
	and  write $\Delta t_j=t_j-t_{j-1}$.
		The cross-fitted one-step estimator is
	\[
	\widehat H^{\,1,{\sf cf}}
	=
	\widehat H^{\,{\sf cf}}
	+
	\frac{1}{n}\sum_{k=1}^K\sum_{i\in \mathcal I_k}\widehat\phi_t^{(-k)}(\bW_i),
	\qquad
	\widehat H^{\,{\sf cf}}
	=
	\sum_{k=1}^K \frac{|\mathcal{I}_k|}{n} \widehat H^{(-k)},
	\]
	where the out-of-fold plug-in estimator uses the Riemann sum
	\[
	\widehat H^{(-k)}(t,\bX_0,\bZ_0)
	=
	\sum_{{j:t_j\le t}}
	\widehat h^{(-k)}(t_{j}\mid \bX_0,\bZ_0)\,\Delta t_j \, .
	\]
	Introduce also the Riemann approximation of the \emph{true} cumulative hazard,
	\[
	\widetilde H_o(t,\bX_0,\bZ_0)
	=
	\sum_{{j:t_j\le t}}
	h_o(t_{j}\mid \bX_0,\bZ_0)\,\Delta t_j,
	\qquad
	H_o(t,\bX_0,\bZ_0)=\int_0^t h_o(u\mid \bX_0,\bZ_0)\,du.
	\]
	Under the time-discretization condition \ref{ass:A11}, we have
	\begin{equation}
		\label{eq:riemann_true_error}
		\big|\widetilde H_o-H_o\big| \le C\,\delta_n,
	\end{equation}
	for a finite constant $C$ (possibly random but tight under the joint law), and thus $\sqrt n(\widetilde H_o-H_o)=o(1)$ whenever $\sqrt n\,\delta_n\to0$.
	
	Decompose
	\[
	\widehat H^{\,1,{\sf cf}}-H_o
	=
	\Big(\widehat H^{\,{\sf cf}}-\widetilde H_o\Big)
	+
	\Big(\widetilde H_o-H_o\Big)
	+
	\frac{1}{n}\sum_{k=1}^K\sum_{i\in \mathcal I_k}\widehat\phi_t^{(-k)}(\bW_i).
	\]
	The second term is negligible by \eqref{eq:riemann_true_error}, so it suffices to analyze
	\[
	A_n=\widehat H^{\,{\sf cf}}-\widetilde H_o,
	\qquad
	B_n=\frac{1}{n}\sum_{k=1}^K\sum_{i\in \mathcal I_k}\widehat\phi_t^{(-k)}(\bW_i).
	\]
	Write $A_n=\sum_{k=1}^K \frac{|\mathcal I_k|}{n}(\widehat H^{(-k)}-\widetilde H_o)$.
	For each $k$,
	\[
	\widehat H^{(-k)}-\widetilde H_o
	=
	\sum_{j:t_j\le t}
	\Big\{\widehat h^{(-k)}(t_{j}\mid\bX_0,\bZ_0)-h_o(t_{j}\mid\bX_0,\bZ_0)\Big\}\Delta t_j.
	\]
	Define the fold-wise log-perturbation on the grid
	\[
	r_j^{(-k)} =
	\delta g^{(-k)}(t_{j},\bX_0) + (\delta\btheta^{(-k)})^\top \bZ_0,
	\qquad
	j=1,\dots,m_n.
	\]
	Then
	\[
	\widehat h^{(-k)}(t_{j}\mid\bX_0,\bZ_0)
	=
	h_o(t_{j}\mid\bX_0,\bZ_0)\exp\{r_j^{(-k)}\}.
	\]
	Using $\exp(r)-1=r+\frac12 r^2\exp(\tilde r)$ for some $\tilde r$ between $0$ and $r$ yields
	\begin{equation}
		\label{eq:plugin_riemann_expand}
		\widehat H^{(-k)}-\widetilde H_o
		=
		\sum_{j:t_j\le t}
		h_o(t_{j}\mid\bX_0,\bZ_0)\,r_j^{(-k)}\,\Delta t_j
		+
		R_{n,1}^{(-k)},
	\end{equation}
	with remainder bounded by
	\begin{equation}
		\label{eq:Rn1_riemann_bound}
		\big|R_{n,1}^{(-k)}\big|
		\le
		\frac12\sum_{j:t_j\le t}
		h_o(t_{j}\mid\bX_0,\bZ_0)\,(r_j^{(-k)})^2\exp\{|r_j^{(-k)}|\}\,\Delta t_j.
	\end{equation}
	The linear term in \eqref{eq:plugin_riemann_expand} decomposes into a $g$-part and a $\btheta$-part:
	\[
	\sum_{j:t_j\le t}
	h_o(t_{j}\mid\bX_0,\bZ_0)\,r_j^{(-k)}\,\Delta t_j
	=
	L_{t,n}\!\big(\delta g^{(-k)}\big)
	+
	\big(\bZ_0\,\widetilde H_o\big)^\top\delta\btheta^{(-k)},
	\]
	where the discrete functional is
	\[
	L_{t,n}(b) =
	\sum_{j:t_j\le t}
	h_o(t_{j}\mid\bX_0,\bZ_0)\,b(t_{j},\bX_0)\,\Delta t_j.
	\]
	Since $\widetilde H_o=H_o+O(\delta_n)$, we may replace $\bZ_0\widetilde H_o$ by $\bZ_0 H_o$ at cost
	$O(\|\bZ_0\|\,\delta_n\,\|\delta\btheta^{(-k)}\|)$, which is $o_p(n^{-1/2})$ under
	$\sqrt n\,\delta_n\to0$ and $\|\delta\btheta^{(-k)}\|=O_p(n^{-1/2})$.
	
	Next decompose the one-step correction into conditional mean plus fluctuation. Define
	\[
	B_{n,0}
	=
	\frac{1}{n}\sum_{k=1}^K\sum_{i\in \mathcal I_k}
	\Big\{\widehat\phi_t^{(-k)}(\bW_i) - \EE_0\big[\widehat\phi_t^{(-k)}(\bW)\mid \mathcal F_{-k}\big]\Big\},
	\qquad
	B_{n,1}
	=
	\sum_{k=1}^K \frac{|\mathcal I_k|}{n}
	\EE_0\big[\widehat\phi_t^{(-k)}(\bW)\mid \mathcal F_{-k}\big],
	\]
	so that $B_n=B_{n,0}+B_{n,1}$.
	In the discrete finite-support setting, the Riesz representer has a closed form and the foldwise estimator
	$\widehat\psi_{t}^{(-k)}$ is $\mathcal F_{-k}$-measurable. 
	For each fold $k$, conditional on $\mathcal F_{-k}$ the  estimators
	$\widetilde g^{(-k)}$, $\widetilde \btheta^{(-k)}$, and
	$\widehat\psi_{t}^{(-k)}$ are fixed functions,
	and the observations $\{\bW_i:i\in \mathcal I_k\}$ are independent of
	$\mathcal F_{-k}$.
	Since, for each $i$, the process
	\[
	M_{0,i}(t)
	=
	N_i(t)
	-
	\int_0^t Y_i(u)h_o(u\mid\bX_i,\bZ_i)\,du
	\]
	is a square-integrable martingale with respect to its natural filtration,
	it follows that for any $\mathcal F_{-k}$-measurable predictable process
	$h(u,\bX_i)$,
	\[
	\EE_0\!\left[
	\int_0^\tau h(u,\bX_i)\,dM_{0,i}(u)
	\ \middle|\ \mathcal F_{-k}
	\right]
	=0.
	\]
	This identity justifies the foldwise conditional mean calculations below.
In particular, the foldwise conditional mean satisfies
	\begin{equation}
		\label{eq:mean_correction_discrete_riemann}
		\EE_0\big[\widehat\phi_t^{(-k)}(\bW)\mid \mathcal F_{-k}\big]
		=
		-\,L_{t,n}\!\big(\delta g^{(-k)}\big)
		-\big(\bZ_0 H_o(t\mid\bX_0,\bZ_0)\big)^\top\delta\btheta^{(-k)}
		+R_{n,2}^{(-k)},
	\end{equation}
	where $R_{n,2}^{(-k)}$ is second order and satisfies
	\begin{equation}
		\label{eq:Rn2_bound}
		\big|R_{n,2}^{(-k)}\big|
		\lesssim
		\|\widetilde g^{(-k)}-g_o\|_0^2+\|\widetilde\btheta^{(-k)}-\btheta_o\|^2,
	\end{equation}
	under the  conditions from Lemma~\ref{lem:EIF_existence_MS} (and the corresponding foldwise analogues),
	together with the uniform boundedness of $h_o$ ensured by Assumptions A1--A4.
	
	Combine \eqref{eq:plugin_riemann_expand} with \eqref{eq:mean_correction_discrete_riemann} and the foldwise expansions
	and use the replacement $\widetilde H_o=H_o+O(\delta_n)$ discussed above to obtain
	\[
	A_n+B_{n,1}
	=
	\sum_{k=1}^K  \frac{|\mathcal I_k|}{n}  \Big(R_{n,1}^{(-k)}+R_{n,2}^{(-k)}\Big)
	+o_p(n^{-1/2})
	\]
	under the joint law.
	It remains to bound the remainders.
	The bound \eqref{eq:Rn1_riemann_bound} together with $\exp\{|r|\}\le 1+|r|\exp\{|r|\}$ implies that, on events where
	$\max_{j\le m_n}|r_j^{(-k)}|$ is bounded (which holds with probability tending to one under
	$\|\delta g^{(-k)}\|_0=o_p(1)$ and $\|\delta\btheta^{(-k)}\|=o_p(1)$ plus the uniform boundedness assumptions),
	\[
	|R_{n,1}^{(-k)}|
	\lesssim
	\sum_{j:t_j\le t} h_o(t_{j}\mid\bX_0,\bZ_0)\,(r_j^{(-k)})^2\,\Delta t_j
	\lesssim
	\big(L_{t,n}(|\delta g^{(-k)}|)\big)^2 + \|\delta\btheta^{(-k)}\|^2 \, .
	\]
	We now bound the discrete linear functional $L_{t,n}(b)$.
	By Cauchy--Schwarz and boundedness of $h_o$,
	\[
	|L_{t,n}(b)|
	\lesssim
	\left(
	\sum_{j:t_j\le t}
	h_o(t_{j}\mid\bX_0,\bZ_0)
	b(t_{j},\bX_0)^2\Delta t_j
	\right)^{1/2}.
	\]
	The discrete sum is a Riemann approximation of
	$\int_0^t h_o(u\mid\bX_0,\bZ_0)b(u,\bX_0)^2du$
	with error $O(\delta_n\|b\|_\infty^2)$.
	Under the boundedness of the sieve and $\sqrt n\,\delta_n\to0$,
	this error is negligible.
	Because $\bX$ takes values in a finite set and $\PP(\bX=x_0)\ge c_0>0$,
	we may write
	\[
	\int_0^t h_o(u\mid\bX_0,\bZ_0)b(u,\bX_0)^2du
	=
	\frac{1}{\PP(\bX=x_0)}
	\EE_0\!\left[
	\int_0^t h_o(u\mid\bX,\bZ)b(u,\bX)^2du
	\,\mathbf 1\{\bX=x_0\}
	\right].
	\]
	Under the overlap condition
	$\EE_0\{Y(u)\mid \bX,\bZ\}\ge \kappa>0$
	and boundedness of $h_o$,
	the right-hand side is bounded by a constant multiple of
	\[
	\EE_0\!\left[
	\int_0^\tau Y(u)h_o(u\mid\bX,\bZ)b(u,\bX)^2du
	\right]
	=
	\|b\|_0^2.
	\]
	Therefore
	\[
	|L_{t,n}(b)|\lesssim \|b\|_0 + o(n^{-1/2}).
	\]
	Consequently, $|R_{n,1}^{(-k)}|\lesssim \|\delta g^{(-k)}\|_0^2+\|\delta\btheta^{(-k)}\|^2$ in probability, uniformly in $k$.
	Together with \eqref{eq:Rn2_bound} we obtain
	\begin{equation}
		\label{eq:AnBn1_bound_final}
		|A_n+B_{n,1}|
		\lesssim
		\max_{k}\|\widetilde g^{(-k)}-g_o\|_0^2
		+
		\max_{k}\|\widetilde \btheta^{(-k)}-\btheta_o\|^2
		+
		o_p(n^{-1/2}).
	\end{equation}
	Under the assumption $\max_k\|\widetilde g^{(-k)}-g_o\|_0 = O_p(\gamma_n\log^2 n)$ with
	$\gamma_n\log^2 n=o(n^{-1/4})$, we have $\sqrt n\,\max_k\|\widetilde g^{(-k)}-g_o\|_0^2=o_p(1)$.
	Under $\max_k\|\widetilde \btheta^{(-k)}-\btheta_o\|=O_p(n^{-1/2})$, we also have
	$$\sqrt n\,\max_k\|\widetilde\btheta^{(-k)}-\btheta_o\|^2=o_p(1) \, . $$
	Therefore \eqref{eq:AnBn1_bound_final} implies
	\[
	\sqrt n\,(A_n+B_{n,1})=o_p(1).
	\]
	
	We now analyze \(B_{n,0}\) fold by fold.
	For each \(k\), define
	\[
	B_{n,0}^{(k)}
	=
	\frac{1}{|\mathcal I_k|}\sum_{i\in\mathcal I_k}
	\Big\{
	\widehat\phi_t^{(-k)}(\bW_i)
	-
	\EE_0\big[\widehat\phi_t^{(-k)}(\bW)\mid \mathcal F_{-k} \big]
	\Big\},
	\]
	so that
	$
	B_{n,0}
	=
	\sum_{k=1}^K \frac{|\mathcal I_k|}{n}\,B_{n,0}^{(k)}$.
   Fix \(k\). Conditional on \(\mathcal F_{-k}\), the variables
	\(\{\bW_i:i\in\mathcal I_k\}\) are i.i.d., and
	\(\widehat\phi_t^{(-k)}\) is fixed.
	Hence the summands in \(B_{n,0}^{(k)}\) are conditionally i.i.d.\ and mean zero.
	Now insert and subtract the true EIF. For \(i\in\mathcal I_k\), write
	\[
	\Delta_{i,k}
	=
	\widehat\phi_t^{(-k)}(\bW_i;\bX_0,\bZ_0)-\phi_t^0(\bW_i;\bX_0,\bZ_0),
	\qquad
	\Delta_k(\bW)
	=
	\widehat\phi_t^{(-k)}(\bW;\bX_0,\bZ_0)-\phi_t^0(\bW;\bX_0,\bZ_0).
	\]
	Then
	\begin{align*}
		B_{n,0}^{(k)}
		&=
		\frac{1}{|\mathcal I_k|}\sum_{i\in\mathcal I_k}
		\Big\{
		\phi_t^0(\bW_i;\bX_0,\bZ_0)
		-
		\EE_0[\phi_t^0(\bW;\bX_0,\bZ_0)\mid \mathcal F_{-k}]
		\Big\} \\
		&\qquad
		+
		\frac{1}{|\mathcal I_k|}\sum_{i\in\mathcal I_k}
		\Big\{
		\Delta_{i,k}
		-
		\EE_0[\Delta_k(\bW)\mid \mathcal F_{-k}]
		\Big\}.
	\end{align*}
	Because \(\bW\) is independent of the training subsample generating \(\mathcal F_{-k}\), and \((\bX_0,\bZ_0)\in\mathcal F_{-k}\), we have
	\[
	\EE_0[\phi_t^0(\bW;\bX_0,\bZ_0)\mid \mathcal F_{-k}]
	=
	\EE_0[\phi_t^0(\bW;\bX_0,\bZ_0)\mid \bX_0,\bZ_0].
	\]
	Since \(\phi_t^0\) is the EIF,
	\[
	\EE_0[\phi_t^0(\bW;\bX_0,\bZ_0)\mid \bX_0,\bZ_0]=0 \, ,
	\]
	and
	\[
	\EE_0[\phi_t^0(\bW;\bX_0,\bZ_0)\mid \mathcal F_{-k}]=0.
	\]
	Therefore
	\[
	B_{n,0}^{(k)}
	=
	\frac{1}{|\mathcal I_k|}\sum_{i\in\mathcal I_k}\phi_t^0(\bW_i;\bX_0,\bZ_0)
	+
	R_{n,3}^{(k)},
	\]
	where
	\[
	R_{n,3}^{(k)}
	=
	\frac{1}{|\mathcal I_k|}\sum_{i\in\mathcal I_k}
	\Big\{
	\Delta_{i,k}
	-
	\EE_0[\Delta_k(\bW)\mid \mathcal F_{-k}]
	\Big\}.
	\]
	We next show that the remainder is negligible.
	Conditional on \(\mathcal F_{-k}\), the summands in \(R_{n,3}^{(k)}\) are centered and independent, so
	\[
	\EE_0\!\left[\left(R_{n,3}^{(k)}\right)^2\middle|\mathcal F_{-k}\right]
	\le
	\frac{1}{|\mathcal I_k|}
	\EE_0\!\left[\Delta_k(\bW)^2\middle|\mathcal F_{-k} \right].
	\]
	Based on Lemma~\ref{lem:L2phi_discrete},
	\[
	\max_{1\le k\le K}
	\EE_0\!\left[\Delta_k(\bW)^2 \right]
	=
	o_p(1),
	\]
	and therefore, by taking expectations,
	\[
	\EE_0\!\left[\left(R_{n,3}^{(k)}\right)^2\right]
	\le
	\frac{1}{|\mathcal I_k|}
	\EE_0\!\left[\Delta_k(\bW)^2\right]
	=
	o\!\left(\frac1n\right),
	\]
	uniformly in $k$, because $|\mathcal I_k|\asymp n$ and $K$ is fixed. Hence
$$
	\sum_{k=1}^K \frac{|\mathcal I_k|}{n}R_{n,3}^{(k)}=o_p(n^{-1/2}).
$$	
	Consequently,
	\begin{align*}
		B_{n,0}
		=
		\sum_{k=1}^K \frac{|\mathcal I_k|}{n}
		\left\{
		\frac{1}{|\mathcal I_k|}\sum_{i\in\mathcal I_k}\phi_t^0(\bW_i;\bX_0,\bZ_0)
		+
		R_{n,3}^{(k)}
		\right\} 
		=
		\frac{1}{n}\sum_{i=1}^n \phi_t^0(\bW_i;\bX_0,\bZ_0)
		+
		o_p(n^{-1/2}),
	\end{align*}
	and hence
	\[
	\sqrt n\,B_{n,0}
	=
	\frac{1}{\sqrt n}\sum_{i=1}^n \phi_t^0(\bW_i;\bX_0,\bZ_0)
	+
	o_p(1).
	\]
	Combining this with $\sqrt n(A_n+B_{n,1})=o_p(1)$ and $\sqrt n(\widetilde H_o-H_o)=o(1)$ yields
	\[
	\sqrt n\big(\widehat H^{\,1,{\sf cf}}-H_o\big)
	=
	\frac{1}{\sqrt n}\sum_{i=1}^n \phi_t^0(\bW_i;\bX_0,\bZ_0)+o_p(1).
	\]
	Conditional on $(\bX_0,\bZ_0)$, the summands $\phi_t^0(\bW_i;\bX_0,\bZ_0)$ are i.i.d.\ with mean $0$ and variance
	$\sigma_H^2(t;\bX_0,\bZ_0)=\Var(\phi_t^0(\bW;\bX_0,\bZ_0)\mid\bX_0,\bZ_0)$, and have finite second moment by Lemma~\ref{lem:EIF_existence_MS}.
	The (conditional) Lindeberg--Feller CLT therefore gives
	\[
	\frac{1}{\sqrt n}\sum_{i=1}^n \phi_t^0(\bW_i;\bX_0,\bZ_0)
	\Rightarrow
	\mathcal N\!\big(0,\sigma_H^2(t;\bX_0,\bZ_0)\big),
	\]
	proving conditional asymptotic normality of
	\(\widehat H^{\,1,{\sf cf}}(t\mid \bX_0,\bZ_0)\)
	given \((\bX_0,\bZ_0)\).
	
	For $\widehat S^{\,1,{\sf cf}}=\exp\{-\widehat H^{\,1,{\sf cf}}\}$, the delta method applies since $x\mapsto e^{-x}$ is continuously differentiable, giving
	\[
	\sqrt n\big(\widehat S^{\,1,{\sf cf}}-S_o\big)
	=
	-\,S_o\,
	\sqrt n\big(\widehat H^{\,1,{\sf cf}}-H_o\big)+o_p(1),
	\]
	and thus
	\[
	\sqrt n\big(\widehat S^{\,1,{\sf cf}}-S_o\big)
	\rightarrow^{\mathcal D}
	\mathcal N\!\big(0, S_o(t\mid\bX_0,\bZ_0)^2\,\sigma_H^2(t;\bX_0,\bZ_0)\big).
	\]
	
	Finally, consider the variance estimator
	\[
	\widehat\sigma_H^2(t;\bX_0,\bZ_0)
	=
	\frac{1}{n}\sum_{k=1}^K\sum_{i\in \mathcal I_k}
	\Big\{\widehat\phi_t^{(-k)}(\bW_i;\bX_0,\bZ_0)\Big\}^2.
	\]
	For each $i$, let $k(i)$ denote the fold index such that $i\in\mathcal I_{k(i)}$, and write
	\[
	\widehat\phi_i
	=
	\widehat\phi_t^{(-k(i))}(\bW_i;\bX_0,\bZ_0),
	\qquad
	\phi_i
	=
	\phi_t^0(\bW_i;\bX_0,\bZ_0),
	\qquad
	\mathbb P_n f=\frac{1}{n}\sum_{i=1}^n f(\bW_i).
	\]
	Then
	\[
	\widehat\sigma_H^2(t;\bX_0,\bZ_0)-\sigma_H^2(t;\bX_0,\bZ_0)
	=
	\big(\mathbb P_n-\PP_0\big)\phi^2
	+
	\mathbb P_n\big(\widehat\phi^2-\phi^2\big).
	\]
	
	Conditionally on $(\bX_0,\bZ_0)$, $\{\phi_i^2\}_{i=1}^n$ are i.i.d.\ with
	$\EE_0(\phi^2\mid\bX_0,\bZ_0)<\infty$, hence the law of large numbers yields
	$\big(\mathbb P_n-\PP_0\big)\phi^2=o_p(1)$.
	For the second term, use $\widehat\phi^2-\phi^2=(\widehat\phi-\phi)(\widehat\phi+\phi)$ and Cauchy--Schwarz, and get
	\[
	\Big|\mathbb P_n(\widehat\phi^2-\phi^2)\Big|
	\le
	\Big\{\mathbb P_n(\widehat\phi-\phi)^2\Big\}^{1/2}
	\Big\{\mathbb P_n(\widehat\phi+\phi)^2\Big\}^{1/2}.
	\]
	By the $L^2(P_0)$-consistency established above,
	$\EE_0[(\widehat\phi-\phi)^2]\to0$ (uniformly in folds), which implies
	$\mathbb P_n(\widehat\phi-\phi)^2=o_p(1)$.
	Moreover,
	\[
	\mathbb P_n(\widehat\phi+\phi)^2
	\le
	2\mathbb P_n\widehat\phi^2+2\mathbb P_n\phi^2
	=
	O_p(1),
	\]
	since $\sup_n \EE_0(\widehat\phi^2)<\infty$ and $\EE_0(\phi^2)<\infty$.
	Therefore $\mathbb P_n(\widehat\phi^2-\phi^2)=o_p(1)$, and hence
	\[
	\widehat\sigma_H^2(t;\bX_0,\bZ_0)\xrightarrow{p}\sigma_H^2(t;\bX_0,\bZ_0).
	\]
	Finally, by the delta method,
	\[
	\sqrt n\Big\{\widehat S^{\,1,{\sf cf}}(t\mid\bX_0,\bZ_0)-S_o(t\mid\bX_0,\bZ_0)\Big\}
	\Rightarrow
	\mathcal N\!\left(0,\ \sigma_S^2(t;\bX_0,\bZ_0)\right) \,,
	\]
	and consistency of
	$\widehat\sigma_S^2(t;\bX_0,\bZ_0)=\{\widehat S^{\,1,{\sf cf}}(t\mid\bX_0,\bZ_0)\}^2\widehat\sigma_H^2(t;\bX_0,\bZ_0)$
	follows by Slutsky’s theorem together with $\widehat S^{\,1,{\sf cf}}(t\mid\bX_0,\bZ_0)\xrightarrow{p}S_o(t\mid\bX_0,\bZ_0)$.	
\end{proof}


\section{General covariate distributions of {\bX}: sieve approximation of the
Riesz representer}\label{appC} 

When $\bX$ has a general distribution, no closed-form expression is available and
$\psi^0_{t}(\cdot,\cdot;\bX_0,\bZ_0)$ must be approximated in a finite-dimensional sieve. In this section we construct a spline-based approximation
$\widehat\psi^{(-k)}_{t}(\cdot,\cdot;\bX_0,\bZ_0)$ and define a cross-fitted one-step estimator based on the same Riemann grid
$\{t_j\}_{j=0}^{m_n}$ used throughout (with $\Delta t_j=t_j-t_{j-1}$), and we state the corresponding $\sqrt{n}$ asymptotic normality
under additional sieve conditions.

Fix $t\in[0,\tau]$ and a random evaluation point $(\bX_0,\bZ_0)$ drawn independently from the target covariate distribution.
By conditions, $L_t$ is bounded on $(\mathcal H,\|\cdot\|_0)$, $\mathcal{H}=\overline{\mathcal{T}}_{g_o}$, and there exists a unique
$\psi^0_{t}(\cdot,\cdot;\bX_0,\bZ_0)\in \mathcal H$ solving the Riesz equation
\[
\langle \psi^0_{t}(\cdot,\cdot;\bX_0,\bZ_0),\, b\rangle_0 = L_t(b),\qquad \forall\, b\in \mathcal H.
\]
Let $\{\mu_r(u,\bX):r=1,\ldots,d_n\}$ be a spline basis on $[0,\tau]\times\mathcal X$ (e.g., tensor-product $B$-splines in $u$
and each coordinate of $\bX$ after rescaling $\bX$ to $[0,1]^r$), and define the sieve space
\[
\mathcal H_{d_n}=\mathrm{span}\{\mu_1,\ldots,\mu_{d_n}\}\subset \mathcal H.
\]
Define the population sieve-Riesz projection $\psi^{0,(d_n)}_{t}(\cdot,\cdot;\bX_0,\bZ_0)\in \mathcal H_{d_n}$ as the unique solution of
\[
\langle \psi^{0,(d_n)}_{t}(\cdot,\cdot;\bX_0,\bZ_0),\, b\rangle_0 = L_t(b),\qquad \forall\, b\in \mathcal H_{d_n}.
\]
The key additional requirement for $\sqrt{n}$ inference is that the sieve approximation bias is negligible, namely,
\begin{equation}\label{eq:sieve_bias_generalX}
	\big\|\psi^{0,(d_n)}_{t}(\cdot,\cdot;\bX_0,\bZ_0)-\psi^0_{t}(\cdot,\cdot;\bX_0,\bZ_0)\big\|_0=o(n^{-1/2}) \, .
\end{equation}
It might be tempting to assume that for each $t\in[0,\tau]$, the Riesz representer $\psi^0_{t}(\cdot,\cdot;\bX_0,\bZ_0)$ (viewed as a function of
$(u,\bX)\in[0,\tau]\times\mathcal X$) belongs to a Sobolev (or H\"older) ball of smoothness $s$, uniformly over
$t\in[0,\tau]$ (and uniformly over $(\bX_0,\bZ_0)$ in its support), so that the spline sieve $\mathcal H_{d_n}$ satisfies the
approximation property \eqref{eq:sieve_bias_generalX}. Clearly, this condition is separate from the composite H\"older assumption on $g_o$, which is used to guarantee
$\|\widehat g-g_o\|_0 = O_p(\gamma_n\log^2 n)$.

In the absence of additional structural restrictions, approximation rates in the norm $\|\cdot\|_0$
deteriorate rapidly with the dimension of $\bX$ (curse of dimensionality), so that achieving
$o(n^{-1/2})$ accuracy  typically requires sieve dimension $d_n$ that grows so quickly with $n$ that the required $n$ for a given accuracy becomes impractically large.
Section~1.3 provides two alternative assumption sets that ensure
\eqref{eq:sieve_bias_generalX}. Throughout the remainder of this section we assume that
\eqref{eq:sieve_bias_generalX} holds.

Now, we provide an estimator of $\psi^{0,(d_n)}_{t}(\cdot,\cdot;\bX_0,\bZ_0)$ under the cross-fitted setting.
For each fold $k$ and functions $f,h\in \mathcal H$, define the fold-wise empirical inner product (computed on $\mathcal I_k^c$) by the Riemann sum
\begin{equation}\label{eq:inner_nk_riemann}
	\langle f,h\rangle_{n,-k}
	=
	\frac{1}{|\mathcal I_k^c|}
	\sum_{i\in \mathcal I_k^c}\sum_{j=1}^{m_n}
	f(t_j,\bX_i)\,h(t_j,\bX_i)\,
	Y_i(t_{j-1})\,\widehat h^{(-k)}(t_j\mid \bX_i,\bZ_i)\,\Delta t_j.
\end{equation}
Similarly, define the fold-wise empirical linear functional by
\begin{equation}\label{eq:Lhat_nk_riemann}
	\widehat L^{(-k)}_t(b)
	=
	\sum_{j:\,t_j\le t}
	\widehat h^{(-k)}(t_{j}\mid \bX_0,\bZ_0)\, b(t_{j},\bX_0)\,\Delta t_j.
\end{equation}
We estimate the Riesz representer in the sieve space $\mathcal H_{d_n}$ by solving the ridge-stabilized empirical Riesz equation.
Namely, we find $\widehat\psi^{(-k)}_{t}(\cdot,\cdot;\bX_0,\bZ_0)\in \mathcal H_{d_n}$ such that for all $b\in \mathcal H_{d_n}$,
\begin{equation}\label{eq:emp_riesz_generalX_riemann}
	\langle \widehat\psi^{(-k)}_{t}(\cdot,\cdot;\bX_0,\bZ_0),\, b\rangle_{n,-k}
	+\rho_n\,\langle \widehat\psi^{(-k)}_{t}(\cdot,\cdot;\bX_0,\bZ_0),\, b\rangle_{\mathrm{coef}}
	=
	\widehat L^{(-k)}_t(b),
\end{equation}
where $\rho_n\downarrow 0$, $\sqrt{n}\rho_n\to 0$, and $\langle\cdot,\cdot\rangle_{\mathrm{coef}}$ is the Euclidean inner product of spline coefficients. Here $\rho_n>0$ is a vanishing ridge parameter used only to stabilize the inversion of the empirical Gram matrix. Specifically, write $\widehat\psi^{(-k)}_{t}(u,\bX;\bX_0,\bZ_0)=\sum_{r=1}^{d_n}\widehat\alpha^{(-k)}_r\,\mu_r(u,\bX)$ and
$b(u,\bX)=\sum_{s=1}^{d_n}\beta_s\,\mu_s(u,\bX)$.
Let $\widehat G_{-k}\in\mathbb R^{d_n\times d_n}$ and $\widehat\ell_{-k}\in\mathbb R^{d_n}$ be defined such that their components are given by
\[
(\widehat G_{-k})_{rs}=\langle \mu_r,\mu_s\rangle_{n,-k}, \qquad r,s=1,\ldots,d_n.
\]
\[
(\widehat\ell_{-k})_s
=
\widehat L_t^{(-k)}(\mu_s)
=
\sum_{j:\,t_j\le t}
\widehat h^{(-k)}(t_{j}\mid \bX_0,\bZ_0)\,
\mu_s(t_{j},\bX_0)\,
\Delta t_j,
\qquad s=1,\ldots,d_n.
\]
Let
$\widehat\alpha^{(-k)} = \big(\widehat\alpha^{(-k)}_1,\ldots,\widehat\alpha^{(-k)}_{d_n}\big)^\top \in \mathbb R^{d_n}$.
Then \eqref{eq:emp_riesz_generalX_riemann} is equivalent to
\[
(\widehat G_{-k}+\rho_n I_{d_n})\,\widehat\alpha^{(-k)}=\widehat\ell_{-k} \, ,
\]
so that $\widehat\alpha^{(-k)}=(\widehat G_{-k}+\rho_n I_{d_n})^{-1}\widehat\ell_{-k}$,
where $I_{d_n}$ denotes the $d_n\times d_n$ identity matrix.

Finally, we are ready to define the cross-fitted one-step estimator based on the above  spline approximation.
Using the fold-wise spline weight $\widehat\psi^{(-k)}_{t}(\cdot,\cdot;\bX_0,\bZ_0)$, define the fold-wise nuisance influence contribution
in discrete (Riemann-sum) form
\[
\widehat\phi^{(-k)}_{t,g}(W_i;\bX_0,\bZ_0)
=
\sum_{j=1}^{m_n}
\widehat\psi^{(-k)}_{t}(t_{j},\bX_i;\bX_0,\bZ_0)\,\Delta \widehat M^{(-k)}_i(t_j).
\]
Let $\widehat\phi^{(-k)}_{t,\theta}(W_i;\bX_0,\bZ_0)$ denote the parametric EIF piece, constructed exactly as in the discrete-support
section.
Set $\widehat\phi^{(-k)}_{t}=\widehat\phi^{(-k)}_{t,g}+\widehat\phi^{(-k)}_{t,\theta}$. The cross-fitted one-step estimator is then
\[
\widehat H^{1,\mathrm{cf}}(t\mid \bX_0,\bZ_0)
=
\widehat H^{\mathrm{cf}}(t\mid \bX_0,\bZ_0)
+
\frac1n\sum_{k=1}^K\sum_{i\in \mathcal I_k}\widehat\phi^{(-k)}_{t}(W_i;\bX_0,\bZ_0),
\]
and
$
\widehat S^{1,\mathrm{cf}}(t\mid \bX_0,\bZ_0)=\exp\{-\widehat H^{1,\mathrm{cf}}(t\mid \bX_0,\bZ_0)\}
$.

Relative to the discrete case, the general-$\bX$ setting introduces three additional
sources of error that must be controlled to obtain a $\sqrt n$ expansion:
\begin{enumerate}
	\item \emph{Sieve approximation (bias):} the population sieve-Riesz representer
	$\psi^{0,(d_n)}_t\in\mathcal H_{d_n}$ must approximate $\psi_t^0$ in the $\|\cdot\|_0$ norm,
	i.e. $\|\psi^{0,(d_n)}_t-\psi_t^0\|_0$ (cf.\ \eqref{eq:sieve_bias_generalX}).
	\item \emph{Estimation of the sieve-Riesz weights:} the data-driven solution
	$\widehat\psi_t$ must be close to $\psi^{0,(d_n)}_t$, i.e. $\|\widehat\psi_t-\psi^{0,(d_n)}_t\|_0=o_p(1)$.
	\item \emph{Stability:} the (regularized) Gram matrix defining $\widehat\psi_t$ must be well-conditioned,
	and the effective sieve complexity must grow slowly enough with $n$ so that the associated empirical-process
	remainder terms are $o_p(n^{-1/2})$.
\end{enumerate}

The general-$\bX$ setting is substantially more delicate than the finite-support case. In particular,
a fully rigorous analysis of the sieve-Riesz step would require additional assumptions on the
covariate distribution and on the regularity and approximability of the population Riesz
representer, together with quantitative control of the sieve approximation bias, the
ridge-stabilized Gram system, and the empirical-process terms arising from estimation of the
Riesz coefficients. A complete investigation of these issues for the specific sieve-Riesz
implementation considered here would take us beyond the intended scope of the present paper.
Accordingly, the following theorem is best viewed as a high-level conditional asymptotic result for the
general-$\bX$ setting. Even in this form, however, the result remains valuable, as it helps
clarify the ingredients that would be needed to extend the one-step asymptotic theory beyond the
finite-support case. From a practical point of view, Theorem~\ref{thm:generalX-CLT} is already highly relevant, since when $\bX$ is handled through the DNN nuisance-estimation step, discretization or grouping of
covariates is often not expected to materially compromise prediction accuracy.


\begin{restatable}{theorem}{generalX-CLT}
	\label{thm:generalX-CLT}
	Fix $t\in[0,\tau]$ and let $(\bX_0,\bZ_0)$ be an independent draw from the covariate distribution,
	independent of the training sample. Assume Assumptions A1--A4  hold, so that the hazard is uniformly bounded.
	Let $\{t_j\}_{j=0}^{m_n}$ be a deterministic grid with $0=t_0<\cdots<t_{m_n}=t$ and mesh
	$\delta_n=\max_j(t_j-t_{j-1})$. Assume $\sqrt{n}\,\delta_n\to0$ and the time-grid Riemann approximation error is $O(\delta_n)$ uniformly over a neighborhood
	of $\eta_o=(\btheta_o,g_o)$.  
	Suppose the conditions of Lemma~\ref{lem:EIF_existence_MS} hold and $\EE_0\{\phi_t^0(\bW;\bX_0,\bZ_0)^2\}<\infty$.
	Suppose moreover that for each fold $k$,
	\begin{enumerate}
		\item[(i)]  $\|\widetilde g^{(-k)}-g_o\|_0=o_p(n^{-1/4})$ (up to polylogarithmic factors) and, uniformly in $k$,
			\[
		\sqrt{|\mathcal I_k^c|}\,(\widetilde \btheta^{(-k)}-\btheta_o)
		=
		I(\btheta_o)^{-1}\frac{1}{\sqrt{|\mathcal I_k^c|}}\sum_{i\in \mathcal I_k^c}\ell_{\theta_o}^*(\bW_i)
		+o_p(1);
		\]
		\item[(ii)] the sieve bias condition \eqref{eq:sieve_bias_generalX} holds;
		\item[(iii)]  $\|\widehat\psi^{(-k)}_{t}(\cdot,\cdot;\bX_0,\bZ_0)-\psi^{0,(d_n)}_{t}(\cdot,\cdot;\bX_0,\bZ_0)\|_0=o_p(1)$;
		\item[(iv)] the ridge-stabilized Gram matrix $\widehat G_{-k}+\rho_n I_{d_n}$ is well-conditioned and
		$d_n$ grows slowly enough so that the additional empirical-process terms due to estimating $\widehat\psi$ are $o_p(n^{-1/2})$.
	\end{enumerate}
	Then,  conditional on the independent draw $(\bX_0,\bZ_0)$,
	\[
	\sqrt{n}\Big\{\widehat H^{1,\mathrm{cf}}(t\mid \bX_0,\bZ_0)-H_0(t\mid \bX_0,\bZ_0)\Big\} \mid (\bX_0,\bZ_0)
	\Rightarrow
	N\!\Big(0,\sigma_H^2(t;\bX_0,\bZ_0)\Big),
	\]
	where $\sigma_H^2(t;\bX_0,\bZ_0)=\Var_0\!\big(\phi_t^0(W;\bX_0,\bZ_0)\mid \bX_0,\bZ_0\big)$.
	Consequently,
	\[
	\sqrt{n}\Big\{\widehat S^{1,\mathrm{cf}}(t\mid \bX_0,\bZ_0)-S_0(t\mid \bX_0,\bZ_0)\Big\} \mid (\bX_0,\bZ_0)
	\Rightarrow
	N\!\Big(0,\sigma_S^2(t;\bX_0,\bZ_0)\Big),
	\]
	and
	$
	\sigma_S^2(t;\bX_0,\bZ_0)=S_0(t\mid \bX_0,\bZ_0)^2\,\sigma_H^2(t;\bX_0,\bZ_0)
	$.
\end{restatable}

Theorem~\ref{thm:generalX-CLT} should be read as a conditional asymptotic-distribution result under assumptions (i)–(iv), where assumption (iv) collects the high-level control needed for the sieve-Riesz estimation step, including the associated empirical-process and projection-approximation errors.

\begin{proof}[Proof of \Cref{thm:generalX-CLT}]
	We suppress $(t,\bX_0,\bZ_0)$ from the notation when no confusion arises.
	Let $\{\mathcal I_k\}_{k=1}^K$ be the fixed $K$--fold partition and let $\mathcal F_{-k}$ denote the $\sigma$--field generated by
	the training subsample $\{\bW_j:j\in \mathcal I_k^c\}$, the fold split, and $(\bX_0,\bZ_0)$.
	By construction, conditional on $\mathcal F_{-k}$, the observations $\{\bW_i:i\in \mathcal I_k\}$ are i.i.d.\ from $\PP_0$.
	
	Throughout this proof, $L_{t,n}$ and $c_{0,t,n}$ denote the grid-based analogues of $L_t$ and
	$c_{0,t}$, respectively. Under $\sqrt n\,\delta_n\to0$, the discrepancy between the continuous and
	grid-based versions is $o(n^{-1/2})$ and is absorbed into the remainder notation.
	
	Let $\widetilde H_0$ be the Riemann approximation of $H_0$ on the grid, so that
	$\|\widetilde H_0-H_0\|\lesssim \delta_n$ and hence $\sqrt n(\widetilde H_0-H_0)=o(1)$ whenever $\sqrt n\,\delta_n\to 0$.
	Write
	\[
	\widehat H^{1,{\sf cf}} - H_0
	=
	\underbrace{\big(\widehat H^{{\sf cf}}-\widetilde H_0\big)}_{A_n}
	+
	\underbrace{\big(\widetilde H_0-H_0\big)}_{o(n^{-1/2})}
	+
	\underbrace{\mathbb P_n\widehat\phi_t}_{B_n},
	\]
	where $\widehat H^{{\sf cf}}=\sum_{k=1}^K \frac{|\mathcal I_k|}{n} \widehat H^{(-k)}$ and
	$\mathbb P_n\widehat\phi_t=n^{-1}\sum_{k=1}^K\sum_{i\in \mathcal I_k}\widehat\phi_t^{(-k)}(\bW_i)$.
	
	As in Theorem~\ref{thm:cf_discrete_CLT}, a first-order exponential expansion yields
	\[
	A_n
	=
	\sum_{k=1}^K \frac{|\mathcal I_k|}{n} \sum_{j:t_j\le t} \Delta t_j\,h_o(t_j\mid\bX_0,\bZ_0)
	\Big\{ \delta g^{(-k)}(t_j,\bX_0) + (\delta\btheta^{(-k)})^\top \bZ_0 \Big\}
	+ R_{n,1},
	\]
	with $\sqrt n\,R_{n,1}=o_p(1)$ under assumption~(i), the same second-order remainder control as in
	Theorem~\ref{thm:cf_discrete_CLT}.
	
	Let $\phi_t^0(\bW)$ denote the population EIF from Lemma~\ref{lem:EIF_existence_MS}.
	Since $\phi_t^0$ is an influence function, $\PP_0\!\big(\phi_t^0(\bW)\mid \bX_0,\bZ_0\big)=0$, hence
	$\mathbb P_n\phi_t^0=(\mathbb P_n-\PP_0)\phi_t^0$.
	Insert and subtract $\mathbb P_n\phi_t^0$ to obtain
	\[
	\sqrt n\big(\widehat H^{1,{\sf cf}}-H_0\big)
	=
	\underbrace{\sqrt n\,(\mathbb P_n-\PP_0)\phi_t^0}_{\text{CLT term}}
	+
	\sqrt n\Big\{ A_n + \mathbb P_n\big(\widehat\phi_t-\phi_t^0\big)\Big\}
	+ o_p(1).
	\]
	By conditional Lindeberg--Feller (given $(\bX_0,\bZ_0)$) and $\EE_0[(\phi_t^0)^2\mid\bX_0,\bZ_0]<\infty$,
	\[
	\sqrt n\,(\mathbb P_n-\PP_0)\phi_t^0 \xrightarrow{d}
	N\!\left(0,\sigma_H^2(t;\bX_0,\bZ_0)\right),
	\qquad
	\sigma_H^2(t;\bX_0,\bZ_0)=\Var_0\!\big(\phi_t^0(\bW)\mid \bX_0,\bZ_0\big).
	\]
	Thus it remains to show
	\[
	\sqrt n\Big\{ A_n + \mathbb P_n\big(\widehat\phi_t-\phi_t^0\big)\Big\}=o_p(1).
	\]
	
Decomposing $\widehat\phi_t-\phi_t^0$ into the nuisance-Riesz part and the $\btheta$-part gives
	\[
	\widehat\phi_t^{(-k)}-\phi_t^0
	=
	\big(\widehat\phi_{t,g}^{(-k)}-\phi_{t,g}^0\big)
	+
	\big(\widehat\phi_{t,\theta}^{(-k)}-\phi_{t,\theta}^0\big).
	\]
By the same first-order expansion used in the discrete setting  for the parametric EIF component,
\[
\sum_{k=1}^K \frac{|\mathcal I_k|}{n}
\Big\{
c_{0,t,n}(\bX_0,\bZ_0)^\top\delta\btheta^{(-k)}
-
T_{n,1,\theta}^{(-k)}
\Big\}
+
\mathbb P_n\!\big(\hat\phi_{t,\theta}-\phi_{t,\theta}^0\big)
=
o_p(n^{-1/2}),
\]
by Lemmas~\ref{lem:plugin_ellstar}--\ref{lem:inverse-info-plug-in} and assumption~(i).

Now we focus on the nuisance-Riesz part, where the new ingredient is the sieve estimation of $\psi_t^0$.
Write
\begin{eqnarray}\label{eq:phig-dif}
	\widehat\phi_{t,g}^{(-k)}-\phi_{t,g}^0
	=
	\sum_{j=1}^{m_n}\big(\widehat\psi_j^{(-k)}-\psi_j^0\big)\,\Delta M_{0,j}
	\;+\;
	\sum_{j=1}^{m_n}\widehat\psi_j^{(-k)}\big(\Delta \widehat M_j^{(-k)}-\Delta M_{0,j}\big) \, .
\end{eqnarray}
We start with the second sum of the right-hand side of (\ref{eq:phig-dif}).
Set
\[
B_{n,1}^{(g)}
=
\frac{1}{n}\sum_{k=1}^K\sum_{i\in \mathcal I_k}
\sum_{j=1}^{m_n}
\widehat\psi_t^{(-k)}(t_j,\bX_i;\bX_0,\bZ_0)\,
\big(\Delta \widehat M_{ij}^{(-k)}-\Delta M_{0,ij}\big),
\]
so that
\[
\mathbb P_n\big(\widehat\phi_{t,g}-\phi_{t,g}^0\big)
=
B_{n,0}^{(g)}+B_{n,1}^{(g)},
\]
where
\[
B_{n,0}^{(g)}
=
\sum_{k=1}^K \frac{|\mathcal I_k|}{n}
\left\{ \frac{1}{|\mathcal I_k|}
\sum_{i\in \mathcal I_k}
\sum_{j=1}^{m_n}
\big(\widehat\psi_t^{(-k)}(t_j,\bX_i;\bX_0,\bZ_0)-\psi_t^0(t_j,\bX_i;\bX_0,\bZ_0)\big)\,\Delta M_{0,ij}
\right\}.
\]
We first analyze \(B_{n,1}^{(g)}\). Since
$
\Delta \widehat M_{ij}^{(-k)}-\Delta M_{0,ij}
=
-\big(\Delta \widehat A_{ij}^{(-k)}-\Delta A_{0,ij}\big)$,
we have
\[
B_{n,1}^{(g)}
=
-\frac{1}{n}\sum_{k=1}^K\sum_{i\in \mathcal I_k}\sum_{j=1}^{m_n}
\widehat\psi_t^{(-k)}(t_j,\bX_i;\bX_0,\bZ_0)\,
Y_i(t_{j-1})
\big\{\widehat h^{(-k)}(t_j\mid \bX_i,\bZ_i)-h_o(t_j\mid \bX_i,\bZ_i)\big\}\,\Delta t_j,
\]
and 
$
B_{n,1}^{(g)}
= \sum_{k=1}^K \frac{|\mathcal I_k|}{n} \ B_{n,1}^{(g,k)}
$ where
$$
B_{n,1}^{(g,k)}
=
\frac{1}{|\mathcal I_k|}\sum_{i\in \mathcal I_k}\sum_{j=1}^{m_n}
\widehat\psi_t^{(-k)}(t_j,\bX_i;\bX_0,\bZ_0)\,
\big(\Delta \widehat M_{ij}^{(-k)}-\Delta M_{0,ij}\big) \, .
$$

Conditional on \(\mathcal F_{-k}\), the observations \(\{\bW_i:i\in \mathcal I_k\}\) are i.i.d., hence
\[
\EE_0\!\big(B_{n,1}^{(g,k)}\mid \mathcal F_{-k}\big)
=
-
\sum_{j=1}^{m_n}
\EE_0\!\left[
\widehat\psi_t^{(-k)}(t_j,\bX;\bX_0,\bZ_0)\,
Y(t_{j-1})
\big\{\widehat\lambda^{(-k)}(t_j\mid \bX,\bZ)-h_o(t_j\mid \bX,\bZ)\big\}
\Bigm| \mathcal F_{-k}
\right]\Delta t_j.
\]

Now write
\[
\widehat h^{(-k)}(t_j\mid \bX,\bZ)-h_o(t_j\mid \bX,\bZ)
=
h_o(t_j\mid \bX,\bZ)\,r_j^{(-k)}(\bX,\bZ)
+
R_{\lambda,j}^{(-k)}(\bX,\bZ),
\]
where
\[
r_j^{(-k)}(\bX,\bZ)
=
\delta g^{(-k)}(t_j,\bX)+(\delta\btheta^{(-k)})^\top \bZ,
\]
and the exponential-link remainder satisfies
\[
|R_{\lambda,j}^{(-k)}(\bX,\bZ)|
\lesssim
h_o(t_j\mid \bX,\bZ)\,
\big(r_j^{(-k)}(\bX,\bZ)\big)^2
\]
with probability tending to one, uniformly in \(j\) and \(k\).
Let \(\bg^*(u,\bX)\) denote the \(L_0^2\)-projection of \(\bZ\) onto \(\mathcal H\), as in Lemma~\ref{lem:EIF_existence_MS}, and define
\[
b_j^{(-k)}(\bX)
=
\delta g^{(-k)}(t_j,\bX)+(\delta\btheta^{(-k)})^\top \bg^*(t_j,\bX).
\]
Then
\[
r_j^{(-k)}(\bX,\bZ)
=
b_j^{(-k)}(\bX)
+
(\delta\btheta^{(-k)})^\top\!\big\{\bZ-\bg^*(t_j,\bX)\big\}.
\]
Accordingly,
\[
\EE_0\!\big(B_{n,1}^{(g,k)}\mid \mathcal F_{-k}\big)
=
-
\Big\{
T_{n,1,g}^{(-k)}+T_{n,1,\theta}^{(-k)}+R_{n,2}^{(-k)}
\Big\},
\]
where
\[
T_{n,1,g}^{(-k)}
=
\sum_{j=1}^{m_n}
\EE_0\!\left[
\widehat\psi_t^{(-k)}(t_j,\bX;\bX_0,\bZ_0)\,
Y(t_{j-1})h_o(t_j\mid \bX,\bZ)\,
b_j^{(-k)}(\bX)
\Bigm| \mathcal F_{-k}
\right]\Delta t_j,
\]
and
\[
T_{n,1,\theta}^{(-k)}
=
\sum_{j=1}^{m_n}
\EE_0\!\left[
\widehat\psi_t^{(-k)}(t_j,\bX;\bX_0,\bZ_0)\,
Y(t_{j-1})h_o(t_j\mid \bX,\bZ)\,
(\delta\btheta^{(-k)})^\top\!\big\{\bZ-\bg^*(t_j,\bX)\big\}
\Bigm| \mathcal F_{-k}
\right]\Delta t_j,
\]
and
\[
|R_{n,2}^{(-k)}|
\lesssim
\sum_{j=1}^{m_n}
\EE_0\!\left[
\big|\widehat\psi_t^{(-k)}(t_j,\bX;\bX_0,\bZ_0)\big|\,
Y(t_{j-1})h_o(t_j\mid \bX,\bZ)\,
\big(r_j^{(-k)}(\bX,\bZ)\big)^2
\Bigm| \mathcal F_{-k}
\right]\Delta t_j .
\]

The term \(T_{n,1,g}^{(-k)}\) is the nuisance-space part, since \(b^{(-k)}\) is a function of \((u,\bX)\) only.
The term \(T_{n,1,\theta}^{(-k)}\) corresponds to the residual parametric direction generated by
\(\bZ-\bg^*(u,\bX)\); this will match the first-order parametric contribution in
\(\mathbb P_n(\widehat\phi_{t,\theta}-\phi_{t,\theta}^0)\).
Write \(\Pi_{d_n} b^{(-k)}\) for its sieve projection onto \(\mathcal H_{d_n}\) on the grid. 
We next identify the leading nuisance term $T_{n,1,g}^{(-k)}$ with the evaluation functional
applied to $b^{(-k)}$. Write
\[
T_{n,1,g}^{(-k)}
=
\Big\langle \hat\psi_t^{(-k)},\,\Pi_{d_n} b^{(-k)} \Big\rangle_{0,n}
+
R_{n,3}^{(-k)},
\]
where $R_{n,3}^{(-k)}=o_p(n^{-1/2})$ uniformly in $k$ by the projection approximation and the
same Riemann discretization error bound used throughout.
Next,
\[
\Big\langle \hat\psi_t^{(-k)},\,\Pi_{d_n} b^{(-k)} \Big\rangle_{0,n}
=
\Big\langle \psi_t^{0,(d_n)},\,\Pi_{d_n} b^{(-k)} \Big\rangle_0
+
R_{n,4}^{(-k)},
\]
where $R_{n,4}^{(-k)}=o_p(n^{-1/2})$ uniformly in $k$ by assumption~(iv), which absorbs the
additional empirical-process terms arising from estimating $\hat\psi_t^{(-k)}$ and replacing the
empirical inner product by its population version.
Finally, by the defining property of the population sieve-Riesz representer,
\[
\Big\langle \psi_t^{0,(d_n)},\,\Pi_{d_n} b^{(-k)} \Big\rangle_0
=
L_{t,n}\!\big(\Pi_{d_n} b^{(-k)}\big).
\]
Hence
\[
T_{n,1,g}^{(-k)}
=
L_{t,n}\!\big(\Pi_{d_n} b^{(-k)}\big)
+
R_{n,3}^{(-k)}+R_{n,4}^{(-k)}.
\]

It remains to remove the sieve projection. Assumption (iv) also absorbs the projection-removal error inside the evaluation functional, hence
\[
L_{t,n}\!\big(\Pi_{d_n} b^{(-k)}\big)-L_{t,n}\!\big(b^{(-k)}\big)=o_p(n^{-1/2}),
\]
uniformly in $k$.
Therefore
\[
T_{n,1,g}^{(-k)} = L_{t,n}\!\big(b^{(-k)}\big)+o_p(n^{-1/2}) \, ,
\]
uniformly in \(k\).

On the other hand, by the definition of
\[
c_{0,t}(\bX_0,\bZ_0)
=
\bZ_0 H_0(t\mid \bX_0,\bZ_0)-L_t(\bg^*),
\]
and its Riemann approximation \(c_{0,t,n}(\bX_0,\bZ_0)\), we have
\[
\sum_{j:t_j\le t}
h_o(t_j\mid \bX_0,\bZ_0)\,
(\delta\btheta^{(-k)})^\top\!\big\{\bZ_0-\bg^*(t_j,\bX_0)\big\}\,\Delta t_j
=
\big(c_{0,t,n}(\bX_0,\bZ_0)\big)^\top \delta\btheta^{(-k)},
\]
with $c_{0,t,n}(\bX_0,\bZ_0)-c_{0,t}(\bX_0,\bZ_0)=O(\delta_n)=o(n^{-1/2})$.
Thus the first-order expansion of \(A_n\) can be written as a nuisance component
\(L_{t,n}(b^{(-k)})\) plus the parametric component
\(c_{0,t,n}(\bX_0,\bZ_0)^\top\delta\btheta^{(-k)}\).
The latter is canceled by the corresponding first-order contribution from
\(\mathbb P_n(\widehat\phi_{t,\theta}-\phi_{t,\theta}^0)\), exactly as in
the discrete setting.

Now rewrite the first-order expansion of \(A_n\) as
\[
A_n
=
\sum_{k=1}^K \frac{|\mathcal I_k|}{n}
\left[
L_{t,n}\!\big(b^{(-k)}\big)
+
c_{0,t,n}(\bX_0,\bZ_0)^\top \delta\btheta^{(-k)}
\right]
+R_{n,1},
\]
where
\[
b_j^{(-k)}(\bX)
=
\delta g^{(-k)}(t_j,\bX)+(\delta\btheta^{(-k)})^\top \bg^*(t_j,\bX),
\]
and
\[
c_{0,t,n}(\bX_0,\bZ_0)
=
\sum_{j:t_j\le t}
h_o(t_j\mid \bX_0,\bZ_0)\,
\big\{\bZ_0-\bg^*(t_j,\bX_0)\big\}\,\Delta t_j,
\qquad
c_{0,t,n}(\bX_0,\bZ_0)=c_{0,t}(\bX_0,\bZ_0)+o(1).
\]

Combining the previous display with
\[
T_{n,1,g}^{(-k)} = L_{t,n}\!\big(b^{(-k)}\big)+o_p(n^{-1/2}),
\]
we see that the nuisance-space contribution in
\[
A_n
=
\sum_{k=1}^K \frac{|\mathcal I_k|}{n}
\Big\{
L_{t,n}\!\big(b^{(-k)}\big)
+
c_{0,t,n}(\bX_0,\bZ_0)^\top\delta\btheta^{(-k)}
\Big\}
+
R_{n,1}
\]
cancels fold-by-fold with the term
\[
-\sum_{k=1}^K \frac{|\mathcal I_k|}{n}T_{n,1,g}^{(-k)}
\]
appearing in $\sum_k \frac{|\mathcal I_k|}{n}\EE_0(B_{n,1}^{(g,k)}\mid\mathcal F_{-k})$.
Therefore,
\[
A_n+\sum_{k=1}^K \frac{|\mathcal I_k|}{n}\EE_0\!\big(B_{n,1}^{(g,k)}\mid\mathcal F_{-k}\big)
=
\sum_{k=1}^K \frac{|\mathcal I_k|}{n}
\Big\{
c_{0,t,n}(\bX_0,\bZ_0)^\top\delta\btheta^{(-k)}
-
T_{n,1,\theta}^{(-k)}
\Big\}
+
R_{n,1}
+
\sum_{k=1}^K \frac{|\mathcal I_k|}{n}\widetilde R_{n,2}^{(-k)},
\]
where $\widetilde R_{n,2}^{(-k)}=o_p(n^{-1/2})$ uniformly in $k$.

By the same first-order expansion used earlier for the parametric EIF component,
the weighted sum
\[
\sum_{k=1}^K \frac{|\mathcal I_k|}{n}
\Big\{
c_{0,t,n}(\bX_0,\bZ_0)^\top\delta\btheta^{(-k)}
-
T_{n,1,\theta}^{(-k)}
\Big\}
+
\mathbb P_n\big(\widehat\phi_{t,\theta}-\phi_{t,\theta}^0\big)
=
o_p(n^{-1/2}) \, .
\]
 Consequently,
\[
A_n
+
\sum_{k=1}^K \frac{|\mathcal I_k|}{n}\EE_0\!\big(B_{n,1}^{(g,k)}\mid\mathcal F_{-k}\big)
+
\mathbb P_n\big(\widehat\phi_{t,\theta}-\phi_{t,\theta}^0\big)
=
o_p(n^{-1/2}),
\]
because assumption~(i) also implies
\[
\sqrt n\,\max_k\|\delta g^{(-k)}\|_0^2=o_p(1),
\qquad
\sqrt n\,\max_k\|\delta\btheta^{(-k)}\|^2=o_p(1).
\]

It remains to control the centered empirical fluctuation
\[
\sum_{k=1}^K \frac{|\mathcal I_k|}{n}
\left\{
B_{n,1}^{(g,k)}-\EE_0\!\big(B_{n,1}^{(g,k)}\mid \mathcal F_{-k}\big)
\right\},
\]
which is \(o_p(n^{-1/2})\) by conditional independence across \(i\in \mathcal I_k\), bounded conditional second moments, and assumption~(iv).  Here the dependence on $\widehat\psi_t^{(-k)}$ is handled entirely through assumption (iv), conditional on $\mathcal F_{-k}$.
Hence
\[
A_n+B_{n,1}^{(g)}
=
-\mathbb P_n\big(\widehat\phi_{t,\theta}-\phi_{t,\theta}^0\big)
+o_p(n^{-1/2}).
\]

It remains to control \(B_{n,0}^{(g)}\), which reflects sieve-Riesz approximation and estimation error.
Let $\psi_t^{0,(d_n)}\in\mathcal H_{d_n}$ be the population sieve-Riesz representer and write
	\[
	\widehat\psi_t^{(-k)}-\psi_t^0
	=
	\underbrace{\big(\psi_t^{0,(d_n)}-\psi_t^0\big)}_{\text{sieve bias}}
	+
	\underbrace{\big(\widehat\psi_t^{(-k)}-\psi_t^{0,(d_n)}\big)}_{\text{estimation error}}.
	\]
Using the triangle inequality,
\[
\|\widehat\psi_t^{(-k)}-\psi_t^0\|_0
\le
\|\psi_t^{0,(d_n)}-\psi_t^0\|_0
+
\|\widehat\psi_t^{(-k)}-\psi_t^{0,(d_n)}\|_0.
\]
By assumptions~(ii) and~(iii),
\[
\max_{1\le k\le K}\|\widehat\psi_t^{(-k)}-\psi_t^0\|_0=o_p(1).
\]	
Now, conditional on $\mathcal F_{-k}$ we can apply the discrete-time martingale isometry to obtain
	\begin{align*}
		&\EE_0\!\left[
		\left(
		\sum_{j=1}^{m_n}\big(\widehat\psi_t^{(-k)}(t_j,\bX)-\psi_t^0(t_j,\bX)\big)\,\Delta M_{0,j}
		\right)^2
		\Bigm|\mathcal F_{-k}
		\right] \\
		&\qquad=
		\EE_0\!\left[
		\sum_{j=1}^{m_n}\big(\widehat\psi_t^{(-k)}(t_j,\bX)-\psi_t^0(t_j,\bX)\big)^2\,\Delta A_{0,j}
		\Bigm|\mathcal F_{-k}
		\right].
	\end{align*}
	Conditional on \(\mathcal F_{-k}\), the summands
	\[
	\sum_{j=1}^{m_n}
	(\widehat\psi_t^{(-k)}(t_j,\bX_i)-\psi_t^0(t_j,\bX_i))\,\Delta M_{0,ij},
	\qquad i\in \mathcal I_k,
	\]
	are independent and centered. Hence, by the discrete-time martingale isometry,
	\[
	\Var_0\!\left(
	\frac{1}{|\mathcal I_k|}\sum_{i\in \mathcal I_k}\sum_{j=1}^{m_n}
	(\widehat\psi_t^{(-k)}(t_j,\bX_i)-\psi_t^0(t_j,\bX_i))\,\Delta M_{0,ij}
	\Bigm|\mathcal F_{-k}
	\right)
	\lesssim
	\frac{1}{|\mathcal I_k|}\|\widehat\psi_t^{(-k)}-\psi_t^0\|_0^2.
	\]
	Therefore,
	\[
	\frac{1}{|\mathcal I_k|}\sum_{i\in \mathcal I_k}\sum_{j=1}^{m_n}
	(\widehat\psi_t^{(-k)}(t_j,\bX_i)-\psi_t^0(t_j,\bX_i))\,\Delta M_{0,ij}
	=
	O_p\!\left(
	|\mathcal I_k|^{-1/2}\|\widehat\psi_t^{(-k)}-\psi_t^0\|_0
	\right).
	\]
	Since \(K\) is fixed and \(|\mathcal I_k|\asymp n\),
	\[
	B_{n,0}^{(g)}
	=
	O_p\!\left(
	n^{-1/2}\max_{1\le k\le K}\|\widehat\psi_t^{(-k)}-\psi_t^0\|_0
	\right)
	\]	
	and hence
	\[
	B_{n,0}^{(g)}=o_p(n^{-1/2}).
	\]
	
	Combining this with the previously established bound
	\[
	A_n+B_{n,1}^{(g)}+\mathbb P_n\big(\widehat\phi_{t,\theta}-\phi_{t,\theta}^0\big)=o_p(n^{-1/2}),
	\]
	and with
	\[
	B_{n,0}^{(g)}=o_p(n^{-1/2}),
	\]
	we obtain
	\[
	A_n+\mathbb P_n\big(\widehat\phi_t-\phi_t^0\big)=o_p(n^{-1/2}).
	\]
	and therefore
	\[
	\sqrt n\big(\widehat H^{1,{\sf cf}}(t\mid \bX_0,\bZ_0)-H_0(t\mid \bX_0,\bZ_0)\big)
	\xrightarrow{d}
	N\!\left(0,\sigma_H^2(t;\bX_0,\bZ_0)\right).
	\]
	The result for $\widehat S^{1,{\sf cf}}=\exp(-\widehat H^{1,{\sf cf}})$ follows by the delta method with derivative $-S_0$.
\end{proof}

\end{appendix}

\bibliographystyle{imsart-nameyear} 
\bibliography{bib}       


\end{document}